\DeclareMathOperator{\card}{card}
\DeclareMathOperator{\rate}{rate}
\DeclareMathOperator{\totient}{EulerTotient}
\newcommand\E{\mathbb{E}}
\newcommand\Z{\mathbb{Z}}
\newcommand\N{\mathbb{Z}_{++}}
\newcommand\R{\mathbb{R}}
\newcommand\IR{\mathcal{I}}
\newcommand\tIR{\tilde{\mathcal{I}}}
\newcommand\bR{\overline{\mathbb{R}}}
\newcommand\uuv{\mathscr{L}}
\newcommand\uvv{\mathscr{R}}
\newcommand\morph{\mathscr{M}}
\newcommand\abs[1]{{\left| #1 \right|}}
\newcommand\I[1]{{\bf 1}_{#1}}
\newcommand\floor[1]{\lfloor #1 \rfloor}
\newcommand\modn[1]{#1 \bmod n}
\newcommand{\caglad}[1]{c\`agl\`ad#1}
\newcommand{\cadlag}[1]{c\`adl\`ag#1}
\newcommand{\mword}[1]{$\mathcal{M}$-word#1}
\newcommand{\MWord}[1]{$\mathcal{M}$-Word#1}
\begin{document}

\title{Optimal Policies for Observing Time Series \\ and Related Restless Bandit Problems}

\author{\name Christopher R. Dance \email dance@xrce.xerox.com \\
       \name Tomi Silander \email silander@xrce.xerox.com \\
       \addr Xerox Research Centre Europe \\
       6 chemin de Maupertuis \\
       Meylan, 38240, France}


\maketitle

\begin{abstract}
The trade-off between the cost of acquiring and processing data, and uncertainty due to a lack of data is fundamental in machine learning.
A basic instance of this trade-off is the problem of deciding when to make noisy and costly observations of a discrete-time Gaussian random walk, 
so as to minimise the posterior variance plus observation costs.
We present the first proof that a simple policy, which observes when the posterior variance
exceeds a threshold, is optimal for this problem.
The proof generalises to a wide range of cost functions other than the posterior variance.

This result implies that optimal policies for 
\textit{linear-quadratic-Gaussian control with costly observations}
have a threshold structure.
It also implies that the restless bandit problem of observing multiple such time series,
has a well-defined \textit{Whittle index.}
We discuss computation of that index, give closed-form formulae for it,
and compare the performance of the associated index policy with heuristic policies.

The proof is based on a new verification theorem 
that demonstrates threshold structure for Markov decision processes,
and on the relation between binary sequences known as \textit{mechanical words} 
and the dynamics of discontinuous nonlinear maps, which frequently arise in physics, control and biology.
\end{abstract}
\begin{keywords}
restless bandits, Whittle index, mechanical words, Kalman filter, linear-quadratic-Gaussian control
\end{keywords}
%
%
\section{Introduction}
\label{section:introduction}
This paper answers three closely-related questions about discrete-time filtering of
scalar time series with costly observations, where the nature of the observations is controlled through a query action. 
The first two questions concern the structure of 
optimal policies for observing a single time series 
so as to minimise either a function of the posterior variance 
(Theorem~\ref{theorem:ThresholdOptimality})
or a quadratic function of the system state and control input 
(Corollary~\ref{corr:lqg}). 
The third question concerns the observation of several such time series with a 
constraint on the number of time series that can be observed simultaneously.
This is an instance of a \textit{restless bandit problem} and it is interesting to know
that the problem has a well-defined \textit{Whittle index} 
(Theorem~\ref{Indexability}).

This introduction begins with the time-series model (Section~1.1) that the three questions have in common.
It then motivates, formulates and states the key results for each question in 
turn (Sections~1.2 to~1.4).
It concludes with an intuitive guide to the main concepts involved in the proof (Section~1.5)
and a description of the structure of the rest of the paper (Section~1.6).

\subsection{Time-Series Model}
We consider the classic discrete-time scalar normally-distributed state-space model. In this model, the state is partially observed through measurements as fully described by the conditional dependencies 
\begin{align}
\label{eq:system}
\left.
\begin{aligned}
X_0 &\sim \mathcal{N}(x_0,v_0) \\
X_{t+1}|X_t,u_t &\sim \mathcal{N}(A X_t + B u_t, \Sigma_X) \\
Y_{t+1}|X_{t+1},a_t &\sim \mathcal{N}(X_{t+1},\Sigma_Y(a_t))
\end{aligned}
\right\} \quad\text{for $t\in\Z_+$.}
\end{align}
(In this paper $\Z_+,\R_+$ include zero, unlike $\N, \R_{++}$.)
The state 
$X_t$ is a real-valued random variable with initial mean $x_0$ and variance $v_0$.
The sequence of states depends on the control or exogenous input $u_t\in\R$.
The measurement $Y_{t+1}$ is a real-valued random variable 
which depends on a query action $a_t\in\{0,1\}$.
The variances $\Sigma_X, \Sigma_Y(0), \Sigma_Y(1) > 0$ 
and real-valued parameters $A,B$ are known.
Query action $a_t = 1$ is assumed to correspond to a higher-quality
observation than query action $a_t = 0$, so that $\Sigma_Y(1) < \Sigma_Y(0)$
and it is possible that $\Sigma_Y(0) = \infty$ which represents
a totally uniformative observation or no observation at all.

The observed history $H_t$ at time $t$, is 
$x_0, v_0, a_0, a_1, \dots, a_{t-1}, u_0, u_1, \dots, u_{t-1}, Y_1, Y_2, \dots, Y_t$.
Under the Bayesian filter, the information state is given by the posterior mean
$x_t := \E[X_t | H_t]$ and variance $v_t := \E[(X_t-x_t)^2|H_t]$.
In this case, the Bayesian filter is the Kalman filter~\citep{Thiele1880,Kalman60} 
and it follows
that the information state undergoes the following Markovian transitions: 
\begin{align}
\label{eq:transitions}
\left. \begin{aligned}
x_{t+1} | x_t, v_t, a_t, u_t &\sim \mathcal{N}(Ax_t+Bu_t,A^2v_t + \Sigma_X - \phi_{a_t}(v_t)) \\
v_{t+1} | x_t, v_t, a_t, u_t &= \phi_{a_t}(v_t) 
\end{aligned} \right\} \quad\text{for $t\in\Z_+$}
\end{align}
where $\phi_{a} : \R_+ \rightarrow \R_+$ for $a\in\{0,1\}$ is the M{\"o}bius transformation
\begin{align}
\label{eq:phidef}
\phi_{a}(v) := 
 \frac{(A^2 v + \Sigma_X) \times \Sigma_Y(a)}{(A^2 v + \Sigma_X) + \Sigma_Y(a)}.
\end{align}

\subsection{Optimal Policies for Observing a Single Time Series}
The simplest problem addressed here involves a  
measurement cost $c(a_t) \in \R$ and uncertainty cost $C(v_t)$.
Cost $c(a_t)$ might reflect costs of energy, labour, communication, 
computational processing, hardware or risks associated with each measurement.
Recall that a policy is \textit{non-anticipative} if it selects actions at time $t$ based only
on information available up-to and including time $t$.
The objective is to find a non-anticipative policy $\pi$ that selects query actions $a_t$ 
so as to minimise the $\beta$-discounted performance functional, for $\beta\in [0,1)$,
\begin{align*}
\E\left( \sum_{t=0}^\infty \beta^{t} (c(a_t) + C(v_t))\ \bigg| \ \pi, x_0, v_0 \right) 
\end{align*}
where the expectation is over the Markovian transitions~(\ref{eq:transitions}).
As the transitions of the posterior variance are given by the M{\"o}bius
transformation~(\ref{eq:phidef}), this problem reduces to the following deterministic dynamic program 
for value function $V : \R_+ \rightarrow \R_+$,
\begin{align}
\label{eq:DP1}
V(v_t) = \min_{a_t\in\{0,1\}} \bigg\{ c(a_t) + C(v_t) + \beta V(\phi_{a_t}(v_t)) \bigg\} .
\end{align}

The first question addressed in this paper is: \textit{for what cost functions
is a threshold policy optimal for this problem?}
For instance, one may intuively guess that optimal policies for variance 
minimisation with $C(v)=v$, for entropy minimisation with $C(v) = \log(v)$,
or for precision maximisation with $C(v) = -1/v$,
might involve making expensive observations at time $t$ 
when the variance $v_t$ exceeds a threshold.
The following condition on $C(\cdot)$ covers these examples.
\\

\noindent\textbf{Condition~C.} \textit{
The state space $\IR$ is either $[0,\infty)$ or $(0,\infty)$. 
The first cost function $c : \{0,1\} \rightarrow \R$, representing data acquisition costs, has $c(0) < c(1)$.
Also, the second cost function $C : \IR \rightarrow \R$, representing the cost of uncertainty, 
is of the form $C(x) = \sum_{i=1}^{n_C} C_i(x)$ 
for some $n_C\in\N$, 
where each of the functions $C_i : \IR \rightarrow \R$ satisfies one of the following conditions:
\begin{enumerate}
\item[C1.] For $x\in\IR$, 
the derivatives $C_i'(x) := \frac{d}{dx} C_i(x)$ and 
$C_i''(x) := \frac{d^2}{dx^2} C_i(x)$ exist and
\begin{itemize}
\item the function $C_i(x)$ is concave, 
\item the function $\frac{1}{x^3}C_i''\left(\frac{1}{x}\right)$ is non-decreasing,
\item and the function $\frac{1}{x^2} C_i'\left(\frac{1}{x}\right)$ is non-increasing and convex.
\end{itemize}
\item[C2.] For $x\in\IR$, the function $C_i(x)$ is non-decreasing, convex and differentiable.
\end{enumerate}}

Note that we may work with the interval $\IR=(0,\infty)$ in cases where
the cost function $C(v)$ is not a real number for $v=0$, in order to include cases
like $\log(v)$ and $-1/v$, but the results of the paper continue to hold
in cases where $C(0)$ is defined. 
Also, note that $C(\cdot)$ need not be convex or concave, for instance in the case $C(v) = (v^2-1)/v$, and it is possible that $C(\cdot)$ is bounded, for instance in in the case $C(v) = v/(v+1).$ 

The above condition requires that functions $C_i$ satisfying C2 have a derivative $C_i'$.
This is simply for convenience in the proofs of Propositions~\ref{proposition:increasing}
and~\ref{proposition:continuous}.
As such functions are real-valued convex functions, one can instead
set $C_i'$ equal to any subderivative 
at points where the derivative is not defined.

\begin{theorem}
\label{theorem:ThresholdOptimality}
Suppose the state space $\IR$ and the cost functions $c : \{0,1\} \rightarrow\R$ and $ C : \IR \rightarrow \R$ satisfy Condition~C.
Then a threshold policy is optimal for the dynamic program~(\ref{eq:DP1}).
\end{theorem}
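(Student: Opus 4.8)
The plan is to prove threshold optimality by a structural argument on the Bellman operator rather than by guessing the value function in closed form. Write the operator as
\[
(TV)(v) \;=\; C(v) + \min_{a\in\{0,1\}}\bigl\{\, c(a) + \beta\, V(\phi_a(v)) \,\bigr\},
\]
so that the value function $V$ of~(\ref{eq:DP1}) is its fixed point, reachable by value iteration. Since the term $C(v)$ does not depend on the action, the greedy action at $v$ is $a=1$ exactly when the advantage $D(v):=\bigl(c(1)-c(0)\bigr)+\beta\bigl(V(\phi_1(v))-V(\phi_0(v))\bigr)$ is non-positive. As $c(1)-c(0)>0$ is constant, a threshold policy is optimal as soon as $v\mapsto V(\phi_1(v))-V(\phi_0(v))$ is non-increasing on $\IR$: then $\{v:D(v)\le 0\}$ is upward closed, the greedy rule switches from $a=0$ to $a=1$ at a single (possibly infinite) threshold, and a verification lemma identifies this greedy policy as optimal. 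So the theorem reduces to this one monotonicity property of $V$.

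To get it, I would exhibit a cone $\mathcal F$ of functions on $\IR$ that (i) contains every cost function $C$ permitted by Condition~C, (ii) is preserved by $T$ and by the operations of adding a constant and multiplying by $\beta\ge 0$, and (iii) is such that $f\in\mathcal F$ forces $v\mapsto f(\phi_1(v))-f(\phi_0(v))$ non-increasing; then $V\in\mathcal F$ by closedness of $\mathcal F$ and a limiting argument, which finishes the proof. The right $\mathcal F$ is dictated by the precision-coordinate structure of the dynamics: since $1/\phi_a(v)=1/(A^2v+\Sigma_X)+1/\Sigma_Y(a)$, the maps $\phi_0$ and $\phi_1$ differ only by an additive shift in a coordinate in which each is, up to an affine change, the identity; and the transformed-curvature requirements in~C1 — monotonicity of $x\mapsto\tfrac{1}{x^3}C_i''(\tfrac1x)$ and monotonicity and convexity of $x\mapsto\tfrac{1}{x^2}C_i'(\tfrac1x)$ — are precisely the hypotheses making a composition $C_i\circ\phi_a$ retain the right curvature sign. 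So $\mathcal F$ should be, roughly, the non-decreasing functions that become convex under the reparametrisation attached to the $\phi_a$: a class that is neither the convex nor the concave functions, consistent with the paper's remark that neither $V$ nor $C$ need have a fixed curvature.

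The verification then proceeds in steps: (1) check that each summand $C_i$ obeying C1 or C2, hence $C=\sum_i C_i$, lies in $\mathcal F$, done termwise using the additive split; (2) show $f\in\mathcal F\Rightarrow f\circ\phi_a\in\mathcal F$ for $a\in\{0,1\}$, using the affine-in-precision structure with the curvature conditions; (3) deduce from $f\in\mathcal F$ that the two functions $v\mapsto c(0)+\beta f(\phi_0(v))$ and $v\mapsto c(1)+\beta f(\phi_1(v))$ have at most one crossing, with a definite ordering on either side of it (degenerate cases included) — this is where property~(iii) produces the threshold; (4) show the pointwise minimum of these two functions, a concatenation of two $\mathcal F$-pieces joined at the crossing, is again in $\mathcal F$, so that $Tf\in\mathcal F$; (5) add $C$, close the induction, pass to the limit; (6) invoke the verification lemma to upgrade ``greedy with respect to $V$ is a threshold policy'' to ``a threshold policy is optimal'', handling attainment of the minimum and convergence of value iteration.

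I expect step~(4) to be the main obstacle. The minimum of two well-behaved functions is generally ill-behaved, and only the single crossing of step~(3) rescues it: the minimum equals $c(0)+\beta f\circ\phi_0$ on an initial interval of $\IR$ and $c(1)+\beta f\circ\phi_1$ afterwards, with a downward kink at the crossing. Showing this glued function still lies in $\mathcal F$ — that the kink points the right way and does not destroy the reparametrised convexity — is where the dynamics of the \emph{discontinuous} threshold map (equal to $\phi_1$ above the threshold and $\phi_0$ below it) enter, and hence where mechanical words appear: the itinerary of an orbit of that map is a mechanical word in two symbols, and its combinatorics govern how iterated applications of $T$ propagate the kink. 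A secondary obstacle is the degenerate regime $\Sigma_Y(0)=\infty$, where $\phi_0(v)=A^2v+\Sigma_X$ fails to be a contraction when $|A|\ge 1$; there one must check separately that value iteration converges and that every function in play stays finite on $\IR$, possibly by restricting to $\IR=(0,\infty)$ as the statement permits.
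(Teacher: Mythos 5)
Your plan reduces the theorem to the claim that $v\mapsto V(\phi_1(v))-V(\phi_0(v))$ is non-increasing, equivalently that $Q(v,1)-Q(v,0)$ is a non-increasing function of $v$. The paper exhibits a counterexample to exactly this claim (Figure~\ref{fig:counter}, Section~1.5): for $\beta=0.95$, $C(x)=x$, $\phi_0(x)=x+1$, $\phi_1(x)=1/(0.1+1/(x+1))$ --- an instance satisfying Condition~C with $a_0=0$, $r=1$ --- the difference $Q(x,1)-Q(x,0)$ is increasing on an interval, even though the two action-value curves still cross only once. This also rules out the proposed cone $\mathcal{F}$: any class that contains the admissible costs, is preserved by $T$ and by your limiting argument, and satisfies your property~(iii) would force the fixed point $V$ itself to have the monotone-difference property, which is false. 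So no such $\mathcal{F}$ exists, and the reduction in your first paragraph cannot be carried out.

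The correct weaker target is the single-crossing property you state in step~(3), but step~(4) --- showing that the pointwise minimum of the two glued pieces lies in a class that reproduces single crossing after a further application of $T$ --- is exactly the part you leave unproven, and it is where the difficulty lives: gluing at a kink does not in general survive another $\min$ composed with $\phi_0,\phi_1$. The paper does not propagate structure through the Bellman operator at all. It instead invokes Ni{\~n}o-Mora's PCLI verification theorem (Theorem~\ref{theorem:verification}): it defines the marginal productivity index $\lambda(x)=c_x(x)/w_x(x)$ directly from the trajectories of $x$-threshold policies, characterizes the resulting action sequences as mechanical (Christoffel/Sturmian) words, and uses the palindromic structure of Christoffel words together with a majorisation inequality to prove that $\lambda$ is non-decreasing and continuous (PCLI2), plus positivity of marginal work via a swapping argument (PCLI1), bounded variation (PCLI0) and a Radon--Nikodym identity (PCLI3). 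Your mention of mechanical words as controlling ``kink propagation'' under $T$ does not correspond to their actual role. To salvage your route you would need an invariant on $V$ strictly weaker than monotone difference yet still closed under $T$; the paper's counterexample is precisely the evidence that the natural candidates fail.
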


\begin{proof}
This result is an immediate consequence of Theorem~\ref{theorem:verification}
whose hypotheses hold according to 
Propositions~\ref{proposition:PCLI0},~\ref{proposition:POS},~\ref{proposition:increasing},~\ref{proposition:easy-increasing},~\ref{proposition:continuous} and~\ref{proposition:PCLI3}. 
\end{proof}

From one perspective, this answer is a rare example of an explicit solution to a real-state partially-observed Markov decision process (POMDP).
From another perspective, this answer is a rare example of an explicit solution to the problem of observation selection in sensor management~\citep{Hero11}.
Indeed, given a collection of variables which can (in principle) be observed
and a single variable to predict, which are jointly Gaussian with known covariance,
even the problem of deciding whether there exists a subset of $k$ observations
that reduces the prediction variance below a given threshold is NP-hard~\citep{Davis97}.
Work has therefore focused on finding covariance structures
for which the problem is tractable, for instance~\citet{Das08} show that
selection of Gaussian observations with an exponential covariance can be solved 
by a simple discrete dynamic program, and on finding appropriate 
choices of cost functions for which there are guaranteed approximation algorithms~\citep{Krause08,Krause11,BadanidiyuruKDD14,ChenICML14}.

\subsection{The Linear Quadratic Gaussian Problem with Costly Observations}
The second question addressed by this paper is: 
\textit{when are threshold policies optimal for making observations 
in a generalisation of the linear-quadratic-Gaussian 
control problem in which observations are costly but controlled through
a query action?}
This is an old but unsolved problem~\citep{Meier67,Wu05,Molin09}.
Specifically, suppose the states and observations are as in~(\ref{eq:system})
but the objective is to find a non-anticipative policy $\pi$ that selects 
a feedback-control action $u_t\in\R$
and a sensor-query action $a_t\in\{0,1\}$ 
so as to minimise the $\beta$-discounted performance functional
\begin{align*}
\E\left( \sum_{t=0}^\infty \beta^{t} (DX_t^2 + Fu_t^2 + c(a_t))\ \bigg| \ \pi, x_0, v_0 \right), 
\end{align*}
where $D, F \in\R_+$ and the expectation is over the Markovian transitions~(\ref{eq:transitions}).

An immediate corollary of Theorem~\ref{theorem:ThresholdOptimality} is the following answer to the above question.
\begin{corollary}
\label{corr:lqg}
Suppose that
$A \in [-1,1]$, 
$D \in\R_+$, 
$F\in\R_+$, 
$\beta\in (0,1)$, 
$\Sigma_Y(q) \in [0,\infty]$ for $q\in \{0,1\}$ with $\Sigma_Y(0)\ge\Sigma_Y(1)$,
$c(q) \in \R$ for $q\in \{0,1\}$ with $c(0)\le c(1)$, 
where $A$ and $\Sigma_Y(\cdot)$ are as in equation~(\ref{eq:system}).
Then an optimal policy for linear-quadratic-Gaussian control with costly observations is to set 
\begin{align*}
a_t &= \begin{cases} 1 & \text{if $v_t \ge z$} \\ 0& \text{if $v_t < z$} \end{cases} 
\quad \text{and} \quad u_t = - L x_t
\end{align*}
for some $L \in \R$ and $z \in [0,\infty]$.
\end{corollary}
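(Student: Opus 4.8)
The plan is to reduce the linear-quadratic-Gaussian problem with costly observations to the scalar dynamic program~(\ref{eq:DP1}) by a certainty-equivalence (separation) argument, and then to quote Theorem~\ref{theorem:ThresholdOptimality}. First I would cast the problem as a Markov decision process over the information state $(x_t,v_t)$ produced by the Kalman filter, exactly as in Section~1.1: by the tower property the stage cost becomes $D\,\E[X_t^2\mid H_t]+Fu_t^2+c(a_t)=D(x_t^2+v_t)+Fu_t^2+c(a_t)$, and using the transition law~(\ref{eq:transitions}) the Bellman equation for the value function $U(x,v)$ reads
\begin{align*}
U(x,v) = D(x^2+v) + \min_{a\in\{0,1\}}\Big\{ c(a) + \min_{u\in\R}\big[\, Fu^2 + \beta\,\E_{\eta}\, U(Ax+Bu+\eta,\ \phi_a(v))\,\big]\Big\},
\end{align*}
where $\eta$ is the Gaussian innovation with mean $0$ and variance $g_a(v):=A^2v+\Sigma_X-\phi_a(v)\ge 0$.

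Next I would posit the separable ansatz $U(x,v)=Px^2+W(v)$. Substituting and taking the expectation over $\eta$ replaces $\E_\eta\,P(Ax+Bu+\eta)^2$ by $P(Ax+Bu)^2+Pg_a(v)$, so the inner minimisation over $u$ is a one-dimensional quadratic with minimiser the linear feedback $u=-Lx$, $L=\beta PAB/(F+\beta PB^2)$, independent of $v$ and of the query policy; matching the coefficients of $x^2$ forces $P$ to solve the discounted scalar algebraic Riccati equation $P=D+\beta PFA^2/(F+\beta PB^2)$. Because $A\in[-1,1]$, $\beta\in(0,1)$ and $D,F\ge 0$, the Riccati map is monotone and bounded above by $D/(1-\beta A^2)$, hence has a unique nonnegative fixed point $P$; the degenerate cases $B=0$ or $F=0$ (and $\Sigma_Y(0)=\infty$ or $\Sigma_Y(q)=0$) are handled directly. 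The constant $\beta P\Sigma_X$ that appears on taking $\E_\eta$ is carried along harmlessly.

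Now I would set $W(v)=Pv+V(v)$, so that the $\phi_a$-terms telescope: since $g_a(v)=A^2v+\Sigma_X-\phi_a(v)$, the combination $\beta Pg_a(v)+\beta W(\phi_a(v))$ equals $\beta P(A^2v+\Sigma_X)+\beta V(\phi_a(v))$, and the residual recursion for $V$ is precisely~(\ref{eq:DP1}) with uncertainty cost
\begin{align*}
C(v) = \big(D+\beta PA^2-P\big)\,v + \beta P\Sigma_X = \frac{\beta^2P^2A^2B^2}{F+\beta PB^2}\,v + \beta P\Sigma_X,
\end{align*}
the last equality using the Riccati identity. This $C$ is affine with nonnegative slope, hence non-decreasing, convex and differentiable, so it satisfies Condition~C2 (with $n_C=1$, or $n_C=2$ if one separates out the constant); together with $c(0)\le c(1)$ this places the problem within the scope of Theorem~\ref{theorem:ThresholdOptimality}, which yields a threshold policy $a_t=1$ exactly when $v_t\ge z$, for some $z\in[0,\infty]$. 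The accompanying optimal control is the linear feedback $u_t=-Lx_t$ found above. The strict-versus-weak-inequality boundary cases ($c(0)=c(1)$, forcing $z=0$; or $\Sigma_Y(0)=\Sigma_Y(1)$, making the query irrelevant) are immediate.

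The step needing the most care is the separation argument itself: verifying that the separable ansatz $U(x,v)=Px^2+W(v)$ really does give the true value function (e.g.\ by a verification argument together with integrability of the discounted cost, which follows from $A\in[-1,1]$, $\beta<1$), that the two minimisations genuinely decouple (the control $u$ never enters the $v$-dynamics, and the query $a$ enters only through $\phi_a$ and through $g_a$, which appears linearly after taking $\E_\eta$), and that the Riccati fixed point exists and is nonnegative across all the admissible parameter regimes. Once the reduction to~(\ref{eq:DP1}) with an affine cost is in place, the conclusion is an immediate appeal to Theorem~\ref{theorem:ThresholdOptimality}.
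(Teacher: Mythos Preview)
Your proposal is correct and follows essentially the same separation argument as the paper's proof in Appendix~D: both posit a value function of the form $Px^2+Pv+g(v)$, derive the discounted scalar Riccati equation for $P$ and the linear feedback $u=-Lx$, and reduce the query problem to the dynamic program~(\ref{eq:DP1}) with an affine cost $C(v)=(D+\beta PA^2-P)\,v+\text{const}$, whose nonnegative slope places it under Condition~C2 so that Theorem~\ref{theorem:ThresholdOptimality} applies. Your verification that the slope is nonnegative via the Riccati identity $D+\beta PA^2-P=\beta^2P^2A^2B^2/(F+\beta PB^2)$ is in fact a little cleaner than the paper's route, which instead shows $P\le D/(1-\beta A^2)$ by a sign analysis of the shifted quadratic.
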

A proof of Corollary~\ref{corr:lqg} is presented in Appendix~D.

\subsection{Multi-Target Tracking and Restless Bandits}
This paper also addresses the problem of monitoring \textit{multiple} time
series so as to maintain a precise belief while minimising the cost of sensing.
This problem is often called the \textit{multi-target tracking} problem.

To formulate the problem, suppose there are $n\in\N$ independent time series of the form~(\ref{eq:system}), 
indexed by $i \in \{1, 2, \dots, n\}$,
and time series $i$ has state $X_{i,t}$ at time $t\in\Z_+$.
Each time series may have 
its own parameters $x_{i,0}, v_{i,0}, A_{i,0}, B_{i,0}, \Sigma_{X_i}$,
its own input $u_{i,t}$
and its own uncertainty cost $C_i : \IR \rightarrow \R$,
where the interval $\IR$ is as in Condition~C.
Corresponding to these time series there are $n$ query actions $a_{i,t} \in \{0,1\}$ 
which specify the nature of the observation $Y_{i,t}$ of time series $i$. 
These observations have their own parameters $\Sigma_{Y_i} : \{0,1\} \rightarrow [0,\infty]$
and costs $c_i : \{0,1\} \rightarrow \R$. 
However, these actions are subject to the constraint that only $m\in\N$ 
with $m<n$ expensive observations can be made at each time.

The problem is to minimise the total $\beta$-discounted observation cost
and uncertainty cost
\begin{align*}
\sum_{i=1}^n \sum_{t=0}^\infty \beta^{t} (c_i(a_{i,t}) + C_i(v_{i,t})) 
\end{align*}
subject to the constraint on the number of observations
\begin{align*}
\sum_{i=1}^n a_{i,t} = m \qquad \text{for $t\in\Z_+$}.
\end{align*}

Continuous-time versions of this problem were previously addressed by \citet{LeNy11}
and versions of the discrete-time problem given above have attracted considerable
attention~\citep{Gupta06,Mourikis06,LaScala06,Washburn08,NinoMora09,Villar12,Zhao14,Dance15,NinoMora16}.
One example of a real-world application of the discrete-time problem, which was our original motivation for studying the problems in this paper, is the measurement of on-street parking occupancy~\citep{Dey14}, in a setting where cheap-but-low-quality observations are available through payment data (at parking meters or through mobile phones), expensive-but-high-quality observations are available through portable cameras, which are moved daily or weekly (and thus in discrete time), and there are a limited number of portable cameras with which to observe many streets.

\paragraph{Restless Bandits.}
The multi-target tracking problem is an instance of a \textit{restless bandit problem}~\citep{Whittle88}.
Typically, such problems are defined in terms of a set of $n\in\N$ two-action Markov decision processes (MDPs), although generalisations to a time-varying number of MDPs~\citep{Verloop16} and to more than two actions per MDP~\citep{Glazebrook11} have been explored.
The two actions are usually referred to as \textit{active} or \textit{play} versus \textit{inactive} or \textit{passive} and each of the MDPs is referred to as an \textit{arm} or \textit{project.}

In a restless bandit problem, these $n$ MDPs are coupled into a single MDP as follows.
The state space is the Cartesian product of the state spaces of the arms, and the state of each arm transitions independently of the other arms given the actions taken on that arm. 
Thus the transitions of an arm depend only on the actions taken on that arm and on that arm's current state.
The objective is to find a non-anticipative policy that minimises the sum over the arms of each arm's cost-to-go.

However, the action space is only a subset of the Cartesian product of the action spaces of the arms, as there is a constraint on the number $m$ of arms that are simultaneously active at each time, where $m\in\N$ with $m<n$.
Typically, the constraint is that \textit{exactly} $m$ arms are active at each time, but this is readily relaxed to a constraint that \textit{at most} $m$ arms are active by including ``dummy arms'', whose cost is always zero, in the population of $n$ arms.
More general constraints have been explored~\citep{NinoMora15}, in which each arm consumes resources as a function of both its state and the action taken, and the total cost of the resources consumed at each time is constrained. 
In the absence of any such action constraint, the problem would be solved by applying an optimal policy for each arm independently.
Moreover, it turns out that if the constraint were only on the (discounted) time-average number of arms that are simultaneously active, rather than a constraint at each time, the problem could again be separated into $n$ smaller problems after introducing a Lagrange multiplier.

Let us relate the above definition to the typical usage of the term \textit{bandit} in the machine-learning literature.
In that context, multi-armed bandits are reinforcement-learning problems involving a set of arms whose reward distributions are unknown.
At each time, the learner must select which arm to play. 
Such bandits involve a trade-off between exploring arms to acquire information about their expected payoffs and exploiting arms with the highest expected payoffs. 
In the simplest versions of such problems, where the prior on the reward distributions is independent over arms, each arm can be viewed as an MDP whose state correponds to the belief about that arm's payoff distribution.
Each time the arm is played, its reward is observed and this belief is updated.
Such updates correspond to state transitions.
Each time the arm is inactive, its state does not change. 

If we allow arms to make general Markovian state transitions, not just transitions corresponding to belief updates, while preserving the requirement that an arm only changes state when it is played, then we arrive at a more general class of problems known as \textit{ordinary} or \textit{classical bandits}~\citep{Gittins11}.
In turn, restless bandits generalise ordinary bandits in two ways.
Firstly, restless bandits allow more than one arm to be simultaneously active (if $m>1$).
Secondly, restless bandits allow the state of an arm to change even when the arm is not active, which is why they are called \textit{restless}.

While this additional generality is important in modelling real-world problems, it comes at a price.
On the one hand, the \textit{Gittins index policy} is optimal for ordinary bandit problems and can be computed in polynomial time for problems with finite state spaces~\citep{NinoMora07}.
On the other hand, it is in general PSPACE-hard~\citep{Papadimitriou99,Guha10} to find policies that approximate optimal policies for restless bandit problems with finite state spaces to any non-trivial factor.
At first glance, this might suggest that the multi-target tracking problem addressed here, with uncountable state-space $\R_+$ or $\R_{++}$, is impossibly difficult.
At second glance, this poses an interesting question: for which restless bandit problems can we find approximately-optimal policies efficiently?

\paragraph{Whittle Index Policy.}
\citet{Whittle88} proposed a policy which generalises the Gittins index policy to restless bandit problems.
This policy associates a real (or in some definitions an extended-real) number $\lambda_i^*(x_i)$ called the \textit{Whittle index} with the state $x_i$ of each arm $i$ and plays the $m$ arms with the largest Whittle index at each time. 
Ties are usually broken uniformly at random or according to a predefined priority ordering.

The literature contains many definitions of the Whittle index $\lambda^*_i(x_i)$ of arm $i$, of which we describe only three.
These definitions are not equivalent in general, although they turn out to be equivalent for the problem addressed in this paper.
All the definitions involve a modified version of arm $i$'s MDP, which we call the \textit{$\lambda$-MDP}, in which the cost $C_i(x_i,a_i)$ for taking action $a_i$ in state $x_i$ is replaced by $C_i(x_i,a_i) + \lambda a_i$ where $\lambda\in\R$ represents a price for taking the active action $a_i = 1$. 
\citet{Verloop16} then defines $\lambda_i^*(x_i)$ as the least price $\lambda$ for which action $a_i = 0$ is optimal for the $\lambda$-MDP in state $x_i$.
Meanwhile, \citet{Guha10} define $\lambda_i^*(x_i)$ as the largest price $\lambda$ for which the actions $a_i=0$ and $a_i=1$ are both optimal for the $\lambda$-MDP in state $x_i$.
In this paper, we use the following definition from~\citet{NinoMora15}. 
\begin{definition}
The {\bf Whittle index} of arm $i$ in state $x_i$ is a price $\lambda^*_i(x_i)$ for which 
\begin{enumerate}
\item Action $a_i=1$ is optimal in state $x_i$ of the $\lambda$-MDP if and only if $\lambda \le \lambda_i^*(x_i)$,
\item Action $a_i=0$ is optimal in state $x_i$ of the $\lambda$-MDP if and only if $\lambda \ge \lambda_i^*(x_i)$.
\end{enumerate}
Arm $i$ is {\bf indexable} if the Whittle index $\lambda^*_i(x_i)$ exists for all states $x_i$ in arm $i$'s state space.
\end{definition}
For all of the above definitions, it is immediate that the Whittle index is unique if it exists.
Verloop's definition has the advantage that the Whittle index, and hence the Whittle index policy, exist for a wider range of arms.
On the other hand, if we know arm $i$ is indexable, the definition used in this paper has the advantage that we know we have found the Whittle index when we find a price $\lambda\in\R$ for which actions $a_i=0$ and $a_i=1$ are both optimal in state $x_i$ of the $\lambda$-MDP.

Whittle's index policy has been the subject of great interest for computational, empirical and theoretical reasons.
The policy is potentially attractive in terms of computational cost as it reduces the original restless bandit problem, whose state space is the Cartesian product of the state spaces of the arms, to the computation of $n$ Whittle indexes for individual arms.
The policy is also attractive from a systems-architecture point-of-view, as it allows one to mix-and-match different types of arms, and it naturally accomodates the arrival or departure of arms in the sense that the Whittle index does not depend on the number of arms $n$.
Additionally, extensive numerical tests of Whittle's policy in different applications repeatedly demonstrate that it performs remarkably well when the arms are all indexable.
Indeed, 12 references to such empirical work are cited in Section~8 of~\citet{Verloop16}.

Although Whittle's policy is not an optimal policy for general restless bandits, under certain sufficient conditions and for a certain limit, it is an \textit{asymptotically} optimal policy.
Specifically, in the limit as the number of arms $n$ tends to infinity, while the number of arms that can be simultaneously active $m$ varies in such a way that $m/n$ is as constant as possible, the ratio of the cost-rate of Whittle's policy to the cost-rate of an optimal policy for the given $n,m$ tends to one.
Assuming an average-cost setting, for collections of identical arms whose size $n$ does not vary with time, where each arm has a finite state space, \citet{Whittle88} originally conjectured that it was sufficient that the identical arm was indexable for such an asymptotic optimality result to hold.
However, \citet{Weber90} found counterexamples to this conjecture.
Nevertheless, Weber and Weiss also found sufficient conditions for asymptotic optimality to hold, and those sufficient conditions imply that the arms are indexable (Lemma~2 of that paper).
Under similar sufficient conditions, this asymptotic optimality result has recently been generalised by~\citet{Verloop16} to restless bandits with dynamic populations of non-identical arms.
Both the results of Weber and Weiss and the results of Verloop assume an average-cost setting and arms with finite state spaces. 
So new theoretical work may be required to understand asymptotic optimality for arms with uncountable state spaces, as studied here.

\paragraph{Whittle Index for the Multi-Target Tracking Problem.}
The above discussion prompts the third question addressed in this paper:
\textit{is the multi-target tracking problem indexable, and if so, what is a computationally-convenient expression for its Whittle index?}
The answer is given by the following Theorem.

\begin{theorem}
\label{Indexability}
Suppose the state space $\IR$ and the cost functions $c : \{0,1\} \rightarrow\R$ and $C : \IR \rightarrow \R$ satisfy Condition C.
Let the price $\nu\in\R$, the discount factor $\beta\in [0,1)$ and let the transitions $\phi_a : \IR \rightarrow \IR$ for $a\in \{0,1\}$ be given by~(\ref{eq:phidef}). 
Then the family of dynamic programs
\begin{align*}
V(x;\nu) = \min_{a\in\{0,1\}} \left\{ \nu c(a) + C(x) + \beta V(\phi_a(x);\nu) \right\}
\end{align*}
is indexable.
Furthermore, for each $x\in\IR$ the Whittle index is
\begin{align*}
\lambda(x) &:= \frac{\sum_{t=0}^\infty \beta^t (C(X_t(x,0;x))-C(X_t(x,1;x)))}{
\sum_{t=0}^\infty \beta^t (c(A_t(x,1;x))-c(A_t(x,0;x)))}
\end{align*}
where for any $s\in [-\infty,\infty]$ we define $X_t(x,a;s)$ to be the state at time $t=0,1,\dots$ if the system starts in state $x\in \IR$
at time $t=0$, then action $a\in \{0,1\}$ is taken 
and a policy which takes the actions $A_t(x,a;s) := \I{X_t(x,a;s) \ge s}$ is followed thereafter 
(this is the $x$-threshold policy).
\end{theorem}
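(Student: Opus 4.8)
The plan is to reduce the claim to the single-arm problem~(\ref{eq:DP1}) with the acquisition cost rescaled by the price, to invoke Theorem~\ref{theorem:ThresholdOptimality} so as to obtain a threshold-optimal policy at every price, to deduce indexability from monotonicity of the optimal threshold in the price, and finally to pin down the Whittle index by an indifference computation at the critical price. For fixed $\nu$, the dynamic program for $V(\cdot;\nu)$ is precisely~(\ref{eq:DP1}) with $c(\cdot)$ replaced by $\nu c(\cdot)$; writing $\nu c(a)=\nu c(0)+\nu\bigl(c(1)-c(0)\bigr)a$ and discarding the state-independent constant $\nu c(0)$, the scalar $\nu$ plays exactly the role of the additive price of the Whittle-index definition, up to the positive factor $c(1)-c(0)$. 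For $\nu>0$ the rescaled cost $\nu c(\cdot)$ still satisfies Condition~C --- it only rescales $c$, preserving $c(0)<c(1)$, and leaves $C$ untouched --- so Theorem~\ref{theorem:ThresholdOptimality} yields an optimal policy that observes exactly when $v_t$ exceeds a threshold $z(\nu)\in[0,\infty]$; the prices $\nu\le 0$ are handled by a limiting argument together with the remark that for sufficiently negative $\nu$ the active action, being uniformly cheapest, is optimal in every state. Hence at each price the states in which the passive action is optimal form a sub-level interval $\{x\in\IR:x<z(\nu)\}$ up to its endpoint, and those in which the active action is optimal form the complementary super-level interval, so by the definition of indexability it suffices to prove that $z(\nu)$ is non-decreasing in $\nu$.

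This monotonicity is, I expect, the main obstacle. The $\beta$-discounted cost of following the fixed threshold-$z$ policy from state $x$ has the form $\mathcal{C}(x,z)+\nu\,\mathsf{c}(x,z)$, an affine function of $\nu$ whose slope $\mathsf{c}(x,z)$ is the discounted acquisition cost along that policy's trajectory, and the optimal threshold at price $\nu$ minimises this over $z$; so a monotone-comparative-statics argument (Topkis) gives $z(\nu)$ non-decreasing as soon as $\mathsf{c}(x,z)$ is non-increasing in $z$ over the relevant range of $x$ --- intuitively, a higher threshold means observing less often, hence (since $c(1)>c(0)$) a smaller discounted acquisition cost. The difficulty is that $\phi_0$ and $\phi_1$ are genuinely nonlinear and the map $v\mapsto\phi_{\I{v\ge z}}(v)$ driving a threshold policy is discontinuous at $v=z$, so the trajectories produced by two different thresholds (or two different initial states) need not be ordered and may overtake one another, which defeats a naive coupling. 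This is precisely where one must bring in the combinatorial description of the active/passive patterns generated by such discontinuous maps --- the mechanical-word machinery and the verification theorem~\ref{theorem:verification} underlying Theorem~\ref{theorem:ThresholdOptimality} --- to control how $\mathsf{c}(x,z)$ (and $\mathcal{C}(x,z)$) vary with $z$ and $x$; indeed, if that verification theorem applies uniformly to the parametric family of costs $\nu c(\cdot)+C(\cdot)$, the monotonicity may follow from it directly. An equivalent route is to show that $\Delta(\nu):=V_1(x;\nu)-V_0(x;\nu)=\nu\bigl(c(1)-c(0)\bigr)+\beta\bigl(V(\phi_1(x);\nu)-V(\phi_0(x);\nu)\bigr)$, with $V_a(x;\nu):=\nu c(a)+C(x)+\beta V(\phi_a(x);\nu)$, changes sign only once as $\nu$ increases, using concavity of $\nu\mapsto V(y;\nu)$ and the ordering $\phi_1(x)\le\phi_0(x)$; the same overtaking phenomenon makes the required estimate on the $\nu$-derivative delicate.

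Granted indexability, let $\nu^*:=\lambda^*(x)$ be the critical price at state $x$. By the definition of the Whittle index, both actions are optimal in state $x$ at price $\nu^*$; by the threshold structure of optimal policies, the $x$-threshold policy $\sigma_x$ is then itself an optimal stationary policy at price $\nu^*$ (state $x$ sits on the boundary between the passive and active regions). Write $V_a^{\sigma_x}(x;\nu)$ for the value of taking action $a$ in state $x$ and following $\sigma_x$ thereafter; since $\sigma_x$ is optimal everywhere, $V_a^{\sigma_x}(x;\nu^*)$ equals the optimal Q-value $\nu^*c(a)+C(x)+\beta V(\phi_a(x);\nu^*)$, which in turn equals $V(x;\nu^*)$ for both $a=0$ and $a=1$ because both actions are optimal in $x$; thus $V_0^{\sigma_x}(x;\nu^*)=V_1^{\sigma_x}(x;\nu^*)$. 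But along the $x$-threshold trajectory these values are affine in $\nu$, namely $V_a^{\sigma_x}(x;\nu)=\sum_{t\ge 0}\beta^t C\bigl(X_t(x,a;x)\bigr)+\nu\sum_{t\ge 0}\beta^t c\bigl(A_t(x,a;x)\bigr)$, so equating the two at $\nu=\nu^*$ and solving the resulting linear equation gives $\nu^*=\lambda(x)$ with $\lambda(x)$ the stated ratio --- provided its denominator is nonzero. Two essentially routine checks finish the proof: the series converge because $\beta<1$ and, after the first transition, the variance stays in a compact subinterval of $\IR$ on which $C$ is bounded (every $\phi_a$ satisfies $\phi_a(v)\ge\phi_a(0)>0$, and any excursion above the threshold is immediately curtailed by an observation); and the denominator $\sum_{t\ge 0}\beta^t\bigl(c(A_t(x,1;x))-c(A_t(x,0;x))\bigr)$ is nonzero because its $t=0$ term is $c(1)-c(0)>0$ and, by the same trajectory analysis, is not cancelled by the later terms --- with the degenerate cases $\lambda(x)=\pm\infty$ absorbed into the same bookkeeping. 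This establishes $\lambda^*(x)=\lambda(x)$, completing the argument.
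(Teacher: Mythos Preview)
Your plan has the right shape but a genuine gap at exactly the point you flag as ``the main obstacle.''  The Topkis step needs the discounted acquisition cost $\mathsf{c}(x,z)=\sum_{t\ge 0}\beta^t c(A_t(x;z))$ to be non-increasing in the threshold $z$, and the heuristic ``higher threshold means observing less often'' does not survive the nonlinear, discontinuous dynamics of $v\mapsto\phi_{\I{v\ge z}}(v)$.  The paper only shows (Lemma~\ref{lemma:lexs}) that the itinerary $\sigma(x|s)$ is \emph{lexicographically} non-increasing in $s$, which does not imply that the $\beta$-discounted count of $1$'s is monotone (for instance $100\cdots$ versus $011\cdots$ with $\beta$ close to $1$).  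So the monotone-comparative-statics route, as stated, does not close; nor does the alternative via $\Delta(\nu)$, since $V(\cdot;\nu)$ being concave in $\nu$ does not make the difference $V(\phi_1(x);\nu)-V(\phi_0(x);\nu)$ monotone in $\nu$.

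More importantly, your detour through Theorem~\ref{theorem:ThresholdOptimality} is circular.  That theorem is itself proved by applying the verification theorem (Theorem~\ref{theorem:verification}) after checking PCLI0--PCLI3, and Claim~4 of Theorem~\ref{theorem:verification} already delivers \emph{both} indexability \emph{and} the identification of the Whittle index with the marginal productivity index $\lambda(x)=c_x(x)/w_x(x)$, which is exactly the stated ratio.  In other words, the hard monotonicity you need---that the critical threshold is non-decreasing in the price---is precisely PCLI2 ($\lambda$ non-decreasing and continuous), established in Sections~\ref{section:non-decreasing}--\ref{section:continuity} via the mechanical-word and majorisation machinery.  The paper therefore proves Theorem~\ref{Indexability} in one line by citing Theorem~\ref{theorem:verification} together with Propositions~\ref{proposition:PCLI0}, \ref{proposition:POS}, \ref{proposition:increasing}, \ref{proposition:easy-increasing}, \ref{proposition:continuous} and~\ref{proposition:PCLI3}.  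Your indifference computation in the last paragraph is correct but is already the content of Definition~\ref{def:marginals} and Claim~4; and your bookkeeping on the denominator is exactly PCLI1 (Proposition~\ref{proposition:POS}).  The fix is simply to invoke Theorem~\ref{theorem:verification} directly rather than passing through Theorem~\ref{theorem:ThresholdOptimality} and then trying to reprove indexability by hand.
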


\begin{proof}
This result is an immediate consequence of Theorem~\ref{theorem:verification}
whose hypotheses hold according to Propositions~\ref{proposition:PCLI0},~\ref{proposition:POS},~\ref{proposition:increasing},~\ref{proposition:easy-increasing},~\ref{proposition:continuous} and~\ref{proposition:PCLI3}. 
\end{proof}

This paper thus generalises the work of~\citet{Dance15} by 
demonstrating that threshold policies are in fact optimal
for the single arm problem, which was Assumption~A1 of~\citep{Dance15}.
It also generalises by considering the case of multipliers $A<1$ rather than only considering $A=1$,
where $A$ is as in equation~(\ref{eq:system}), and by considering cost functions $C(v_t)\ne v_t$ other than the posterior variance.

\subsection{Intuitive Guide to the Paper}
As with other work on Markov decision processes, 
we work with the cost-to-go $Q(x,a)$ 
when starting in initial state $x$ and taking initial action $a$,
but then following an optimal policy.
A common way to prove that threshold policies are optimal 
when the state $x$ is real-valued, is to 
show that the difference $Q(x,1)-Q(x,0)$ is a non-increasing
function of $x$.
Such approaches have been studied by~\citet{Serfozo76},
~\citet{Altman95} and~\citet{Altman00}.
Unfortunately, as shown in Figure~\ref{fig:counter}, 
such an approach fails for the process considered in this paper,
even when the cost equals the variance.

\begin{figure}
\begin{center} 
\includegraphics[viewport=20mm 105mm 195mm 185mm,clip=true,width=0.7\textwidth]{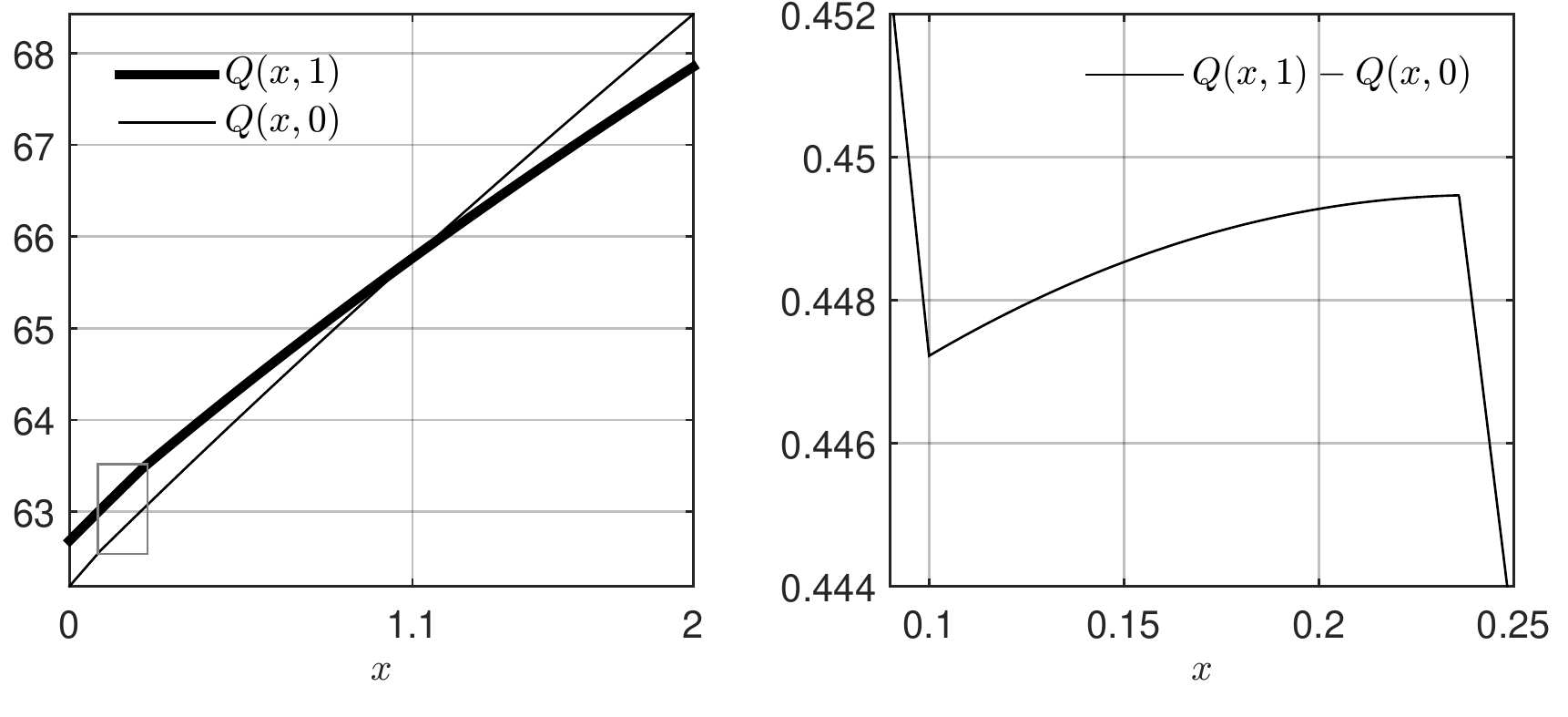}
\end{center}
\caption{Counterexample to monotonicity of the difference in $Q$-functions.
The functions $Q(x,0)$ and $Q(x,1)$ cross only a single time at $x=1.1$
(\textit{left plot}). 
However, the difference $Q(x,1)-Q(x,0)$ is increasing
for some $x$ (\textit{right plot}, for $x$ in the left plot's grey box).
The model has 
$\beta = 0.95, \ C(x)=x, \ \phi_0(x) = x+1, \ \phi_1(x) = 1/(a_1+1/(x+1))$ with $a_1 = 0.1$ and $\nu = 0.7647$.}
\label{fig:counter}
\end{figure}

Instead, this paper proves the optimality of threshold policies using a new verification 
theorem by~\citet{NinoMora15}.
This theorem applies to Markov decision processes that satisfy the so-called
\textit{partial conservation law indexability} (PCLI) conditions (Section~2).
The central concept underlying the verification theorem 
is the \textit{marginal productivity index}
which turns out to be equal to the ratio $\lambda$ given in Theorem~\ref{Indexability}.

One of the PCLI conditions requires 
that the marginal productivity index $\lambda(x)$ is 
a non-decreasing function of the state $x$. 
This is the most challenging of the conditions to verify.
As a quick check, we plot $\lambda(x)$ in Figure~\ref{fig:approximation}.
Although $\lambda(x)$ is increasing, the numerator and denominator have a 
fractal structure, so it is surprising that the index is continuous.
Furthermore, if we subtract a cubic fit to $\lambda(x)$, the
residual has a complicated sequence of cusps.
Therefore the paper then focusses on characterising the sequence of actions $A_t(x,a;x)$
that give rise to this fractal pattern.
We prove that these sequences are 
special binary strings called \textit{mechanical words} (Section~3).

\begin{figure}
\begin{center} 
\includegraphics[viewport=5mm 124mm 205mm 175mm,clip=true,width=\textwidth]{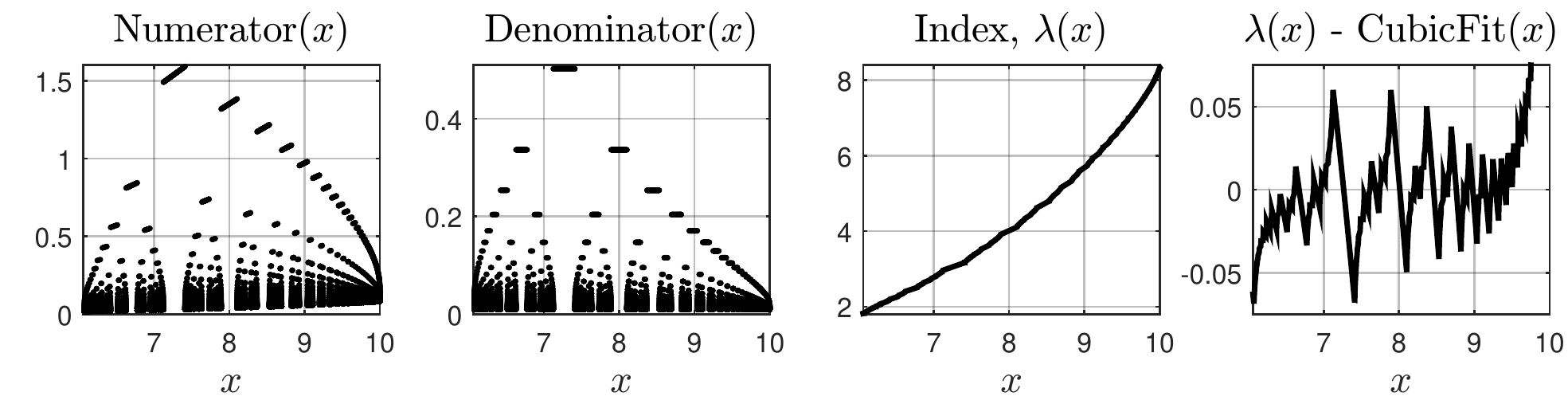}
\end{center}
\caption{The numerator (\textit{left}) and denominator (\textit{mid-left}) of the index (\textit{mid-right}), and the error in a cubic fit to the index (\textit{right}). 
The model has cost $C(x)=x$, 
discount factor $\beta = 0.99$,
map-with-a-gap $\phi_0(x) = rx+1$ and $\phi_1(x) = 1/(a_1 + 1/(rx+1))$
with $r = 0.9$ and $a_1 = 0.01$.}
\label{fig:approximation}
\end{figure}

Another key to proving that the index is non-decreasing is the fact that
the mappings $\phi_0(x)$ and $\phi_1(x)$ are M{\"o}bius transformations
of the form
\begin{align*}
\mu_A(x) := \frac{A_{11} x + A_{12}}{A_{21} x + A_{22}}
\end{align*}
for some $A\in \R^{2\times 2}$.
Now the composition of M{\"o}bius transformations is homeomorphic to 
matrix multiplication, so that 
\begin{align*}
\mu_A (\mu_B(x)) = \mu_{AB}(x) 
\end{align*}
for any $A,B\in\R^{2\times 2}$.
Further, if $\det(A)=1$ then the gradient of the corresponding M{\"o}bius
transformation is 
\begin{align*}
\frac{d}{dx} \mu_{A}(x) = \frac{1}{(A_{21} x + A_{22})^2} 
\end{align*}
which is a convex function for $x\in\R_+$ and $A_{21}, A_{22} \in \R_{++}$.
So the gradient of the numerator of the index, in the case $C(x)=x$, 
is the difference of the sums of a convex function of a sequence of
linear functions of $x$. 
Such sums can be addressed by the theory of \textit{majorisation}
\citep{Marshall10}, provided the sequence of linear functions of $x$
satisfy certain \textit{majorisation conditions}. 
It turns out that those conditions are satisfied because of a special
palindromic property of mechanical words.

\subsection{Stucture of the Paper}
First we give Ni{\~n}o-Mora's theorem about the optimality of threshold policies,
which is based on four partial conservation law indexability (PCLI) conditions
(Section~2). 
Then we relate the sequence of actions under threshold policies to \textit{mechanical words} (Section~3). 
We use the properties of those words to demonstrate that each PCLI condition holds.
These conditions concern bounded variation (Section~4), 
the positivity of so-called \textit{marginal work} (Section~5),
the non-decreasing nature of the marginal productivity index (Section~6),
the continuity of that index (Section~7)
and a condition that characterises the index
as a \textit{Radon-Nikodym derivative} (Section~8).

Having completed the proof, we then turn to 
closed-form expressions for the index and numerical methods
for evaluating it when such closed forms are not available (Section~9).
We demonstrate the accuracy of such numerical methods 
and show how the index varies as its parameters change.
Also, we compare the performance of Whittle's index policy with
other well-known heuristics (Section~9).

Finally, we discuss interesting avenues for further work (Section~10).
The appendices contain detailed proofs about the relation of itineraries to mechanical words (Appendix~A),
of a key majorisation inequality (Appendix~B), 
about the linear systems orbits to which this majorisation result is applied (Appendix~C)
and of the optimality of threshold policies for the LQG problem with costly observations (Appendix~D).

%
%
\section{Verification Theorem for the Optimality of Threshold Policies}
\label{section:verification}
We present a theorem which guarantees
the optimality of threshold policies for two-action Markov decision problems
under certain hypotheses.
This is a special case of a theorem due to \citet{NinoMora15} 
which extends previous work on countable state spaces \citep{NinoMora01,NinoMora06} to 
problems where the state space is an interval of the real line.
Ni{\~n}o-Mora calls the theorem's hypotheses the 
\textit{partial conservation law indexability (PCLI)} conditions.
This terminology was chosen to contrast with 
the \textit{strong} conservation law conditions of~\citet{Shanthikumar92} 
and the \textit{generalised} conservation law conditions of~\citet{Bertsimas96}, 
which have their roots in \textit{conservation laws} for queueing systems under 
which waiting times are invariant to the queueing discipline~\citep{Kleinrock65}.

The theorem relates to a family of dynamic programming equations with
a single parameter $\nu \in \R$, which might be interpreted as a wage,
a tax, the cost of activating a sensor or a Lagrange multiplier. 
For each $\nu\in\R$, we consider the following simple 
dynamic program for value function $V(\cdot;\nu) : \IR \rightarrow \R$, where the state space $\IR$ is an interval of $\R$:
\begin{align}
\label{eq:simpleDP}
V(x;\nu) = \min_{a\in \{0,1\}} \left\{ \nu c(x,a) + C(x,a) + \beta V(\phi_a(x); \nu)  \right\}
\end{align}
where $c : \IR \times \{0,1\} \rightarrow \R$ is called the \textit{work},
$C : \IR \times \{0,1\} \rightarrow \R$ is called the \textit{cost}, 
the discount factor is $\beta \in [0,1)$ and 
the state transitions are given by $\phi_a : \IR \rightarrow \IR$ for each action $a\in \{0,1\}$.
We discuss generalisations of this dynamic program after the proof of the verification theorem.

To state the PCLI conditions, we first recall the well-known definitions of \cadlag, \caglad{ }and bounded-variation functions.
Let $\mathcal{J}\subseteq\R$ and let $(\mathcal{M},d)$ be a metric space. 
A function $f : \mathcal{J}\rightarrow \mathcal{M}$ is a \textit{\caglad{ }}function 
if both the left limit $f(x^-):=\lim_{u\uparrow x} f(u)$ and the right limit $f(x^+) := \lim_{u\downarrow x} f(u)$ exist and $f(x^-)=f(x)$, for all $x\in \mathcal{J}$. 
A function $f : \mathcal{J} \rightarrow \mathcal{M}$ is \textit{\cadlag{ }}if both of the limits $f(x^-), f(x^+)$ exist and $f(x^+)=f(x)$, for all $x\in \mathcal{J}$.
(C\`adl\`ag is an abbreviation of the French description 
\textit{``continue \`a droite, limite \`a gauche''}, which means ``right continuous, left limit''.)

Let $\IR$ be an interval of $\R$.
A \textit{partial subdivision} of $\IR$ is a collection 
$\{\mathcal{I}_1, \mathcal{I}_2, \dots, \mathcal{I}_n\}$ of closed intervals of $\IR$, 
where $n\in\N$,
such that the set $\mathcal{I}_j\cap \mathcal{I}_k$ is either empty 
or consists of a single point that is an endpoint of both $\mathcal{I}_j$ and $\mathcal{I}_k$,
for all $1 \le j < k \le n$.
Let $\mathcal{S}$ be the set of partial subdivisions of $\IR$. 
A function $f : \IR \rightarrow \R$ has \textit{bounded variation} if
\begin{align*}
\sup_{\{ [a_1,b_1], [a_2, b_2], \dots, [a_n, b_n]\} \in\mathcal{S}} \sum_{i=1}^n \abs{f(b_i)-f(a_i)} < \infty.
\end{align*}

Now we need some definitions concerning $s$-threshold policies.
For state $x\in\IR$, 
initial action $a\in \{0,1\}$ 
and threshold $s\in\bR = [-\infty,\infty]$, 
let $X_t(x,a;s)$ and $A_t(x,a;s)$ be the state and action at time $t\in\Z_+$ 
when the initial state is $X_0(x,a;s) = x$,
the initial action is $A_0(x,a;s) = a$ 
and the \textit{$s$-threshold policy} is followed for $t\in\N$,
which is the policy that takes action $A_t(x,a;s)=1$ if and only $X_t(x,a;s)\ge s$.
Also, for $t\in\Z_+$, let $X_t(x;s) := X_t(x,\I{x\ge s};s)$ and $A_t(x;s) := A_t(x,\I{x\ge s};s)$
denote the state and action when all actions are taken according to the $s$-threshold policy.
The \textit{cost-to-go} $f(x,a;s)$ and the \textit{work-to-go} $g(x,a;s)$ are
\begin{align*}
f(x,a;s) &:= \sum_{t=0}^\infty \beta^t C(X_t(x,a;s),A_t(x,a;s)), &
g(x,a;s) &:= \sum_{t=0}^\infty \beta^t c(X_t(x,a;s),A_t(x,a;s)).
\end{align*}
We also define $f(x;s) := f(x,\I{x\ge s};s)$ and $g(x;s) := g(x,\I{x\ge s};s)$
as the cost-to-go and work-to-go when the first action is taken according to the $s$-threshold policy.

We are now ready to state the partial conservation law indexability (PCLI) conditions.\begin{definition} 
\label{def:marginals}
Consider the family of dynamic programs~(\ref{eq:simpleDP}).
For state $x\in\IR$, 
action $a\in \{0,1\}$ 
and threshold $s\in\bR$, 
the 
\textbf{marginal cost} $c_x(s)$ and the \textbf{marginal work} $w_x(s)$ are
\begin{align*}
c_x(s) &:= f(x,0;s)-f(x,1;s), &
w_x(s) &:= g(x,1;s)-g(x,0;s), 
\end{align*}
and the \textbf{marginal productivity index} $\lambda(x)$ is
\begin{align*}
\lambda(x) &:= c_x(x)/w_x(x).
\end{align*} 
Family~(\ref{eq:simpleDP}) is \textbf{partial conservation law indexable} if
for all $x\in \IR$ and all $s\in \bR$:
\begin{enumerate}
\setlength{\itemindent}{.7cm}
\item[PCLI0.] The marginal work $w_x(s)$ is a \caglad{ }function of $s$ with bounded variation.
\item[PCLI1.] The marginal work $w_x(s)$ is positive.
\item[PCLI2.] The marginal productivity index $\lambda(x)$ is non-decreasing and continuous.
\item[PCLI3.] The marginal cost satisfies $c_x(b)-c_x(a) = \int_{[a,b)} \lambda \, dw_x$ for all $[a,b) \subseteq \IR$. 
\end{enumerate}
\end{definition}

\citet{NinoMora15} uses three rather than four conditions,
as he derives an equivalent to our condition PCLI0 from additional assumptions
and with a different definition of the $s$-threshold policy resulting in
the marginal work $w_x(s)$ being a \cadlag{ }rather than \caglad{ }function of $s$.
Condition PCLI3 requires that $\lambda$ is a Radon-Nikodym derivative 
of the signed Lebesgue-Stieltjes measure~\citep{Carter00} corresponding to the marginal cost $c_x(\cdot)$
with respect to the marginal work $w_x(\cdot)$.
In analyses of discrete-state Markov decision processes, condition PCLI3
is not required as it is implied by PCLI1 and PCLI2~\citep{NinoMora15}.

\begin{theorem}
\label{theorem:verification}
Suppose PCLI0-PCLI3 hold for the family~(\ref{eq:simpleDP}). Let $x\in\IR$ and $\nu\in\R$.
\begin{enumerate}
\item If $\lambda(s) = \nu$ for some $s\in\IR$, then the $s$-threshold policy is an optimal policy
and actions 0 and 1 are both optimal in state $x$ if and only if $\lambda(x)=\nu$.
\item If $\lambda(s) > \nu$ for all $s\in\IR$, then the always-active policy is the unique optimal policy. 
\item If $\lambda(s)<\nu$ for all $s\in\IR$, then the always-passive policy is the unique optimal policy.
\item The family is indexable and its Whittle index is the marginal productivity index $\lambda$. 
\end{enumerate}
\end{theorem}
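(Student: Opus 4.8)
The plan is to show that, for each price $\nu\in\R$, a suitable threshold policy has a cost-to-go that solves the Bellman optimality equation for the $\nu$-parametrised program~(\ref{eq:simpleDP}), and then to read off all four assertions. Fix $\nu$. For $s\in\bR$ write $\Phi^s(x,a):=\nu g(x,a;s)+f(x,a;s)$ for the tax-penalised cost-to-go of the $s$-threshold policy started from state $x$ with forced initial action $a$, and $\Phi^s(x):=\Phi^s(x,\I{x\ge s})$ for the version where the initial action is also chosen by the policy. Unrolling the recursions defining $f$ and $g$ by one step gives $\Phi^s(x,a)=\nu c(x,a)+C(x,a)+\beta\Phi^s(\phi_a(x))$, so $\Phi^s$ solves $\Phi^s(x)=\min_a\{\nu c(x,a)+C(x,a)+\beta\Phi^s(\phi_a(x))\}$ for every $x$ if and only if the threshold action $\I{x\ge s}$ attains $\min_a\Phi^s(x,a)$ at every $x$. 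Since $\Phi^s(x,1)-\Phi^s(x,0)=\nu w_x(s)-c_x(s)$ and $w_x(s)>0$ by PCLI1, this reduces to the two inequalities $c_x(s)\ge\nu w_x(s)$ for $x\ge s$ and $c_x(s)\le\nu w_x(s)$ for $x<s$.

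The heart of the argument will be the identity, valid for all $x,s\in\IR$,
\[
\lambda(s)\,w_x(s)-c_x(s)=\int_{[x,s)}w_x\,d\lambda ,
\]
with the convention $\int_{[x,s)}:=-\int_{[s,x)}$ when $s<x$. I would derive this by combining the definition $c_x(x)=\lambda(x)w_x(x)$, the Radon--Nikodym representation $c_x(s)-c_x(x)=\int_{[x,s)}\lambda\,dw_x$ from PCLI3, and the Lebesgue--Stieltjes integration-by-parts formula $\int_{[x,s)}\lambda\,dw_x+\int_{[x,s)}w_x\,d\lambda=\lambda(s)w_x(s)-\lambda(x)w_x(x)$; the continuity of $\lambda$ (PCLI2) and the fact that $w_x$ is \caglad{ }of bounded variation (PCLI0) are precisely what make the endpoint terms in the last formula clean. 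Since $w_x>0$ and $\lambda$ is non-decreasing, $\int_{[x,s)}w_x\,d\lambda\ge 0$ when $s\ge x$ and $\le 0$ when $s<x$, with equality iff $\lambda$ is constant between $x$ and $s$. Taking $s=s_0$ with $\lambda(s_0)=\nu$ then delivers exactly the two one-step inequalities above, so $\Phi^{s_0}$ solves the Bellman optimality equation; a standard verification argument for discounted dynamic programs identifies $\Phi^{s_0}$ with $V(\cdot;\nu)$, so the $s_0$-threshold policy is optimal. Moreover both actions are optimal at $x$ iff $\Phi^{s_0}(x,0)=\Phi^{s_0}(x,1)$, i.e.\ iff $\int_{[x,s_0)}w_x\,d\lambda=0$, which, using $w_x>0$ together with the monotonicity and continuity of $\lambda$ and $\lambda(s_0)=\nu$, holds iff $\lambda(x)=\nu$. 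This proves part~1.

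For parts~2 and~3 I would apply the same identity with $s$ driven to the relevant endpoint of $\IR$ --- the left endpoint $\underline s:=\inf\IR$ for the always-active case, the right endpoint for the always-passive case --- using the bounded variation and \caglad-ness of $w_x(\cdot)$ (PCLI0) to pass to the limit when that endpoint is not itself in $\IR$. Writing $\Phi$ for the cost-to-go of the always-active policy, when $\lambda(s)>\nu$ for all $s\in\IR$ one gets $\Phi(x,1)-\Phi(x,0)=\nu w_x(\underline s)-c_x(\underline s)=(\nu-\lambda(\underline s))w_x(\underline s)-\int_{[\underline s,x)}w_x\,d\lambda<0$ at every $x$ (the strictness coming from either $\lambda(\underline s)>\nu$ or $\lambda$ strictly exceeding $\nu$ somewhere on $(\underline s,x)$), so the always-active policy is the unique optimum; part~3 is symmetric. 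Part~4 then follows from parts~1--3: fixing a state $x$ and varying $\nu$, continuity of $\lambda$ makes the three alternatives exhaustive ($\lambda(s_0)=\nu$ for some $s_0$; $\lambda>\nu$ throughout $\IR$; $\lambda<\nu$ throughout $\IR$), and in each case parts~1--3 give that action $1$ is optimal at $x$ iff $\nu\le\lambda(x)$ and action $0$ is optimal iff $\nu\ge\lambda(x)$; hence every arm is indexable with Whittle index $\lambda$.

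The hard part will be the Lebesgue--Stieltjes integration-by-parts step and its endpoint bookkeeping: making the product rule for $\lambda w_x$ close without stray boundary terms is exactly where the regularity supplied by PCLI0 and PCLI2 is used (continuity of $\lambda$ kills the would-be term at $s$, left-continuity of $w_x$ the one at $x$), and the limiting arguments for parts~2--3 when the threshold sits at a boundary point of $\IR$ that lies outside $\IR$ need care with the monotone convergence of $w_x(\cdot)$ and $c_x(\cdot)$. A secondary loose end is the passage from ``$\Phi^{s_0}$ solves the optimality equation'' to ``$\Phi^{s_0}$ is the value function and the $s_0$-threshold policy is optimal'', which rests on well-posedness of the discounted program (finiteness of the cost- and work-to-go and uniqueness of the discounted fixed point) --- automatic when costs are bounded, but requiring some growth control when $C$ is unbounded.
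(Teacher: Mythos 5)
Your proposal is correct and follows essentially the same route as the paper: the same integration-by-parts identity $\lambda(s)w_x(s)-c_x(s)=\int_{[x,s)}w_x\,d\lambda$ (with PCLI0 and PCLI2 killing the endpoint terms and PCLI1--PCLI2 giving its sign), the same one-step comparison of $Q(x,1;s,\nu)-Q(x,0;s,\nu)=\nu w_x(s)-c_x(s)$, the same limiting argument at the endpoints of $\IR$ for the always-active/always-passive cases, and the same case analysis on $\nu$ versus the range of $\lambda$ for indexability. The well-posedness point you flag at the end (passing from ``solves the Bellman equation'' to ``is optimal'') is also left implicit in the paper's proof, so you are not missing anything it supplies.
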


\begin{proof}
Let $Q(x,a;s,\nu)$ be the total-cost-to-go under the $s$-threshold policy
from initial state $x$ when the initial action is $a$, so that 
\begin{align*}
Q(x,a;s,\nu) &= f(x,a;s)+\nu g(x,a;s).
\end{align*}

\noindent\textit{Claim 1.} Suppose that $s\in\IR$ with $\lambda(s)=\nu$.
We shall show that 
\begin{align*}
Q(x,0;s,\nu) &\le Q(x,1;s,\nu) \quad \text{if $x\le s$}\\
Q(x,1;s,\nu) &\le Q(x,0;s,\nu) \quad \text{if $x\ge s$}
\end{align*}
and these inequalities are strict if and only if $\lambda(x) \ne \lambda(s)$.
Thus the value function of the $s$-threshold policy, given by
\begin{align*}
V(x;s,\nu) := \begin{cases}
	Q(x,0;s,\nu) & \text{if $x\le s$} \\
	Q(x,1;s,\nu) & \text{if $x\ge s$} \end{cases}
\end{align*}
satisfies the dynamic program (\ref{eq:simpleDP}). 
Therefore the $s$-threshold policy is optimal
and actions 0 and 1 are both optimal if and only if $\lambda(x)=\nu$.

Say $x\le s$.
Noting that $w_x$ has bounded variation (by PCLI0) 
and $\lambda$ is a continuous function (by PCLI2),
Lebesgue-Stieltjes integration-by-parts~\citep{Carter00} gives
\begin{align*}
\int_{[x,s)} \lambda \, dw_x + \int_{[x,s)} w_x \, d\lambda 
&= \lambda(s^-) w_x(s^-) - \lambda(x^-) w_x(x^-) .
\end{align*}
Now the second integral is non-negative,
as $\lambda$ is non-decreasing (by PCLI2)
and the integrand $w_x$ is non-negative (by PCLI1).
Also, 
$\lambda(s^-) w_x(s^-) = \lambda(s) w_x(s)$
and $\lambda(x^-) w_x(x^-) = \lambda(x) w_x(x)$,
as $\lambda$ is continuous (by PCLI2) and $w_x$ is a \caglad{ }function (by PCLI0).
Therefore
\begin{align*}
\int_{[x,s)} \lambda \, dw_x &\le \lambda(s) w_x(s) - \lambda(x) w_x(x) .
\end{align*}
Using PCLI3, it follows that the marginal cost is bounded by
\begin{align*}
c_x(s) = c_x(x) + \int_{[x,s)} \lambda \, dw_x 
\le c_x(x) + \lambda(s) w_x(s) - \lambda(x) w_x(x) = \lambda(s) w_x(s)
\end{align*}
where we cancelled two terms as the definition of $\lambda$ gives $c_x(x)=\lambda(x) w_x(x)$.
Finally, using the definitions of $c_x$ and $w_x$ in conjunction with this bound gives
\begin{align*}
Q(x,1;s,\nu) - Q(x,0;s,\nu) &= f(x,1;s)+\nu g(x,1;s)-f(x,0;s)-\nu g(x,0;s) \\
&= \nu w_x(s)-c_x(s) \\
&\ge \nu w_x(s) - \lambda(s) w_x(s) \\
&= 0
\end{align*}
where the last line follows from the hypothesis that $\lambda(s) = \nu$.

Otherwise, $x\ge s$, and the claim follows easily from a symmetric argument.
However, for additional insight,
let us reorder the argument for $x\ge s$ as a single chain of equalities:
\begin{align*}
Q(x,0;s,\nu) - Q(x,1;s,\nu) 
&= -\nu w_x(s)+c_x(s) \\
&=-\nu w_x(s) + c_x(x) - \int_{[s,x)} \lambda \, dw_x \\
&=-\nu w_x(s) + c_x(x) - \lambda(x) w_x(x) + \lambda(s) w_x(s) + \int_{[s,x)} w_x \, d\lambda \\
&= (\lambda(s) - \nu) w_x(s) + \int_{[s,x)} w_x \, d\lambda \\
&=\int_{[s,x)} w_x \, d\lambda .
\end{align*}
Now, the fact that $w_x$ is positive (by PCLI1)
and that $\lambda$ is non-decreasing (by PCLI2), show that the integral in the last line is 
positive if $\lambda(x) > \nu$ and vanishes if $\lambda(x)=\nu$.

\noindent\textit{Claim 2.} Suppose $\lambda(s')>\nu$ for all $s'\in\IR$. We shall show that 
\begin{align*}
Q(x,0;-\infty,\nu) &> Q(x,1;-\infty,\nu) \qquad\text{for all $x\in\IR$.} 
\end{align*}
Therefore the always-active policy is the unique optimal policy.

We consider two cases: either the interval $\IR$ is left-open, being of the form
$(l,h)$ or $(l,h]$, or it is left-closed, being of the form $[l,h)$ or $[l,h]$. 
Say the interval is left-open and consider any $s\in\bR$ with $s\le l$.
As $\phi_a : \IR \rightarrow \IR$ for $a\in\{0,1\}$, we have $X_t(x,a;s') > l$ for all $t\in\Z_+$ and all $s'>l$.
Thus $A_t(x,a;l^+) = 1 = A_t(x,a;s)$ for all $t\in\Z_+$.
Therefore 
\begin{align}
\label{eq:cw-at-l}
c_x(l^+) = c_x(s) \qquad\text{and} \qquad w_x(l^+) = w_x(s).
\end{align}
Recalling Theorem~6.1.3 (i) of~\citet{Carter00}, which says that if $w_x$ is continuous at $u$
then $\int_{[u,x)} \lambda \, dw_x = \int_{(u,x)} \lambda \, dw_x$, we have
\begin{align*}
c_x(x) - c_x(s) 
&=\lim_{u\downarrow l} (c_x(x) - c_x(u)) & \text{by (\ref{eq:cw-at-l})} 	\\
&=\lim_{u\downarrow l} \int_{[u,x)} \lambda \, dw_x & \text{by PCLI3} \\
&=\lim_{u\downarrow l} \int_{(u,x)} \lambda \, dw_x & \text{by (\ref{eq:cw-at-l})} \\
&=  \int_{(l,x)} \lambda \, dw_x.
\end{align*}
Now $\lambda(l^+), \lambda(x) \in \R$, and $\lambda$ is non-decreasing, so $\lambda$
has bounded variation on the interval $(l,x)$.
Thus, the integration-by-parts argument used for Claim~1 gives
\begin{align*}
Q(x,0;s,\nu) - Q(x,1;s,\nu) 
&= c_x(s) - \nu w_x(s) \\
&= c_x(x) - \int_{(l,x)} \lambda \, dw_x - \nu w_x(s) \\
&= c_x(x) - \lambda(x^-) w_x(x^-) + \lambda(l^+) w_x(l^+) + \int_{(l,x)} \lambda \, dw_x - \nu w_x(s) \\
&= (\lambda(l^+) - \nu) w_x(l) + \int_{(l,x)} \lambda \, dw_x \\
&>0
\end{align*}
as claimed.

Now say the interval is left-closed, with lower limit $l$ and consider any $s\in\bR$ with $s<l$.
As for left-open intervals, we argue that $c_x(l) = c_x(s)$ and $w_x(l) = w_x(s)$.
The argument for Claim~1 (without needing to consider any $\lim_{u\downarrow l}$, 
since PCLI3 immediately covers intervals of the form $[l,x)$) then gives 
$Q(x,0;s,\nu) - Q(x,1;s,\nu) > 0$ as claimed.
\\\vspace{0.1cm}

\noindent\textit{Claim 3.} Suppose $\lambda(s')<\nu$ for all $s'\in\IR$.
Then an argument similar to for Claim~2 gives 
\begin{align*}
Q(x,0;\infty,\nu) &< Q(x,1;\infty,\nu) \qquad\text{for all $x\in\IR$.}
\end{align*}
Therefore the always-passive policy is the unique optimal policy.
\\\vspace{0.1cm}

\noindent\textit{Claim 4.} 
If $\lambda(x) < \nu$ then 
either $\lambda(s)=\nu$ for some $s\in\IR$
in which case Claim~1 shows that action 0 is strictly optimal,
or $\lambda(s)<\nu$ for all $s\in\IR$
in which case Claim~3 shows that action 0 is strictly optimal.
If $\lambda(x)>\nu$ then a similar argument using Claims~1 and~2 shows that action 1
is strictly optimal.
If $\lambda(x)=\nu$ then Claim~1 shows that actions 0 and 1 are both optimal.
Thus
\begin{align*}
\left. \begin{array}{l}
\text{action 0 is optimal in state $x$ if and only if $\lambda(x)\le\nu$} \\
\text{action 1 is optimal in state $x$ if and only if $\lambda(x)\ge\nu$}
\end{array} \right\} \quad \text{for all $x\in\IR$.}
\end{align*}
Therefore, the family is indexable with Whittle index $\lambda$.

This completes the proof.
\end{proof}

\begin{remark} 
The verification theorem presented by \citet{NinoMora15}
is considerably more general than that given here 
as it covers the case of stochastic rather than 
deterministic transition kernels, but the proof is considerably longer. 
Further generalisation of that theorem may be interesting, 
for instance to state spaces that are subsets of $\R^n$ and 
to semi-Markov problems.
\end{remark}

%
%
\section{Itineraries and Mechanical Words}
\label{section:mechanical}
The transitions from state-to-state under an $s$-threshold policy
are given by a discontinuous mapping known as a \textit{map-with-a-gap},
and the corresponding action sequences
are known as the \textit{itinerary} of that map. 
The purpose of this section is to introduce a central result of the paper (Theorem~\ref{theorem:characterisation}) which show that these itineraries are given
by special binary strings known as \textit{mechanical words}.
Before giving that result, 
we must first describe some important properties of maps-with-gaps 
and mechanical words.

\subsection{Maps-with-Gaps}
Many phenomena involve the iterated application of discontinous maps.
Such phenomena are important in control problems~\citep{Haddad14},
in physics, electronics and mechanics~\citep{Bernado08, Makarenkov12}, economics~\citep{Tramontana10}, 
biology and medicine~\citep{Aihara10}.
Such maps either arise directly from a discrete-time model or 
they may arise as the Poincar{\'e} maps of continuous-time systems.

For the purposes of this paper, we shall call a function $\psi : \mathcal{I} \rightarrow \mathcal{I}$,
where $\mathcal{I}$ is an interval of $\R$, a \textbf{map-with-a-gap} if 
\begin{align*}
\psi(x) = \begin{cases} \phi_0(x) & \text{if $x< z$} \\ \phi_1(x) & \text{otherwise} \end{cases}
\end{align*}
for some functions $\phi_0 : \mathcal{I} \rightarrow \mathcal{I}$ and $\phi_1 : \mathcal{I} \rightarrow \mathcal{I}$ and some threshold $z \in [-\infty,\infty]$.
This allows for thresholds $z\notin \mathcal{I}$, so 
the map $\psi$ may not really have a discontinuity, but this is helpful to allow for a 
general analysis of threshold policies.
The key concepts associated with such maps
are given in the following definition, as illustrated in Figure~\ref{fig:mapwithgap}.

\begin{figure}
\begin{center}
\includegraphics[viewport=50mm 90mm 155mm 200mm,clip=true,width=0.5\textwidth]{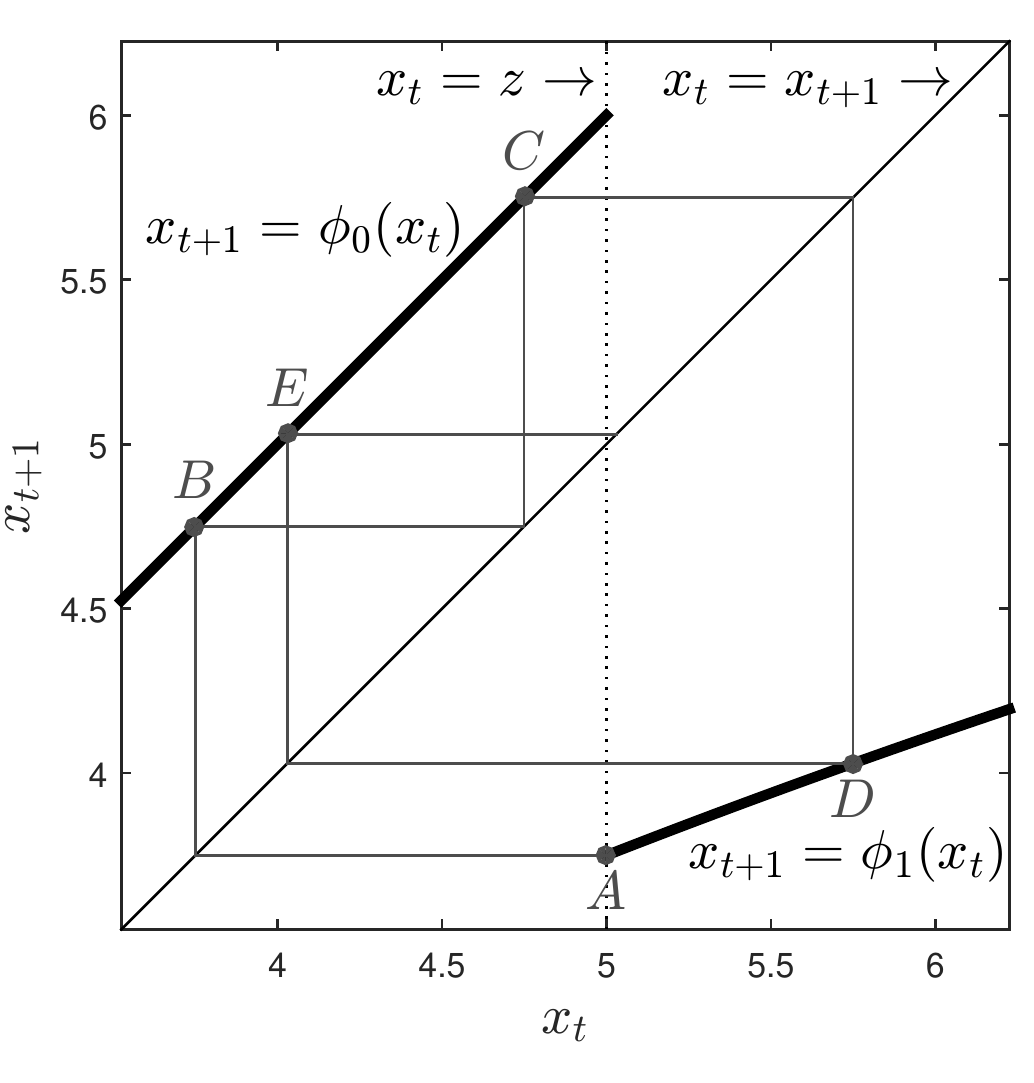}
\end{center}
\caption{Map-with-a-gap. The orbit traces the path $ABCDE\dots$ corresponding to the itinerary $10010\dots$. 
The map has $\phi_0(x) = x+1, \ \phi_1(x) = 1/(a_1+1/(x+1))$ with $a_1 = 0.1$ and threshold $z = 5$.}
\label{fig:mapwithgap}
\end{figure}

\begin{definition}
\label{def:mapwithgap}
Suppose $\psi : \mathcal{I}\rightarrow\mathcal{I}$ is a map-with-a-gap with threshold $z\in [-\infty,\infty]$.
The \textbf{orbit} of $\psi$ from initial state $x\in \mathcal{I}$ is the sequence
$(x_t : t \in \N)$ with
\begin{align*}
\text{$x_1 = x$ and $x_{t+1} = \psi(x_t)$ for $t \in \N$.}
\end{align*}
The \textbf{itinerary} of $\psi$ from state $x\in\mathcal{I}$ is the infinite binary string $\sigma(x|z)$ with $t^{\text{th}}$ letter
\begin{align*}
\text{$\sigma(x|z)_t := \I{x_t \ge z}$ for $t\in\N$.}
\end{align*}
\end{definition}

It is helpful to view such itineraries as \textit{words} as we now explain.

\subsection{Mechanical Words and \texorpdfstring{$\mathcal{M}$-Words}{M-Words}}
In this paper, a \textit{word} $w$ is a string on the alphabet $\{0,1\}$ and
the empty word is denoted by $\epsilon$.
The \textit{length} of a word $w$ is the number of letters in the string, which is finite or countably infinite,
and is denoted by $\abs{w}$.
The $k^{\text{th}}$ letter of word $w$ is $w_k$ for $k\in\N$ with $k\le \abs{w}$.
Letters $i$-through-$j$ of word $w$ are denoted by $w_{i:j} := w_i w_{i+1} \dots w_j$
for $i,j\in\N$ with $i\le j\le \abs{w}$.
For $j<i$, we treat $w_{i:j}$ as the empty word.
The \textit{reverse} word of a finite word $w$ is denoted by $w^R := w_{\abs{w}} \dots w_2 w_1$.
A finite word satisfying $w^R = w$ is called a \textit{palindrome}.

The \textit{concatenation} of a finite word $u$ and a word $v$ is denoted by $uv$.
For $n\in\Z_+$, the $n$-fold concatenation of a finite word $w$ is denoted by $w^n$,
with the convention that $w^0 = \epsilon$,
and the word resulting from infinitely concatenating the 
word $w$ is denoted by $w^\omega$. 
For an infinite word $w$ and $n\in\N$ we define $w^n = w^\omega = w$.

A finite word $f$ is a \textit{factor} of a word $w$ if $w = ufv$ for some finite word $u$
and some word $v$.
The number of times that word $f$ appears in $w$, overlapping appearances included,
 is denoted by $\abs{w}_f$.
A finite word $p$ is a \textit{prefix} of word $w$ if $w = ps$ for some word $s$
and a word $s$ is a \textit{suffix} of word $w$ if $w = ps$ for some finite word $p$.

We say that a word $u$ is \textit{lexicographically less than} a word $v$, written $u\prec v$,
if either $u$ is a finite word and $v=ua$ for some non-empty word $a$,
or if $u = a0b$ and $v=a1c$ for some finite word $a$ and some words $b$ and $c$.
We use $\succ, \preceq$ and $\succeq$ for the other lexicographic ordering relations.

We say an infinite word $w$ is the \textit{limit} of a sequence of  
words $(x^{(n)} : n\in\N)$ and write
\begin{align*}
w = \lim_{n\rightarrow\infty} x^{(n)}
\end{align*}
if for each $i\in\N$ there is an $n\in\N$
such that $w_i = x^{(m)}_i$ for all $m\in\N$ with $m\ge n$.

\paragraph{Example.} For $w=010111$ we have 
$\abs{w}=6$, $w_3=0$, $w_{2:4} = 101$, 
$\abs{w}_{01}=\abs{w}_{11} = 2$ 
and $w^2 = ww = 010111010111$. 
Also for $a=01, b = 11$ we have $w = aab$ and $a\prec w\prec b$.

One can view the itineraries of maps-with-gaps as words.

\begin{definition}
Sequence $(x_k : k \in\N)$ is the \textbf{$x$-threshold orbit}
for $\phi_0, \phi_1$ and $x\in\IR$ if
\begin{align*}
x_1 &= \phi_1( x), & 
x_{k+1} &= \begin{cases} \phi_1 (x_k) & \text{if $x_k \ge x$} \\ \phi_0 (x_k ) & \text{if $x_k < x$} \end{cases} & \text{for $k\in \N$} .
\intertext{The \textbf{$x$-threshold word} for $\phi_0$ and $\phi_1$, denoted by $\pi(x,\phi_0,\phi_1)$,
is the shortest word $w$ with}
&& x_{k+1} &= \phi_{(w^\omega)_k} (x_k) & \text{for $k\in \N$.}
\end{align*}
\end{definition}

We shall just say ``$x$-threshold orbit'', ``$x$-threshold word'' and write $\pi(x)$ in place of $\pi(x,\phi_0,\phi_1)$ 
when $\phi_0$ and $\phi_1$ are obvious from the context. 
Clearly, the itinerary is related to the $x$-threshold word 
by $\sigma(x|x) = 1\pi(x)^\omega.$ 

The \textit{rate} of any any finite non-empty word $w$ is the ratio
\begin{align*}
\rate(w) &:= \abs{w}_1 / \abs{w} 
\intertext{whereas for an infinite word $w$, when the limit exists, we define}
\rate(w) &:= \lim_{n\rightarrow\infty} \abs{w_{1:n}}_1 / n .
\end{align*}
While some authors refer to such ratios as the ``slope'' of a word, we use the term ``rate''
as the ``slope'' of a word $w$ is sometimes defined as the ratio $\abs{w}_{1} / \abs{w}_0$
and this seems justified from a geometrical point of view in terms of \textit{digital straight lines} \citep{Berstel08}. 

We characterise itineraries of maps-with-gaps in terms of following type of words.
\begin{definition}
The \textbf{\mword{ }}of rate $\alpha\in [0,1]$ is the shortest word $w$ such that
\begin{align*}
(w^\omega)_n = \floor{\alpha n}-\floor{\alpha (n-1)}\quad \text{for $n \in \N$.}
\end{align*}
If $\alpha$ is rational then $w$ is called a \textbf{Christoffel word}. 
If $\alpha$ is irrational then $w$ is called a \textbf{Sturmian \mword.}
\end{definition}

\paragraph{Example.} The shortest \mword{s }are the words $0, 1$ and $01$ with rates $0, 1$ and $\frac{1}{2}$. 

We call such words \mword{s }as our definition is closely related to the set of \textit{mechanical words}.
For a given \textit{slope} $\alpha\in [0,1]$ and \textit{intercept} $\rho\in\R$, \citet{Morse40} defined the \textit{upper} and \textit{lower mechanical words} 
to be the infinite sequences, for $n\in \Z_+$, 
\begin{align*}
u_n &= \left\lceil \alpha (n+1) + \rho \right\rceil - \left\lceil \alpha n + \rho \right\rceil \\
l_n &=  \left\lfloor \alpha (n+1) + \rho \right\rfloor - \left\lfloor \alpha n + \rho \right\rfloor 
\end{align*}
\citet{Lothaire02} and \citet{Berstel08} offer rich introductions to the mathematics 
of mechanical words, while \citet{Bousch02} and \citet{Altman00} 
explore other optimisation problems that give rise to such words.
Our \mword{s }are prefixes of lower-mechanical-words-of-zero-intercept up to a change of indexing from $n\in\Z_+$ to $n\in\N$.

It is not hard to see that the Christoffel word of rate $a/b$, 
where $a, b$ are relatively-prime integers, 
has length $b$. 
In contrast, Sturmian \mword{s }are infinite and aperiodic.

In general the \mword{ }$w$ of rate $\alpha$ does have $\rate(w) = \alpha$.
Indeed if $w$ is the \mword{ }of rate $a/b$ for some $a,b\in\N$, then 
\begin{align*}
\rate(w) &= \abs{w}_1 / \abs{w} = \lfloor (a/b) \abs{w} \rfloor / \abs{w} = a/b,
\intertext{whereas, if $w$ is an \mword{ }of irrational rate $\alpha$, then}
\rate(w) &= \lim_{n\rightarrow\infty} \abs{w_{1:n}}_1 / n =  \lim_{n\rightarrow\infty} \lfloor \alpha n \rfloor / n = \alpha .
\end{align*}

Furthermore, as remarked by \citet{Christoffel75}, all Christoffel words other than the words $0$ and $1$ are of the form
$0p1$ where the word $p$ is a \textit{palindrome}.
Indeed for relatively-prime positive integers
$m<n$, the letters of the Christoffel word $w$ of rate $m/n$ satisfy
\begin{align*}
w_{n-k} = \left\lfloor \frac{m}{n} (n-k) \right\rfloor - \left\lfloor \frac{m}{n} (n-k-1) \right\rfloor 
= \left\lfloor - \frac{m}{n} k \right\rfloor - \left\lfloor - \frac{m}{n} (k+1) \right\rfloor 
= w_{k+1}
\end{align*}
for $k = 1, 2, \dots, n-2$.

\begin{figure}
\begin{center}
\includegraphics[viewport=20mm 130mm 192mm 162mm,clip=true,width=0.98\textwidth]{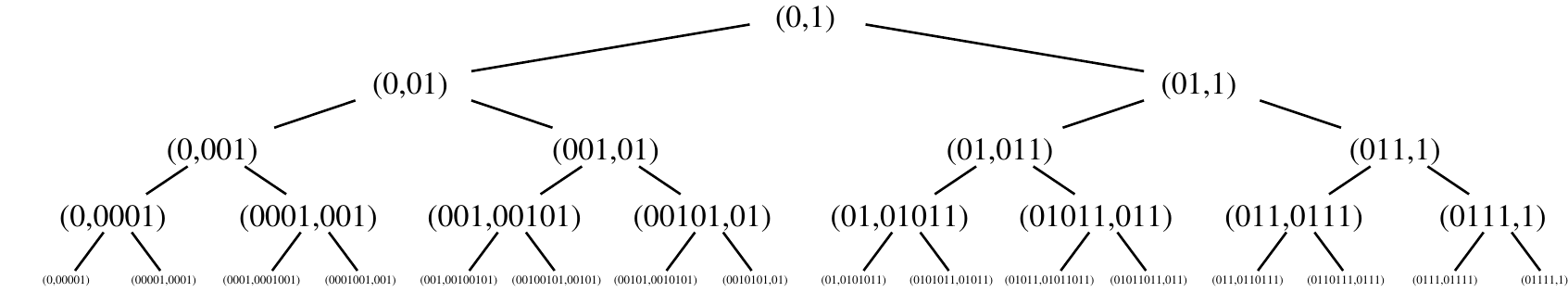}
\end{center}
\caption{Part of the Christoffel tree.}
\label{fig:tree}
\end{figure}

The Christoffel words can be defined in other ways.
In this paper the most important alternative-but-equivalent definition is in terms of the 
\textit{Christoffel tree} (Figure~\ref{fig:tree}), which is an infinite complete
binary tree~\citep{Berstel08} in which each node is labelled with a pair $(u,v)$
of words, called a \textit{Christoffel pair}.
The root of the tree is labelled with the pair $(0,1)$ and the 
left and right children of node $(u,v)$ are the nodes
$(u,uv)$ and $(uv,v)$ respectively. 
In fact the Christoffel words are the words $0,1$
and the set of concatenations $uv$ for all $(u,v)$ in the Christoffel tree.

Another definition of Christoffel words
is in terms of modular arithmetic, as in the following Lemma, where 
we use a bar to denote the remainder modulo the length $n = \abs{w}$
of a Christoffel word $w$, so that $\overline{x} := \modn{x}$ for $x\in\Z$.
\begin{lemma}
\label{lemma:modulo}
Suppose $w$ is a Christoffel word of length $n$. Let $m:=\abs{w}_1$ and $p:=\abs{w}_0$. Then
\begin{align*}
w_{i+1} = \I{\overline{m i} \ge p} \qquad (i\in\Z_n).
\end{align*}
\end{lemma}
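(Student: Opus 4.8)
The plan is to reduce the statement to an elementary property of the floor function. Write $\alpha := m/n$. Since $w$ is a Christoffel word with $\abs{w}=n$ and $\abs{w}_1 = m$, and since the Christoffel word of a rate $a/b$ in lowest terms has length $b$ (and hence exactly $a$ ones), $w$ is the $\mathcal{M}$-word of rate $\alpha$; in particular its letters are given by the closed form $w_k = \floor{\alpha k}-\floor{\alpha(k-1)}$ for $k=1,\dots,n$. Thus for $i\in\Z_n$ we have $w_{i+1} = \floor{\alpha(i+1)}-\floor{\alpha i}$, and the whole lemma reduces to showing that this difference equals $\I{\overline{mi}\ge p}$.

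Next I would establish the one-line identity: for every real $x$ and every $\alpha\in[0,1]$,
\[
\floor{x+\alpha}-\floor{x} = \I{\{x\}\ge 1-\alpha},
\]
where $\{x\} := x-\floor{x}$ denotes the fractional part. This follows by writing $x=\floor{x}+\{x\}$, so that $\floor{x+\alpha} = \floor{x} + \floor{\{x\}+\alpha}$, and observing that $\floor{\{x\}+\alpha}\in\{0,1\}$ equals $1$ precisely when $\{x\}+\alpha\ge 1$. The same identity, applied with $\alpha\in\{0,1\}$, also covers the degenerate Christoffel words $0$ and $1$, so no separate case analysis is needed.

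Finally I would apply the identity with $x := \alpha i = mi/n$, so that $\floor{x+\alpha}-\floor{x} = \floor{\alpha(i+1)}-\floor{\alpha i} = w_{i+1}$. Here the fractional part is $\{mi/n\} = (mi \bmod n)/n = \overline{mi}/n$, while $1-\alpha = (n-m)/n = p/n$. Hence $w_{i+1} = \I{\overline{mi}/n \ge p/n} = \I{\overline{mi}\ge p}$, which is exactly the claimed formula.

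I do not anticipate any genuine obstacle here: the lemma is elementary once the floor identity is in hand. The only point that needs a little care is the bookkeeping in the first step — confirming that a length-$n$ Christoffel word with $m$ ones is indeed the $\mathcal{M}$-word of rate $m/n$, so that the closed form $w_k=\floor{\alpha k}-\floor{\alpha(k-1)}$ is valid for all $k$ up to $n$ — together with making sure the extreme rates $0$ and $1$ are not accidentally excluded, which the identity handles automatically.
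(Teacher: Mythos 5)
Your proof is correct and takes essentially the same route as the paper's: both start from the defining formula $w_{i+1}=\floor{\alpha(i+1)}-\floor{\alpha i}$ with $\alpha=m/n$ and reduce it to how the floor jumps when $m$ is added modulo $n$. Your packaging via the fractional-part identity $\floor{x+\alpha}-\floor{x}=\I{\{x\}\ge 1-\alpha}$ is just a cosmetic rephrasing of the paper's remainder computation $\overline{m(i+1)}=\overline{mi}+m-n\,\I{\overline{mi}\ge n-m}$, since $\{mi/n\}=\overline{mi}/n$.
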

\begin{proof}
As $n \lfloor mi / n \rfloor = mi - \overline{mi}$, the definition of Christoffel words gives
\begin{align*}
w_{i+1} &= - \lfloor mi/n \rfloor + \lfloor m(i+1)/n \rfloor \\
&= (-mi + \overline{mi} + m(i+1) - \overline{m(i+1)})/n \\
&= (-mi + \overline{mi} + m(i+1) - (\overline{mi}+m-n\I{\overline{mi} \ge n-m}))/n,
\end{align*}
which simplifies to $\I{\overline{mi}\ge p}$, as claimed.
\end{proof}

Finally, we give two results about \mword{s }that play a key role elsewhere in 
the paper. 
The first result is about \textit{conjugacy} and lexicographic order.
In particular, we say two finite words $a$ and $b$ are conjugate if $a=uv$ and $b=vu$
for some words $u$ and $v$. 
For instance, the words $a=00011$ and $b=01100$ are conjugate. 

\begin{lemma}
\label{lemma:BWT}
Suppose $w$ is a Christoffel word of length $n$ and 
that $l$ satisfies $\overline{lm}=1$ where $m=\abs{w}_1$.
Then the conjugates
$u(i):=w_{(\overline{i}+1):n}w_{1:\overline{i}}$ satisfy
\begin{align*}
w=u(0) \prec u(l) \prec u(2l) \prec \cdots \prec u((n-1)l)=w^R.
\end{align*}
\end{lemma}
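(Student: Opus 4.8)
The plan is to realise the conjugates $u(i)$ as initial segments of a single orbit under addition by $m$ modulo $n$, and then to read off the lexicographic order from Lemma~\ref{lemma:modulo}. Concretely, write $n=\abs{w}$, $m=\abs{w}_1$, $p=\abs{w}_0=n-m$. By Lemma~\ref{lemma:modulo} the $j^{\text{th}}$ letter of $w$ is $w_j=\I{\overline{m(j-1)}\ge p}$, so the $k^{\text{th}}$ letter of the conjugate $u(i)=w_{(\overline i+1):n}w_{1:\overline i}$ is $\I{\overline{m(\overline i+k-1)}\ge p}$; that is, $u(i)_k=\I{\,\overline{mi+m(k-1)}\ge p\,}$ for $k\in\N$ (indices read cyclically, which is legitimate since $u(i)^\omega$ is periodic with period $n$). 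Thus $u(i)$ is exactly the binary sequence whose $k^{\text{th}}$ letter records whether the point $mi+m(k-1)$, reduced mod $n$, lands in the ``high'' block $\{p,p+1,\dots,n-1\}$. Since $\gcd(m,n)=1$, the map $i\mapsto\overline{mi}$ is a bijection of $\Z_n$, and replacing $i$ by $il$ (where $\overline{lm}=1$) gives $\overline{m\cdot il}=\overline i$; so $u(il)_k=\I{\,\overline{\,i+m(k-1)\,}\ge p\,}$.

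First I would use this to compute the endpoints. For $i=0$ we get $u(0)_k=\I{\overline{m(k-1)}\ge p}=w_k$, so $u(0)=w$. For $i=n-1$, note $\overline{\,(n-1)+m(k-1)\,}=\overline{\,-1+m(k-1)\,}$; comparing with $w^R_k=w_{n+1-k}=\I{\overline{m(n-k)}\ge p}=\I{\overline{-mk}\ge p}$, a short reindexing (or the palindrome identity $0p1$ with $p$ a palindrome already recorded in the text, i.e. $w_{n-k}=w_{k+1}$) identifies $u((n-1)l)$ with $w^R$. These two equalities pin down the two ends of the claimed chain.

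Next I would establish the strict increase $u(il)\prec u((i+1)l)$ for $i=0,1,\dots,n-2$. Fix $i$ and compare the two sequences letter by letter in $k$. We have $u(il)_k=\I{\,\overline{\,i+m(k-1)\,}\ge p\,}$ and $u((i+1)l)_k=\I{\,\overline{\,i+1+m(k-1)\,}\ge p\,}$. These differ at position $k$ precisely when exactly one of $i+m(k-1)$ and $i+m(k-1)+1$ lies in $\{p,\dots,n-1\}$ mod $n$ — i.e. when $\overline{\,i+m(k-1)\,}\in\{p-1,\ n-1\}$. Since $m$ generates $\Z_n$, as $k$ ranges over $1,\dots,n$ the value $\overline{\,i+m(k-1)\,}$ hits every residue exactly once; let $k_0$ be the (unique) $k\le n$ with $\overline{\,i+m(k-1)\,}=p-1$ and $k_1$ the one with value $n-1$. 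At $k_0$ the sequences first potentially disagree only if $k_0<k_1$... more precisely, at the smallest $k$ among $\{k_0,k_1\}$ we must check which sequence carries the $1$. If that smallest index is $k_0$ (value $p-1$), then $u(il)_{k_0}=\I{p-1\ge p}=0$ while $u((i+1)l)_{k_0}=\I{p\ge p}=1$, giving $u(il)\prec u((i+1)l)$; if instead it is $k_1$ (value $n-1$), then $u(il)_{k_1}=1$ and $u((i+1)l)_{k_1}=\I{0\ge p}=0$, which would give the reverse inequality. So the real content is to show that for every $i$ with $0\le i\le n-2$ the residue $p-1$ is reached, along the arithmetic progression $i,i+m,i+2m,\dots$, strictly before the residue $n-1$; equivalently, writing $k_0-1\equiv (p-1-i)l$ and $k_1-1\equiv(n-1-i)l\equiv(-1-i)l$ mod $n$ with representatives in $\{0,\dots,n-1\}$, that $(p-1-i)l \bmod n < (-1-i)l \bmod n$. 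This is the step I expect to be the main obstacle: it is a statement about the order in which an arithmetic progression with common difference $l$ (equivalently, about the three-distance / Steinhaus structure of multiples of $l/n$) visits two specific residues, and handling the boundary cases $i=0$ and $i=n-2$ cleanly will need care. I would prove it by observing that subtracting the two representatives gives $p\cdot l\bmod n$ sitting in the ``right place'': since $\overline{pl}=\overline{(n-m)l}=\overline{-ml}=\overline{-1}=n-1$, one has $(p-1-i)l-(-1-i)l=pl\equiv -1$, so the two representatives differ by exactly $n-1$ modulo $n$, forcing $(p-1-i)l\bmod n$ to be one less than $(-1-i)l\bmod n$ whenever the latter is nonzero, and $(-1-i)l\bmod n=0$ exactly when $i\equiv n-1$, i.e. $i=n-1$, which is excluded. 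This closes the induction and, combined with the two endpoint computations, yields the full chain $w=u(0)\prec u(l)\prec\cdots\prec u((n-1)l)=w^R$.

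Finally, I would remark that the chain is genuinely strict because at the disagreement index the sequences carry different letters while agreeing on the (possibly empty) prefix before it, which is exactly the definition of $\prec$ given in the text; and that distinctness of the $n$ conjugates follows a posteriori from the strictness, so no separate argument (e.g. via primitivity of $w$) is needed.
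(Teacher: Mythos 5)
Your proof is correct, and its core step --- the strict inequality $u(il)\prec u((i+1)l)$ for $i=0,\dots,n-2$ --- is argued by a genuinely different, more computational route than the paper's. The paper makes a \emph{single} comparison: using Lemma~\ref{lemma:modulo} it shows that $u(0)$ and $u((n-1)l)$ differ exactly in their first and last letters, so $u(0)=0a1$ and $u((n-1)l)=1a0=w^R$ by palindromicity of $a$; it then transports this to every consecutive pair by noting that $(u(il),u((i+1)l))$ is obtained from $(u((n-1)l),u(0))$ by one and the same non-zero cyclic rotation, which turns the end-position disagreement into an adjacent interior transposition $c01d$ versus $c10d$. You instead compare each pair directly: the disagreement positions are exactly the $k$ with $\overline{i+m(k-1)}\in\{p-1,\,n-1\}$, and the congruence $pl\equiv -ml\equiv -1\pmod n$ forces the $p-1$ position to sit immediately before the $n-1$ position whenever $i\ne n-1$. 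This replaces the paper's rotation-equivariance step (and its reliance on the palindrome property for the interior comparisons) with a two-line modular computation, and it makes transparent why the order reverses exactly once around the cycle, namely at $i=n-1$. The one place you are looser than you should be is the endpoint $u((n-1)l)=w^R$: the ``short reindexing'' option does not work on its own, because $\overline{m(k-1)-1}$ and $\overline{-mk}$ are genuinely different residues whose indicator values only coincide thanks to the balanced structure of $w$. You do need the palindrome route, and for it you must still check that $u((n-1)l)$ and $w=u(0)$ differ exactly at positions $1$ and $n$ --- but that is a one-line instance of your own disagreement-position analysis (take the comparison with $i=-1$: the residue equals $n-1$ at $k=1$ and $p-1$ at $k=n$), so this is a cosmetic omission rather than a structural gap.
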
\begin{proof}
Let $x_i := \overline{mi-1},y_i := \overline{mi}$ and $p:=n-m$. Then $x_0=n-1$ and $x_{n-1}=p-1$.
As $\gcd(m,n)=1$, the sequence $x_0,\dots,x_{n-1}$ is a permutation of $\Z_n$.
So, $x_i \notin \{p-1,n-1\}$ for $i\in\{1,\dots,n-2\}$.
As $y_i = \overline{x_i+1}$ these results give
\begin{align*}
\I{x_i\ge p} &> \I{y_i\ge p} &&\text{for $i=0$} \\
\I{x_i\ge p} &= \I{y_i\ge p} &&\text{for $i=1,\dots,n-2$} \\
\I{x_i\ge p} &< \I{y_i\ge p} &&\text{for $i=n-1$.} 
\end{align*}
But Lemma~\ref{lemma:modulo} gives $u(0)_{j+1}=\I{y_j\ge p}$ and $u((n-1)l)_{j+1}=\I{x_j\ge p}$ for $j\in\Z_n$. Thus $u(0)=0a1$ and $u((n-1)l)=1a0$ for some word $a$. But $u(0)=w$ and $w$ is a Christoffel word, so $a$ is a palindrome. Therefore $u((n-1)l)=w^R$.

Now for $i=0,\dots,n-2$, the conjugates $u(il)$ and $u((i+1)l)$ 
are related to $u((n-1)l)$ and $u(0)$ respectively by the same non-zero cyclic rotation.
Thus $u(il)=c01d$ and $u((i+1)l)=c10d$ for some words $c$ and $d$ with $dc=a$.
Therefore $u(il) \prec u((i+1)l)$.
\end{proof}

The second result shows how the prefixes of \mword{s }vary as a function of their rates. It requires one more definition.

\begin{definition}
For each positive integer $n$, the \textbf{Farey sequence} $F_n$ is 
the sequence of rational numbers on $[0,1]$ whose denominator is at most $n$.
\end{definition}

For example, the Farey sequence $F_5$ is $0, \frac{1}{5}, \frac{1}{4}, \frac{1}{3}, \frac{2}{5}, \frac{1}{2}, \frac{3}{5}, \frac{2}{3}, \frac{3}{4}, 1$.

\begin{lemma}
\label{lemma:nprefix}
Suppose $n\in\N$ and $q \in [0,1]$.
Let $q_1<q_2<\dots<q_m$ be the Farey sequence $F_n$.
Let $p(s)$ be the first $n$ letters of the word $w^\omega$ where $w$ is the \mword{ }of rate $s\in [0,1]$.
Then $p(q) = p(q_i)$ 
if and only if 
either $q=q_i=1$ or $q \in [q_i,q_{i+1})$ for some $1\le i<m$.
\end{lemma}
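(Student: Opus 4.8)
The plan is to show that the map $s \mapsto p(s)$, which sends a rate $s \in [0,1]$ to the length-$n$ prefix of the $\mathcal{M}$-word of rate $s$, is constant on each interval $[q_i, q_{i+1})$ and takes distinct values on distinct such intervals (together with the isolated value at $s = 1$). By the definition of $\mathcal{M}$-words, $p(s)_k = \floor{sk} - \floor{s(k-1)}$ for $k = 1, \dots, n$, so $p(s)$ is determined by the values $\floor{sk}$ for $k = 0, 1, \dots, n$. First I would observe that the function $s \mapsto (\floor{s}, \floor{2s}, \dots, \floor{ns})$ can only change value at a point $s$ where $sk \in \Z$ for some $k \in \{1, \dots, n\}$, i.e. where $s$ is a rational with denominator at most $n$ — precisely a member of the Farey sequence $F_n$. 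Hence $p(s)$ is constant on each open interval $(q_i, q_{i+1})$ between consecutive Farey fractions.

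Next I would pin down the behaviour at the Farey fractions themselves, i.e. decide whether $p(q_i)$ agrees with the value on $(q_{i-1}, q_i)$ to its left or with the value on $(q_i, q_{i+1})$ to its right. The key computation is the one-sided limit of $\floor{sk}$ as $s \downarrow q_i$ versus $s \uparrow q_i$: for $k$ a multiple of the denominator of $q_i$ we have $q_i k \in \Z$, so $\floor{sk}$ is right-continuous at $q_i$ (it equals $q_i k$ for $s$ slightly above $q_i$) but jumps down by one as $s \uparrow q_i$. Since $\floor{\cdot}$ itself is right-continuous, $\floor{q_i k} = \lim_{s \downarrow q_i} \floor{sk}$ for every $k$, so $p(q_i)$ equals the prefix value taken on the interval $[q_i, q_{i+1})$ immediately to its right. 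This establishes the ``if'' direction and simultaneously shows $p$ is constant on each half-open interval $[q_i, q_{i+1})$, with the edge case $q = q_m = 1$ handled separately since $\floor{n \cdot 1} = n$ is attained at $s = 1$ itself.

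For the ``only if'' direction I must show that the prefixes on adjacent intervals $[q_i, q_{i+1})$ and $[q_{i+1}, q_{i+2})$ are genuinely different, so that $p(q) = p(q_i)$ forces $q$ into the $i$th interval. Here I would use the mediant/Stern–Brocot structure of the Farey sequence: consecutive Farey fractions $q_i = a/b$ and $q_{i+1} = c/d$ in $F_n$ satisfy $bc - ad = 1$, and their mediant $(a+c)/(b+d)$ has denominator $b + d > n$ (this is exactly what it means for them to be adjacent in $F_n$). Evaluating $\floor{s(b+d)}$ just below versus just above $q_{i+1}$, one sees this value jumps, which forces a change in $p$ at $q_{i+1}$ since $b + d \le n$ would be needed for $b+d$ to index a letter — wait, here I need $b + d$ could exceed $n$, so instead I would compare $\floor{s d}$: at $s = q_{i+1} = c/d$ we have $\floor{s d} = c$, whereas for $s$ slightly less, $\floor{s d} = c - 1$, and since $d \le n$ this is a genuine change in one of the coordinates $\floor{s}, \dots, \floor{ns}$ that define $p(s)$, hence $p$ changes at $q_{i+1}$. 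Thus $p$ takes a new value on each successive interval, giving injectivity of $i \mapsto p(q_i)$ and completing the proof.

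The main obstacle I anticipate is the bookkeeping in the ``only if'' direction: one must be careful that the coordinate that changes at $q_{i+1}$ is actually among $\floor{1 \cdot s}, \dots, \floor{n \cdot s}$ (i.e. that the relevant denominator is $\le n$, which holds because $q_{i+1} \in F_n$) and that this change in the tuple $(\floor{sk})_{k=0}^n$ really produces a change in the difference sequence $p(s)$ rather than cancelling — this is immediate since $p(s)$ is a bijective (telescoping) re-encoding of the tuple $(\floor{sk})_{k=1}^{n}$ once $\floor{0 \cdot s} = 0$ is fixed. The Farey-adjacency fact $bc - ad = 1$ and the characterisation of adjacency via the mediant denominator exceeding $n$ are standard and may be cited without proof.
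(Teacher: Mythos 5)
Your proposal is correct and follows essentially the same route as the paper's proof: both reduce $p(s)$ to the tuple $(\lfloor sk\rfloor)_{k=1}^{n}$ via the invertible telescoping map and identify the Farey fractions of $F_n$ as the only points where that tuple can change, with right-continuity of the floor function yielding the half-open intervals $[q_i,q_{i+1})$. The only step worth making explicit at the end is that injectivity of $i\mapsto p(q_i)$ over \emph{all} intervals, not just consecutive ones, follows because each coordinate $\lfloor sk\rfloor$ is non-decreasing in $s$ and some coordinate strictly increases at each Farey point — a point the paper's own (rather terse) proof also leaves implicit.
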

\begin{proof}
Let $b(q):=(\lfloor q \rfloor, \lfloor 2q\rfloor, \dots, \lfloor nq \rfloor)$
and consider the intervals 
$\mathcal{Q}_i := [q_i, q_{i+1})$ for $i<m$ and $\mathcal{Q}_m := \{1\}$.
As the line $y=qx$ hits an integer point $(x,y)\in \Z^2$ with $1\le x \le n$
and $0\le y\le x$ if and only if $q$ is an element of $F_n$,
it follows that $b(q) = b(q_i)$ if and only if $q\in \mathcal{Q}_i.$
Let $g(x_1, x_2, \dots, x_n):=(x_1, x_2-x_1, \dots, x_n-x_{n-1})$.
By definition of \mword{s}, $p(q)=g(b(q))$.
As $g$ is invertible it follows that $p(q)=p(q_i)$ if and only if $q\in \mathcal{Q}_i$.
\end{proof}

\subsection{Characterising Itineraries as \texorpdfstring{\MWord{s}}{M-Words}}
Our aim here is to characterise the itineraries of maps-with-gaps.
We first set up some notation and then demonstrate a simple result
about the lexicographical ordering of itineraries.
Then we state an assumption under which itineraries are guaranteed
to be specific \mword{s }whose proof is given in Appendix~A. 
 
Let $\mathcal{I}$ be an interval of $\R$ and consider two mappings $\phi_0 : \mathcal{I} \rightarrow \mathcal{I}$ and $\phi_1 : \mathcal{I} \rightarrow \mathcal{I}$. 
For any finite word $w$, the \textit{composition} $\phi_w : \mathcal{I} \rightarrow \mathcal{I}$ is the mapping 
\begin{align*}
\phi_w(x) := \phi_{w_{\abs{w}}} \circ \cdots \circ \phi_{w_2} \circ \phi_{w_1} (x) \quad\text{and} \quad \phi_\epsilon(x) := x. 
\end{align*}

A simple application of compositions gives the following result about lexicographic ordering of the itineraries
$\sigma(\cdot|z)$ of a map-with-a-gap given by mappings $\phi_0 : \mathcal{I} \rightarrow \mathcal{I}$ and
$\phi_1 : \mathcal{I} \rightarrow \mathcal{I}$ and threshold $z$.

\begin{lemma}
\label{lemma:lexy}
Suppose $\phi_0,\phi_1$ are increasing mappings and that $x, y\in \mathcal{I}$ with $\sigma(x|z) \prec \sigma(y|z)$.
Then $x < y$.
\end{lemma}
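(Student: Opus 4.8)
The plan is to argue by contraposition: assuming $x \ge y$, I will show $\sigma(x|z) \succeq \sigma(y|z)$. The key tool is that if the itineraries of $x$ and $y$ agree on their first $k$ letters, then the orbits $x_1, x_2, \dots, x_{k+1}$ and $y_1, y_2, \dots, y_{k+1}$ are obtained by applying the \emph{same} composition $\phi_{w}$ to the initial points, where $w = \sigma(x|z)_{1:k} = \sigma(y|z)_{1:k}$; since each $\phi_a$ is increasing, $\phi_w$ is increasing, and hence $x_{k+1} \ge y_{k+1}$ whenever $x \ge y$. This monotonicity, carried along the orbit, forces the itineraries into lexicographic order.

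The steps I would carry out are as follows. First, record the elementary fact that a composition of increasing maps is increasing, so $\phi_w$ is increasing for every finite word $w$ over $\{0,1\}$. Second, let $k \ge 0$ be the length of the longest common prefix of $\sigma(x|z)$ and $\sigma(y|z)$ (with $k = \infty$ if the itineraries coincide, in which case $\sigma(x|z) = \sigma(y|z)$ and there is nothing to prove for the contrapositive statement $\succeq$). Third, using the common prefix $w$ of length $k$, write $x_{k+1} = \phi_w(x)$ and $y_{k+1} = \phi_w(y)$; since $x \ge y$ and $\phi_w$ is increasing, $x_{k+1} \ge y_{k+1}$. Fourth, examine letter $k+1$ of the itineraries: $\sigma(x|z)_{k+1} = \I{x_{k+1} \ge z}$ and $\sigma(y|z)_{k+1} = \I{y_{k+1} \ge z}$, and from $x_{k+1} \ge y_{k+1}$ we get $\I{x_{k+1}\ge z} \ge \I{y_{k+1}\ge z}$, i.e.\ the $(k+1)$th letter of $\sigma(x|z)$ is at least that of $\sigma(y|z)$. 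By the choice of $k$ these letters differ, so the $(k+1)$th letter of $\sigma(x|z)$ is $1$ and that of $\sigma(y|z)$ is $0$; by the definition of $\prec$ this gives $\sigma(y|z) \prec \sigma(x|z)$. Taking the contrapositive, $\sigma(x|z) \prec \sigma(y|z)$ implies $x < y$.

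I do not expect any serious obstacle here; the lemma is essentially a bookkeeping argument. The one point requiring a little care is the degenerate case where the two itineraries are equal (no finite common-prefix length $k$), which must be handled separately so that the hypothesis $\sigma(x|z) \prec \sigma(y|z)$ is genuinely excluded before invoking the ``first differing letter'' argument; this is immediate since $\prec$ is a strict order and $\sigma(x|z) \prec \sigma(y|z)$ already rules out equality. A second minor point is that the orbit indexing in Definition~\ref{def:mapwithgap} starts at $t = 1$ with $x_1 = x$, so the composition applied to reach $x_{k+1}$ uses exactly the first $k$ letters of the itinerary; matching this indexing convention carefully is the only place a sign-or-offset slip could creep in.
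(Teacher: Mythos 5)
Your proposal is correct and is essentially the paper's own argument presented in contrapositive form: both identify the common prefix $a$ of the two itineraries, use that $\phi_a$ is increasing as a composition of increasing maps, and compare $\phi_a(x)$ and $\phi_a(y)$ to the threshold $z$ at the first differing letter. The paper argues directly from $\sigma(x|z)=a0b$, $\sigma(y|z)=a1c$ to $\phi_a(x)<z\le\phi_a(y)$ and hence $x<y$, which is the same bookkeeping you describe.
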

\begin{proof}
If $\sigma(x|z) \prec \sigma(y|z)$ then $\sigma(x|z)=a0b$ and $\sigma(y|z)=a1c$
for some finite word $a$ and some infinite words $b,c$, by the definition of lexicographic order. 
So, the definition of $\sigma(\cdot|z)$ gives $\phi_a(x) < z \le \phi_a(y)$.
But $\phi_a(\cdot)$ increasing as it is a finite composition of increasing functions.
It follows that $x<y$.
\end{proof}

However, without additional assumptions about the mappings $\phi_0,\phi_1$, it is not possible to precisely characterise the itineraries of the associated maps-with-gaps.
\\

\noindent\textit{{\bf Assumption A1} Functions $\phi_0 : \IR \rightarrow \IR$ and $\phi_1 : \IR \rightarrow \IR$, where $\IR$ is an interval of $\mathbb{R}$, are increasing, contractive
and have unique fixed points $y_0$ and $y_1$ on $\IR$ which 
satisfy $y_1 < y_0$.}
\\

Equivalently, for all $x, y\in\IR$ such that $x < y$ and for $a \in \{0,1\}$ we have
\begin{align*}
\underbrace{\phi_a(x) < \phi_a(y)}_{\text{increasing}} \qquad \qquad \text{and} \qquad \qquad \underbrace{\phi_a(y) - \phi_a(x) < y-x}_{\text{contractive}} .
\end{align*}

A \textit{fixed point} for a finite word $w$, is a solution to the equation $x = \phi_w(x)$.
If Assumption A1 holds, then it turns out that there is a unique such fixed point on $\mathcal{I}$ 
for any finite non-empty word $w$ and we shall denote it by $y_w$.

In general, it is not clear what a ``fixed point'' corresponding to an \textit{infinite} word $w$ might mean.
One approach might be to consider a sequence $(w^{(n)} : n\in\N)$ of words
with $w = \lim_{n\rightarrow\infty} w^{(n)}$ and to define ``$y_w$'' as $\lim_{n\rightarrow\infty} y_{w^{(n)}}$
if that limit exists. However, for any word $a$, the sequence with elements $w^{(n)}a$ also converges to $w$
and it is not hard to find examples where 
\begin{align*}
\lim_{n\rightarrow\infty} y_{w^{(n)}} \ne \lim_{n\rightarrow\infty} y_{w^{(n)}a}.
\end{align*}
Therefore we shall only define fixed points for a particular class of infinite words, as follows.
Let $0s$ be the Sturmian \mword{ }of rate $\alpha$. Consider the sequence of Christoffel words $0w^{(n)}1$ 
that lie on the following path through the Christoffel tree. 
We start from the root, so that $w^{(1)} = \epsilon$. Then for $n\in\N$, we set $0w^{(n+1)}1$
equal to the left child of $0w^{(n)}1$ if the slope of $0w^{(n)}1$ exceeds $\alpha$ and equal to the right child otherwise.
We call 
\begin{align*}
y_s := \lim_{n\rightarrow\infty} y_{01w^{(n)}} = \lim_{n\rightarrow\infty} y_{10w^{(n)}}
\end{align*}
the \textit{fixed point} of the Sturmian \mword{ }$0s$. The fact that these limits exist and are equal is proved in Appendix~A.

We are now ready to fully characterise the itineraries
of maps-with-gaps when the initial point equals the threshold. 

\begin{theorem}
\label{theorem:characterisation}
Suppose A1 holds, $0p1$ is a Christoffel word and $0s$ is a Sturmian \mword.
Then the fixed points $y_{01p}, y_{10p}, y_{s}$ exist in $\mathcal{I}$.
Also, the itinerary $\sigma(z|z)$ is a lexicographically non-increasing function of $z\in\mathcal{I}$
and is of the form $\sigma(z|z) = 1 \pi(z)^\omega$ for some mapping $\pi : \mathcal{I} \rightarrow \{0,1\}^*$ whose image is the set of $\mathcal{M}$-words. Specifically,
\begin{align*}
\sigma(z|z) = \begin{cases} 
	1^\omega & \text{if and only if $z\le y_1$} \\
	(10p)^\omega & \text{if and only if $z\in [y_{01p},y_{10p}]$} \\
	10s & \text{if and only if $z=y_{s}$} \\
	10^\omega & \text{if and only if $z\ge y_0$.} \end{cases}
\end{align*}
\end{theorem}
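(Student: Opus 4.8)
The plan is to prove the theorem in three stages: first establish existence of the fixed points $y_{01p}$, $y_{10p}$ and $y_s$ in $\mathcal{I}$; second, prove the monotonicity statement that $\sigma(z|z)$ is lexicographically non-increasing in $z$; and third, identify the four cases explicitly. Throughout I would exploit the homeomorphism between composition of the increasing contractions $\phi_0,\phi_1$ and the dynamics, together with Lemma~\ref{lemma:lexy}, which already converts lexicographic comparisons of itineraries into comparisons of states. The existence of $y_w$ for a finite non-empty word $w$ follows from the Banach fixed point theorem applied to the composition $\phi_w$, which is a contraction on $\mathcal{I}$ since each factor is contractive (and at least one strict contraction occurs because $w$ is non-empty); uniqueness is immediate from contractivity. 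The existence and equality of the two limits defining $y_s$ is exactly what Appendix~A is asserted to prove, so I would cite that, but sketch why the limits are nested: as one descends the Christoffel tree along the path determined by $\alpha$, the intervals $[y_{01w^{(n)}}, y_{10w^{(n)}}]$ (or the analogous bracketing intervals) are nested and their lengths shrink geometrically because of the uniform contraction constant, forcing convergence to a common point.

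For the monotonicity of $z \mapsto \sigma(z|z)$, the key observation is that changing the threshold $z$ changes \emph{both} which map is applied at each step \emph{and} the comparison defining the next letter. I would argue as follows: suppose $z < z'$ and show $\sigma(z|z) \succeq \sigma(z'|z')$. Let $k$ be the first index at which the two itineraries differ. Up to time $k$ the orbits from $z$ under threshold $z$ and from $z'$ under threshold $z'$ follow the same word $w = \sigma(z|z)_{1:k-1} = \sigma(z'|z')_{1:k-1}$, so the respective states at time $k$ are $\phi_w(z)$ and $\phi_w(z')$, and since $\phi_w$ is increasing we have $\phi_w(z) < \phi_w(z')$ (if $w$ is non-empty; the $k=1$ base case is trivial since both orbits start with action $1$). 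The letter at time $k$ is $\I{\phi_w(z) \ge z}$ versus $\I{\phi_w(z') \ge z'}$. Here I need the crucial inequality: because $\phi_w$ is a contraction and $z < z'$, we have $\phi_w(z') - \phi_w(z) < z' - z$, i.e. $\phi_w(z') - z' < \phi_w(z) - z$, so $\phi_w(z)\ge z$ whenever $\phi_w(z')\ge z'$. Hence the $z'$-itinerary cannot have a $1$ where the $z$-itinerary has a $0$ at the first point of difference, which gives $\sigma(z|z) \succeq \sigma(z'|z')$.

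The third stage identifies the cases. For $z \le y_1$: the unique fixed point $y_1$ of $\phi_1$ satisfies $y_1 \ge z$, and one shows by induction (using that $\phi_1$ is increasing with fixed point $y_1$, so $\phi_1$ maps $[z,\infty)\cap\mathcal{I}$ into itself when $z\le y_1$ — more precisely that the orbit stays $\ge z$) that every state is $\ge z$, giving $\sigma(z|z)=1^\omega$; conversely if $z > y_1$ then starting at $z$ and applying $\phi_1$ repeatedly the orbit decreases toward $y_1 < z$, so eventually a state falls below $z$ and the itinerary is not $1^\omega$. Symmetrically for $z \ge y_0$ one gets $10^\omega$ (the first action is $1$ since $X_1=\phi_1(z)$ need not be compared — actually $X_1 = \phi_1(z)$ and we need $\phi_1(z) < z$, which holds as $z\ge y_0 > y_1$ forces $z$ above the fixed point of $\phi_1$; thereafter $\phi_0$ has fixed point $y_0 \le z$ so the orbit stays below $z$). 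The Christoffel case $z\in[y_{01p},y_{10p}]$: at the two endpoints, periodicity of the orbit is forced by the fixed-point equations $y_{01p} = \phi_{(01p)}(y_{01p})$ and $y_{10p}=\phi_{(10p)}(y_{10p})$ together with a check that the resulting periodic point indeed realizes threshold $z$ at each step; monotonicity then sandwiches all intermediate $z$ to the same itinerary $(10p)^\omega$. The Sturmian case $z=y_s$ follows by continuity/limit from the bracketing Christoffel intervals. The main obstacle I expect is the Christoffel-case verification — showing precisely that the itinerary of the periodic orbit through $y_{01p}$ (respectively $y_{10p}$) equals $(10p)^\omega$, i.e. that the word $10p$ is exactly the itinerary and that the threshold comparisons $\I{x_t \ge z}$ come out right for every $t$ in the period. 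This is where the combinatorial structure of Christoffel words (their palindromic $0p1$ form, and Lemmas~\ref{lemma:modulo} and~\ref{lemma:BWT} relating conjugates to lexicographic order) must be married to the dynamics, and it is presumably the content deferred to Appendix~A; the cleanest route is probably to show that the map $z\mapsto\sigma(z|z)$, being non-increasing with image contained in the $\mathcal{M}$-words, together with the endpoint fixed-point identities, leaves no room for any other assignment.
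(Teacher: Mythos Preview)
Your monotonicity argument is correct and in fact slicker than the paper's: the paper (Lemma~\ref{lemma:pi-is-decreasing}) argues via fixed points and Lemma~\ref{lemma:incdec}, whereas your contractivity trick $\phi_w(z')-z' < \phi_w(z)-z$ settles it in one line.

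There is a technical slip in your fixed-point existence argument. Under Assumption~A1 the maps are \emph{contractive} (meaning $\phi(y)-\phi(x)<y-x$ for $x<y$), which is strictly weaker than being a contraction mapping with Lipschitz constant below $1$, so Banach does not apply. The paper (Lemma~\ref{lemma:monexist}) instead uses the intermediate value theorem on $x\mapsto x-\phi_w(x)$ over $[y_1,y_0]$ for existence; uniqueness then follows from contractivity. Similarly, your claim that the nested Sturmian intervals ``shrink geometrically'' is not automatic: this is exactly why the paper needs Lemma~\ref{lemma:limit-is-0}, which extracts a uniform ratio on the compact set $[y_1,y_0]$ via the extreme value theorem.

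The substantive gap is the Christoffel case. You correctly identify that the hard step is the ``check that the resulting periodic point indeed realizes threshold $z$ at each step'': knowing $\phi_{01p}(y_{01p})=y_{01p}$ tells you the orbit under the \emph{prescribed} word $01p$ is periodic, but says nothing about whether the \emph{threshold-determined} itinerary actually equals $0p1$. Your proposal does not supply an argument for this check, and Lemmas~\ref{lemma:modulo} and~\ref{lemma:BWT} do not provide one. The paper's route is a renormalization: for $z$ in the appropriate range, $\pi(z,\phi_0,\phi_1)$ equals the image under the morphism $\uuv$ (or $\uvv$) of the threshold word for the renormalized system $(\phi_0,\phi_{01})$ or $(\phi_{01},\phi_1)$ (Lemma~\ref{lemma:recursion}); induction on depth in the Christoffel tree then reduces every Christoffel word to the base case $01$ (Lemmas~\ref{lemma:Ccase} and~\ref{lemma:basecase}). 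The same recursion, run without termination, yields that every $\pi(z)$ is an $\mathcal{M}$-word (Lemma~\ref{lemma:pi-is-mword})---a fact your last sentence invokes as if available, but which is itself a consequence of the machinery you have not supplied.
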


This result is previously known for \textit{linear} maps-with-gaps
\citep{Rajpathak12}, although those authors do not draw any relation to mechanical words.
\citet{Dance15} previously extended those authors' proof to the nonlinear case under Assumption A1.
The proof presented in Appendix~A of this paper can be seen as a simplification of that extension. 
On the other hand, it is known that itineraries of a broader class of nonlinear maps-with-gaps that do not necessarily satisfy Assumption A1 also
correspond to mechanical words \citep{Kozyakin03}. However such generality comes at a cost, as it is not clear in that work
which range of thresholds gives rise to which words.

Finally, not all the maps-with-gaps considered in this paper satisfy Assumption A1.
However, this does not always prevent the application of Theorem~\ref{theorem:characterisation}.
Notably for $\mathcal{I} := [0,\infty)$ and $a\in (0,\infty)$, the pair 
\begin{align*}
\phi_0(x) &= x+1, \\
\phi_1(x) &= 1/(a+1/(x+1))
\end{align*}
involves the non-contractive map $\phi_0$.
Nevertheless, after the change of coordinates $$g : x \mapsto x/(x+1),$$ the transformed functions
\begin{align*}
\tilde\phi_0(x) &:= g( \phi_0( g^{(-1)}(x))) = 1/(2-x), \\ 
\tilde\phi_1(x) &:= g( \phi_1( g^{(-1)}(x))) = 1/(2+a-x)
\end{align*}
and the interval $\tilde{\mathcal{I}} := [0,1]$ do satisfy Assumption A1. 
Indeed
\begin{align*}
\frac{d \tilde\phi_1(x)}{dx} = 1 / (2+a-x)^2 \in (0,1]  
\end{align*}
for $x \in \tIR$ and $a \in [0,\infty)$, and this derivative only equals $1$ for the endpoint $x=1$.
Thus $\tilde\phi_1$ is increasing and contractive on $\tIR$. 
Noting that $\tilde\phi_0(x) = \lim_{a\rightarrow 0} \tilde\phi_1(x)$,
the same holds for $\tilde\phi_0$.
Also $\tilde\phi_1$ has a fixed point at $y(a) = (2+a-\sqrt{a^2+4a})/2$ which lies in $\tIR$ 
for $a\in [0,\infty)$, and $\tilde\phi_0$ has a fixed point at $y(0) = 1 > y(a)$.
As $g$ is an increasing function, all conclusions of Theorem~\ref{theorem:characterisation} 
still hold for the original functions $\phi_0,\phi_1$.

We conclude our discussion of mechanical words by showing that
the itinerary, viewed as a function of the threshold, has at most a polynomial number of discontinuities.
This result is important for changing the order of certain summations
when showing that conditions PCLI0 and PCLI3 hold.

\begin{theorem}
\label{theorem:xs-characterisation}
Suppose $\phi_0,\phi_1$ satisfy A1, that $n\in\N$ and $x\in\IR$. Then $\sigma(x|s)$
is a lexicographically non-increasing function of $s\in\IR$. 
Also, for any fixed $x,s\in\IR$, we have
\begin{align*}
\sigma(x|s)_{1:n} = l^m w
\end{align*}
for some $l\in\{0,1\}$, some $m \in \{0, 1, \dots, n\}$, 
and some factor $w$ of a lower mechanical word.
Furthermore, for any $x\in\IR$, the mapping $s\mapsto \sigma(x|s)_{1:n}$ for $s\in\IR$ has at 
most a polynomial number $p(n)$ of discontinuities.
\end{theorem}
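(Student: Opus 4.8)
The plan is to prove directly that $s\mapsto\sigma(x|s)$ is lexicographically non-increasing. The key observation is that the orbit of the threshold-$s$ map-with-a-gap started at $x$ is determined by $x$ together with the itinerary prefix: if $\sigma(x|s)_{1:t-1}=\sigma(x|s')_{1:t-1}=w$ for two thresholds $s\le s'$, then the state at time $t$ equals $\phi_w(x)$ for \emph{both} thresholds, call it $\xi$, so that $\sigma(x|s)_t=\I{\xi\ge s}\ge\I{\xi\ge s'}=\sigma(x|s')_t$. Hence at the first position where the two itineraries disagree the one with the smaller threshold carries a $1$, and if they never disagree they coincide; in either case $\sigma(x|s)\succeq\sigma(x|s')$. (This step does not even use A1.) It follows at once that $s\mapsto\sigma(x|s)_{1:n}$ is lexicographically non-increasing as well.

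\textbf{Part 2 (the form $l^m w$).} Set $l:=\I{x\ge s}$ and let $m$ be the length of the maximal initial run of $l$'s in $\sigma(x|s)_{1:n}$, so $\sigma(x|s)_{1:n}=l^m w$ with $w=\sigma(x|s)_{(m+1):n}$; what remains is to show $w$ is a factor of a lower mechanical word. I would use the classical fact that a finite word is a factor of a lower mechanical word if and only if it is \emph{balanced} (any two equal-length factors contain numbers of $1$'s differing by at most one), and reduce to showing that the tail $\sigma(x|s)_{(m+1):\infty}$ is balanced. First I would split on the position of $s$ relative to the fixed points $y_0>y_1$ of $\phi_0,\phi_1$ supplied by A1. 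When $s\le y_1$ or $s\ge y_0$ the contraction property forces the orbit, after one run of $l$'s, to remain forever on a single side of $s$, so $\sigma(x|s)=l^m1^\omega$ or $l^m0^\omega$, which is trivially of the required form. The substantive case is $y_1<s<y_0$. Here I would first use continuity and contraction (consequences of A1) to exhibit a compact forward-invariant interval $K\subseteq\IR$ — for instance $K=[\min(y_1,\phi_1(s)),\max(y_0,\phi_0(s))]$ — and check that the orbit from $x$ enters $K$ after a leading run consisting only of the letter $l$, so $\sigma(x|s)=l^p\,\sigma(\tilde x|s)$ for some $\tilde x\in K$; absorbing any further leading $l$'s of $\sigma(\tilde x|s)$ into the run reduces the claim to: \emph{for $\tilde x\in K$, $\sigma(\tilde x|s)$ is balanced}. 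For that I would invoke the rotation-theoretic picture of maps-with-gaps underlying Theorem~\ref{theorem:characterisation}: $\psi_s$ restricted to $K$ has a well-defined rotation number, the orbit of $\tilde x$ interleaves with the $s$-threshold orbit of the threshold point itself, whose itinerary $\sigma(s|s)=1\pi(s)^\omega$ has the mechanical form described by Theorem~\ref{theorem:characterisation}, and Lemma~\ref{lemma:lexy} together with the adjacent-transposition structure of the conjugates of Christoffel words (Lemma~\ref{lemma:BWT}) forces the coding of \emph{every} orbit to be balanced — Sturmian in the irrational case, and eventually the periodic mechanical word $(10p)^\omega$, reached only through balanced run-lengths, in the rational case.

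\textbf{Part 3 (polynomially many discontinuities).} By Part~1 the map $s\mapsto\sigma(x|s)_{1:n}$ is lexicographically non-increasing, so each of its level sets is a subinterval of $\IR$ and its points of discontinuity are exactly the boundary points between distinct level sets; hence the number of discontinuities is at most (number of distinct values)$-1$. By Part~2 every value is of the form $l^m w$ with $l\in\{0,1\}$, $m\in\{0,\dots,n\}$ and $w$ a length-$(n-m)$ factor of a lower mechanical word. The number of such words is at most $2(n+1)$ times the number of balanced words of length at most $n$, and the number of balanced words of length $k$ is $1+\sum_{j=1}^{k}(k-j+1)\varphi(j)=\Theta(k^3)$; so the map takes at most a polynomial $p(n)$ of distinct values and therefore has at most $p(n)$ discontinuities. (Part~2 is essential here: without it monotonicity alone would only bound the number of discontinuities by $2^n$.)

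\textbf{Main obstacle.} The crux is Part~2, and within it the assertion that the coding $\sigma(\tilde x|s)$ of an \emph{arbitrary} orbit in the trapping interval $K$ is balanced, not merely the coding $\sigma(s|s)$ from the threshold point treated in Theorem~\ref{theorem:characterisation}. The difficulty is that A1 provides only non-uniform contraction, so in the rational-rotation-number case an orbit need not reach the attracting periodic orbit in a bounded number of steps; one must show that the transient part of the itinerary, together with the leading run $l^m$, never destroys balancedness, and this is precisely where the palindromic structure of Christoffel words and the ordering of their conjugates (Lemma~\ref{lemma:BWT}) do the real work, and where a careful case analysis on the location of $s$ relative to $y_0$ and $y_1$ is unavoidable.
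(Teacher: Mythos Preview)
Parts~1 and~3 are correct and essentially match the paper: the monotonicity argument is the content of Lemma~\ref{lemma:lexs}, and the counting argument via the number of balanced words of bounded length is the content of Lemma~\ref{lemma:Ot3} together with Mignosi's formula (Lemma~\ref{lemma:mignosi}).

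The gap is in Part~2, and you have correctly located it yourself. Your outline --- split on the position of $s$ relative to $y_0,y_1$, identify a leading run $l^m$ during which the orbit enters a trapping interval, and then prove balancedness of the tail --- is exactly the skeleton of the paper's Lemma~\ref{lemma:mechanical-factor}. But your proposed justification of balancedness is not a proof: Theorem~\ref{theorem:characterisation} concerns only the diagonal $\sigma(z|z)$, not $\sigma(x|s)$ for $x\ne s$; the ``interleaving with the $s$-threshold orbit'' is asserted rather than established; and it is unclear how Lemma~\ref{lemma:BWT} (an ordering statement about conjugates of a single Christoffel word) would yield balancedness of an arbitrary orbit, especially during the transient before it reaches the periodic orbit in the rational case. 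You are right that A1 gives only non-uniform contraction, so the transient can be arbitrarily long, and nothing in your sketch controls it.

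The paper handles this step by a different and much more direct device, due to Kozyakin. For $s\in(y_1,y_0)$ it first checks that the map-with-a-gap is a \emph{locally-growing relaxation function} (Lemma~\ref{lemma:A1-proves-kozy}), and then constructs an explicit lift $F:\R\to\R$ satisfying $F(x+1)=F(x)+1$ and $F$ increasing, conjugate to the map on the trapping interval $[\phi_1(s),\phi_0(s))$ (Lemma~\ref{lemma:kozy}). The itinerary letters become $s_n(x)=\I{\{F^{(n-1)}(x)\}\in[0,\alpha)}$, and a telescoping identity gives
\[
\sum_{k=1}^{m}s_k(x)=\lfloor F^{(m)}(x)-\alpha\rfloor-\lfloor x-\alpha\rfloor.
\]
Since $F$ is increasing and commutes with integer translation, $z-y\in[0,1)$ implies $F^{(m)}(z)-F^{(m)}(y)\in[0,1)$, and balancedness of \emph{every} orbit follows immediately (Lemma~\ref{lemma:F}). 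No appeal to Theorem~\ref{theorem:characterisation}, to conjugates of Christoffel words, or to any transient analysis is needed. The remaining cases ($s\le y_1$, $s\ge y_0$, or $x$ outside the trapping interval) are dispatched exactly as you describe. What you are missing, then, is not structure but the key analytic tool: the degree-one lift that makes balancedness automatic.
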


The proof of this result is given at the end of Appendix~A.

%
%
\section{Bounded Variation Condition (PCLI0)}
\label{section:bounded-variation}
The next few sections of this paper prove that conditions PCLI0-PCLI3 
hold for the systems considered in the introduction, that is, 
systems satisfying the following condition.
\\

\noindent\textbf{Condition D.} \textit{For the tuple $\langle \IR, C, \phi_0, \phi_1, \beta \rangle$, 
\begin{itemize}
\item The interval $\IR$ and cost function $C$ satisfy Condition~C.
\item The transitions $\phi_0 : \IR \rightarrow \IR$ and $\phi_1 : \IR \rightarrow \IR$ are 
of the form
\begin{align*}
\phi_0(x) &:= \frac{r^2 x + 1}{a_0 r^2 x + a_0+1}, &
\phi_1(x) &:= \frac{r^2 x + 1}{a_1 r^2 x + a_1+1},
\end{align*}
for some $0 \le a_0 < a_1 < \infty$ and some $r \in (0,1]$. 
\item The discount $\beta$ is in $[0,1)$.
\end{itemize}}

Associated with any such tuple, we consider the family of dynamic programs with 
a single parameter $\nu\in\R$, 
of the form~(\ref{eq:simpleDP}), in which the work is of the form $c(x,a) := a$ 
and where the cost is independent of the action:
\begin{align*}
V(x;a) = \min_{a\in\{0,1\}} \left\{ \nu a + C(x) + \beta V(\phi_a(x);\nu) \right\}.
\end{align*}

(In fact, the results of the paper cover a slightly more general set of systems than those satisfying Condition D. It is possible to work with a smaller interval $\IR$ as long as the initial state is contained
in $\IR$ and the open interval containing the fixed points $(y_1,y_0)$ is in $\IR$.)

After a simple Lemma, we give Proposition~\ref{proposition:PCLI0}
which shows that condition PCLI0 holds.

\begin{lemma}
\label{lemma:caglad}
Suppose $\phi_0, \phi_1:\IR\rightarrow\IR$ are continuous functions, 
$t\in\Z_+$, $x\in\IR$ and $a\in\{0,1\}$.
Then $X_t(x,a;s)$ and $A_t(x,a;s)$ are piecewise-constant \caglad{ }functions of $s\in\bR$.
\end{lemma}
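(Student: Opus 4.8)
The plan is to induct on $t$. For the base case $t=0$ we have $X_0(x,a;s) = x$ identically in $s$, which is trivially piecewise-constant and \caglad{ }(it is constant), and $A_0(x,a;s) = a$ identically in $s$, likewise. For the inductive step, suppose $X_t(x,a;\cdot)$ and $A_t(x,a;\cdot)$ are piecewise-constant \caglad{ }functions of $s$ on $\bR$. Recall that the $s$-threshold policy sets $A_{t+1}(x,a;s) = \I{X_{t+1}(x,a;s)\ge s}$ for $t\in\N$ (and for $t=0$ when $a = \I{x\ge s}$, but since we allow an arbitrary initial action $a$, the action $A_0$ is just the constant $a$; the recursion for $X_1$ is $X_1(x,a;s) = \phi_a(x)$, again constant in $s$). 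In general, for $t\ge 1$,
\begin{align*}
X_{t+1}(x,a;s) &= \phi_{A_t(x,a;s)}(X_t(x,a;s)), &
A_{t+1}(x,a;s) &= \I{X_{t+1}(x,a;s)\ge s}.
\end{align*}

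First I would observe that a piecewise-constant \caglad{ }function of $s\in\bR$ is, by definition, one for which $\bR$ can be partitioned into finitely many intervals on each of which the function is constant, with the convention that each such interval is left-closed/right-open (so that the left limit at every point equals the value there); equivalently it is a step function that is left-continuous. Such functions are closed under composition with \emph{any} fixed function applied to their values, and closed under finite pointwise operations. Hence $s\mapsto \phi_{A_t(x,a;s)}(X_t(x,a;s))$ is again piecewise-constant \caglad: on each of the finitely many intervals where both $A_t(x,a;\cdot)$ and $X_t(x,a;\cdot)$ are constant, the composite $\phi_{A_t}(X_t)$ is a constant (here we use only that $\phi_0,\phi_1$ are functions $\IR\to\IR$; continuity is not even needed for this part), and a common refinement of the two partitions is still a finite partition into left-closed/right-open intervals. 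So $X_{t+1}(x,a;\cdot)$ is piecewise-constant \caglad.

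The one genuinely delicate point — and the step I expect to be the main obstacle — is showing that $A_{t+1}(x,a;s) = \I{X_{t+1}(x,a;s)\ge s}$ is \caglad{ }in $s$, because here $s$ appears both inside the (piecewise-constant) function $X_{t+1}(x,a;\cdot)$ \emph{and} as the comparison level, so the indicator is a composition of a step function with the moving threshold $s$ itself. The plan is: fix one of the finitely many intervals $[u,u')$ on which $X_{t+1}(x,a;\cdot)$ takes a constant value $v \in \IR$; on that interval $A_{t+1}(x,a;s) = \I{v \ge s} = \I{s \le v}$, which as a function of $s$ is $1$ on $(-\infty, v]$ and $0$ on $(v,\infty)$ — a \cadlag{ }step, not a \caglad{ }one. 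So within $[u,u')$ the indicator is right-continuous with a possible jump down at $s=v$. To get a left-continuous (\caglad) function overall I would instead use the convention consistent with the rest of the paper: the left limit at a point $s_0$ is computed along $s\uparrow s_0$, and I claim $A_{t+1}(x,a;s_0^-) = A_{t+1}(x,a;s_0)$. Here is where continuity of $\phi_0,\phi_1$ enters: it guarantees that $X_{t+1}(x,a;\cdot)$, being a finite composition built from $X_0,\dots,X_t$ and $\phi_0,\phi_1$, has only finitely many discontinuities and that on each constancy interval the relevant comparisons $X_{t+1}(x,a;s)\ge s$ can change value only finitely often; refining the partition to include every point $s$ where $X_{t+1}(x,a;s) = s$ (there are finitely many such "diagonal crossings", one candidate per constancy interval) and every endpoint of the constancy intervals, I obtain a finite partition of $\bR$ into intervals on each of which \emph{both} $X_{t+1}(x,a;\cdot)$ and $A_{t+1}(x,a;\cdot)$ are constant. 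Choosing these intervals left-closed/right-open then makes $A_{t+1}(x,a;\cdot)$ piecewise-constant and \caglad, and re-running the previous paragraph's argument shows $X_{t+1}(x,a;\cdot)$ inherits the same refined partition, completing the induction. The care needed is entirely in choosing the half-open convention consistently and in checking that "$X_{t+1}(x,a;s) \ge s$" flips only at the finitely many crossing points — once the partition is fixed, left-continuity is immediate from the left-closed convention.
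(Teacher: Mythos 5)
Your overall strategy is the paper's: induct on $t$, and use closure of piecewise-constant \caglad{ }functions under composition with a fixed map, under pointwise products and sums, and under the threshold indicator. The paper packages this as four closure facts and writes $X_{t+1}(x,a;s)=\phi_0(X_t)\I{X_t<s}+\phi_1(X_t)\I{X_t\ge s}$, whereas you write $X_{t+1}=\phi_{A_t}(X_t)$ and refine partitions by hand; these are essentially the same argument.

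However, there is a concrete error in the very step you flag as delicate: you have the orientation of \caglad{ }versus \cadlag{ }reversed, in two mutually reinforcing places. First, on a constancy interval where $X_{t+1}(x,a;\cdot)\equiv v$, the map $s\mapsto\I{v\ge s}$ equals $1$ on $(-\infty,v]$ and $0$ on $(v,\infty)$; its left limit at $v$ is $1=\I{v\ge v}$ while its right limit is $0$, so this is a \caglad{ }step, not a \cadlag{ }one as you assert --- the indicator already has exactly the continuity you need, and no repair is required at the crossing point. Second, your stated convention that the constancy pieces be ``left-closed/right-open (so that the left limit at every point equals the value there)'' is backwards: if $f=c_0$ on $[u_0,u)$ and $f=c_1$ on $[u,u')$ with $c_0\ne c_1$, then $f(u^-)=c_0\ne c_1=f(u)$, so left-closed/right-open pieces yield a \cadlag{ }(right-continuous) step function; a \caglad{ }step function has pieces of the form $(c,d]$. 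Since the values of $A_{t+1}$ are determined by the dynamics and not by a convention you are free to choose, executing your plan as written would ``verify'' right-continuity and would contradict your own (correct) final claim that $A_{t+1}(x,a;s_0^-)=A_{t+1}(x,a;s_0)$. The fix is mechanical --- flip every half-open interval to left-open/right-closed and note, as above, that $\I{v\ge s}$ is already left-continuous --- but as written the verification is internally inconsistent. (Your side remarks are fine: continuity of $\phi_0,\phi_1$ is indeed not needed once the inner functions are piecewise constant, and finiteness of the number of pieces follows by induction.)
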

\begin{proof}
Let $\mathcal{P}$ denote the set of piecewise-constant \caglad{ }functions from $\R$ to $\R$
and recall the following easily-demonstrated facts about limits. 
\begin{enumerate}
\item[(i)] If $a \in \mathcal{P}$ and $f:\R\rightarrow\R$ is continuous
then $f\circ a$ is in $\mathcal{P}$.
\item[(ii)] If $a \in \mathcal{P}$ then both $s\mapsto \I{a(s) < s}$ and $s\mapsto \I{a(s) \ge s}$ are in $\mathcal{P}$.
\item[(iii)] If $a, b \in \mathcal{P}$ 
then $s\mapsto a(s)b(s)$ is in $\mathcal{P}$.
\item[(iv)] If $a,b \in \mathcal{P}$ 
then $s\mapsto a(s)+b(s)$ is in $\mathcal{P}$.
\end{enumerate}

Consider the claim about $X_t(x,a;s)$. We use induction on $t\in\Z_+$. 
In the base case, $X_0(x,a;s)=x$ by definition,
and the mapping $s\mapsto x$ is in $\mathcal{P}$.
For the inductive step, suppose $s \mapsto X_t(x,a;s)$ is in $\mathcal{P}$.
Then, by definition  
\begin{align*}
X_{t+1}(x,a;s) &= 
\underbrace{\phi_0(X_t(x,a;s)) \I{X_t(x,a;s) < s}}_{=:T_0(s)} + \underbrace{\phi_1(X_t(x,a;s)) \I{X_t(x,a;s) \ge s}}_{=:T_1(s)} .
\end{align*}
As $\phi_0$ is continuous, the induction hypothesis and (i) show that
$s\mapsto \phi_0(X_t(x,a;s))$ is in $\mathcal{P}$.
Also, the induction hypothesis and (ii) show that $s\mapsto \I{X_t(x,a;s)<s}$ is in $\mathcal{P}$.
Thus (iii) shows that $T_0$ is in $\mathcal{P}$.
A similar argument shows that $T_1$ is in $\mathcal{P}$.
Therefore (iv) shows that $s\mapsto T_0(s)+T_1(s) = X_{t+1}(x,a;s)$ 
is in $\mathcal{P}$.

The claim about $A_t(x,a;s) = \I{X_t(x,a;s) \le s}$ 
follows from (ii) and the fact that $s\mapsto X_t(x,a;s)$ is in $\mathcal{P}$. 
This completes the proof.
\end{proof}

Recalling the definitions of work-to-go and marginal work
(Definition~\ref{def:marginals}), we are now ready to show that condition PCLI0 holds.

\begin{proposition}
\label{proposition:PCLI0}
Suppose $\langle \IR, C, \phi_0, \phi_1, \beta \rangle$ satisfy Condition~D, 
$t\in\Z_+$, $x\in\IR$ and $a\in\{0,1\}$. 
Then the work-to-go $g(x,a;s)$ and the marginal work $w_x(s)$
are \caglad{ }functions of $s\in\bR$ with bounded variation.
\end{proposition}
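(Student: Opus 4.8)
The plan is to exploit two facts in tandem: under Condition~D the work-to-go is, up to discounting, a series of $\{0,1\}$-valued step functions of the threshold, and the number of steps of the $t$-th such function grows only polynomially in $t$. Since the work is $c(x,a)=a$ here, we have $g(x,a;s)=\sum_{t=0}^\infty \beta^t A_t(x,a;s)$ and $w_x(s)=g(x,1;s)-g(x,0;s)$.

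First I would establish the \caglad{ }property. Under Condition~D the maps $\phi_0,\phi_1$ are continuous on $\IR$, so Lemma~\ref{lemma:caglad} gives that each $s\mapsto A_t(x,a;s)$ is a piecewise-constant \caglad{ }function on $\bR$; hence so is each partial sum $g^{(T)}(x,a;s):=\sum_{t=0}^T\beta^t A_t(x,a;s)$, a finite sum of \caglad{ }functions being \caglad. Because $0\le A_t\le 1$ and $\beta\in[0,1)$ we have $\abs{g(x,a;s)-g^{(T)}(x,a;s)}\le \beta^{T+1}/(1-\beta)$ for all $s$, so $g^{(T)}(x,a;\cdot)\to g(x,a;\cdot)$ uniformly on $\bR$. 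A routine $\epsilon/3$ argument shows the space of \caglad{ }functions is closed under uniform limits (left-continuity passes to the limit, and the right limit at any point exists because the limit function is locally Cauchy from the right), so $g(x,a;\cdot)$ and therefore $w_x(\cdot)=g(x,1;\cdot)-g(x,0;\cdot)$ are \caglad{ }functions on $\bR$.

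For bounded variation it suffices to bound $\mathrm{TV}(g(x,a;\cdot))$, since $w_x$ is a difference of two such functions. Interchanging the absolutely convergent sum with the finite sum over the intervals of a partial subdivision gives $\mathrm{TV}(g(x,a;\cdot))\le \sum_{t=0}^\infty \beta^t\,\mathrm{TV}(A_t(x,a;\cdot))$. As $A_t(x,a;\cdot)$ takes only the values $0$ and $1$, its total variation equals its number of jumps; the term $t=0$ contributes nothing because $A_0(x,a;\cdot)\equiv a$. For $t\ge 1$ I would note that after the first transition the trajectory is governed by the $s$-threshold policy started from $\phi_a(x)$, so that $A_t(x,a;s)=\sigma(\phi_a(x)\,|\,s)_t$, the $t$-th letter of the itinerary from $\phi_a(x)$ with threshold $s$; hence the number of jumps of $s\mapsto A_t(x,a;s)$ is at most that of $s\mapsto \sigma(\phi_a(x)\,|\,s)_{1:t}$, which is at most $p(t)$ by Theorem~\ref{theorem:xs-characterisation}. (Condition~D gives $\phi_0,\phi_1$ satisfying A1 directly unless $r=1$ and $a_0=0$; in that remaining case one first passes to the increasing change of coordinates of Section~\ref{section:mechanical}, which leaves the number of discontinuities of $s\mapsto\sigma(x|s)_{1:t}$ unchanged.) Therefore $\mathrm{TV}(g(x,a;\cdot))\le \sum_{t=1}^\infty \beta^t p(t)<\infty$ since $\beta<1$ and $p$ is a polynomial, and twice this bound controls $\mathrm{TV}(w_x)$.

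The only real obstacle is imported wholesale from Theorem~\ref{theorem:xs-characterisation}: without the polynomial bound $p(n)$ on the number of threshold-discontinuities of the length-$n$ itinerary, the series $\sum_t\beta^t\,\mathrm{TV}(A_t(x,a;\cdot))$ need not converge and $w_x$ could genuinely fail to have bounded variation. The new but routine ingredients here are only the identification $A_t(x,a;s)=\sigma(\phi_a(x)|s)_t$ for $t\ge 1$, the observation that a $\{0,1\}$-valued step function has total variation equal to its jump count, and the closure of \caglad{ }functions under uniform limits.
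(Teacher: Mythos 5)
Your proposal is correct and follows essentially the same route as the paper: the c\`agl\`ad property via interchanging the limit in $s$ with the discounted sum (the paper uses dominated convergence where you use uniform convergence of the partial sums, which is equivalent here since $0\le A_t\le 1$), and bounded variation via $\mathrm{TV}(g(x,a;\cdot))\le\sum_t\beta^t\,\mathrm{TV}(A_t(x,a;\cdot))$ together with the polynomial bound $p(t)$ on the number of jumps from Theorem~\ref{theorem:xs-characterisation}. Your explicit identification $A_t(x,a;s)=\sigma(\phi_a(x)|s)_t$ for $t\ge 1$ and your remark about reducing the non-contractive case $r=1,\ a_0=0$ to Assumption~A1 by the change of coordinates of Section~\ref{section:mechanical} make explicit two steps the paper leaves implicit.
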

\begin{proof}
First we show that $s\mapsto g(x,a;s)$ is \caglad. For any $b\in\bR$, we have 
\begin{align*}
g(x,a;b^-) &= \lim_{s\uparrow b} \sum_{t=0}^\infty \beta^t A_t(x,a;s) & \text{by definition} \\
&= \lim_{n\rightarrow \infty} \sum_{t=0}^\infty \beta^t A_t(x,a;s_n) & \text{for any sequence with $s_n \uparrow b$.} 
\intertext{Now $\abs{\beta^t A_t(x,a;s)} \le \beta^t$ for $t\in\Z_+$ and $\sum_{t=0}^\infty \beta^t < \infty$. Thus the dominated convergence theorem (treating sums as integrals with
respect to the counting measure) gives}
\lim_{n\rightarrow \infty} \sum_{t=0}^\infty \beta^t A_t(x,a;s_n) 
&= \sum_{t=0}^\infty \lim_{n\rightarrow\infty} \beta^t A_t(x,a;s_n) & \\
&= \sum_{t=0}^\infty \beta^t A_t(x,a;b) & \text{by Lemma~\ref{lemma:caglad}} \\
&= g(x,a;b).
\end{align*}
Therefore $s\mapsto g(x,a;s)$ is a \caglad{ }function.

Next we show that $s\mapsto g(x,a;s)$ has bounded variation.
Let $\mathcal{D}_t$ be the set of discontinuities of $A_t(x,a;s)$ as a function of $s\in\bR$.
As $s\mapsto A_t(x,a;s)$ is a piecewise-constant and \cadlag{ }with values in $\{0,1\}$,
for any $b,c\in\bR$ with $b\le c$, we can write
\begin{align*}
A_t(x,a;c)-A_t(x,a;b) &= \sum_{d\in\mathcal{D}_t} a_t(d) \I{d\in [b,c)}
\end{align*}
for some $a_t(d) \in \{-1,1\}$.
So for any partial subdivision $\mathcal{S}_n = \{[b_1, c_1], \dots, [b_n, c_n]\}$ of $\IR$,
\begin{align*}
\sum_{i=1}^n \abs{g(x,a;c_i)-g(x,a;b_i)} \hspace{-3cm} &\\
&= \sum_{i=1}^n \abs{\sum_{t=0}^\infty \beta^t A_t(x,a;c_i) - \sum_{t=0}^\infty \beta^t A_t(x,a;b_i)} \\
&= \sum_{i=1}^n \abs{\sum_{t=0}^\infty \beta^t (A_t(x,a;c_i) -  A_t(x,a;b_i))}
\\
&= \sum_{i=1}^n \abs{\sum_{t=0}^\infty \sum_{d\in\mathcal{D}_t}  \beta^t a_t(d) \I{d\in [b_i,c_i)}}\\
&\le \sum_{i=1}^n \sum_{t=0}^\infty \sum_{d\in\mathcal{D}_t}  \beta^t \I{d\in [b_i,c_i)}\\
&= 
\sum_{t=0}^\infty \sum_{d\in\mathcal{D}_t} \sum_{i=1}^n  \beta^t \I{d\in [b_i,c_i)}\\
&\le \sum_{t=0}^\infty \sum_{d\in\mathcal{D}_t} \beta^t \\
&\le \sum_{t=0}^\infty \beta^t p(t)
\end{align*}
using Tonelli's theorem,
the fact that $\mathcal{S}_n$ is a partial subdivision, 
and the polynomial function $p(t)$ of Theorem~\ref{theorem:xs-characterisation}.
But the bound $\sum_{t=0}^\infty \beta^t p(t)$ is finite 
and independent of the choice of $\mathcal{S}_n$.
Therefore $s\mapsto g(x,a;s)$ has bounded variation.

Now $w_x(s)$ is defined as $g(x,1;s)-g(x,0;s)$.
But the difference of \caglad{ }functions is \caglad{ }and the difference of functions of bounded variation has bounded variation.
Therefore $s\mapsto w_x(s)$ is a \caglad{ }function of bounded variation.
This completes the proof.
\end{proof}

%
%
\section{Positivity of Marginal Work (PCLI1)}
\label{section:positive-work}
We prove that condition PCLI1 holds.
The argument is based on the following notion of \textit{swapping}, which is partly inspired by results about the Burrows-Wheeler transform of Christoffel words~\citep[Chapter~6]{Berstel08}. 

\begin{definition}
A finite word $a$ \textbf{swaps to} a finite word $b$ if either $a=b$
or there exist words $p_1, q_1, p_2, q_2, \dots, p_n, q_n$ for some $n\in\N$ with
\begin{align*}
a &= p_110q_1, & p_101q_1 &= p_210q_2, &\dots,&& p_n01q_n &= b.
\end{align*}
We call a transformation $p_k10q_k \rightarrow p_k01q_k$ an \textbf{exchange}.
\end{definition}

\paragraph{Example.} The word $1100$ swaps to the word $0101$ via the exchanges
\begin{align*}
1100 \rightarrow 1010 \rightarrow 0110 \rightarrow 0101
\end{align*}
for which $p_1=1$, $q_1=0$, $p_2=\epsilon$, $q_2=10$, $p_3=01$ and $q_3=\epsilon$.

The idea of our proof is as follows.
First we find conditions on the number of 1's in prefixes of two words
that make it possible to swap one word for another (Lemma~\ref{lemma:swaps}). 
Proposition~\ref{proposition:POS} shows that those conditions are satisfied by the dynamical system (Claim~1), and they imply positivity of marginal work (Claim~2).

%
%
%

\begin{lemma}
\label{lemma:swaps}
Suppose $a,b$ are words of common length $\abs{a}=\abs{b}=n\in\Z_+$ with
\begin{align*}
\abs{a}_1 = \abs{b}_1 \quad\text{and}\quad
\abs{a_{1:k}}_1 \ge \abs{b_{1:k}}_1 \quad\text{for $k < n$.}
\end{align*}
Then $a$ swaps to $b$.
\end{lemma}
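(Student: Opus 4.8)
The plan is to induct on the \emph{Hamming distance} $h(a,b) := \sum_{i=1}^n \I{a_i \ne b_i}$, or equivalently on $\sum_{k=1}^{n-1}(\abs{a_{1:k}}_1 - \abs{b_{1:k}}_1)$, which is a non-negative integer by hypothesis. The base case $h(a,b)=0$ is trivial: then $a=b$, so $a$ swaps to $b$ by definition. For the inductive step, I would assume $a \ne b$ and produce a single exchange that transforms $a$ into a word $a'$ with $\abs{a'}_1 = \abs{b}_1$, with $\abs{a'_{1:k}}_1 \ge \abs{b_{1:k}}_1$ for all $k<n$ still holding, and with $h(a',b) < h(a,b)$; then the inductive hypothesis finishes the job since ``swaps to'' is visibly transitive (concatenate the two lists of exchanges).

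The heart of the argument is locating a position where the exchange can be performed. Since $\abs{a}_1 = \abs{b}_1$ but $a \ne b$, there is a smallest index $j$ with $a_j \ne b_j$. I claim $a_j = 1$ and $b_j = 0$: indeed, at $k=j-1$ the prefixes agree so $\abs{a_{1:j-1}}_1 = \abs{b_{1:j-1}}_1$, and the constraint $\abs{a_{1:j}}_1 \ge \abs{b_{1:j}}_1$ then forces $a_j \ge b_j$, hence $a_j=1$, $b_j=0$. Because $\abs{a}_1 = \abs{b}_1$ and $a$ has ``used up'' a $1$ early that $b$ has not, $a$ must contain a $0$ somewhere after position $j$ that lines up, in the running count, with a later $1$ of $b$; concretely, let $k_0$ be the least index $k > j$ with $\abs{a_{1:k}}_1 = \abs{b_{1:k}}_1$ (such $k_0 \le n$ exists since equality holds at $k=n$). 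Then for $j \le k < k_0$ we have $\abs{a_{1:k}}_1 > \abs{b_{1:k}}_1$, and comparing the counts at $k_0-1$ and $k_0$ shows $a_{k_0} = 0$ while the count of $a$ strictly exceeds that of $b$ just before $k_0$. Now look at $a$ restricted to positions $j, j+1, \dots, k_0$: it starts with $1$, ends with $0$, and I want to argue there is an adjacent pair $\dots 1 0 \dots$ inside this block that I can exchange. In fact, since the block begins with $1$ and ends with $0$, scanning left to right there must be some position $i$ with $j \le i < k_0$ where $a_i = 1$ and $a_{i+1} = 0$ — take $i$ to be the last position of a $1$ that is weakly left of the first $0$ following the initial run, or more simply the largest $i<k_0$ with $a_i = 1$, $a_{i+1}=0$.

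The exchange $a = p\,10\,q \mapsto p\,01\,q = a'$ at that position $i$ changes exactly the prefix counts $\abs{a_{1:i}}_1$ (decreased by one; all others unchanged). So I must check that this single decrease does not violate $\abs{a'_{1:i}}_1 \ge \abs{b_{1:i}}_1$, i.e.\ that $\abs{a_{1:i}}_1 > \abs{b_{1:i}}_1$ strictly before the exchange. Since $j \le i < k_0$ and I showed $\abs{a_{1:k}}_1 > \abs{b_{1:k}}_1$ for all $k$ in $[j, k_0)$, this strict inequality holds. The total count $\abs{a'}_1 = \abs{a}_1 = \abs{b}_1$ is preserved, and $h(a',b) = h(a,b) - 2 < h(a,b)$ because we turned positions $i, i+1$ from $(1,0)$ toward $(0,1)$ — and here one needs that this move is \emph{toward} $b$ on at least one of the two coordinates without creating a new disagreement, which I expect to follow from choosing $i$ carefully (e.g.\ so that $b_i = 0$ or $b_{i+1}=1$, forced by the count bookkeeping in the block). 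The main obstacle I anticipate is precisely this bookkeeping: verifying that one can always choose the exchange position $i$ so that the Hamming distance to $b$ strictly decreases while the prefix-majorization constraint is preserved at every index. This is the kind of claim that is intuitively clear from the ``$a$'s ones are pushed to the left relative to $b$'s'' picture, but writing it cleanly requires a careful case analysis of $b$'s letters inside the block $[j,k_0]$; I would handle it by taking $i$ to be the \emph{largest} index in $[j,k_0)$ with $a_i=1, a_{i+1}=0$ and then arguing from minimality/maximality that the exchanged pair matches $b$ better.
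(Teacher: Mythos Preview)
Your two proposed induction quantities are \emph{not} equivalent, and the one you actually argue with --- the Hamming distance $h(a,b)$ --- does not in general strictly decrease under your exchange. Take $a=1100$, $b=0110$: the hypotheses hold, your procedure gives $j=1$, $k_0=3$, and the only adjacent $10$ in $a_{1:3}$ is at positions $2,3$; exchanging yields $a'=1010$, but $h(a,b)=h(a',b)=2$. Your hedged claim ``$h(a',b)=h(a,b)-2$'' requires $b_i=0$ and $b_{i+1}=1$ at the exchange site, and there is no reason this holds for any choice of $i$ in the block. So the induction on Hamming distance does not close.

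The good news is that your \emph{other} quantity, $d(a,b):=\sum_{k=1}^{n-1}(\abs{a_{1:k}}_1-\abs{b_{1:k}}_1)$, works perfectly --- and this is exactly what the paper uses. An exchange at positions $i,i{+}1$ changes only the $k=i$ term of $d$, lowering it by one; you already verified that your chosen $i\in[j,k_0)$ satisfies $\abs{a_{1:i}}_1>\abs{b_{1:i}}_1$ strictly, so $d(a',b)=d(a,b)-1$ and the majorisation hypotheses persist for $a'$. Replace Hamming distance by $d$ throughout and drop the last paragraph's case analysis on $b$: the argument is then complete and coincides with the paper's proof (the paper locates the exchange as the first $0$ in $a$ after the first index of strict excess, which lies in your block $[j,k_0]$ for the same reasons you give).
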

\begin{proof}
Given any words $u, v$ of length $n$, consider the distance 
\begin{align*}
d(u,v) := \sum_{i=1}^n \abs{ \abs{u_{1:i}}_1 - \abs{v_{1:i}}_1 } .
\end{align*}
If $a=b$ then $a$ swaps to $b$ after $d(a,b) = 0$ exchanges.
Otherwise $a\ne b$ and we shall show that there exists a word $a'$ such that
\begin{align}
&\text{$a$ and $a'$ differ by a single exchange,} \label{eq:swape} \\
&\text{$d(a',b) = d(a,b)-1$,} \label{eq:swapd}\\
\text{and }&\text{$a'$ and $b$ satisfy the hypotheses of this Lemma.} \label{eq:swaph} 
\end{align}
Repeating this argument shows that $a$ swaps to $b$ after $d(a,b)$ exchanges.

We now define an appropriate word $a'$. 
As $a\ne b$ and $\abs{a_{1:i}}_1 \ge \abs{b_{1:i}}_1$ for $i = 1, 2, \dots, n-1$,
there must exist a first index $i$ such that $\abs{a_{1:i}}_1 > \abs{b_{1:i}}_1$.
Also, as $\abs{a}_1 = \abs{b}_1$, there must exist a first index $j>i$ such that $a_j = 0$.
As $i,j$ are the first such indices, it follows that $a_k = 1$ for $i\le k < j$.
Thus 
\begin{align*}
a &= a_{1:(j-2)} 10 a_{(j+1):n}
\intertext{with the convention that $a_{1:0} = a_{(n+1):n} = \epsilon$.
Now consider the word} 
a' &:= a_{1:(j-2)} 01 a_{(j+1):n}.
\end{align*}
It is immediate that~(\ref{eq:swape}) holds.
Furthermore, as $a_k = 1$ for $i\le k < j$, we have
\begin{align*}
\abs{a_{1:(j-1)}}_1 - \abs{b_{1:(j-1)}}_1 \ge \abs{a_{1:i}}_1 - \abs{b_{1:i}}_1 > 0
\end{align*}
where the second inequality follows from the definition of $i$.
Thus the definition of $a'$ gives
\begin{align}
\label{eq:swap4}
\abs{a'_{1:l}}_1 = \abs{a_{1:l}}_1 - \I{l=j-1} \ge \abs{b_{1:l}}_1 \quad\text{for $l = 1, 2, \dots, n$,}
\end{align}
so that
\begin{align*}
d(a',b) =  \sum_{i=1}^n \left( \abs{a'_{1:i}}_1 - \abs{b_{1:i}}_1\right)
= d(a,b)-1.
\end{align*}
Therefore~(\ref{eq:swapd}) holds.

Finally, combining~(\ref{eq:swap4}) with the fact that $\abs{a'}_1 = \abs{a}_1 = \abs{b}_1$, 
we conclude that~(\ref{eq:swaph}) holds. This completes the proof.
\end{proof}

The hypotheses of the following proposition 
are clearly satisfied if $\langle \IR, C, \phi_0, \phi_1, \beta \rangle$ satisfy Condition~D, 
therefore PCLI1 holds.

\begin{proposition}
\label{proposition:POS}
Suppose $\IR$ is an interval of $\R$ and that $\phi_0 : \IR\rightarrow\IR, \phi_1 : \IR\rightarrow\IR$ satisfy
\begin{enumerate}
\item[(i)] $\phi_0(\cdot), \phi_1(\cdot)$ are increasing functions
\item[(ii)] $\phi_{01}(z)<\phi_{10}(z)$ for all $z\in\IR$.
\end{enumerate}
Also suppose that $x\in\IR$, $s\in\bR$ and consider the itineraries
\begin{align*}
a := 1\sigma(\phi_1(x)|s)_{1:(n-1)} \quad\text{and}\quad b := 0\sigma(\phi_0(x)|s)_{1:(n-1)}.
\end{align*}
Then 
\begin{enumerate}
\item For any $n\in\N$, we have $\abs{a_{1:n}}_1 \ge \abs{b_{1:n}}_1$. 
\item For any $\beta \in (0,1)$, the marginal work $w_x(s)$ is positive.
\end{enumerate}
\end{proposition}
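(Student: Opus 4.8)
The plan is to establish Claim~1 (prefix dominance of the number of ones) by a minimal-counterexample argument that converts the combinatorics of the two itineraries into a monotonicity statement about compositions of $\phi_0,\phi_1$, and then to deduce Claim~2 from Claim~1 by Abel summation. The starting observation is that $a$ and $b$ are action sequences: after the forced first action, the trajectory with forced initial action $1$ sits at $\phi_1(x)$ and the trajectory with forced initial action $0$ sits at $\phi_0(x)$, and both then follow the $s$-threshold map, so that $a_{t+1}=A_t(x,1;s)$, $b_{t+1}=A_t(x,0;s)$, $X_t(x,1;s)=\phi_{a_{1:t}}(x)$ and $X_t(x,0;s)=\phi_{b_{1:t}}(x)$. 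Two easy facts about the composition operator are needed: since $\phi_0,\phi_1$ are increasing by~(i), every $\phi_w$ is increasing; and a single exchange --- passing from $w=p10q$ to $w'=p01q$ --- cannot increase the value, because $\phi_{w'}(x)=\phi_q(\phi_{01}(\phi_p(x)))\le\phi_q(\phi_{10}(\phi_p(x)))=\phi_w(x)$ by hypothesis~(ii) and monotonicity of $\phi_q$. Consequently, if $u$ swaps to $v$ (in the sense of the definition preceding Lemma~\ref{lemma:swaps}) then $\phi_u(x)\ge\phi_v(x)$.

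For Claim~1, suppose it fails. Since $a_1=1>0=b_1$, any violating index is at least $2$, so there is a least $n\ge2$ with $\abs{a_{1:n}}_1<\abs{b_{1:n}}_1$. A short computation using minimality, together with the fact that the one-counts change by at most one per letter, shows $\abs{a_{1:(n-1)}}_1=\abs{b_{1:(n-1)}}_1$, $a_n=0$, $b_n=1$ and $\abs{a_{1:k}}_1\ge\abs{b_{1:k}}_1$ for all $k<n$. From $a_n=0$ we read off $X_{n-1}(x,1;s)<s$ and from $b_n=1$ we read off $X_{n-1}(x,0;s)\ge s$, so $\phi_{a_{1:(n-1)}}(x)<\phi_{b_{1:(n-1)}}(x)$. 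But the words $a_{1:(n-1)}$ and $b_{1:(n-1)}$ have equal length, equal number of ones, and $a_{1:(n-1)}$ dominates $b_{1:(n-1)}$ on every prefix, so Lemma~\ref{lemma:swaps} gives that $a_{1:(n-1)}$ swaps to $b_{1:(n-1)}$, whence $\phi_{a_{1:(n-1)}}(x)\ge\phi_{b_{1:(n-1)}}(x)$ by the exchange fact above. This contradiction establishes Claim~1.

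For Claim~2, recall that with the work $c(x,a)=a$ the marginal work is $w_x(s)=\sum_{t\ge0}\beta^t\delta_t$ where $\delta_t:=A_t(x,1;s)-A_t(x,0;s)\in\{-1,0,1\}$, the series converging absolutely since $\beta<1$. Claim~1 says exactly that the partial sums $S_N:=\sum_{t=0}^N\delta_t$ are all $\ge0$, and $S_0=a_1-b_1=1$. Abel summation gives $\sum_{t=0}^N\beta^t\delta_t=\beta^NS_N+(1-\beta)\sum_{t=0}^{N-1}\beta^tS_t$; since $|S_N|\le N+1$ and $\beta<1$, letting $N\to\infty$ yields $w_x(s)=(1-\beta)\sum_{t\ge0}\beta^tS_t\ge(1-\beta)S_0=1-\beta>0$.

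The only genuine obstacle is the step in Claim~1 that turns the combinatorial fact about the two action words (equal one-count together with prefix dominance) into the analytic inequality $\phi_{a_{1:(n-1)}}(x)\ge\phi_{b_{1:(n-1)}}(x)$; the key is to recognise that this combinatorial fact is exactly the hypothesis of Lemma~\ref{lemma:swaps}, and that hypothesis~(ii) makes every exchange monotone under composition. The remaining points are routine bookkeeping: tracking the forced initial action (which is what produces $S_0=1$ rather than $0$ and hence drives the positivity), and noting that for the degenerate thresholds $s=\pm\infty$ no violating index $n$ can arise at all.
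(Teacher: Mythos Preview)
Your proof is correct and follows essentially the same approach as the paper: both arguments hinge on Lemma~\ref{lemma:swaps} to convert the prefix-dominance condition into a chain of exchanges, use hypothesis~(ii) together with monotonicity of $\phi_q$ to show each exchange is order-preserving (so $\phi_{a_{1:(n-1)}}(x)\ge\phi_{b_{1:(n-1)}}(x)$), and then deduce Claim~2 via the same Abel-summation identity yielding $w_x(s)\ge 1-\beta$. The only cosmetic difference is that you phrase Claim~1 as a minimal-counterexample argument whereas the paper runs a forward induction on $m$; the logical content at the key step (equal one-counts plus prefix dominance $\Rightarrow$ swaps $\Rightarrow$ state ordering $\Rightarrow$ next letters cannot be $a_n b_n=01$) is identical.
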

\begin{proof}
We prove Claim~1 by induction. 
In the base case  $\abs{a_{1}}_1  = 1 \ge 0 = \abs{b_{1}}_1$.
For the inductive step, suppose $\abs{a_{1:k}}_1 \ge \abs{b_{1:k}}_1$ 
for all $k \le m$ for some $m\in\N$.
This induction hypothesis shows that either $\abs{a_{1:m}}_1 > \abs{b_{1:m}}_1$
or $\abs{a_{{1:m}}}_1 = \abs{b_{1:m}}_1$.
In the first case, $\abs{a_{1:(m+1)}}_1 \ge \abs{b_{1:(m+1)}}_1$
as we are only adding one letter to $a_{1:m}$ and $b_{1:m}$.
In the second case, the induction hypothesis shows that the words $a_{1:m}$ and $b_{1:m}$ satisfy
the assumptions of Lemma~\ref{lemma:swaps},
so there is a sequence of swaps that transforms $a_{1:m}$ into $b_{1:m}$.
Consider any swap $p10q$ to $p01q$ in this sequence.
Then hypothesis (ii) gives 
$\phi_{10}(\phi_p(x)) > \phi_{01}(\phi_p(x))$
and hypothesis (i) implies that $\phi_{q}(\cdot)$ is increasing, 
so
\begin{align*}
\phi_{p10q}(x) > \phi_{p01q}(x) .
\end{align*}
Repeating this argument over the sequence of swaps gives
\begin{align*}
\phi_{a_{1:m}}(x) > \phi_{b_{1:m}}(x).
\end{align*}
Thus, it follows from the definition of itineraries 
that the last letters of $a_{1:(m+1)},b_{1:(m+1)}$
have $a_{m+1}b_{m+1} \in \{00,10,11\}$.
Hence $\abs{a_{1:(m+1)}}_1 \ge \abs{b_{1:(m+1)}}_1$.
This proves Claim~1.

To prove Claim 2, note that the definition of $w_x(s)$ gives
\begin{align*}
w_x(s) &= \sum_{k=1}^\infty \beta^{k-1} (a_k-b_k) \\
&= \sum_{k=1}^\infty \beta^{k-1} (\abs{a_{1:k}}_1 - \abs{a_{1:(k-1)}}_1 - \abs{b_{1:k}}_1 + \abs{b_{1:(k-1)}}_1)\\
&= (1-\beta) \sum_{k=1}^\infty \beta^{k-1} (\abs{a_{1:k}}_1 - \abs{b_{1:k}}_1) \\
&\ge 1-\beta 
\end{align*}
where the last line follows from Claim~1 and the fact that $a_1 = 1, b_1 = 0$.
As $\beta < 1$, this completes the proof.
\end{proof}

%
%
\section{Non-Decreasing Marginal Cost (PCLI2, First Part)}
\label{section:non-decreasing}
Condition PCLI2 requires that the marginal productivity $\lambda(x) = c_x(x) / w_x(x)$ is non-decreasing for $x\in\IR$. In view of Theorem~\ref{theorem:characterisation}, the interval $\IR$ can be divided up into intervals corresponding to Christoffel words on which $w_x(x)$ is constant and points corresponding to Sturmian \mword{s}. The main result of this section is Proposition~\ref{proposition:increasing}, which shows that the marginal cost $c_x(x)$ is non-decreasing for $x$ in the interval corresponding to any given Christoffel word of the form $0p1$, for systems satisfying Condition~D. This result is complemented by Proposition~\ref{proposition:easy-increasing}, which shows that $c_x(x)$ is also increasing for $x\le y_1$ and $x\ge y_0$ where $y_0, y_1$ are the fixed points of $\phi_0,\phi_1$. As the marginal work $w_x(x)$ is positive by PCLI1, this implies that $\lambda(x)$ is non-decreasing on such intervals. In the Section~7, we show that $\lambda(x)$ is continuous for $x\in\IR$, so that PCLI2 is satisfied.

A related proof was given by~\citet{Dance15}. However, that proof only covers systems for which the multiplier $r$ in Condition~D is $r=1$, rather than multipliers $r \in (0,1]$ as addressed here. For $r=0$, the sum in Proposition~\ref{proposition:increasing} is a constant, so the analysis presented here is unnecessary. Also, the proof of~\citet{Dance15} only addressed the cost function $C(x)=x$, whereas here we generalise to any cost function satisfying Condition~C, which includes any cost function of the form $x^q / q$ for $q\in [-1, \infty)$. A counterexample presented in Section~9, shows that marginal cost is not necessarily increasing for $C(x) = x^q/q$ with $q<-1$.

We use the following well-known result about \textit{majorisation} \citep{Marshall10}. 
A proof is given in Appendix~B.

\begin{lemma}
\label{lemma:major}
Suppose that:
\begin{enumerate}
\item The sequences $a_{1:n}$ and $b_{1:n}$ are non-decreasing sequences on $\R_{++}$
\item The inequality $\sum_{i=1}^k a_i \le \sum_{i=1}^k b_i$ holds for $k=1, 2, \dots, n$  
\item For $i = 1, 2, \dots, n,$ the function $f_i : \R_{++}\rightarrow \R$ is non-increasing and convex
\item For $i = 2, 3, \dots, n$, the difference $f_{i-1}(x)-f_{i}(x)$ is non-increasing for $x\in\R_{++}$.
\end{enumerate}
Then
\begin{align*}
\sum_{i=1}^n f_i(a_i) &\ge \sum_{i=1}^n f_i(b_i) .
\end{align*}
\end{lemma}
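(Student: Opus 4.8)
The plan is to combine a first-order convexity estimate with a summation-by-parts (Abel) argument, the only delicate point being the bookkeeping of signs coming from the four hypotheses. Write $A_k := \sum_{i=1}^{k} a_i$ and $B_k := \sum_{i=1}^{k} b_i$, with $A_0 = B_0 = 0$, and put $D_k := A_k - B_k$; hypothesis~2 then reads $D_k \le 0$ for $k = 0, 1, \dots, n$, and differencing gives $b_i - a_i = D_{i-1} - D_i$.

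First I would linearise using convexity. For each $i$, since $b_i \in \R_{++}$, the convex function $f_i$ has a right-hand derivative at $b_i$; call it $g_i$, and recall it is a subderivative of $f_i$ at $b_i$. The subderivative inequality $f_i(a_i) \ge f_i(b_i) + g_i (a_i - b_i)$ rearranges to $f_i(b_i) - f_i(a_i) \le g_i (b_i - a_i) = g_i (D_{i-1} - D_i)$. Summing over $i$ and performing a summation by parts, using $D_0 = 0$, gives
\[ \sum_{i=1}^{n} \big( f_i(b_i) - f_i(a_i) \big) \;\le\; \sum_{i=1}^{n-1} (g_{i+1} - g_i)\, D_i \;-\; g_n D_n . \]
So it suffices to show the right-hand side is nonpositive.

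The two remaining facts are both one-sided monotonicity statements. Since $f_n$ is non-increasing, $g_n \le 0$; with $D_n \le 0$ this makes $-g_n D_n \le 0$. For $1 \le i \le n-1$ I claim $g_{i+1} \ge g_i$: the right derivative $D^{+}f_i$ of a convex function is non-decreasing, and $b_{i+1} \ge b_i$ by hypothesis~1, so $g_i = D^{+}f_i(b_i) \le D^{+}f_i(b_{i+1})$; hypothesis~4, that $f_i - f_{i+1}$ is non-increasing, gives $D^{+}f_i \le D^{+}f_{i+1}$ pointwise (right derivatives of convex functions add), so $D^{+}f_i(b_{i+1}) \le D^{+}f_{i+1}(b_{i+1}) = g_{i+1}$. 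Hence $g_{i+1} - g_i \ge 0$, and since $D_i \le 0$ every term $(g_{i+1} - g_i) D_i$ is $\le 0$. Therefore $\sum_{i=1}^n (f_i(b_i) - f_i(a_i)) \le 0$, which is the assertion.

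I expect the main obstacle to be organisational rather than conceptual: four hypotheses each contribute a one-sided monotonicity fact, these point in mixed directions, and one must also stay honest at points where the $f_i$ fail to be differentiable. Working consistently with right derivatives — which exist everywhere for convex functions, are non-decreasing, add correctly, and serve as subderivatives — is what makes the sign-tracking transparent. It is worth noting in passing which hypotheses do the work: from hypothesis~3 one needs convexity of every $f_i$ but the non-increasing property only for $i=n$, while hypothesis~4 is used for every consecutive pair, and of hypothesis~1 only the monotonicity of $b$ (together with $a_i,b_i \in \R_{++}$) is actually invoked.
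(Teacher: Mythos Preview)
Your proof is correct. The Abel summation is clean, and your handling of non-differentiable points via right derivatives is sound: since each $f_i$ is convex, the right derivative $D^{+}f_i$ exists everywhere, is non-decreasing, and serves as a subderivative; and because both one-sided limits exist, $D^{+}(f_i-f_{i+1})=D^{+}f_i-D^{+}f_{i+1}$, so hypothesis~4 indeed gives $D^{+}f_i\le D^{+}f_{i+1}$ pointwise.

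The paper takes a different route: it reparametrises by the partial sums $X_k=\sum_{i\le k}x_i$, writes $g(X)=\sum_i f_i(X_i-X_{i-1})$, and shows that each partial derivative $\partial g/\partial X_i = f_i'(x_i)-f_{i+1}'(x_{i+1})$ is nonpositive on the convex region of non-decreasing positive sequences, whence $g(A)\ge g(B)$ since $A_k\le B_k$ componentwise. Both arguments ultimately rest on the same inequality $f_i'(\cdot)\le f_{i+1}'(\cdot)$ when evaluated at ordered points, but your linearisation at the single point $b$ avoids the implicit integration along the segment from $A$ to $B$. A small bonus of your version, which you correctly flag, is that it never uses the monotonicity of $a$; the paper's argument needs both $a$ and $b$ non-decreasing so that the segment between $A$ and $B$ stays in the domain $\mathcal{X}$.
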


We apply this majorisation result to the sequences appearing in Lemma~\ref{lemma:integrated} below. To state that Lemma, we first define some matrices that are motivated by the form of the Kalman Filter variance updates.

\begin{definition}
Let $I$ be the 2-by-2 identity matrix.
For $r\in (0,1]$ and $0\le a\le b$, let
\begin{align*}
F&:= \begin{pmatrix} r & 1/r \\ ar & (a+1)/r \end{pmatrix}, &
G&:= \begin{pmatrix} r & 1/r \\ br & (b+1)/r \end{pmatrix}.
\end{align*}
Let $M(\epsilon) = I, M(0)=F, M(1)=G$ and for any finite non-empty word $w$ let
\begin{align*}
M(w) &= M(w_\abs{w}) \cdots M(w_2) M(w_1).
\end{align*}
\end{definition}

Thus, if $a = a_0, b = a_1$ and $\phi_0,\phi_1$ are as in Condition~D, we have
\begin{align*}
\phi_0(x) &= \frac{F_{11} x + F_{12}}{F_{21} x + F_{22}}, &
\phi_1(x) &= \frac{G_{11} x + G_{12}}{G_{21} x + G_{22}} .
\end{align*}

As remarked in Section~3, the central portion of Christoffel words are palindromes. The following result holds for any palindromes, not just palindromes generated by Christoffel words.

\begin{lemma}
\label{lemma:integrated}
Suppose $p$ is a palindrome, $r\in (0,1]$, $n\in\Z_+$ and $x$ satisfies
\begin{align*}
\phi_p(0) \le x \le \phi_p\left( \frac{1}{1-r^2} \right).
\end{align*}
Let $m:=\abs{01p}$ and for $k=1, 2, \dots, m$ let
\begin{align*}
\begin{pmatrix} a_k(x) \\ c_k(x) \end{pmatrix} &:= M\biggl( (01p)^n(01p)_{1:k} \biggr) \begin{pmatrix} x \\ 1 \end{pmatrix}, &
\begin{pmatrix} b_k(x) \\ d_k(x) \end{pmatrix} &:= M\biggl( (10p)^n(10p)_{1:k} \biggr) \begin{pmatrix} x \\ 1 \end{pmatrix}.
\end{align*}
Then 
\begin{enumerate}
\item The sequences $a_{1:m}(x), b_{1:m}(x), c_{1:m}(x)$ and $d_{1:m}(x)$ are non-decreasing and positive
\item The inequality $a_k(x) \le b_k(x)$ holds for $k=1, 2, \dots, m$
\item The inequality $\sum_{i=1}^k c_i(x) \le \sum_{i=1}^k d_i(x)$ holds for $k=1, 2, \dots, m$
\item The inequalities $c_1(x) \le d_1(x)$ and $c_k(x) \ge d_k(x)$ hold for $k=2,3, \dots, m$.
\item The fixed points $y_{01p}$ and $y_{10p}$ satisfy
\begin{align*}
\phi_p(0) \le y_{01p} < y_{10p} \le \phi_p\left( \frac{1}{1-r^2} \right)
\end{align*}
so Claims~1-4 hold if $x\in [y_{01p},y_{10p}]$.
\end{enumerate}
\end{lemma}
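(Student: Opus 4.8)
The proof proceeds by exploiting the homomorphism between Möbius compositions and matrix products together with the palindromic symmetry of $p$. The key observation is that applying $M(w)$ to the vector $(x,1)^T$ tracks both the numerator and denominator of the iterated Möbius map; the ratio $a_k/c_k$ is then the state reached after applying the word $(01p)^n(01p)_{1:k}$ starting from $x$, and similarly for $b_k/d_k$. The hypothesis $\phi_p(0)\le x\le\phi_p(1/(1-r^2))$ is designed so that $x$ lies in the invariant interval $[\phi_p(0),\phi_p(1/(1-r^2))]$; note $1/(1-r^2)$ is the fixed point of the map $v\mapsto r^2v+1$, which is the "worst case" preimage, so this interval is forward-invariant under both $\phi_0$ and $\phi_1$ and contains all fixed points. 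Establishing this invariance is the first step.

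\textbf{Step 1 (Invariance and positivity).} I would first show that the entries of $F$ and $G$ are all positive (using $r\in(0,1]$ and $0\le a<b$), so that $M(w)$ has positive entries for any nonempty $w$, and that $[\phi_p(0),\phi_p(1/(1-r^2))]$ is mapped into itself by $\phi_0$ and $\phi_1$ — equivalently, the half-line of vectors $(x,1)^T$ with $x$ in this interval is mapped by $F,G$ into the cone of vectors $(y_1,y_2)^T$ with $y_1,y_2>0$ and $y_1/y_2$ in the interval. This gives Claim~1's positivity immediately: $a_k,b_k,c_k,d_k$ are all positive as entries of $M(\cdot)(x,1)^T$.

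\textbf{Step 2 (Monotonicity in $k$ — Claim 1).} The sequences are non-decreasing in $k$ because appending one more letter to the word corresponds to left-multiplying by $F$ or $G$; I would track how $(a_k,c_k)$ evolves and use that both $\phi_0,\phi_1$ are increasing (Condition~D / Assumption~A1) together with a direct check that each additional application increases the numerator and denominator components, exploiting that all matrix entries are positive and $x$ is bounded below by $\phi_p(0)\ge 0$. Here the periodic structure of the word $(01p)^n(01p)_{1:k}$ matters: as $k$ runs through $1,\dots,m$ we sweep through one full period $01p$, and I would verify the inequality $F_{21}x+F_{22}$-type denominators stay ordered.

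\textbf{Step 3 (Comparing $01p$ with $10p$ — Claims 2, 3, 4).} This is the heart of the matter and the main obstacle. The words $(01p)^n(01p)_{1:k}$ and $(10p)^n(10p)_{1:k}$ differ only by swapping the leading "$01$" to "$10$" in each period. Since $\phi_{01}(z)<\phi_{10}(z)$ for all $z$ (this follows from $a<b$: indeed $\phi_{01}$ uses $\phi_0$ then $\phi_1$ while $\phi_{10}$ uses $\phi_1$ then $\phi_0$, and one computes the difference has a fixed sign), the word $10p$ reaches a strictly larger state than $01p$ after the first two letters, and because all subsequent maps are increasing this ordering $a_k(x)\le b_k(x)$ propagates — that is Claim~2. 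For Claim~3 (the partial-sum/majorisation-type inequality on the denominators $c,d$) and Claim~4 (the sign reversal: $c_1\le d_1$ but $c_k\ge d_k$ for $k\ge 2$), the palindromic property of $p$ is essential. I would use Lemma~\ref{lemma:BWT} and the conjugacy structure: reversing $(01p)$ gives a conjugate related to $(10p)$ because $p^R=p$, so the denominators, which are (up to the last-row entries of the matrix product) linear functionals that transform contravariantly under reversal, satisfy a "majorisation crossover" — the cumulative sums of $c$ stay below those of $d$ even though individual $c_k$ eventually exceed $d_k$. Concretely I expect to write $c_k$ and $d_k$ as row-vector $\times$ $M$-product $\times (x,1)^T$ and use that transposing/reversing a palindromic word is nearly the identity to pair up terms. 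The sign of $c_1-d_1$ versus $c_k-d_k$ for $k\ge2$ should come from the fact that after the distinguishing "$01$" vs "$10$" prefix, the denominator accumulation in the larger-state orbit grows more slowly (smaller derivative, since $\det$-normalised Möbius maps have gradient $1/(\text{denominator})^2$ which is smaller at larger states).

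\textbf{Step 4 (Claim 5 — the fixed points lie in the interval).} I would show $\phi_p(0)\le y_{01p}<y_{10p}\le\phi_p(1/(1-r^2))$ directly: $y_{01p}$ is the unique fixed point of $\phi_{(01p)}=\phi_{01p}$ on $\IR$ (existence and uniqueness from Assumption~A1 applied to the transformed system, as discussed after Theorem~\ref{theorem:characterisation}), and since $\phi_{01p}$ maps $[\phi_p(0),\phi_p(1/(1-r^2))]$ into itself its fixed point lies there. The strict inequality $y_{01p}<y_{10p}$ follows from $\phi_{01}(z)<\phi_{10}(z)$ for all $z$: if they were equal to some common $y$, then $\phi_p(\phi_{01}(y))=y=\phi_p(\phi_{10}(y))$ forces $\phi_{01}(y)=\phi_{10}(y)$ since $\phi_p$ is injective, a contradiction; a monotonicity argument then pins down the order. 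Combining with Steps 1–3 gives that Claims~1–4 hold on $[y_{01p},y_{10p}]$ since that interval sits inside $[\phi_p(0),\phi_p(1/(1-r^2))]$.

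The routine parts are the matrix-entry positivity and the invariance computation; the real work, and the step I expect to consume most of the proof, is Step 3 — establishing the majorisation-type partial-sum inequality (Claim 3) and especially the sign-crossover in Claim 4, which is where the palindrome structure of $p$ and the Burrows–Wheeler/conjugacy lemmas from Section~3 must be brought to bear, and which sets up exactly the hypotheses needed to invoke Lemma~\ref{lemma:major} in the subsequent proof of Proposition~\ref{proposition:increasing}.
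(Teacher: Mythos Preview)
Your plan correctly identifies Claim~5 as routine and Claims~3--4 as the crux, but the proposed attack on Claims~2--4 has real gaps and diverges substantially from what actually works.

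\emph{Claim~2.} Your argument ``$\phi_{01}(z)<\phi_{10}(z)$ and subsequent maps are increasing, so the ordering $a_k(x)\le b_k(x)$ propagates'' establishes only that the \emph{ratios} $a_k/c_k\le b_k/d_k$ are ordered (these are the states). Claim~2 asks for the \emph{numerators} themselves to be ordered, and since $c_k$ and $d_k$ can differ (indeed Claim~4 says $c_k\ge d_k$ for $k\ge 2$), the ratio ordering does not yield $a_k\le b_k$. The paper instead proves this via an explicit matrix identity: it introduces the involutory matrix
\[
K=\begin{pmatrix} r & 1/r \\ r-r^3 & -r \end{pmatrix},
\]
satisfying $KF=F^{-1}K$, $KG=G^{-1}K$, $K^2=I$, and shows (Lemma~\ref{lemma:xpositive}) that
\[
M((10p)^n10)-M((01p)^n01)=tK \qquad\text{for some }t\ge 0.
\]
From this, both $\frac{d}{dx}(b_k-a_k)$ and $b_k(\phi_p(0))-a_k(\phi_p(0))$ are computed as explicit non-negative combinations of entries of $M(\cdot)$.

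\emph{Claims~3 and~4.} Here your proposal goes astray. You invoke Lemma~\ref{lemma:BWT} and the Burrows--Wheeler conjugacy structure, but that lemma is used in the paper only later, for the \emph{convex} cost case inside Proposition~\ref{proposition:increasing}; it plays no role in proving Lemma~\ref{lemma:integrated} itself. The paper's argument for Claims~3--4 is purely matrix-algebraic and rests on a chain of identities that you have not anticipated: beyond $K$, it introduces a second special matrix $X=\diag(-r/(1-r^2),\,1/r)$ and the partial-sum matrices $S(w)=\sum_{i=0}^{|w|}M(w_{1:i})$, then proves
\begin{itemize}
\item $[M(((10p)^\omega)_{1:k})-M(((01p)^\omega)_{1:k})]_{21}\ge 0$ for all $k$ (Lemma~\ref{lemma:21bigger}),
\item a vanishing identity $[(\cdots)M(p)]_{22}=0$ involving $X$ (Lemma~\ref{lemma:bigX}), which the authors note required computer algebra to verify in one subcase,
\item the palindrome identity $S(p)KM(p)=KS(p)$ (Lemma~\ref{lemma:SKM}),
\item and a sign inequality $[K(S(w)-XM(w))]_{22}\ge 0$ (Lemma~\ref{lemma:KSX}).
\end{itemize}
These are then combined to get the partial-sum inequality of Claim~3. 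Claim~4 uses the sign pattern of $[M(w)^{-1}]_{2j}$ (Lemma~\ref{lemma:inverses}) together with the $tK$ identity. None of this emerges from the conjugacy/majorisation heuristic you sketched; the palindrome property enters only through the algebraic relation $KM(w)^{-1}K=M(w^R)$, not through Lemma~\ref{lemma:BWT}.

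\emph{Claim~1.} A minor point: positivity of the entries of $F,G$ alone does not give monotonicity of $a_k,c_k$ in $k$, because $(F-I)_{11}=r-1\le 0$. The paper handles this by first proving the ratio bound $u_k/v_k\le 1/(r-r^2)$ by induction, after which $(M(w_{k+1})-I)(u_k/v_k,1)^T\ge 0$ can be checked directly.

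In short, the missing idea is the matrix $K$ (and its companion $X$): the structural fact that the difference between the ``$10p$'' and ``$01p$'' orbits is always a non-negative multiple of a single fixed matrix is what drives all of Claims~2--4, and it does not fall out of the word-combinatorial tools you reached for.
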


The proof of the above Lemma is given in Appendix~C.

Now we are ready to demonstrate that marginal cost is non-decreasing for a wide range of cost functions on the intervals $[y_{01p},y_{10p}]$ corresponding to a Christoffel word $0p1$ appearing in Theorem~\ref{theorem:characterisation}. While we state the result for functions satisfying Conditions~C1 and~C2 separately, as sums of non-decreasing functions are non-decreasing, the result holds for any function
satisfying Condition~C.

\begin{proposition}
\label{proposition:increasing}
Suppose the interval $\IR$ is either $\R_+$ or $\R_{++}$ and that
$C:\IR\rightarrow \R$ has one of the following two properties:
\begin{enumerate}
\item For all $x\in\IR$, 
\begin{itemize}
\item the derivatives $C'(x) := \frac{d}{dx} C(x)$ and 
$C''(x) := \frac{d^2}{dx^2} C(x)$ exist, 
\item the function $C(x)$ is concave, 
\item the function $\frac{1}{x^2} C'\left(\frac{1}{x}\right)$ is non-increasing and convex 
\item and the function $\frac{1}{x^3}C''\left(\frac{1}{x}\right)$ is non-decreasing.
\end{itemize}
\item For all $x\in\IR$, 
\begin{itemize}
\item the derivative $C'(x) := \frac{d}{dx} C(x)$ exists,
\item the function $C(x)$ is non-decreasing and convex.
\end{itemize}
\end{enumerate}
Further suppose that $0p1$ is a Christoffel word, $\beta\in [0,1]$, $r\in (0,1]$ and $N\in\Z_+$.
Then 
\begin{align*}
\sum_{k=1}^{n} \beta^k (C(\phi_{(01p)^N(01p)_{1:k}}(x))-C(\phi_{(10p)^N(10p)_{1:k}}(x)))
\end{align*}
is a non-decreasing function of $x$ for $y_{01p} \le x\le y_{10p}$, where $n=\abs{0p1}$.
\end{proposition}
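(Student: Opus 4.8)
The plan is to show the displayed sum is non-decreasing by differentiating it in $x$ and proving the derivative is non-negative, thereby reducing the statement to the majorisation inequality of Lemma~\ref{lemma:major} applied to the sequences produced by Lemma~\ref{lemma:integrated}. Write $S(x)$ for the displayed sum and abbreviate $u_k := (01p)^N(01p)_{1:k}$ and $v_k := (10p)^N(10p)_{1:k}$. Under either of the two properties in the hypothesis, $C$ is differentiable and each $\phi_w$ is a M\"obius transformation, hence smooth in the interior of the interval where it is needed, so it suffices to prove $S'(x)\ge 0$ on $(y_{01p},y_{10p})$ and conclude at the endpoints by continuity (the relevant $\phi_w(x)$ remain in $\IR$ there by Lemma~\ref{lemma:integrated}).

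First I would express $S'(x)$ through the matrices $M(\cdot)$. Because $\det F = \det G = 1$, we have $\det M(w)=1$ for every word $w$; hence, writing $\binom{a_k(x)}{c_k(x)} = M(u_k)\binom{x}{1}$ and $\binom{b_k(x)}{d_k(x)} = M(v_k)\binom{x}{1}$ as in Lemma~\ref{lemma:integrated}, we get $\phi_{u_k}(x) = a_k(x)/c_k(x)$ with $\frac{d}{dx}\phi_{u_k}(x) = 1/c_k(x)^2$, and likewise for $v_k$ with $b_k,d_k$. Consequently, setting $h(a,c) := C'(a/c)/c^2$,
\[
S'(x) \;=\; \sum_{k=1}^{n}\beta^k\bigl(h(a_k(x),c_k(x)) - h(b_k(x),d_k(x))\bigr).
\]

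For property~1 (the concave case) I would split each summand as $h(a_k,c_k)-h(b_k,d_k) = \bigl(h(a_k,c_k)-h(b_k,c_k)\bigr) + \bigl(h(b_k,c_k)-h(b_k,d_k)\bigr)$. The first piece is non-negative term-by-term: Claim~2 of Lemma~\ref{lemma:integrated} gives $a_k(x)\le b_k(x)$, while $\partial_a h = C''(a/c)/c^3 \le 0$ since $C$ is concave, so $h$ is non-increasing in its first argument. For $\sum_k \beta^k(h(b_k,c_k)-h(b_k,d_k))$ I would apply Lemma~\ref{lemma:major} with its sequences taken as $(c_k(x))$ and $(d_k(x))$ and its functions taken as $t\mapsto \beta^k h(b_k(x),t)$: Claim~1 of Lemma~\ref{lemma:integrated} supplies that these sequences are non-decreasing and positive, Claim~3 supplies the partial-sum domination $\sum_{i\le k}c_i(x)\le\sum_{i\le k}d_i(x)$, and the hypotheses on the functions follow from a routine computation. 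Indeed, with $\psi(y):=2C'(y)+yC''(y)$ one finds $\partial_t h(b,t) = -\psi(b/t)/t^3$, and the three conditions on $C$ in property~1 translate exactly into what Lemma~\ref{lemma:major} needs: ``$\tfrac1{x^2}C'(\tfrac1x)$ non-increasing'' forces $\psi\ge 0$ and hence $t\mapsto h(b,t)$ non-increasing; ``$\tfrac1{x^2}C'(\tfrac1x)$ convex'' forces $t\mapsto h(b,t)$ convex; and ``$\tfrac1{x^3}C''(\tfrac1x)$ non-decreasing'' forces $\psi'\le 0$, hence $\partial_t h(b,t)$ non-decreasing in $b$, which combined with $b_{k-1}(x)\le b_k(x)$ (monotonicity in the index, Claim~1) and $\beta^{k-1}\ge\beta^k\ge 0$ (from $\beta\in[0,1]$) yields the ``$f_{k-1}-f_k$ non-increasing'' condition. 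Lemma~\ref{lemma:major} then gives $S'(x)\ge 0$. Property~2 ($C$ non-decreasing and convex) is handled by a parallel but shorter argument, in which $C'\ge 0$ is non-decreasing and Lemma~\ref{lemma:integrated} again feeds the monotonicity and partial-sum comparisons into Lemma~\ref{lemma:major}; summing over the $C_i$ then gives the claim for any $C$ satisfying Condition~C.

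The step I expect to be the main obstacle is checking that last hypothesis of Lemma~\ref{lemma:major}, namely that $f_{k-1}-f_k$ is non-increasing, uniformly over $\beta\in[0,1]$: this is precisely where the somewhat unusual transforms $\tfrac1{x^2}C'(\tfrac1x)$ and $\tfrac1{x^3}C''(\tfrac1x)$ of Condition~C1 earn their keep, and it is exactly at this point that the argument should break for $C(x)=x^q/q$ with $q<-1$, matching the counterexample announced for Section~9. A secondary point requiring care is reconciling the two M\"obius families $\phi_{u_k}$ and $\phi_{v_k}$ within a single invocation of Lemma~\ref{lemma:major}: the decomposition above does this by swapping $a_k$ for $b_k$ term-by-term first and only afterwards comparing the affine denominator sequences $(c_k)$ and $(d_k)$, for which the needed majorisation is exactly Claim~3 of Lemma~\ref{lemma:integrated}; one must also separately dispatch the single degenerate term where a denominator $c_k$ is constant (possible only when $a_0=0$).
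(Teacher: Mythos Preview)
Your treatment of Property~1 is essentially the paper's own proof: differentiate, use $a_k\le b_k$ together with concavity of $C$ to replace $a_k$ by $b_k$ in the first argument of $h$, and then apply Lemma~\ref{lemma:major} to the sequences $(c_k),(d_k)$ with $f_k(u)=\beta^k C'(b_k/u)/u^2$. Your verification of the hypotheses of Lemma~\ref{lemma:major} (in particular the ``$f_{k-1}-f_k$ non-increasing'' condition via the sign of $\psi'$) is correct and matches the paper's integral argument.

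The gap is Property~2. The ``parallel but shorter argument'' you sketch does not work: with $C$ convex the decomposition you used for Property~1 fails, because the first piece $h(a_k,c_k)-h(b_k,c_k)$ now has the \emph{wrong} sign (since $C'$ is non-decreasing, $a_k\le b_k$ gives $C'(a_k/c_k)\le C'(b_k/c_k)$). No simple re-splitting fixes this, because neither $\phi_{u_k}(x)\ge \phi_{v_k}(x)$ nor any usable majorisation of the numerators $(a_k),(b_k)$ is available. The paper's Property~2 proof is genuinely different and uses structure you have not invoked: it bounds $C'(\phi_{(01p)^N(01p)_{1:k}}(x))$ from below and $C'(\phi_{(10p)^N(10p)_{1:k}}(x))$ from above by $C'$ evaluated at the fixed points of the conjugates of $1p0$ and $0p1$ (using $y_{01p}\le x\le y_{10p}$); invokes Lemma~\ref{lemma:BWT} and Lemma~\ref{lemma:lexy} to show that these two families of fixed points coincide as \emph{sets} (so that, after sorting, $a_{[i]}=b_{[i]}$); applies a rearrangement inequality together with Claim~4 (not Claim~3) of Lemma~\ref{lemma:integrated} to control the sign pattern of $a_k'-b_k'=\beta^k(c_k^{-2}-d_k^{-2})$; and finally reduces to the already-proved Property~1 case with $\tilde C(x)=x$ to conclude $\sum_k(a_k'-b_k')\ge 0$. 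You will need these ingredients --- the Burrows--Wheeler ordering of Christoffel conjugates and Claim~4 --- to close the Property~2 case.
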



\begin{proof}[Proof when $C(\cdot)$ satisfies Property 1.]
Let $a_k(x), b_k(x), c_k(x), d_k(x)$ be as defined in Lemma~\ref{lemma:integrated}. 
Then the proposition is proved by the following inequalities (as justified immediately below):
\begin{align}
\nonumber
\hspace{1cm}&\hspace{-1cm} 
\frac{d}{dx} \sum_{k=1}^n \beta^k (C(\phi_{(01p)^N(01p)_{1:k}}(x)) -  C(\phi_{(10p)^N(10p)_{1:k}}(x))) \\
\label{inc:step1}
&= \sum_{k=1}^n \left( \frac{\beta^k}{c_k(x)^2} C'\left( \frac{a_k(x)}{c_k(x)} \right) 
- \frac{\beta^k}{d_k(x)^2} C'\left( \frac{b_k(x)}{d_k(x)} \right) \right) \\
\label{inc:step2}
&\ge \sum_{k=1}^n \left( \frac{\beta^k}{c_k(x)^2} C'\left( \frac{b_k(x)}{c_k(x)} \right) 
- \frac{\beta^k}{d_k(x)^2} C'\left( \frac{b_k(x)}{d_k(x)} \right) \right) \\
\nonumber
&= \sum_{k=1}^n (f_k(c_k(x))- f_k(d_k(x))) \quad\text{where $f_k(u) := \beta^k C'(b_k(x)/u)/u^2$} \\
\label{inc:step3}
&\ge 0.
\end{align}
Step (\ref{inc:step1}) follows from the chain rule as the homeomorphism of matrix multiplication and composition of M{\"o}bius transformations gives
\begin{align*}
\phi_{(01p)^n(01p)_{1:k}}(x) = 
\frac{\left[M((01p)^N(01p)_{1:k}) \begin{pmatrix} x \\ 1 \end{pmatrix} \right]_1}{\left[M((01p)^N(01p)_{1:k}) \begin{pmatrix} x \\ 1 \end{pmatrix} \right]_2}
=\frac{a_k(x)}{c_k(x)}
\end{align*}
while the fact that matrices $F$ and $G$ have unit determinant implies that the matrix $M((01p)^N(01p)_{1:k})$ also has unit determinant so that
\begin{align*}
\frac{d}{dx} \phi_{(01p)^N(01p)_{1:k}}(x) &= \frac{1}{c_k(x)^2} .
\end{align*}

Step (\ref{inc:step2}) follows as $\beta^k, c_k(x) > 0$, as $C(\cdot)$ is concave and as $a_k(x)\le b_k(x)$ by Lemma~\ref{lemma:integrated}. 

Step (\ref{inc:step3}) follows from Lemma~\ref{lemma:major}. In particular, Lemma~\ref{lemma:integrated} shows that the sequences $c_{1:n}(x)$ and $d_{1:n}(x)$ satisfy the hypotheses of Lemma~\ref{lemma:major}. Also, the fact that $C(\cdot)$ satisfies Property~1 shows that the functions $f_i(\cdot)$ for $i=1, \dots, n$ satisfy hypotheses~1 and~2 of Lemma~\ref{lemma:major}. Indeed, $f_i(\cdot)$ is non-increasing and convex as $\frac{1}{u^2} C'\left( \frac{1}{u} \right)$ is non-increasing and convex and $\beta^i, b_i(x) > 0$. Also, as $\frac{1}{u^3} C''\left( \frac{b}{u} \right)$ is non-decreasing in $u$ for $b>0$ and $0<b_{i-1}(x) \le b_{i}(x)$ for $i=2, \dots, n$, by Claim~1 of Lemma~\ref{lemma:integrated}, the following integral is also non-decreasing in $u$:
\begin{align*}
\int_{b_{i-1}(x)}^{b_i(x)} \frac{1}{u^3} C''\left( \frac{b}{u} \right) db
= \frac{1}{u^2} C'\left( \frac{b_{i}(x)}{u} \right) - 
\frac{1}{u^2} C'\left( \frac{b_{i-1}(x)}{u} \right) = \frac{1}{\beta^i} \left( f_{i}(u) -\beta f_{i-1}(u) \right)
\end{align*}
So $f_{i-1}(u)-f_{i}(u)$ is the sum of the non-increasing
functions $\beta f_{i-1}(u)- f_{i}(u)$ and $(1-\beta) f_{i-1}(u)$.

This completes the proof.
\end{proof}

\begin{proof}[Proof when $C(\cdot)$ satisfies Property~2.]
For $k\in\Z_n$ let 
\begin{align*}
a_k' &:= \beta^{k+1} \frac{d}{dx} \phi_{(01p)^N(01p)_{1:(k+1)}}(x), &
b_k' &:= \beta^{k+1} \frac{d}{dx} \phi_{(10p)^N(10p)_{1:(k+1)}}(x),\\
a_k &:= C'(y_{(1p0)_{(k+1):n}(1p0)_{1:k}}), & 
b_k &:= C'(y_{(0p1)_{(k+1):n}(0p1)_{1:k}}). 
\end{align*} 
Let $r_{[0]}\ge \cdots \ge r_{[n-1]}$ denote real numbers $r_0, \dots, r_{n-1}$ in non-increasing numerical order. Let $\overline{x}$ denote $x$ modulo $n$ and let $l$ satisfy $\overline{l \abs{0p1}_1} = 1$.
Then the proposition is proved by the following inequalities (as justified immediately below):
\begin{align}
\nonumber
\hspace{1cm}&\hspace{-1cm}
\frac{d}{dx}  \sum_{k=1}^{n} \beta^k \Bigl( 
	 C\left( \phi_{(01p)^N(01p)_{1:k}}(x) \right) - C\left( \phi_{(10p)^N(10p)_{1:k}}(x) \right) 
\Bigr) \\
\label{cvx:step1}
&= \sum_{k=1}^{n} \Bigl( 
	 C'\left( \phi_{(01p)^N(01p)_{1:k}}(x) \right) a_{k-1}' - C'\left( \phi_{(10p)^N(10p)_{1:k}}(x) \right)  b_{k-1}'  \Bigr) \\
\label{cvx:step2}
&\ge \sum_{k=1}^{n} \Bigl( 
	 C'\left( y_{(1p0)_{k:n} (1p0)_{1:(k-1)}} \right) a_{k-1}' - C'\left( y_{(0p1)_{k:n} (0p1)_{1:(k-1)}} \right)  b_{k-1}'  \Bigr) \\
\label{cvx:step3}
&= 	\sum_{k=0}^{n-1} \left( a_k a_k'- b_k b_k' \right) \\
\label{cvx:step4}
&= 	\sum_{k=0}^{n-1} a_{[k]} \left( a_{\overline{l(n-k)}}'-b_{\overline{l(n-k-1)}}' \right) \\
\label{cvx:step5}
&\ge 	\sum_{k=0}^{n-1} a_{[k]} \left( a_{\overline{l(n-k)}}'-b_{\overline{l(n-k)}}' \right) \\
\label{cvx:step6}
&\ge 	\sum_{k=0}^{n-1}a_{[0]}  \left( a_{\overline{l(n-k)}}'-b_{\overline{l(n-k)}}' \right) \\
\label{cvx:step7}
&\ge 	0 
\end{align}
Step (\ref{cvx:step1}) follows from the chain rule and definition of $a_k', b_k'$.

Step (\ref{cvx:step2}) follows as $C(\cdot)$ is convex, as $a_k', b_k' \ge 0$, and as, for $k=1, 2, \dots, n$, we have 
\begin{align*}
\phi_{(01p)^N(01p)_{1:k}}(x) &= \phi_{(01p)_{1:k}}(\phi_{(01p)^N}(x)) \\
&\ge \phi_{(01p)_{1:k}}(y_{01p}) \\
&= y_{(01p)_{(k+1):n} (01p)_{1:k}} \\
&= y_{(1p0)_{k:n} (1p0)_{1:(k-1)}} 
\end{align*}
where the inequality holds as $x\ge y_{01p}$ so that $\phi_{(01p)^N}(x) \ge y_{01p}$, and as $\phi_{(01p)_{1:k}}(\cdot)$ is increasing. The same argument using $x\le y_{10p}$ gives an upper bound on $C'(\phi_{(10p)^N(10p)_{1:k}}(x))$.

Step~(\ref{cvx:step3}) follows by shifting the summation indices and from the definition of $a_k, b_k$.

Step~(\ref{cvx:step4}) follows from Lemmas~\ref{lemma:BWT} and~\ref{lemma:lexy} and the convexity of $C(\cdot)$. Let $w_{[0]}\succeq \cdots \succeq w_{[n-1]}$ denote words $w(0), \dots, w(n-1)$ in non-increasing lexicographic order and let $c_k := (1p0)_{(k+1):n}(1p0)_{1:k}, d_k := (0p1)_{(k+1):n}(0p1)_{1:k}$ for $k\in\Z_n$. Then Lemma~\ref{lemma:BWT} shows that $c_{[i]} = d_{[i]} = c_{\overline{l(n-i)}} = d_{\overline{l(n-i-1)}}.$ Thus Lemma~\ref{lemma:lexy} gives $y_{c_{[i]}} = y_{d_{[i]}} = y_{c_{\overline{l(n-i)}}} = y_{d_{\overline{l(n-i-1)}}}$. Therefore the convexity of $C(\cdot)$ gives
\begin{align*}
a_{[i]} = b_{[i]} = a_{\overline{l(n-i)}} = b_{\overline{l(n-i-1)}}.
\end{align*}

Step~(\ref{cvx:step5}) follows as $C(\cdot)$ is non-decreasing, so that $a_{[i]} \ge 0$, and as $\phi_0(\cdot), \phi_1(\cdot)$ are non-decreasing and non-expansive, so that $b_i'$ is a product of derivatives where each derivative is in $[0,1]$. Thus $a_{[0]},\dots, a_{[n-1]}$ and $b_0', \dots, b_{n-1}'$ are non-negative non-increasing sequences. Therefore the rearrangement inequality $a_{[i]} b_j' + a_{[i+1]} b_0' \le a_{[i]} b_0' + a_{[i+1]} b_j'$ holds for all $i \in \Z_{n-1}$ and all $j\in \Z_n.$ But $b_{\overline{l(n-(n-1)-1)}}' = b_0'$ so repeated application of the rearrangement inequality gives
\begin{align*}
\sum_{k=0}^{n-1} a_{[k]} b_{\overline{l(n-k-1)}}' \le \sum_{k=0}^{n-1} a_{[k]} b_{\overline{l(n-k)}}'.
\end{align*}

Step (\ref{cvx:step6}) follows from Claim~4 of Lemma~\ref{lemma:integrated}, as for the $c_i(x), d_i(x)$ defined in that Lemma, we have
\begin{align*}
a_{i-1}'-b_{i-1}' 
= \frac{\beta^{i}}{c_{i}(x)^2} - \frac{\beta^{i}}{d_{i}(x)^2} 
\begin{cases} \le 0 & \text{for $i=1$} \\ \ge 0 & \text{for $i=2,3,\dots,n$.} \end{cases}
\end{align*}

Step (\ref{cvx:step7}) follows from this Proposition using the function $\tilde C(x)=x$ which satisfies Property~1. 

This completes the proof.
\end{proof}

It is much simpler to show that the marginal work is non-decreasing when the itinerary is $0^\omega$ (that is, $x\ge y_0$) or when the itinerary is $1^\omega$ (that is, $x\le y_1$).

\begin{proposition}
\label{proposition:easy-increasing}
Suppose Condition~D holds. Then 
\begin{align*}
\sum_{k=1}^\infty \beta^k \left( C(\phi_{0^k}(x)) - C(\phi_{10^{k-1}}(x)) \right)
\qquad\text{and}\qquad
\sum_{k=1}^\infty \beta^k \left( C(\phi_{01^{k-1}}(x)) - C(\phi_{1^{k}}(x)) \right)
\end{align*}
are non-decreasing functions of $x\in\IR$.
\end{proposition}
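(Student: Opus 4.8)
The plan is to show that each of the two series, call it $S$, is differentiable in $x$ on $\IR$ with $S'(x)\ge 0$, obtaining the bound summand by summand through the matrix representation of the iterated M\"obius maps. Since a sum of non-decreasing functions is non-decreasing and $C$ is a finite sum of pieces each satisfying C1 or C2 of Condition~C, it suffices to treat the two cases in which $C$ itself satisfies Property~1 or Property~2 of Proposition~\ref{proposition:increasing}. I would describe the argument for the first series, $S(x):=\sum_{k\ge 1}\beta^k\big(C(\phi_{0^k}(x))-C(\phi_{10^{k-1}}(x))\big)$; the second is handled identically after replacing $0^k,10^{k-1}$ by $1^k,01^{k-1}$, which replaces the outer matrix $F^{k-1}$ below by $G^{k-1}$.

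First I would introduce, for $k\in\N$, the vectors $\binom{a_k}{c_k}:=M(0^k)\binom{x}{1}$ and $\binom{b_k}{d_k}:=M(10^{k-1})\binom{x}{1}$, so that $\phi_{0^k}(x)=a_k/c_k$ and $\phi_{10^{k-1}}(x)=b_k/d_k$. The key point is that $M(0^k)=F^{k-1}F$ and $M(10^{k-1})=F^{k-1}G$, while $F\binom{x}{1}=\binom{u}{v_0}$ and $G\binom{x}{1}=\binom{u}{v_1}$ with $u:=rx+1/r>0$, $v_0:=a_0u+1/r$ and $v_1:=a_1u+1/r$; thus the two vectors fed into $F^{k-1}$ share their first coordinate and differ in the second by $v_1-v_0=(a_1-a_0)u>0$. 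Since $F$ has non-negative entries with positive diagonal, so does $F^{k-1}$, whence $\binom{b_k-a_k}{d_k-c_k}=(v_1-v_0)\binom{(F^{k-1})_{12}}{(F^{k-1})_{22}}$, which gives $0<a_k\le b_k$ and $0<c_k<d_k$. Finally $a_k/c_k=\phi_0^{k-1}(\phi_0(x))>\phi_0^{k-1}(\phi_1(x))=b_k/d_k$, because $\phi_0>\phi_1$ pointwise (as $a_0<a_1$) and $\phi_0^{k-1}$ is increasing. These three facts — $a_k\le b_k$, $c_k\le d_k$, $a_k/c_k\ge b_k/d_k$ — are everything the proof uses.

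Then I would differentiate $S$ term by term (routine dominated convergence, since $\beta<1$ and the orbit $\phi_{0^k}(x)$ stays in a region growing at most linearly in $k$), and use $\det M(w)=1$, so that $\frac{d}{dx}\phi_{0^k}(x)=1/c_k^2$ and $\frac{d}{dx}\phi_{10^{k-1}}(x)=1/d_k^2$, to obtain $S'(x)=\sum_{k\ge 1}\beta^k\big(C'(a_k/c_k)/c_k^2-C'(b_k/d_k)/d_k^2\big)$. If $C$ satisfies Property~2, then $C'\ge 0$ is non-decreasing, so $a_k/c_k\ge b_k/d_k$ gives $C'(a_k/c_k)\ge C'(b_k/d_k)\ge 0$ and $c_k\le d_k$ gives $1/c_k^2\ge 1/d_k^2$, so each summand is non-negative. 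If $C$ satisfies Property~1, then $C'$ is non-increasing, so $a_k\le b_k$ gives $C'(a_k/c_k)/c_k^2\ge C'(b_k/c_k)/c_k^2$, and the $k$-th summand is at least $g_k(c_k)-g_k(d_k)$ with $g_k(u):=C'(b_k/u)/u^2$; the substitution $u=b_k/s$ rewrites $g_k(u)$ as $b_k^{-2}s^2C'(s)=b_k^{-2}\big[x^{-2}C'(1/x)\big]_{x=u/b_k}$, so the hypothesis that $x\mapsto x^{-2}C'(1/x)$ is non-increasing makes $g_k$ non-increasing in $u$, and $c_k\le d_k$ gives $g_k(c_k)\ge g_k(d_k)$. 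In both cases $S'(x)\ge 0$, so $S$ is non-decreasing.

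The reason this is genuinely simpler than Proposition~\ref{proposition:increasing}, and the only structural input I expect to matter, is that here one has $a_k\le b_k$ \emph{and} $c_k\le d_k$ simultaneously, so the estimate goes through summand by summand without invoking the majorisation inequality (Lemma~\ref{lemma:major}) or the palindrome/conjugacy combinatorics of Christoffel words. The only wrinkle needing a little care is the justification of term-by-term differentiation, and I anticipate no real obstacle there.
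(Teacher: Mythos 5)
Your proposal is correct and follows essentially the same route as the paper's own proof: the same matrix representation with $\binom{b_k-a_k}{d_k-c_k}=F^{k-1}(G-F)\binom{x}{1}\ge 0$, the same two-step estimate for the concave case (first using $a_k\le b_k$ with $C'$ non-increasing, then using $c_k\le d_k$ with the monotonicity of $u\mapsto C'(b_k/u)/u^2$), and the same direct comparison for the convex case. The only cosmetic difference is that the paper concludes by observing each term is itself a non-decreasing function of $x$ (so no interchange of differentiation and summation is needed), whereas you differentiate the series term by term; both are fine.
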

\begin{proof}
Consider the first sum. As in Lemma~\ref{lemma:integrated}, for $k\in\N$, we define
\begin{align*}
\begin{pmatrix} a_k(x) & b_k(x) \\ c_k(x) & d_k(x) \end{pmatrix} :=
\begin{pmatrix} 
	M(0^k) \begin{pmatrix} x \\ 1 \end{pmatrix} & 
	M(10^{k-1}) \begin{pmatrix} x \\ 1 \end{pmatrix} \end{pmatrix}.
\end{align*}
As $G\ge F$, the entries of $F$ are non-negative and $x\ge 0$, it follows that
\begin{align}
\label{eq:abcd-order}
\begin{pmatrix} b_k(x)-a_k(x) \\ d_k(x)-c_k(x) \end{pmatrix}
= F^{k-1} (G-F) \begin{pmatrix} x \\ 1 \end{pmatrix} \ge 0.
\end{align}

If $C$ satisfies Condition C1, then for any $k\in\N$,
\begin{align*}
\frac{d}{dx} \left( C(\phi_{0^k}(x)) - C(\phi_{10^{k-1}}(x)) \right) &=
\frac{1}{c_k(x)^2} C'\left( \frac{a_k(x)}{c_k(x)} \right) - 
\frac{1}{d_k(x)^2} C'\left( \frac{b_k(x)}{d_k(x)} \right) \\
&\ge \frac{1}{c_k(x)^2} C'\left( \frac{b_k(x)}{c_k(x)} \right) - 
\frac{1}{d_k(x)^2} C'\left( \frac{b_k(x)}{d_k(x)} \right) 
\ge 0.
\end{align*}
The first inequality holds as $C'$ is concave and $a_k(x) \le b_k(x)$ by~(\ref{eq:abcd-order}).
The second inequality holds as $C'(1/x) / x^2$ is non-increasing and $c_k(x) \le d_k(x)$ by~(\ref{eq:abcd-order}).

If $C$ satisfies Condition C2, then for any $k\in\N$,
\begin{align*}
\frac{d}{dx} \left( C(\phi_{0^k}(x)) - C(\phi_{10^{k-1}}(x)) \right) &=
\frac{1}{c_k(x)^2} C'(\phi_{0^k}(x)) - \frac{1}{d_k(x)^2} C'(\phi_{10^{k-1}}(x)) \ge 0.
\end{align*}
The inequality is justified as follows.
As $a\le b$ in the definition of $F,G$, we have $\phi_0(x) \ge \phi_1(x)$.
As $\phi_{0^{k-1}}(\cdot)$ is an increasing function,
it follows that $\phi_{0^k}(x) \ge \phi_{10^{k-1}}(x)$.
As $C$ is convex, it follows that $C'(\phi_{0^k}(x)) \ge  C'(\phi_{10^{k-1}}(x))$.
Furthermore, $c_k(x) \le d_k(x)$, by~(\ref{eq:abcd-order}).

Thus, if $C$ satisfies Condition C, the sum
\begin{align*}
\sum_{k=1}^\infty \beta^k \left( C(\phi_{0^k}(x)) - C(\phi_{10^{k-1}}(x)) \right)
\end{align*} 
is the sum of non-decreasing functions. Therefore this sum is non-decreasing.

The proof for the second sum is similar. This completes the proof.
\end{proof}
%
%
\section{Continuity (PCLI2, Second Part)}
\label{section:continuity}
We demonstrate Proposition~\ref{proposition:continuous} which shows that
the marginal productivity index $\lambda$ for systems satisfying Condition~D
is a continuous function. 

\begin{definition}
For $k\in\Z_+$ and $x\in\IR$, let $f_k(x) := (\pi(x)^\omega)_{1:k}$, where
$\pi(x)$ is the $x$-threshold word for $\phi_0,\phi_1$.
\end{definition}

\begin{lemma}
\label{lemma:jump}
Suppose $k\in\N$ and $d\in\R_+$ with $f_k(d) \ne f_k(d^+)$.
Then $\lambda(d) = \lambda(d^+).$
\end{lemma}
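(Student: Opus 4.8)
The plan is to show that the hypothesis forces $d$ to be a periodic point of the $d$-threshold map, and then to exploit that periodicity to evaluate the one-sided limits of $c_x(x)$ and $w_x(x)$ in closed form; the factor picked up in each limit is the same for the numerator and the denominator, so it cancels in $\lambda=c_x(x)/w_x(x)$.

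\emph{Step 1: the jump forces $d=y_u$ for a short word $u$.} Since $\sigma(z|z)$ is lexicographically non-increasing in $z$ (Theorem~\ref{theorem:characterisation}), $f_k(d^+)$ exists and can only take finitely many values near $d$; let $j\in\{1,\dots,k\}$ be the first coordinate at which it differs from $f_k(d)$. Using $A_0(x,1;x)=1$ and $f_k(x)_t=\I{X_t(x,1;x)\ge x}$ for $1\le t\le k$, an induction on $t$ (at each step shrinking the right-neighbourhood so that the indicator $\I{X_t(x,1;x)\ge x}$ stays stable, which is possible as long as $X_t(d,1;d)\ne d$) shows that for $x$ just above $d$ the first $j$ actions of the $x$-threshold orbit equal those of the $d$-threshold orbit; call that length-$j$ word $u$ (so $u_1=1$). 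Then $X_j(x,1;x)=\phi_u(x)$ for all such $x$, so $f_k(x)_j=\I{\phi_u(x)\ge x}$, which can be discontinuous at $d$ only if $\phi_u(d)=d$, i.e. $d=y_u$. As $\phi_u$ is, in the coordinates where Assumption~A1 holds, a contraction fixing $d$, we get $\phi_u(x)<x$ for every $x>d$ in $\IR$. Equivalently, $j$ is the first return time of the $d$-threshold orbit to $d$, the $d$-threshold orbit is $j$-periodic (visiting $d$ exactly at times $0,j,2j,\dots$), and for $x>d$ near $d$ the $x$-threshold orbit follows $u$ for $j$ steps, lands at $\phi_u(x)\in(d,x)$, and then (since $\phi_u(x)<x$) takes action $0$.

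\emph{Step 2: the cancellation.} Put $S:=\sum_{t=0}^{j-1}\beta^t C(X_t(d,1;d))$ and $W:=\sum_{t=0}^{j-1}\beta^t A_t(d,1;d)$; periodicity gives $f(d,1;d)=S/(1-\beta^j)$ and $g(d,1;d)=W/(1-\beta^j)$. For $x>d$ near $d$, the first $j$ terms of $f(x,1;x)$ converge to those of $f(d,1;d)$ while the post-time-$j$ tail equals $\beta^j$ times the threshold-$x$ cost-to-go started from $\phi_u(x)\downarrow d$; applying $\phi_0$ there and iterating, this tail converges (dominated by $\beta^t$) to $f(d,0;d)$, and likewise $f(x,0;x)\to f(d,0;d)$, with the analogous statements for $g$. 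Hence
\begin{align*}
\lim_{x\downarrow d}c_x(x)=f(d,0;d)-\bigl(S+\beta^j f(d,0;d)\bigr)=(1-\beta^j)\Bigl(f(d,0;d)-\tfrac{S}{1-\beta^j}\Bigr)=(1-\beta^j)\,c_d(d),
\end{align*}
and identically $\lim_{x\downarrow d}w_x(x)=(1-\beta^j)\,w_d(d)$. Since $w_d(d)>0$ by Proposition~\ref{proposition:POS}, the positive factor $1-\beta^j$ cancels, so $\lambda(d^+)=c_d(d)/w_d(d)=\lambda(d)$.

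\emph{The main obstacle.} The real work is in justifying $f(x,0;x)\to f(d,0;d)$ and the convergence of the post-time-$j$ tail: these require that the orbit $(X_t(d,0;d))_{t\ge 0}$, and the threshold-$x$ orbit started at $\phi_u(x)$, never again hit the threshold value $d$, so that the composition of $\phi_0$'s and $\phi_1$'s that they follow is locally constant in $x$. This holds for generic parameters but can fail; when it does, $(X_t(d,0;d))_{t}$ is pre-periodic, consisting of a transient of some length $m$ followed by the $j$-periodic orbit, and after a further geometric resummation over that transient the limits in Step~2 each acquire the same extra factor $(1-\beta^j)/(1-\beta^m)$ relative to $c_d(d)$ and $w_d(d)$, which again cancels. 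Organising this (possibly iterated) book-keeping carefully is the only genuine difficulty.
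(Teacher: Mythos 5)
Your Step~1 and the cancellation mechanism in Step~2 are the same skeleton as the paper's argument: the paper likewise shows (via Lemma~\ref{lemma:nprefix} and Theorem~\ref{theorem:characterisation}) that a jump of $f_k$ at $d$ forces $d=y_1$ or $d=y_{10b}$ for a Christoffel word $0b1$ of length at most $k$, and the final step is likewise a geometric resummation over the period using $\phi_{10b}(d)=d$, with the common factor $1-\beta^{\abs{0b1}}$ cancelling between numerator and denominator. However, the step you yourself flag as ``the main obstacle'' is a genuine gap, and it is precisely the content of the lemma rather than a technicality: you never establish that $f(x,0;x)\to f(d,0;d)$ or that the post-period tail of $f(x,1;x)$ converges to $f(d,0;d)$. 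Asserting that this ``holds for generic parameters'' and sketching an ``extra factor $(1-\beta^j)/(1-\beta^m)$'' for the other case is not a proof; the caveat ``possibly iterated book-keeping'' signals that you do not control whether the case analysis even terminates, and you give no argument that the extra factors would be equal in the numerator and denominator.

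The paper closes exactly this gap by refusing to take limits of unidentified itineraries. Writing $(0a1,0c1)$ for the Christoffel pair of $0b1$, the Christoffel-tree characterisation gives the right-limit itinerary in closed form, $\pi(d^+)=0a1(0b1)^\omega$, and the palindrome identity $a10b=b01a$ yields $a(10b)^\omega=(b01)^\omega$, so that the two action sequences entering $c_d(d^+)$ and $w_d(d^+)$ are exactly $(01b)^\omega$ and $10b(01b)^\omega$ evaluated at the state $d$. From there the identity $\phi_{10b}(d)=d$ makes the computation a finite algebraic rearrangement with no genericity assumption and no case split: in particular the action-$0$ branch from $d$ has itinerary $(01b)^\omega$ whose orbit converges monotonically to $y_{01b}<y_{10b}=d$ and never revisits the threshold, so the ``non-generic'' scenario you try to patch does not actually arise, while the ``generic'' scenario you leave unproved is exactly what the mechanical-word machinery is needed to establish. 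To repair your proof you would need to prove the local constancy (from the right) of the itineraries of both branches at $d$, which in this paper is precisely Lemma~\ref{lemma:Ccase} combined with the structure of the Christoffel tree.
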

\begin{proof}
For $x\in \R_+$, let $\pi(x)$ be the $x$-threshold word and let $s(x)$ be the rate of $\pi(x)$.
The rate $s(x)$ is a non-increasing function and our characterisation of 
the $x$-threshold word shows that we can find a range of $x$ corresponding to 
the word of rate $q$ for any $q\in [0,1]$. 
So, Lemma~\ref{lemma:nprefix} implies that 
$f_k(d) \ne f_k(d^+)$ if and only if $d$ is the upper fixed point of a Christoffel
word of length at most $k$. 
That is, if $d=y_1$ or $d=y_{10b}$ for some Christoffel word $0b1$ with $\abs{0b1} \le k$.

Let $S(w,x) := \sum_{n=1}^{\abs{w}} \beta^{n-1} C(\phi_{w_{1:n}}(x))$
for any word $w$. 

Say $d=y_{10b}$. Let $(0a1,0c1)$ be the Christoffel pair for $0b1$.
Then $\pi(d)=0b1$ and by going left in the Christoffel tree then repeatedly turning right
we get $\pi(d^+)=0a1(0b1)^\omega$.
But $(0a1,0b1)=(0a1,0a10c1)$ is also a Christoffel pair
and as $0a1, 0b1, 0a10b1$ are Christoffel words, $a,b,a10b$ are palindromes. 
So $a10b = b01a$.
Repeated application of this result gives
$a(10b)^\omega = b01a(10b)^\omega = (b01)^\omega a$.
Thus putting $m:=\abs{0b1}$ and noting that $\phi_{10b}(d)=d$ gives
\begin{align*}
(1-\beta) \lambda(d^+) 
&= S(01a1(0b1)^\omega,d) - S(10a1(0b1)^\omega,d)\\ 
&= S((01b)^\omega,d) - S(10b(01b)^\omega,d) \\
&= S((01b)^\omega,d) - S(10b,d) - \beta^m S((01b)^\omega,\phi_{10b}(d)) \\
&=(1-\beta^m) S((01b)^\omega,d) - (1-\beta^m) S((10b)^\omega,d) \\
&= (1-\beta) \lambda(d) .
\intertext{
Now say $d=y_1$ then $\pi(d)=1$ and $\pi(d^+)=01^\omega$. 
Then} 
(1-\beta) \lambda(d^+) 
&= S(01^\omega,d)-S(101^\omega,d) \\
&= S(01^\omega,d)-S(1,d)-\beta S(01^\omega,\phi_1(d)) \\
&= (1-\beta) S(01^\omega,d)-(1-\beta) S(1^\omega,d) \\
&= (1-\beta) \lambda(d).
\end{align*}
This completes the proof.
\end{proof}

\begin{lemma}
\label{lemma:prefix}
Suppose Condition~D holds, that $k\in\N$ and $0\le x \le y$ with $f_k(x)=f_k(y)$. 
Let $K:=\sup_{z\in\{\phi_1(x),\phi_0(y)\}} C'(z)$ 
where $C'(\cdot)$ is the derivative of $C(\cdot)$.
Then
\begin{align*}
\abs{\lambda(x) - \lambda(y)}  
\le \frac{K (3 \beta^{k}(y+1) + 2(y-x))}{(1-\beta)^2}.   
\end{align*}
\end{lemma}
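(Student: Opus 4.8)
The plan is to write $\lambda(z)=c_z(z)/w_z(z)$, to observe that $w_z(z)\ge 1-\beta$ by Proposition~\ref{proposition:POS} while $|w_z(z)|\le 1/(1-\beta)$ because the summands of $w_z(z)=\sum_{t\ge 0}\beta^t(A_t(z,1;z)-A_t(z,0;z))$ lie in $\{-1,0,1\}$, and then to estimate the numerator of
\begin{align*}
\lambda(x)-\lambda(y)=\frac{c_x(x)\,w_y(y)-c_y(y)\,w_x(x)}{w_x(x)\,w_y(y)} .
\end{align*}
It is worth saying at the outset why one must keep numerator and denominator together: both $c_z(z)$ and $w_z(z)$ can jump as $z$ varies (for instance at a fixed point of $\phi_0$, where the itinerary of the action-$0$ orbit changes abruptly), so $|c_x(x)-c_y(y)|$ and $|w_x(x)-w_y(y)|$ need not be small even when $|\lambda(x)-\lambda(y)|$ is; the jumps cancel only in the combination $c_x(x)w_y(y)-c_y(y)w_x(x)$, and making that cancellation quantitative is the crux.

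Two preliminary facts feed the estimate. First (orbit geometry): under Condition~D the maps $\phi_0,\phi_1$ are increasing and non-expansive on $\R_+$ (their derivatives are $r^2(a_\bullet r^2z+a_\bullet+1)^{-2}\in(0,1]$), and any threshold policy with threshold in $[x,y]$ sends the interval $[\phi_1(x),\phi_0(y)]$ into itself; since $\phi_0(x),\phi_1(x),\phi_0(y),\phi_1(y)$ already lie there, so do all states $X_t(x,a;x)$ and $X_t(y,a;y)$ with $t\ge 1$. That interval has length at most $\phi_0(y)\le r^2y+1\le y+1$, and on it $|C'|\le K$ by the monotonicity of the derivatives of the summands of $C$; hence any cost difference between two of these states is at most $K(y+1)$. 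Second (agreement of the first $k$ steps): since $f_k(x)=f_k(y)$, the words $\pi(x)^\omega$ and $\pi(y)^\omega$ agree on their first $k$ letters, so $A_t(x,1;x)=A_t(y,1;y)$ for $0\le t\le k$ and, for $0\le t\le k+1$, the states $X_t(x,1;x)$ and $X_t(y,1;y)$ are the images of $\phi_1(x)$ and $\phi_1(y)$ under one and the same composition of $\phi_0,\phi_1$; non-expansiveness then gives $|X_t(x,1;x)-X_t(y,1;y)|\le y-x$ and hence $|C(X_t(x,1;x))-C(X_t(y,1;y))|\le K(y-x)$ there. (A few degenerate ranges of $x,y$ — both at most $y_1$, or both at least a finite fixed point of $\phi_0$ — fall inside a single ``regular'' interval on which $w$ is constant and $\lambda$ varies smoothly, and are handled directly.)

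To bound the numerator I would expand it as a double sum over the time indices of the two series defining $c$ and $w$, and split it according to whether both indices are $\le k$. On the initial block one substitutes $A_s(x,1;x)=A_s(y,1;y)$ for $s\le k$ and uses the second preliminary fact to bound every surviving difference by $K(y-x)$, giving a contribution of size $O(K(y-x)/(1-\beta))$; the discounted tail (some index exceeding $k$) contributes $O(K(y+1)\beta^{k}/(1-\beta))$ by the first fact. The remaining terms carry the action-$0$ orbits $X_t(x,0;x),X_t(y,0;y)$ — exactly the orbits whose itineraries genuinely diverge near a fixed point of $\phi_0$ — and these have to be reorganised: peeling off the first transition through the Bellman relations $c_z(z)=\beta\big(f(\phi_0(z);z)-f(\phi_1(z);z)\big)$ and $w_z(z)=1+\beta\big(g(\phi_1(z);z)-g(\phi_0(z);z)\big)$, and using the defining identity $c_z(z)=\lambda(z)w_z(z)$, one groups the action-$0$ pieces so that the jumping parts cancel and what survives is again controlled by $y-x$ together with a $\beta^k$-tail. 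Summing the three contributions and dividing by $w_x(x)w_y(y)\ge(1-\beta)^2$ then yields $|\lambda(x)-\lambda(y)|\le K\big(3\beta^k(y+1)+2(y-x)\big)/(1-\beta)^2$. The hard part, as flagged, is precisely this last reorganisation and the bookkeeping that renders the cancellation of the action-$0$ discontinuities explicit; the rest is routine manipulation of geometric series.
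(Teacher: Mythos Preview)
Your proposal sets up the problem correctly (bounding the denominator by $(1-\beta)^2$, confining orbits to $[\phi_1(x),\phi_0(y)]$, exploiting the common $k$-prefix), but the step you call ``the hard part'' --- the reorganisation that makes the action-$0$ cancellation explicit --- is exactly what you do not carry out. As written, this is a plan, not a proof.

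More importantly, you have missed a structural fact that dissolves the cancellation issue entirely. When $\pi(x)=0ps1$ is a Christoffel word (the generic case, since $f_k(x)=f_k(y)$ forces $x,y\in(y_1,y_0)$ unless both lie in a trivial tail), the itinerary of the action-$0$ orbit $(x,0;x)$ is $(01ps)^\omega$: the first action is $0$, the second is $1$ since $\phi_0(x)>x$, and thereafter the orbit is governed by the same fixed-point inequalities that determine $\pi(x)$. Two consequences follow.
\begin{enumerate}
\item The action-$0$ itineraries of $x$ and $y$ share the prefix $01p$ of length $k{+}1$, so your worry that ``the action-$0$ orbits genuinely diverge'' within the first $k$ steps is unfounded: both pairs of orbits agree on actions through step $k{+}1$, and by non-expansiveness the corresponding states differ by at most $y-x$.
\item Since $(10ps)$ and $(01ps)$ differ only in their first two letters, one computes $w_x(x)=(1-\beta)/(1-\beta^{m})$ with $m=\abs{0ps1}$ exactly.
\end{enumerate}
Point~(2) is the key device in the paper. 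It turns the quotient $\lambda(x)=c_x(x)/w_x(x)$ into a product: $\lambda(x)=e\sum_{n\ge1}\beta^{n}a_n$ with $e:=(1-\beta^{m})/(1-\beta)$ and
\[
a_n:=C\bigl(\phi_{((01ps)^\omega)_{1:n}}(x)\bigr)-C\bigl(\phi_{((10ps)^\omega)_{1:n}}(x)\bigr),
\]
and similarly $\lambda(y)=f\sum_{n\ge1}\beta^{n}b_n$ with $f:=(1-\beta^{m'})/(1-\beta)$. Then
\[
\lambda(x)-\lambda(y)=(e-f)\sum_{n\ge1}\beta^{n}a_n\;+\;f\sum_{n=1}^{k}\beta^{n}(a_n-b_n)\;+\;f\sum_{n>k}\beta^{n}(a_n-b_n),
\]
and each piece is bounded by elementary geometric sums: $\abs{e-f}\le\beta^{k}/(1-\beta)$ because the two periods share a prefix of length $k$; $\abs{a_n-b_n}\le 2K(y-x)$ for $n\le k$ by point~(1); and $\abs{a_n},\abs{a_n-b_n}\le 2K(y+1)$ for all $n$ from the orbit bound. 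Summing yields the stated inequality. The cross-term $c_x(x)w_y(y)-c_y(y)w_x(x)$ never appears, and no Bellman-relation reorganisation is needed.
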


\begin{proof}
For $z\in\R_+$, let $\pi(z)$ be the $z$-threshold word.
Define the words $p,s$ and $s'$ by $\pi(x)=0ps1$ and $\pi(y)=0ps'1$ where $\abs{0p} = k$.
Let
\begin{align*}
a_n &:= C(\phi_{((01ps)^\omega)_{1:n}}(x)) - C(\phi_{((10ps)^\omega)_{1:k}}(x)), &
e &:= (1-\beta^{\abs{0ps1}})/(1-\beta),\\
b_n &:= C(\phi_{((01ps')^\omega)_{1:n}}(y)) - C(\phi_{((10ps')^\omega)_{1:k}}(y)), &
f &:= (1-\beta^{\abs{0ps'1}})/(1-\beta).
\end{align*}
Then the simple bounds
\begin{align*}
\sup_{m\ge 1} \abs{a_m}  &\le K(y+1),  &
\sup_{m\ge 1} \abs{a_m-b_m} & \le 2 K(y+1), &
\sup_{m\le k} \abs{a_m-b_m} \le 2 K (y-x) 
\end{align*}
follow from the facts that:
(i) the lowest and highest points on the $z$-threshold orbit are $\phi_1(z)$ and $\phi_0(z)$
(by Lemma~\ref{lemma:orbit});
(ii) $y \ge x$; 
(iii) function $C(\cdot)$ is non-decreasing and either convex or concave;
(iv) function $\phi_w(\cdot)$ is non-expansive for any word $w$;
(v) for any $z\in\R_+$, $\phi_0(z) \le z+1$ and $\phi_1(z) \ge 0$.

Also, as $\abs{0p}=k$ it follows that $\abs{e-f} \le \beta^k/(1-\beta).$
Therefore
\begin{align*}
\beta \abs{\lambda(x) - \lambda(y)}  \hspace{-1cm}&\hspace{1cm}\\
&=\abs{(e-f) \sum_{n=1}^\infty \beta^n a_n 
+ f \sum_{n=1}^k \beta^n (a_n-b_n) + f \sum_{n=k+1}^\infty \beta^n (a_n-b_n)} \\
&\le \abs{e-f} \sum_{n=1}^\infty \beta^n 
\sup_{m\ge 1} \abs{a_m}  
+ f \sum_{n=1}^k \beta^n \sup_{m\le k} \abs{a_m-b_m} + f \sum_{n=k+1}^\infty \beta^n \sup_{m>k} \abs{a_m-b_m} \\
&\le \frac{\beta^k}{1-\beta} \frac{\beta}{1-\beta} K (y+1)
+ \frac{1}{1-\beta} \frac{\beta}{1-\beta} 2 K (y-x) + \frac{1}{1-\beta} \frac{\beta^{k+1}}{1-\beta} 2 K (y+1) 
\end{align*}
which rearranges to the inequality claimed.
\end{proof}

\begin{proposition}
\label{proposition:continuous}
Suppose Condition~D holds. Then the marginal productivity index $\lambda(s)$ is a continuous function of $s \in \IR$.
\end{proposition}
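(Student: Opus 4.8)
The plan is to establish that $\lambda$ is both left- and right-continuous at an arbitrary point $s_0\in\IR$. The two ingredients are Lemma~\ref{lemma:prefix}, which bounds $\abs{\lambda(x)-\lambda(y)}$ whenever $x\le y$ have the same length-$k$ threshold-word prefix $f_k(x)=f_k(y)$, and Lemma~\ref{lemma:jump}, which handles the countably many points at which the itinerary — and hence some $f_k$ — jumps. As preliminary bookkeeping I would first record the following structural facts. By Theorem~\ref{theorem:characterisation} the threshold word $\pi(\cdot)$ is lexicographically non-increasing, is equal to $1$ on $\IR\cap(-\infty,y_1]$ and to $0$ on $\IR\cap[y_0,\infty)$, and is constant and equal to the Christoffel word $0p1$ on each interval $[y_{01p},y_{10p}]$; combined with Lemma~\ref{lemma:nprefix} this shows that $s\mapsto f_k(s)$ is a left-continuous step function whose only discontinuities are finitely many downward jumps, located at $s=y_1$ and at the upper fixed points $y_{10p}$ of the Christoffel words $0p1$ of length at most $k$ (this is exactly the fact used in the proof of Lemma~\ref{lemma:jump}). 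In particular, for each $k$ there is a $\delta_k>0$ with $f_k$ constant on $(s_0-\delta_k,s_0]$ and on $(s_0,s_0+\delta_k)$, and $f_k$ is constant on all of $[s_0,s_0+\delta_k)$ exactly when $s_0$ is none of the jump points just listed. I would also observe that the constant $K$ in Lemma~\ref{lemma:prefix} is bounded, say by $K_0<\infty$, for $x,y$ in a fixed bounded neighbourhood of $s_0$: the M\"obius maps $\phi_0,\phi_1$ are continuous and carry that neighbourhood into a compact subinterval of $\IR$, and on such an interval $C'$ is bounded because it is monotone under either Condition~C1 or~C2.

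For left-continuity at $s_0$ (which is needed only when $s_0$ is not the left endpoint of $\IR$, the case $s_0=0$ with $\IR=[0,\infty)$ being vacuous), I would fix $\varepsilon>0$, pick $k$ so large that $3K_0\beta^k(s_0+1)/(1-\beta)^2<\varepsilon/2$, and then pick $\delta\in(0,\delta_k]$ small enough that $(s_0-\delta,s_0]$ lies in the neighbourhood where $K\le K_0$ and that $2K_0\delta/(1-\beta)^2<\varepsilon/2$. For any $x\in(s_0-\delta,s_0]$ we have $f_k(x)=f_k(s_0)$, so Lemma~\ref{lemma:prefix} gives $\abs{\lambda(x)-\lambda(s_0)}\le K_0\bigl(3\beta^k(s_0+1)+2(s_0-x)\bigr)/(1-\beta)^2<\varepsilon$; hence $\lambda(s_0^-)=\lambda(s_0)$.

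For right-continuity at $s_0$ — always needed, since $\IR$ is unbounded above — I would split into two cases. If $f_k(s_0)=f_k(s_0^+)$ for every $k$, then $f_k$ is constant on $[s_0,s_0+\delta_k)$ for every $k$, and the estimate of the previous paragraph applied with $y\downarrow s_0$ in place of $x\uparrow s_0$ gives $\lambda(s_0^+)=\lambda(s_0)$. Otherwise $f_{k_0}(s_0)\neq f_{k_0}(s_0^+)$ for some $k_0$, so by the structural description $s_0$ is one of the points $y_1$ or $y_{10p}$; Lemma~\ref{lemma:jump} then applies and yields $\lambda(s_0)=\lambda(s_0^+)$. (If one prefers not to read the existence of the right limit off that lemma's statement, it can be obtained separately from a Cauchy argument: for $x,y\in(s_0,s_0+\delta_k)$ we have $f_k(x)=f_k(y)$, so Lemma~\ref{lemma:prefix} bounds $\abs{\lambda(x)-\lambda(y)}$ by a quantity tending to $0$ as $k\to\infty$ and $x,y\downarrow s_0$, and Lemma~\ref{lemma:jump} then identifies the limit as $\lambda(s_0)$.) In every case $\lambda$ is continuous at $s_0$, which completes the argument.

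I expect the bulk of the real work to sit in the preliminary bookkeeping of the first paragraph: verifying that $f_k$ is a left-continuous step function with jumps precisely at $y_1$ and the $y_{10p}$ of short Christoffel words, and — if one goes that route — checking that the quantity denoted $\lambda(d^+)$ in Lemma~\ref{lemma:jump} really is $\lim_{y\downarrow d}\lambda(y)$. Neither point is deep: the first is a direct reading of Theorem~\ref{theorem:characterisation} through Lemma~\ref{lemma:nprefix} (the rate of $\pi(z)$ is a continuous non-increasing surjection onto $[0,1]$, so prefixes only change at plateau boundaries), and the second follows from dominated convergence applied to the geometric series defining $\lambda$. The remaining steps are just careful choices of $k$ and $\delta$ in the two Lemmas.
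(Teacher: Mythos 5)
Your proposal is correct and follows essentially the same route as the paper: both arguments rest on Lemma~\ref{lemma:prefix} to control $\lambda$ where the length-$k$ prefix of the threshold word is constant and on Lemma~\ref{lemma:jump} to show $\lambda(d)=\lambda(d^+)$ at the finitely many prefix discontinuities, which Lemma~\ref{lemma:nprefix} locates at the upper fixed points of short Christoffel words. The only difference is organisational — you verify left- and right-continuity pointwise, whereas the paper picks $\delta$ equal to the minimal gap $l_k$ between discontinuities of $f_k$ and splits $\abs{\lambda(x)-\lambda(y)}$ across the at most one jump in $(x,y]$ — and your parenthetical Cauchy argument for the existence of $\lambda(s_0^+)$ is if anything slightly more careful than the paper's.
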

\begin{proof}
We show that for any $\epsilon>0$ 
there is a $\delta>0$ such that $\abs{\lambda(x)-\lambda(y)} < \epsilon$ for any $x,y$ in the domain of $\lambda(\cdot)$ with $\abs{x-y}<\delta$.
Without loss of generality we assume that $y\ge x$. 

For $k\in\N$, let $l_k$ be the distance between the closest pair of discontinuities of $f_k(\cdot)$. 
By Lemma~\ref{lemma:nprefix}, 
these discontinuities are at the upper fixed points of Christoffel 
words of length at most $k$.
But the upper fixed points of distinct Christoffel words are distinct.
Therefore $l_k>0$.
Also, the words $01^{k-1}$ and $01^k$ have rates that are adjacent in the Farey sequence $F_k$.
But the sequence $y_{101^{k-1}}$ converges to $y_1$.
Thus $l_k \le y_{101^{k-1}}-y_{101^k}$ 
is a non-increasing function of $k$ that converges to 0. 
Therefore, for any $\epsilon>0$ and $x$ in the domain of $\lambda(\cdot)$
it is always possible to select a $k < \infty$ such that
\begin{align*}
\frac{3 \beta^{k}(x+l_k+1) + 2 l_k}{(1-\beta)^2}\left( \sup_{z\in\{\phi_0(x+l_k),\phi_1(x)\}} C'(z) \right) < \frac{\epsilon}{2}
\end{align*}
where $C'(\cdot)$ is the gradient of $C(\cdot)$ (which exists as Condition~C is satisfied).

Let $\delta = l_k$ for such a $k$ and note that the above argument shows that $\delta>0$. 
Then for $0\le x \le y \le x+\delta$, 
the definition of $l_k$ shows that $f_k(\cdot)$ has at most one discontinuity in $(x,y]$.
If there is no discontinuity, let $d$ be an arbitrary point in $(x,y]$, 
otherwise let $d$ be the point of discontinuity.
Thus Lemmas~\ref{lemma:jump} and~\ref{lemma:prefix} give
\begin{align*}
\abs{\lambda(x)-\lambda(y)} 
&\le \abs{\lambda(x)-\lambda(d)} + \abs{\lambda(d)-\lambda(d^+)} + \abs{\lambda(d^+)-\lambda(y)} < \frac{\epsilon}{2} + 0 + \frac{\epsilon}{2} = \epsilon.
\end{align*}
This completes the proof.
\end{proof}

%
%
\section{Radon-Nikodym Condition (PCLI3)}
\label{section:radon-nikodym}
We demonstrate that PCLI3 holds provided that $s$-threshold policies 
result in itineraries that are mechanical words.
The argument is based on the fact that the work-to-go and cost-to-go
are discrete measures, as defined just below, 
because the number of factors of mechanical 
words is polynomially bounded, by Theorem~\ref{theorem:xs-characterisation}. 

\begin{definition}
Let $\mu$ be a signed measure defined on the Lebesgue measurable sets of $\R$
and taking values in $[-\infty,\infty]$. 
Then measure $\mu$ is \textbf{discrete} if there is a 
countable set $\mathcal{S} = \{(a_1,s_1), (a_2,s_2), \dots \}$ of pairs of
real numbers such that
\begin{align*}
\mu(\mathcal{X}) = \sum_{(a,s)\in \mathcal{S}} a \I{ s\in \mathcal{X}} 
\end{align*}
for any Lebesgue measurable set $\mathcal{X}$ of $\R$.
\end{definition}

\begin{proposition}
\label{proposition:PCLI3}
Suppose $\langle \IR, C, \phi_0, \phi_1, \beta \rangle$ satisfy Condition~D. 
Then PCLI3 holds.
\end{proposition}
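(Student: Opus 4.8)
The plan is to show that, under Condition~D, the Lebesgue--Stieltjes measures generated by the work-to-go $g(x,a;\cdot)$ and cost-to-go $f(x,a;\cdot)$ are \emph{discrete}, and then to verify PCLI3 one atom at a time. For the discreteness, fix $x\in\IR$ and $a\in\{0,1\}$: by Lemma~\ref{lemma:caglad} and Theorem~\ref{theorem:xs-characterisation} each of $s\mapsto X_t(x,a;s)$ and $s\mapsto A_t(x,a;s)$ is piecewise-constant c\`agl\`ad with at most $p(t)$ pieces, so the partial sums $\sum_{t=0}^{T}\beta^t C(X_t(x,a;\cdot))$ and $\sum_{t=0}^{T}\beta^t A_t(x,a;\cdot)$ are step functions with finitely many jumps. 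Bounding the orbit (hence $C$ restricted to it) by Lemma~\ref{lemma:orbit} and using $\sum_t\beta^t p(t)\bigl(1+\sup_{\text{orbit}}|C|\bigr)<\infty$ shows these partial sums converge to $f(x,a;\cdot)$ and $g(x,a;\cdot)$ in total variation; since the jump part of a function of bounded variation depends continuously on that function in total-variation norm, a total-variation limit of pure-jump functions is pure-jump. Hence $f(x,a;\cdot)$ and $g(x,a;\cdot)$, and therefore $c_x=f(x,0;\cdot)-f(x,1;\cdot)$ and $w_x=g(x,1;\cdot)-g(x,0;\cdot)$ (which are c\`agl\`ad of bounded variation, as in the proof of Proposition~\ref{proposition:PCLI0}), generate discrete measures.

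Since $dw_x$ and $dc_x$ are discrete and $w_x,c_x$ are c\`agl\`ad, the identity in PCLI3, $c_x(b)-c_x(a)=\int_{[a,b)}\lambda\,dw_x$ for every $[a,b)\subseteq\IR$, is equivalent to the single jump identity $c_x(d^+)-c_x(d)=\lambda(d)\bigl(w_x(d^+)-w_x(d)\bigr)$ at every $d\in\IR$. Fix such a discontinuity $d$. Because $s\mapsto\sigma(\phi_a(x)|s)$ is lexicographically non-increasing (Theorem~\ref{theorem:xs-characterisation}), at $d$ the itinerary splits as $\sigma(\phi_a(x)|d^-)=w\,1\,\sigma(\phi_1(d)|d^-)$ and $\sigma(\phi_a(x)|d^+)=w\,0\,\sigma(\phi_0(d)|d^+)$ for a common prefix $w$; putting $\tau:=|w|+1$ and letting $s\to d^{\mp}$ in $X_\tau(x,a;s)=\phi_w(\phi_a(x))$, the inequalities defining the $\tau$-th letters force $\phi_w(\phi_a(x))=d$, i.e.\ the orbit lands exactly on the threshold at its first passage time $\tau$. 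Telescoping the series for $f(x,a;\cdot)$ and $g(x,a;\cdot)$ from time $\tau$ onward (using left-continuity for the $d^-$-terms) yields
\begin{align*}
f(x,a;d^+)-f(x,a;d) &= \beta^{\tau}\bigl(f(d,0;d^+)-f(d,1;d)\bigr), \\
g(x,a;d^+)-g(x,a;d) &= \beta^{\tau}\bigl(g(d,0;d^+)-g(d,1;d)\bigr),
\end{align*}
and combining the contributions of the orbits from $\phi_0(x)$ and $\phi_1(x)$ (at their possibly distinct first-passage times) reduces the jump identity, in every case, to
\begin{align*}
Q(d,1;d,\lambda(d)) = Q(d,0;d^+,\lambda(d)), \qquad\text{where}\qquad Q(y,a;s,\nu):=f(y,a;s)+\nu\,g(y,a;s).
\end{align*}

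To prove this last identity, note first that the definition $\lambda(d)=c_d(d)/w_d(d)$ gives $Q(d,1;d,\lambda(d))-Q(d,0;d,\lambda(d))=-c_d(d)+\lambda(d)w_d(d)=0$, so it suffices to show $Q(d,0;d^+,\lambda(d))=Q(d,0;d,\lambda(d))$. The $d^+$-threshold policy differs from the $d$-threshold policy only at the state $d$ (action $0$ versus action $1$), so the orbit started at $\phi_0(d)$ agrees under both policies up to the first time it returns to $d$: if it never returns, the two cost-to-go's are literally equal; if it first returns at time $k$ (so the $d^+$-orbit is then periodic with period $k$, since $\psi_{d^+}(d)=\phi_0(d)$), then writing $P$ for the common discounted prefix cost in the $\lambda(d)$-MDP one gets $Q(d,0;d^+,\lambda(d))=P/(1-\beta^k)$ while $Q(d,0;d,\lambda(d))=P+\beta^k Q(d,1;d,\lambda(d))=P+\beta^k Q(d,0;d,\lambda(d))$, whence $Q(d,0;d,\lambda(d))=P/(1-\beta^k)$ too; either way they agree.

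I expect the main obstacle to be making this jump analysis watertight: establishing the total-variation convergence and the continuity of the pure-jump projection in the discreteness step, and, at each $d$, handling the limiting arguments $s\to d^{\mp}$ and the bookkeeping when both orbits land on $d$ (possibly at several times), and when $d$ is the fixed point of an aperiodic (Sturmian) itinerary. A convenient feature is that once the split at $d$ and the identity $Q(d,0;d,\lambda(d))=Q(d,1;d,\lambda(d))$ are available, the argument is uniform in $d$ and requires no case analysis on the type of the point $d$.
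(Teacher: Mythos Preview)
Your proposal is correct and follows essentially the same route as the paper: establish that the Lebesgue--Stieltjes measures generated by $f$ and $g$ are discrete, then verify the jump identity $c_x(d^+)-c_x(d)=\lambda(d)\bigl(w_x(d^+)-w_x(d)\bigr)$ atom by atom using the first-passage time $\tau$ to the threshold and the periodicity of the $d^+$-orbit. The paper carries out the discreteness via Fubini directly (writing $g(x;s^+)=g(x;s_0^-)+\sum_{d}a_g(d)\I{d\in[s_0,s]}$) rather than your total-variation-limit argument, and it computes the jumps by solving the self-consistency relation $g(s;s^+)=g(s,0;s)+\beta^\tau\bigl(g(s;s^+)-g(s,1;s)\bigr)$ to obtain $g(x;s)-g(x;s^+)=\frac{\beta^{\tau_1}}{1-\beta^\tau}w_s(s)$ and the analogous formula for $f$; your $Q$-function identity $Q(d,1;d,\lambda(d))=Q(d,0;d,\lambda(d))=Q(d,0;d^+,\lambda(d))$ is an equivalent repackaging of exactly this computation (the first equality is the definition of $\lambda$, the second is the periodic-orbit bookkeeping), and arguably makes the role of $\lambda(d)$ more transparent. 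Both arguments handle the aperiodic case ($\tau=\infty$) uniformly, since then the orbit from $\phi_0(d)$ never returns to $d$ and the $d$- and $d^+$-policies literally coincide on it.
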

\begin{proof}
Let $\mathcal{D}_t(x)$ be the set of discontinuities of $A_t(x;s)$ as a function of $s\in \IR$.
As $A_t(x;s)$ is in $\{0,1\}$ for any $s\in \IR$, it follows that 
\begin{align*}
A_t(x;s^+) = A_t(x;s_0^-) + \sum_{d\in \mathcal{D}_t(x)} (A_t(x;d^+)-A_t(x;d^-)) \I{d \in [s_0,s]}
\end{align*}
for any $s_0 \le s$ with $s_0 \in \IR$. Thus the definition of the work-to-go gives
\begin{align*}
g(x;s^+) = g(x;s_0^-) + \sum_{t=0}^\infty \sum_{d\in \mathcal{D}_t(x)}  \beta^t (c(A_t(x;d^+))-c(A_t(x;d^-))) \I{d \in [s_0,s]}.
\end{align*}
As $\card(\mathcal{D}_t(x)) \le p(t)$ for some polynomial function $p(t)$ by Theorem~\ref{theorem:xs-characterisation} and $A_t(x;s)\in \{0,1\}$,
the series on the right-hand side of this expression is absolutely summable.
Indeed,
\begin{align*}
 \sum_{t=0}^\infty \sum_{d\in\mathcal{D}}  \abs{\beta^t (c(A_t(x;d^+))-c(A_t(x;d^-)))
\I{d \in [s_0,s]}} \le \abs{c(1)-c(0)} \sum_{t=0}^\infty \beta^t p(t) < \infty
\end{align*}
where $\mathcal{D} := \cup_{t=0}^\infty \mathcal{D}_t(x)$. 
Therefore Fubini's theorem gives
\begin{align*}
g(x;s^+) &= g(x;s_0^-) + \sum_{d\in \mathcal{D}} a_g(d) \I{d \in [s_0,s]} \\
a_g(d) &:= \sum_{t=0}^\infty \beta^t (c(A_t(x;d^+))-c(A_t(x;d^-))) 
\end{align*}
which corresponds to a discrete measure. 

Now for any $d\in [s_0,s]$, the sequence of states $X_t(x;d)$ only depends on $d$
via the sequence of actions $A_{0:t-1}(x;d)$. 
Also, an argument similar to that of Lemma~\ref{lemma:orbit} shows that both
$X_t(x;d^+)$ and $X_t(x;d^-)$ lie in the interval
\begin{align*}
[\min\{\phi_1(d),x\}, \max\{\phi_0(d),x\}]
\end{align*}
for all $t\in\Z_+$. As the cost function $C(x)$ is bounded and continuous on that interval
(as Condition~D requires that Condition~C is satisfied),
a similar argument to that given above for the work-to-go $g$ gives
\begin{align*}
f(x;s^+) &= f(x;s_0^-) + \sum_{d\in \mathcal{D}} a_f(d) \I{d \in [s_0,s]} \\
a_f(d) &:= \sum_{t=0}^\infty \beta^t (C(X_t(x;d^+))-C(X_t(x;d^-))) 
\end{align*}
which is also a discrete measure.

Noting that $X_0(s;s^+) = s$, let $\tau$ be the 
next time that $X_\tau(s;s^+) = s$ or let $\tau = \infty$ if there is no such time. 
Thus $X_t(s;s^+)$ is a periodic sequence if $\tau < \infty$.
From the definition of the policies it follows that 
$X_t(s;s)=X_t(s,1;s)$ and $A_t(s;s)=A_t(s,1;s)$ 
for $t=0, 1, \dots$, so that
\begin{align}
\label{eq:gss1}
g(s;s) &= \sum_{t=0}^\infty \beta^t c(A_t(s;s)) = g(s,1;s) .
\end{align}
Using the definition of $g(\cdot;\cdot)$, it follows that
\begin{align*}
g(s;s^+) &= \sum_{t=0}^\infty \beta^t c(A_t(s;s^+)) \\
&= \sum_{t=0}^{\infty} \beta^t c(A_t(s,0;s)) + \beta^\tau \sum_{t=0}^\infty \beta^t (c(A_t(s;s^+))-c(A_t(s,1;s)))  \\
&= g(s,0; s) + \beta^\tau (g(s;s^+) - g(s,1;s))
\end{align*}
so that
\begin{align}
\label{eq:gss}
g(s;s^+) &= \frac{g(s,0;s)-\beta^\tau g(s,1;s)}{1-\beta^\tau} .
\end{align}
Let $\tau_1$ be the first time that $X_{\tau_1}(x;s^+)=s$
or $\tau_1 = \infty$ if there is no such time.
For $t=0,\dots,\tau_1-1$, the definition of the policies then gives 
$X_t(x;s^+) = X_t(x;s)$ and $A_t(x;s^+) = A_t(x;s)$.
Thus
\begin{align}
g(x;s)-g(x;s^+) &= \beta^{\tau_1} (g(s;s)-g(s;s^+)) \nonumber\\
&= \beta^{\tau_1} \left( g(s,1;s) - \frac{g(s,0;s)-\beta^\tau g(s,1;s)}{1-\beta^\tau} \right) \nonumber\\
&= \frac{\beta^{\tau_1}}{1-\beta^\tau} \left( g(s,1;s) - g(s,0;s) \right) \nonumber\\
\label{eq:gfromw}
&= \frac{\beta^{\tau_1}}{1-\beta^\tau} w_s(s)  
\end{align}
where the second equality follows from (\ref{eq:gss}) and (\ref{eq:gss1})
and the last from the definition of the marginal work $w(\cdot;\cdot)$.

A similar argument for the cost-to-go gives
\begin{align*}
f(x;s^+)-f(x;s) &= \frac{\beta^{\tau_1}}{1-\beta^\tau} c_s(s) .
\end{align*}
Combining this equation with (\ref{eq:gfromw}) and recalling that $\lambda(s) = c_s(s)/w_s(s)$ gives
\begin{align}
\label{eq:ffgg}
f(x;s^+)-f(x;s) &= - \lambda(s)(g(x;s^+)-g(x;s)) .
\end{align}
Now by definition the marginal cost and marginal work are
\begin{align}
\label{eq:cxwx}
c_x(s) &= \beta( f(\phi_0(x);s) - f(\phi_1(x);s)) \\
w_x(s) &= 1 + \beta( g(\phi_1(x);s) - g(\phi_0(x);s)) . \nonumber
\end{align}
Combined with (\ref{eq:ffgg}) these give
\begin{align*}
c_x(s^+) - c_x(s) 
&= \beta( f(\phi_0(x);s^+) - f(\phi_1(x);s^+) -  f(\phi_0(x);s) + f(\phi_1(x);s) ) \\
&= -\beta \lambda(s) ( g(\phi_0(x);s^+) - g(\phi_1(x);s^+) -  g(\phi_0(x);s) + g(\phi_1(x);s) ) \\
&= \lambda(s) (w_x(s^+) - w_x(s) ).
\end{align*}
As $f,g$ are discrete measures, it follows that $c_x, w_x$ are also discrete measures,
so the Lebesgue-Stieltjes integral of this expression over any interval $[a,b) \subseteq \IR$ is
\begin{align*}
c_x(b^-) - c_x(a^-) = \int_{[a,b)} \lambda \ dw_x.
\end{align*}
Noting that $f(x;s)$ is a \caglad{ }function of $s$ and 
$\phi_0, \phi_1$ are continuous, 
it follows from (\ref{eq:cxwx}) 
that $c_x(s)$ is also a \caglad{ }function of $s$, so that
the left-hand side of this expression is $c_x(b) - c_x(a)$.
Thus Condition PCLI3 holds.
\end{proof}

%
%
\section{Numerical Experiments}
\label{section:numerical}
We discuss algorithms for computing the Whittle index 
given in Theorem~\ref{Indexability},
we present closed-form expressions for that index
and compare the performance of the Whittle index policy with two previously-proposed heuristics.

\subsection{Approximating the Index for Discount Factor \texorpdfstring{$\beta\le 0.999$}{Under 0.999}}
Truncating the sums defining the marginal productivity index $\lambda(x)$
after a suitably large number of terms $T$ suggests the approximation
\begin{align}
\label{eq:approximation}
\hat\lambda(x) = 
\frac{
	\sum_{t=0}^T \beta^t \big( 
		C(X_t(x,0;x)) - C(X_t(x,1;x)) 
	\big)
}{
	\sum_{t=0}^T \beta^t \big( 
		A_t(x,1;x) - A_t(x,0;x) 
	\big) } .
\end{align}
Assuming accurate calculation of the terms in the numerator and denominator,
as well as continuity of the cost function $C$,
this approximation requires $O(T)$ basic arithmetical and comparison operations, 
and setting $T$ to $\Omega(\log(\epsilon)/\log(\beta))$ guarantees absolute errors in the numerator and denominator of $O(\epsilon)$.
Of course, the constants hidden by the $O(\cdot)$ or $\Omega(\cdot)$ depend on 
the detailed properties of $C$ and choice of $x$.

However, this approximation faces a potential complication.
Indeed, some of the iterates
$X_t(x,a;x)$ may be so close to the threshold $x$ that an arbitrarily
small tolerance is required to correctly decide whether $X_t(x,a;x) \ge x$.
This might be problematic as errors in such decisions can result
in large changes to the numerator and denominator of this approximation.

\citet{Dance15} overcame this problem by constraining the sequence of
decisions to correspond to mechanical words.
This resulted in a polynomial-time algorithm for approximating the index  $\lambda(x)$
using variable-precision arithmetic.
Specifically, suppose that basic arithmetic operations to tolerance $2^{-m}$ on positive numbers less than $2^n$ require at most $M(n+m)$ operations,
and that $C(x) = x$.
Then, the approximate index $\hat\lambda(x)$ output by their algorithm 
satisfies $\abs{\hat\lambda(x)-\lambda(x)} < \epsilon$ when 
$x < 2^n$ and $\epsilon > 2^{-m}$ and is computed in $O((n+m)^3 M(n+m))$
operations. 
Nevertheless, that algorithm requires the tabulation of the fixed points
of all Christoffel words of at most a given length. 

\begin{figure}
\begin{center}
\includegraphics[viewport=3mm 95mm 210mm 200mm,clip=true,width=0.8\textwidth]{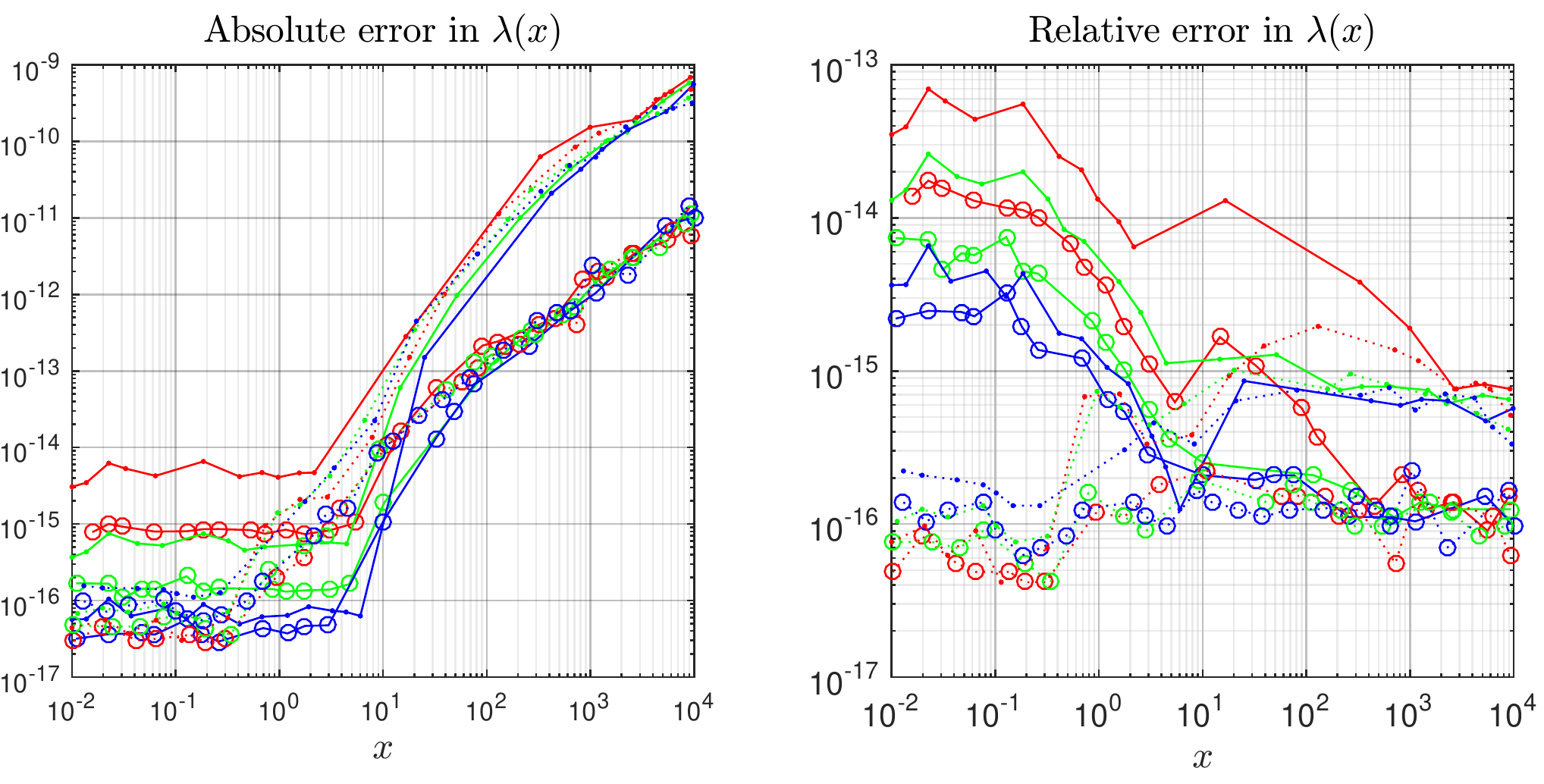}
\end{center}
\caption{Errors in approximating the index. 
The cost $C(x)$ corresponds to variance ($C(x)=x$, red), 
to entropy ($C(x)=\log(x)$, green) 
or to negative precision ($C(x) = -1/x$, blue). 
The discount factor $\beta$ is either $0.9$ (open circles) or $0.999$ (filled circles).
The map-with-a-gap has $\phi_0(x) = x+1$ and $\phi_1(x) = 1/(a_1 + 1/(x+1))$
with $a_1$ equal to $0.001$ (solid lines) or $1$ (dotted lines). }
\label{fig:approximation1}
\end{figure}

Here we suggest that such tabulation is an unnecessary expense,
and conjecture that standard floating-point approximations of the 
decision sequences $A_t(x,a;x)$ correspond to mechanical words 
(at least for the vast majority of floating point values of $x$).
Perhaps such a conjecture might be proven by extending
results by \citet{Kozyakin03}. 
His results concern mappings $\phi_0,\phi_1$
which are strictly increasing but potentially discontinuous.
In the floating-point case one would only require those mappings to be
non-decreasing and piecewise constant, 
but might perhaps impose additional conditions.

Rather than attempting to prove such a conjecture here,
we simply evaluate the accuracy of approximation (\ref{eq:approximation}).
Figure~\ref{fig:approximation1} shows the accuracy based on comparing 
double-precision and quadruple-precision implementations,
with $T=\lceil \log 10^{-17} / \log \beta \rceil$ and $T = \lceil \log 10^{-34} / \log \beta\rceil$ respectively.
As the difference between these approximations is a highly variable function of $x$,
we only show points that are local maxima of the error.
Specifically, we used a logarithmically-spaced grid 
with $10^{-2} = x_1, x_2, \dots, x_{1000} = 10^4$ and plot the error $e(x_i)$ only for points with $e(x_i) = \max_{i-20 \le j \le i+20} e(x_j)$.
The plot shows no line for $x$ less than the first such point or greater than the last such point.

The worst absolute and relative errors are below $10^{-6}$ and $10^{-11}$ respectively.
In any practical application, 
such errors would be swamped by imprecision in the time-series models.
The absolute error remains small as $x$ increases to the fixed point $y_1$ of the mapping
$\phi_1$, and then it increases due to roundoff in computing iterates of 
the map-with-a-gap for large $x$. 
Overall, the worst results are for large discount factors and for 
variance as the cost function. 

Finally, it is possible to substantially accelerate the convergence of the numerator,
for instance with Aitken acceleration~\citep{Brezinski00}, particularly
if one has high accuracy requirements.
For instance, if the $x$-threshold word has period $n$, 
one may accumulate $n$ terms of the sum at a time and apply
acceleration methods to such partial sums. 
Having experimented with such approaches,
we find that further work is required in selecting appropriate 
termination conditions if one is interested in accuracy guarantees
for a wide range of problem instances. 
The difficulty we encountered is that \textit{two} types of linear convergence are 
going on simultaneously, namely the convergence 
due to $\phi_0$ (when $a_0>0$ or $r<1$) and $\phi_1$,
and the convergence due to $\beta$.
In such situations, what looks like a healthy stopping time to 
existing termination criteria can actually be a misleading and unhealthy prematurity.

\subsection{Approximating the Index for Discount Factor \texorpdfstring{$\beta \rightarrow 1$}{Tending to 1}}
For discount factors $\beta > 0.999$ the number of terms $T$ required for accuracy in approximation (\ref{eq:approximation}) becomes prohibitively large.
In such cases, it makes sense to Taylor expand the numerator of 
$\lambda$ as a function of $\beta$.
For brevity, we only do so here for the case $\beta \rightarrow 1$.

Suppose that the itineraries $A_t(x,a;x)$ 
appearing in the definition of the index $\lambda(x)$ correspond to a Christoffel word
with period $n$. Then, in the limit $\beta\rightarrow 1$, 
the results presented elsewhere in this paper show that the 
denominator of the index tends to $1/n$.
Also, letting $X^{a,\infty}_t := \lim_{k\rightarrow\infty} X_{kn+t}(x,a;x)$,
the numerator of index has the limit
\begin{align*}
\sum_{t=0}^\infty \beta^t \left(C(X_t(x,0;x)) - C(X_t(x,1;x))\right)
\hspace{-6cm}& \\
&\rightarrow \hspace{1cm} \sum_{t=0}^\infty \left(C(X_t(x,0;x)) - C(X^{0,\infty}_t) - C(X_t(x,1;x)) + C(X^{1,\infty}_t)\right) \\
&\hspace{4cm} + \frac{1}{n} \sum_{t=0}^{n-1} t\left(C(X^{1,\infty}_t) - C(X^{0,\infty}_t)\right).
\end{align*}
Now, the sequences $C(X_t(x,a;x)) - C(X^{a,\infty}_t)$ in this expression converge to zero,
for $C$ satisfying Condition C.
This suggests approximating $X^{a,\infty}_{kn+t}$ by $X_{Tn-n+t}(x,a;x)$ 
for a suitably large positive integer $T$ and for $k=0, 1, \dots$.
This also suggests 
approximating the first sum by truncating it after $Tn$ terms. 

While the itinerary for $x$ might have a very large and possibly infinite period $n$, 
we did not encounter such situations when tabulating the index.
If this were an issue, it is possible to find a good rational approximation to the
slope of the $x$-threshold word, for instance as in~\cite{Dance15}.

\subsection{Closed Form Expressions and Graphs}
We analyse the behaviour of the index as the cost function $C(\cdot)$
and parameters $\beta, r, a_0,a_1$ vary.

\begin{figure}
\begin{center}
\begin{tabular}{c}
\includegraphics[viewport=0mm 100mm 200mm 195mm, clip=true, width=0.75\textwidth]{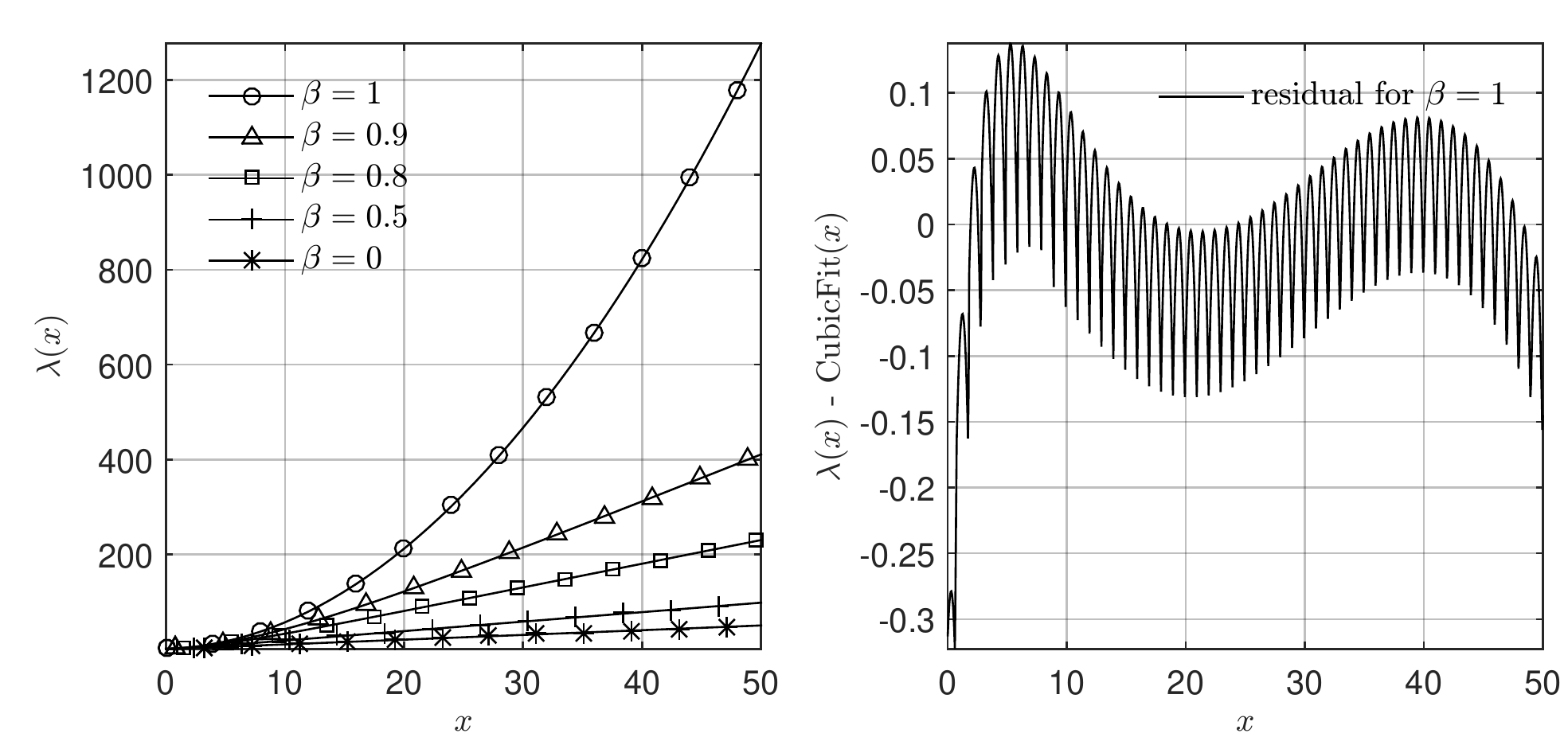}  \\
\includegraphics[viewport=0mm 100mm 200mm 195mm, clip=true, width=0.75\textwidth]{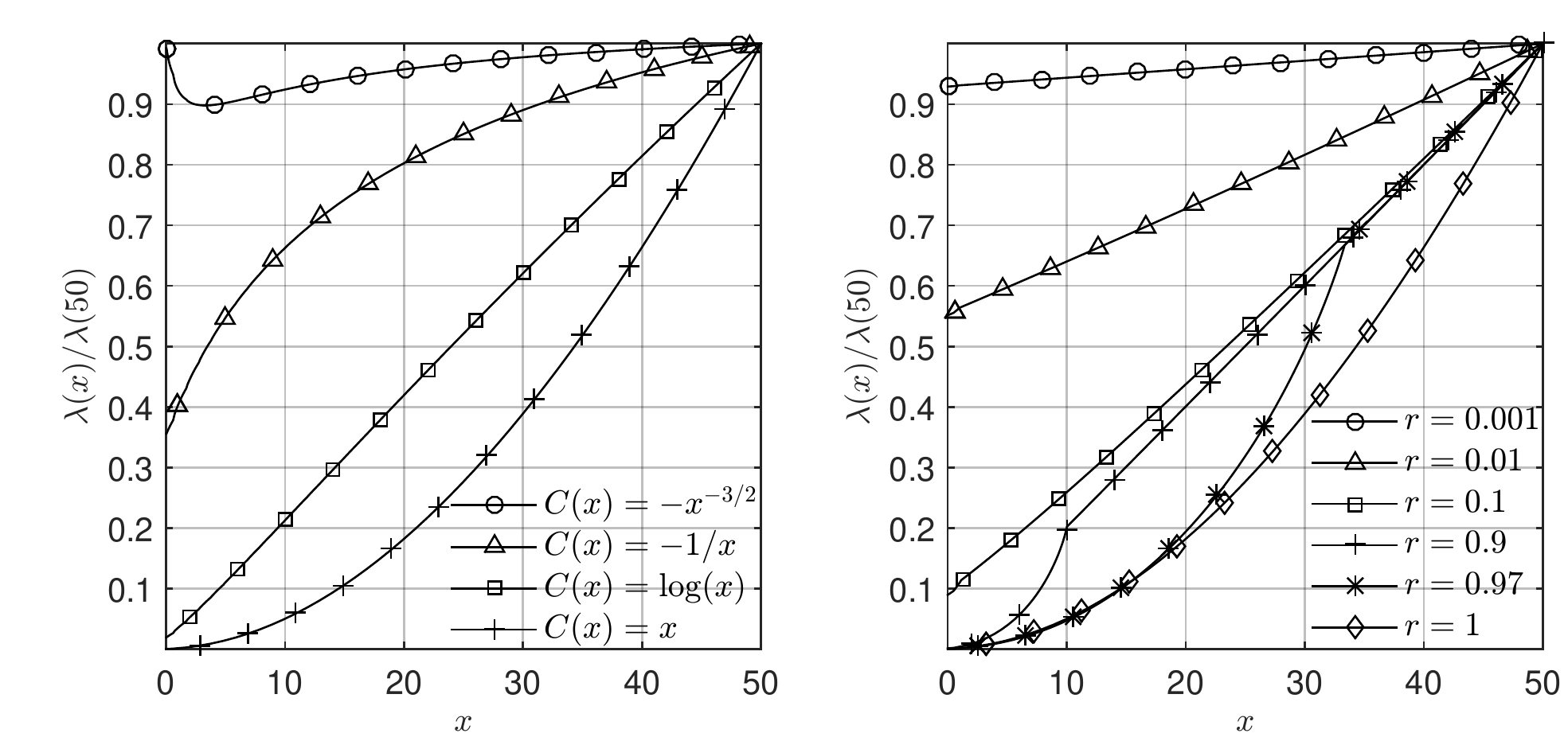}  \\
\includegraphics[viewport=0mm 100mm 200mm 195mm, clip=true, width=0.75\textwidth]{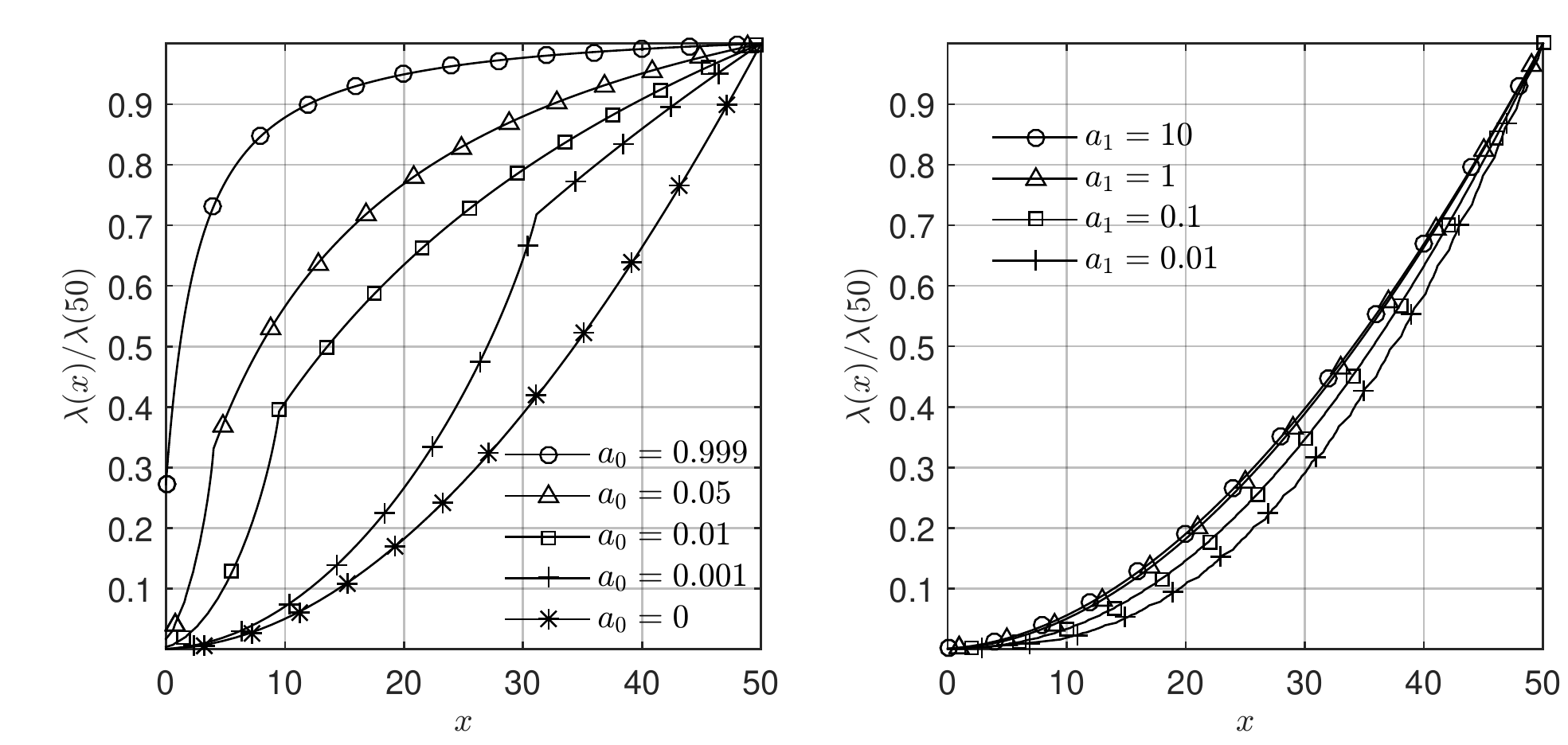}
\end{tabular}
\end{center}
\caption{Behaviour of the index. 
The index as a function of the discount factor $\beta$ (\textit{top-left})
and the residual after fitting a cubic to this curve in the case $\beta = 1$ (\textit{top-right}). 
The index, normalised by $\lambda(50)$, 
as the cost function $C(x)$, the multiplier $r$ and the observation 
precisions $a_0$ and $a_1$ are varied (\textit{other plots}). 
In all plots, all parameters (or function) other than that varied are set to
$\beta = 0.99, \ C(x) = x, \ \phi_0(x) = 1/(a_0 + 1/(rx+1))$ and $\phi_1(x) = 1/(a_1 + 1/(rx+1)),$
with $a_0 = 0, \ a_1 = 1$ and $r=1$.}
\label{fig:parametric}
\end{figure}

Given noise free observations for action $a=1$ and totally uninformative observations
for action $a=0$, it is easy to find a closed form for the index.
\begin{proposition}
Suppose the cost function is $C(x)=x$ and the precision $a_0 = 0$. Then
\begin{align*}
\lim_{a_1 \rightarrow\infty} \lambda(x) &= 
	\frac{1-\beta^{n+1}}{1-\beta} \left( rx+1 - \frac{\beta}{1-\beta^{n+1}} \left( \frac{1-(\beta r)^n}{1-\beta r} - \beta^n \frac{1-r^n}{1-r} \right) \right),
\intertext{for all $\beta \in [0,1)$, 
$r \in [0,1)$ and $x \in [0,1/(1-r))$, 
where $n := \left\lceil \frac{\log(1-(1-r)x)}{\log r} \right\rceil$. Thus}
\lim_{\beta\rightarrow 1} \lim_{r \rightarrow 1} \lim_{a_1 \rightarrow\infty} \lambda(x) &= \lceil x +1 \rceil \left( x+1-\frac{\lceil x \rceil}{2} \right) = \int_{0}^{x+1} \lceil u \rceil \ du ,
\qquad\text{for all $x\in\R_+$.}
\end{align*}
\end{proposition}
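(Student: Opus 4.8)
The plan is to use the limit $a_1\to\infty$, in which the high-quality observation resets the posterior variance to zero, to turn the $x$-threshold orbit into an explicitly periodic sequence, so that $\lambda(x)$ collapses to a ratio of finite geometric series. First I would show $\lim_{a_1\to\infty}\lambda(x)$ equals the index of the \emph{degenerate system} with $\phi_0(x)=rx+1$ and $\phi_1\equiv 0$: for $a_1$ finite one has $\phi_1(x)=(r^2x+1)/(a_1 r^2 x+a_1+1)\to 0$ uniformly on bounded sets while $\phi_0$ is unchanged, so for every fixed $t$ the iterates $X_t(x,a;x)$ and actions $A_t(x,a;x)$ converge to those of the degenerate system whenever $x$ is bounded away from the orbit, and since $\abs{\beta^t X_t(x,a;x)}$ is dominated by $\beta^t$ times the length of the invariant interval and $\abs{\beta^t A_t(x,a;x)}\le\beta^t$, dominated convergence passes the limit through the sums defining $\lambda$. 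The finitely many thresholds $x$ at which the integer $n=\ceil{\log(1-(1-r)x)/\log r}$ jumps are handled afterwards by the continuity of $\lambda$ (Proposition~\ref{proposition:continuous}).

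Next I would compute the degenerate orbit. With $p_j:=\phi_{0^j}(0)=(1-r^j)/(1-r)$, the $x$-threshold policy, once at state $0$, runs through $p_0=0,p_1,p_2,\dots$ under $\phi_0$ until it first reaches or exceeds $x$, and then takes the active action, which resets it to $0$; rearranging $p_j\ge x$ to $r^j\le 1-(1-r)x$ and taking logarithms (legitimate since $x<1/(1-r)$ makes the right-hand side positive and $\log r<0$) identifies the first such index as exactly the stated $n$. Hence the orbit from $x$ with initial action $1$ is $x,p_0,p_1,\dots,p_n,p_0,p_1,\dots$, periodic of period $n+1$ from time $1$ on, with the active action taken precisely at the state $p_n$; with initial action $0$ it is $x,\,rx+1,\,p_0,p_1,\dots$, the same periodic tail delayed by one step.

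The index is then a bookkeeping exercise. The active action falls at times $\equiv 0\pmod{n+1}$ for the branch with initial action $1$ and at times $\equiv 1\pmod{n+1}$ for the branch with initial action $0$, so with work $c(a)=a$ the denominator is $1/(1-\beta^{n+1})-\beta/(1-\beta^{n+1})=(1-\beta)/(1-\beta^{n+1})$. For the numerator, with $C(x)=x$, the time-$0$ terms cancel, the time-$1$ terms contribute $\beta(rx+1)$, and from time $2$ on the two orbits coincide up to a one-step shift, so the difference of the discounted numerators over each period reduces to a couple of finite geometric sums; carrying those out turns the numerator into a closed-form multiple of the parenthesised factor in the statement, and dividing by the denominator yields $\lim_{a_1\to\infty}\lambda(x)$. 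For the iterated limit I would send $r\to 1$ first, using $n\to\ceil{x}$, $(1-r^n)/(1-r)\to n$ and $(1-(\beta r)^n)/(1-\beta r)\to(1-\beta^n)/(1-\beta)$, and then $\beta\to 1$, expanding $1-\beta^k=(1-\beta)(1+\cdots+\beta^{k-1})$ to resolve the indeterminate forms: $(1-\beta^{n+1})/(1-\beta)\to n+1$, while $(1-\beta^n)/(1-\beta)-n\beta^n=\sum_{j=0}^{n-1}\beta^j(1-\beta^{n-j})\sim (1-\beta)\,n(n+1)/2$ so that its product with $\beta/(1-\beta^{n+1})$ tends to $n/2$. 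This gives $(\ceil{x}+1)(x+1-\ceil{x}/2)$, which equals $\int_0^{x+1}\ceil{u}\,du$ by splitting that integral over the unit intervals $(j-1,j]$; equality at integer $x$ follows from continuity. The $\beta\to 1$ reduction also serves as an independent check on the constant in the closed form.

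The one genuinely delicate step is the interchange of $\lim_{a_1\to\infty}$ with the infinite sums: it presupposes that the finite-$a_1$ itinerary is eventually that of the degenerate system, which can fail on the measure-zero set of thresholds sitting exactly on an orbit, so the identity must be proved on the interiors of the intervals on which $n$ is constant and then extended to their endpoints via Proposition~\ref{proposition:continuous}. Everything else is routine, the only care needed being the periodic bookkeeping and the off-by-one between the orbit index $n$ and the period $n+1$.
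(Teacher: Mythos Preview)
Your approach is essentially the paper's: pass to the degenerate system $\phi_1\equiv 0$, read off the periodic orbit $0,1,r+1,\dots,r^{n-1}+\cdots+1$ with $n$ the first index at which the partial geometric sum meets $x$, and evaluate the index as a ratio of finite geometric series. Your write-up is in fact more careful than the paper's two-sentence sketch---you justify the limit interchange via dominated convergence and patch the measure-zero boundary thresholds using continuity of $\lambda$---but the underlying idea is identical.
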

\begin{proof}
As $\lim_{a_1\rightarrow\infty} \phi_1(x) = 0$, the orbits involve the 
sequence $0, 1, r+1, r^2+r+1, \dots, r^{n-1}+\cdots+r+1$, 
where $n$ is the least integer for which $r^{n-1}+\cdots+r+1 \ge x$.
The result then follows from the definition of the index, using well-known summation formulae.
\end{proof}

Other closed forms exist, for instance in the limit $\beta\rightarrow 0$, 
whenever the cost is a polynomial function of $x$ 
or whenever the process tends to the continuous-time
process analysed in~\citet{LeNy11}.
However, even for $C(x)=x, \beta\rightarrow 1, r \rightarrow 1$ and $a_0 = 0$,
the integral in the above proposition only gives an approximation to $\lambda(x)$ with a relative
error of under $25\%$ for $a_1 > 4$.

Therefore, Figure~\ref{fig:parametric} graphs the index using the algorithms of the 
previous subsection.
Looking at these graphs, one notices that the index is 
increasing in all-but-one of the cases shown:
indeed $C(x) = -x^{-3/2}$ is \textit{not} covered by Condition C.
Also, the index has cusps at the fixed point $x = \phi_0(x)$ which are clearly 
visible as $a_0$ and $r$ vary.
Finally, the index becomes increasingly serrated as $\beta\rightarrow 1$
and $a_1 \rightarrow 0$.
One would anticipate such serrations on the basis of the above proposition as 
\begin{align*}
\int_{0}^{x+1} \lceil u \rceil \ du \ - \ \frac{1}{2} (x+1) (x+2) = \frac{1}{2} (\lceil x \rceil - x)(x - \lfloor x \rfloor)
\end{align*}
and they are visible in the plot of the residual after subtracting a cubic fit, 
for $a_1 = 1$ and $\beta \rightarrow 1$.
However, in general, the serrations have a complex non-periodic pattern and 
give a slightly ragged appearance to the plot with $a_1 = 0.01$.

\subsection{Performance Relative to Heuristic Policies}
Two heuristic policies have been commonly used for the problem of multi-sensor
scheduling. The \textit{myopic} policy observes the $m$ time series 
with the highest current cost $C_i(x_{i,t})$ and has been used in radar systems~\cite{Moran08}.
Meanwhile, the \textit{round robin} policy chooses a ordering 
of the $n$ arms and observes the time series $m$ at a time, while respecting this order.

\begin{figure}
\begin{center}
\includegraphics[viewport=30mm 90mm 170mm 210mm, clip=true, width=0.327\textwidth]{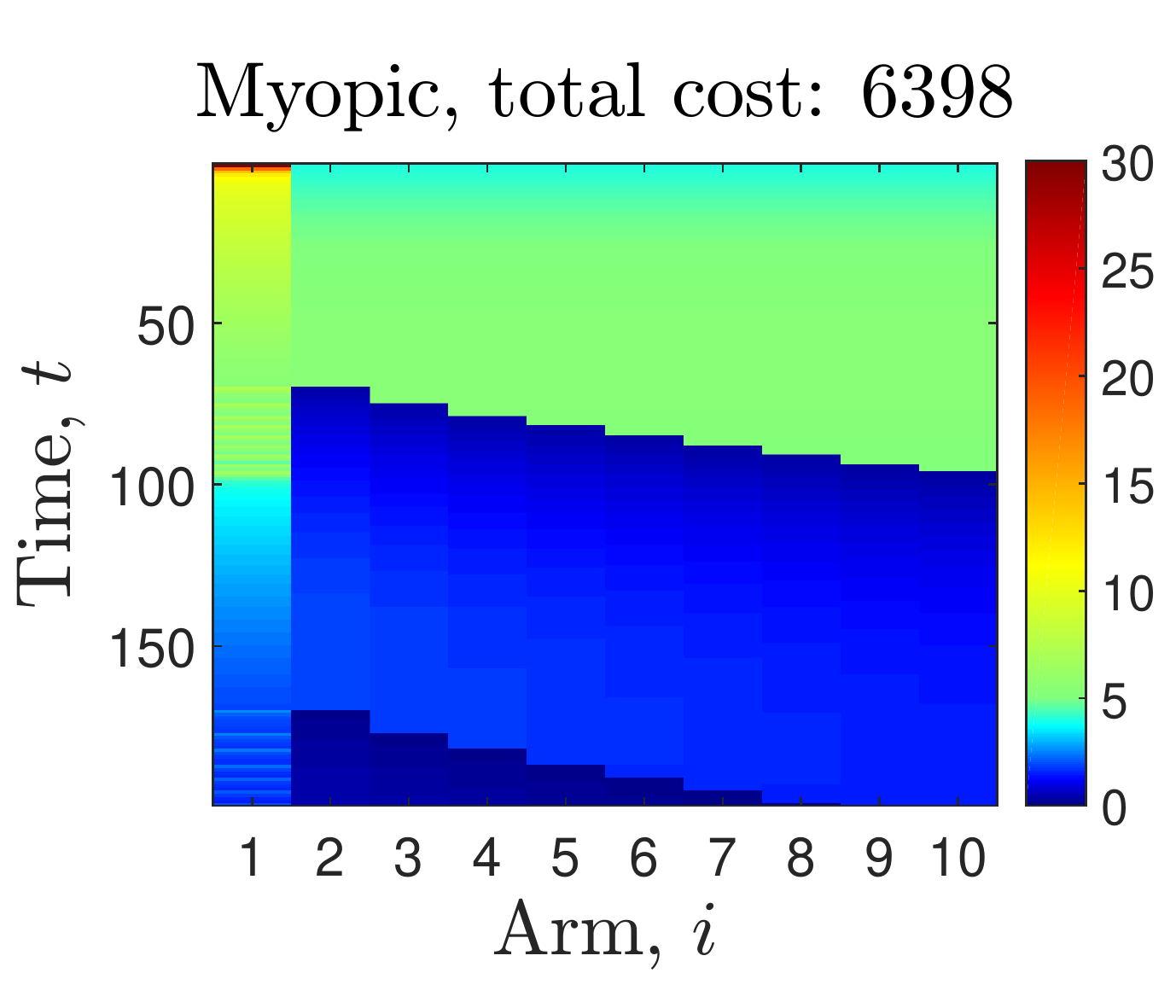} 
\includegraphics[viewport=30mm 90mm 170mm 210mm, clip=true, width=0.327\textwidth]{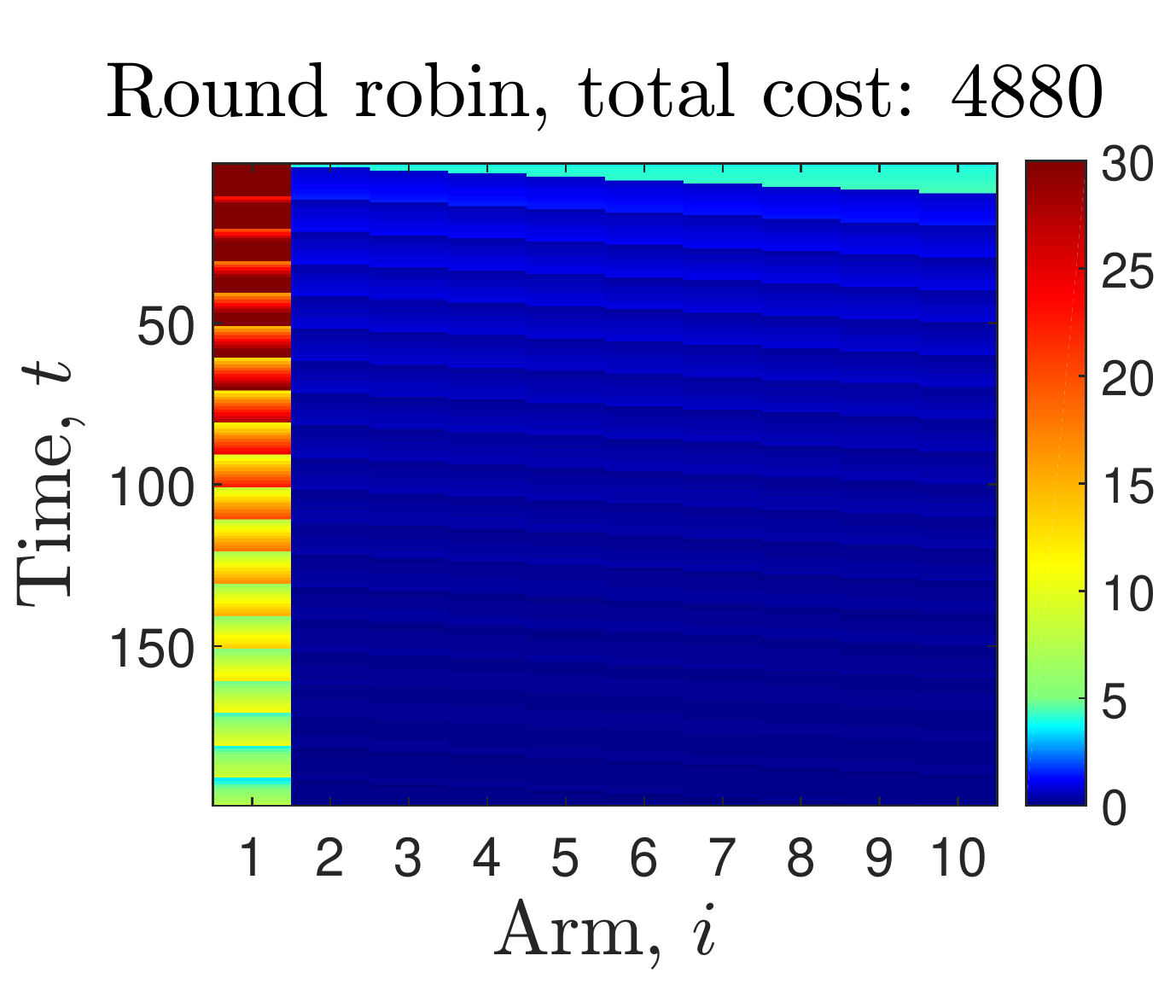} 
\includegraphics[viewport=30mm 90mm 170mm 210mm, clip=true, width=0.327\textwidth]{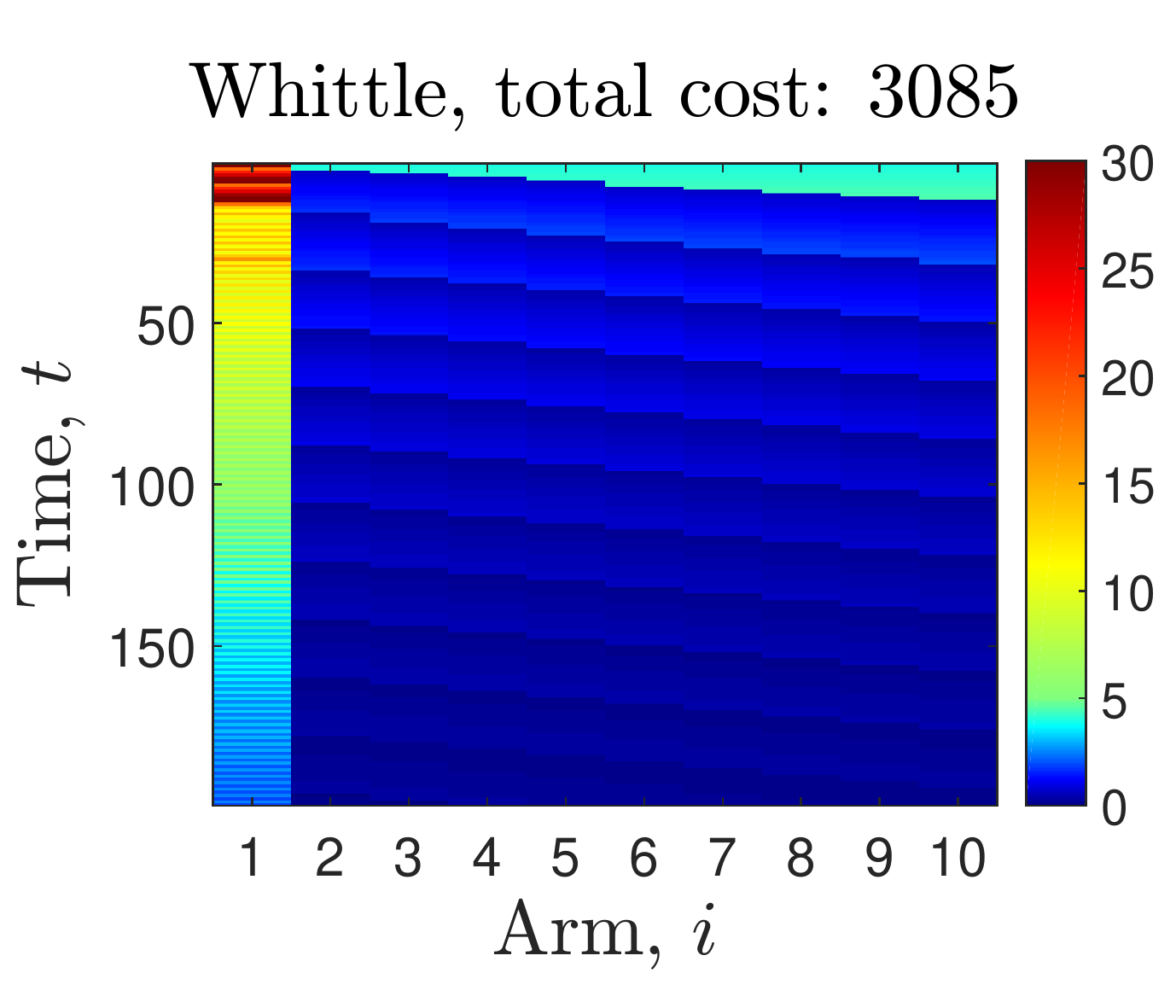}
\end{center}
\caption{Comparison of heuristic policies. Colour represents the 
variance state.}
\label{fig:heuristics}
\end{figure}

Figure~\ref{fig:heuristics} compares the costs incurred by these heuristics
in a simple scenario in which estimation errors for one of the arms (time series) are 
more expensive than for the other arms.
In detail, there are $n=10$ arms, $m=1$ observations per round,
the cost at time $t$ is $10 x_{1,t} + \sum_{i=1}^n x_{i,t}$,
observations have zero cost,
and the initial posterior variance is $x_{i,0} = 4$ for all arms. 
The arms have the same map-with-a-gap given by $\phi_0(x) = x+1$ and $\phi_1(x) = 1/(0.1 + 1/(x+1))$.

Clearly, the myopic policy is over-eager to observe arm $i=1$ and does
so at the expense of arms $i=2, \dots, 9$.
In contrast, the round robin policy makes no special effort to observe arm $i=1$
and incurs substantial expense for that time arm.
Meanwhile, the Whittle policy takes a just medium between these extremes,
and is by far the least costly policy.

%
%
\section{Further Work}
\label{section:conclusion}
This paper presented conditions under which threshold policies are optimal
for observing a single time series with costly observations.
It also explored the implications of this result by showing that it leads to
optimal policies for the linear-quadratic Gaussian problem with costly observations
and that it demonstrates the indexability of related restless bandit problems,
which were both long-standing open questions.

It would be natural to extend this work to situations where more than 
two observation actions are available, perhaps using known generalisations of 
mechanical words~\citep{Glen2009}.
There are also truly-stochastic versions of the one-dimensional problem considered here,
for instance situations where the costs depend on the posterior mean rather than just the 
posterior variance, situations where the quality of the observation is a random variable
and situations involving non-Gaussian time series.

It is also important to understand the structure of optimal 
policies for making costly observations with discrete-time Kalman filters
in \textit{multiple} dimensions.
One attack on this problem would begin by extending the verification theorem of~\citet{NinoMora15}
to multi-dimensional state spaces. 

Finally, we cannot claim the asymptotic optimality of Whittle's index policy for
the problem studied here as the results of~\citet{Verloop16} only apply to \textit{countable}
state spaces. Furthermore, little is known about the performance of
policies for restless bandits in \textit{non-asymptotic} situations involving 
finite numbers of arms and finite time horizons.
\acks{We thank Jos{\'e} Ni{\~n}o-Mora for pointing out his work on the verification theorem
and Sofia Michel for her careful reading of the manuscript.}
\appendix
%
%
\section{Itineraries as Mechanical Words}
\label{appendix:mechanical}
We begin with some elementary properties of compositions of functions satisfying Assumption~A1 and their fixed points (Subsection~\ref{sub:compositions}). This enables us to present a proof of Theorem~\ref{theorem:characterisation} which is based on the Christoffel tree. First we consider the case where itineraries correspond to Christoffel words (Subsection~\ref{sub:Christoffel-case}), then the case where itineraries correspond to Sturmian \mword{s} (Subsection~\ref{sub:Sturmian-case}) and finally we couple these results together to prove the theorem (Subsection~\ref{sub:characterisation-proof}).
We also present a proof of Theorem~\ref{theorem:xs-characterisation} about itineraries from initial points not equal to the threshold (Subsection~\ref{sub:xs-characterisation-proof}) .

\subsection{Compositions and Fixed Points}
\label{sub:compositions}
The following fundamental result about compositions is well known.
\begin{lemma}
\label{lemma:monexist}
Suppose A1 holds and that $w$ is a finite non-empty word. 
Then 
$\phi_w$ is increasing, contractive, continuous and has a unique fixed point $y_w$ on $\IR$. 
\end{lemma}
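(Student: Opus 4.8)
The plan is to prove the four assertions in order, deriving the structural ones (increasing, contractive, continuous) by a short induction on $\abs{w}$ and then deducing existence and uniqueness of the fixed point from them together with an invariance property of the interval $[y_1,y_0]$.

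First I would record two elementary composition facts. A composition of strictly increasing maps is strictly increasing, so $\phi_w$ is increasing by induction on $\abs{w}$, the base case $\abs{w}=1$ being Assumption~A1. For contractivity, suppose $f:\IR\to\IR$ is increasing and contractive and $g:\IR\to\IR$ is contractive; then for $x<y$ in $\IR$ we have $f(x)<f(y)$, whence $g(f(y))-g(f(x))<f(y)-f(x)<y-x$, so $g\circ f$ is contractive. Writing $\phi_w=\phi_{w_{\abs w}}\circ\phi_{w_{1:(\abs w-1)}}$ and applying this with $f=\phi_{w_{1:(\abs w-1)}}$ (increasing and contractive by the induction hypothesis) and $g=\phi_{w_{\abs w}}$ (contractive by A1) shows by induction that $\phi_w$ is contractive. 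Continuity is then automatic: an increasing, contractive map $f:\IR\to\IR$ has no jumps, since for $c\in\IR$ one has $f(c^-)\le f(c)\le f(c^+)$ while monotonicity and contractivity give $f(c^+)-f(c^-)\le f(y)-f(x)<y-x$ for all $x<c<y$ in $\IR$, forcing $f(c^-)=f(c)=f(c^+)$ (with the obvious one-sided reading at an endpoint of $\IR$); in particular $\phi_w$ is continuous.

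For the fixed point, I would first show $\phi_0$ and $\phi_1$ each map the compact interval $[y_1,y_0]$ into itself, which is legitimate since $y_0,y_1\in\IR$ and $\IR$ is an interval, so $[y_1,y_0]\subseteq\IR$. Indeed, using $\phi_1(y_1)=y_1$ and contractivity, $y_1\le\phi_1(x)\le\phi_1(y_0)<\phi_1(y_1)+(y_0-y_1)=y_0$ for $x\in[y_1,y_0]$, and symmetrically $y_1<\phi_0(x)\le y_0$ using $\phi_0(y_0)=y_0$; hence $\phi_w$, being a composition of these maps, also sends $[y_1,y_0]$ into itself. Therefore $h(x):=\phi_w(x)-x$ is continuous on $[y_1,y_0]$ with $h(y_1)\ge0\ge h(y_0)$, so the intermediate value theorem produces a zero $y_w\in[y_1,y_0]\subseteq\IR$. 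Uniqueness on all of $\IR$ is then immediate from contractivity, since $h(y)-h(x)=(\phi_w(y)-\phi_w(x))-(y-x)<0$ whenever $x<y$, so $h$ is strictly decreasing and has at most one zero.

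The only point that is not routine is the existence of the fixed point: Assumption~A1 supplies merely the weak contraction $\phi_a(y)-\phi_a(x)<y-x$ rather than a uniform Lipschitz constant strictly below one, so Banach's fixed point theorem does not apply directly; the substitute is the observation that $[y_1,y_0]$ is an invariant compact subinterval of $\IR$, after which the intermediate value theorem finishes the argument.
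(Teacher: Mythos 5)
Your proof is correct and follows essentially the same route as the paper: induction on $\abs{w}$ for monotonicity and contractivity, continuity as an immediate consequence of contractivity, the intermediate value theorem on the invariant interval $[y_1,y_0]$ for existence of the fixed point, and contractivity again for uniqueness. The only cosmetic differences are that your continuity argument via one-sided limits is more elaborate than needed (contractivity already gives $\abs{\phi_w(x)-\phi_w(c)}<\abs{x-c}$ directly) and that you establish invariance of $[y_1,y_0]$ under both maps where the paper bounds $\phi_w(y_0)$ and $\phi_w(y_1)$ via the pointwise max and min of $\phi_0,\phi_1$; these are equivalent in substance.
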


\begin{proof}
First we show that $\phi_w(x)$ is increasing, by induction on the length of
word $w$.
In the base case, as $w$ is non-empty, we suppose that $\abs{w} = 1$.
The claim then follows immediately from A1.
For the inductive step, assume $\phi_u(x)$ is increasing, 
where $w = au$ for some letter $a\in \{0,1\}$ and some finite non-empty word $u$.
Then for any $x,y\in\IR$ such that $x < y$, 
\begin{align*}
\phi_w(y) &= \phi_u(\phi_a(y)) \\
&> \phi_u(\phi_a(x)) & \text{as $\phi_a(y) > \phi_a(x)$ and $\phi_u$ is increasing} \\
&= \phi_w(x) .
\end{align*}
Therefore $\phi_w$ is increasing.

Now we show that $\phi_w(x)$ is contractive, again by induction on $\abs{w}$.
If $\abs{w} = 1$ then this follows immediately from A1.
Else, say $\phi_u(x)$ is contractive where $w = ua$ and $a\in \{0,1\}$.
Then for any $x,y\in\IR$ such that $x < y$, 
\begin{align*}
\phi_w(y) - \phi_w(x) &= \phi_a(\phi_u(y)) - \phi_a(\phi_u(x)) \\
&< \phi_u(y) - \phi_u(x) & \text{as $\phi_u(y) > \phi_u(x)$ and $\phi_a$ is contractive}\\
&< y-x & \text{as $\phi_u$ is contractive.}
\end{align*}
Therefore $\phi_w$ is contractive.

As $\phi_w$ is contractive,
for any $\epsilon>0$ and $c\in \mathcal{I}$, it follows that
\begin{align*}
\abs{\phi_w(x)-\phi_w(c)} < \abs{x-c} < \epsilon  \quad\text{for any $x\in\mathcal{I}$ with $\abs{x-c} < \epsilon$.} 
\end{align*}
Therefore $\phi_w$ is continuous.

Now we show that the fixed point $y_w$ exists, using the intermediate value
theorem applied to the function $g(x) := x - \phi_w(x)$.
First we show that $g(y_0) \ge 0$. 
Indeed, as $\phi_1$ is contractive, the definition of $y_1$ gives
\begin{align*}
y_0 - y_1 > \phi_1(y_0) - \phi_1(y_1) = \phi_1(y_0) - y_1, \quad\text{so that}\quad \phi_1(y_0) < y_0.
\end{align*}
So, it follows from the definition of $y_0$ that the upper bound 
$\psi(x) := \max\{\phi_0(x), \phi_1(x)\}$ satisfies
$\psi(y_0) = \phi_0(y_0) = y_0$.
As $\phi_u(x)$ is increasing for any finite word $u$, it follows that
\begin{align*}
\phi_w(y_0) = \phi_{w_{2:\abs{w}}} \circ \phi_{w_1}(y_0) \le \phi_{w_{2:\abs{w}}} \circ \psi(y_0)  = \phi_{w_{2:\abs{w}}}(y_0) \le \cdots \le y_0, 
\end{align*}
so that 
\begin{align*}
g(y_0) = y_0 - \phi_w(y_0) \ge y_0 - y_0 = 0.
\end{align*}
A similar argument, using the lower bound $\min\{\phi_0(x),\phi_1(x)\}$,
 gives $g(y_1) \le 0$.
In summary, $g(y_1) \le 0 \le g(y_0),$ where $y_1 < y_0$ by A1,
and $g(x)$ is continuous as $\phi_w(x)$ is continous.
So the intermediate value theorem shows that $g(y) = 0$ for some
$y \in [y_1,y_0]$. 
Therefore a fixed point $y_w$ exists on $\IR$.

Now we show that the fixed point $y_w$ is unique. Suppose both $y$ and $z$ are fixed points of $\phi_w$ with $y > z$.
This leads to the following contradiction: as $\phi_w$ is contractive we have 
\begin{align*}
\frac{\phi_w(y) - \phi_w(z)}{y-z} &< 1,
\intertext{yet as $\phi_w(y) = y$ and $\phi_w(z) = z$ we have}
\frac{\phi_w(y) - \phi_w(z)}{y-z} &= 1 .
\end{align*}
Therefore the fixed point is unique. This completes the proof.
\end{proof}


We make widespread use of the following simple Lemma. 
Given a word $w$, this Lemma gives necessary and sufficient conditions for $\phi_w(x)$ to be greater than or less than $x$.

\begin{lemma} 
\label{lemma:incdec}
Suppose A1 holds, that $x\in \IR$ and $w$ is a finite non-empty word. Then 
\begin{align*}
x < \phi_w(x) \ \Leftrightarrow \ \phi_w(x) < y_w \ \Leftrightarrow \ x < y_w \quad\text{and}\quad
x > \phi_w(x) \ \Leftrightarrow \ \phi_w(x) > y_w \ \Leftrightarrow \ x > y_w.
\end{align*}
\end{lemma}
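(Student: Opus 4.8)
The plan is to reduce everything to the trichotomy $x < y_w$, $x = y_w$, $x > y_w$, and to show that in each of these three cases the sign of $x-\phi_w(x)$, the sign of $\phi_w(x)-y_w$, and the sign of $x-y_w$ all coincide. Since the three cases are exhaustive and mutually exclusive (because $y_w\in\IR$ and $\R$ is totally ordered), every one-directional implication appearing in the statement then follows immediately.

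First I would invoke Lemma~\ref{lemma:monexist} to recall that under A1 the composition $\phi_w$ is increasing and contractive with a unique fixed point $y_w\in\IR$. The key step is the implication $x < y_w \implies x < \phi_w(x) < y_w$. Since $\phi_w$ is increasing and $\phi_w(y_w)=y_w$, we get $\phi_w(x) < \phi_w(y_w) = y_w$; and since $\phi_w$ is contractive, $y_w - \phi_w(x) = \phi_w(y_w) - \phi_w(x) < y_w - x$, which rearranges to $\phi_w(x) > x$. A symmetric argument, applying the increasing and contractive properties to the pair $y_w < x$, gives $x > y_w \implies x > \phi_w(x) > y_w$. The remaining case $x = y_w$ gives $\phi_w(x) = x = y_w$ trivially.

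To finish, I would read off the equivalences from this case analysis. For instance, if $x < \phi_w(x)$ then we cannot be in the case $x = y_w$ (which forces equality) nor in the case $x > y_w$ (which forces $x > \phi_w(x)$), so $x < y_w$; conversely $x < y_w$ yields $x < \phi_w(x)$ by the key step. The same reasoning links $\phi_w(x) < y_w$ to $x < y_w$, and the three strict-$>$ statements follow identically. I do not anticipate a genuine obstacle here: the only point requiring care is to apply the contraction inequality in the correct direction for each ordering of $x$ and $y_w$, and to check that the trichotomy is genuinely exhaustive, which it is.
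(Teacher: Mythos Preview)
Your proposal is correct and follows essentially the same approach as the paper: both arguments use that $\phi_w$ is increasing to get $\phi_w(x)<y_w$ from $x<y_w$, and that $\phi_w$ is contractive (applied to the pair $x,y_w$) to get $x<\phi_w(x)$ from $x<y_w$. The paper organises the converse directions via contrapositives and injectivity, whereas you package everything as a trichotomy case analysis; the content is the same.
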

\begin{proof}
We use Lemma~\ref{lemma:monexist} and the definition of $y_w$ throughout without further mention. 

As $\phi_w$ is increasing, if $x<y_w$ then $\phi_w(x) < \phi_w(y_w) = y_w$.
Similarly, if $x>y_w$ then $\phi_w(x)>y_w$.
Thus if $\phi_w(x)\le y_w$ then $x\le y_w$, by the contrapositive.
But if $\phi_w(x)\ne y_w$ then $x\ne y_w$, as $\phi_w$ is increasing and therefore injective.
So if $\phi_w(x)<y_w$ then $x<y_w$.
Therefore
\begin{align*}
x < y_w \quad \Leftrightarrow \quad \phi_w(x) < y_w.
\end{align*}

As $\phi_w$ is contractive, if $x<y_w$ then $\phi_w(y_w) - \phi_w(x) < y_w-x$.
As $\phi_w(y_w) = y_w$, this rearranges to give $x<\phi_w(x)$.
Similarly, if $x>y_w$ then $x>\phi_w(x)$.
Thus if $x\le \phi_w(x)$ then $x\le y_w$, by the contrapositive.
But if $x \ne \phi_w(x)$ then $x$ is not a fixed point, so $x \ne y_w$.
So if $x<\phi_w(x)$ then $x<y_w$.
Therefore
\begin{align*}
x < y_w \quad \Leftrightarrow \quad x < \phi_w(x).
\end{align*}

A similar argument shows that $x > y_w \Leftrightarrow \phi_w(x) > y_w$
and $x > y_w \Leftrightarrow x > \phi_w(x)$. 
\end{proof}


\begin{lemma}
\label{lemma:ypi}
Suppose A1 holds and $w$ is a finite word with $\abs{w}_0 \abs{w}_1>0$. Then 
\begin{align*}y_1 < y_w < y_0.\end{align*}
\end{lemma}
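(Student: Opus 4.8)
The plan is to prove the two strict inequalities $y_1 < y_w$ and $y_w < y_0$ separately, in each case by showing that $\phi_w$ moves the appropriate endpoint strictly inward and then invoking Lemma~\ref{lemma:incdec}. Concretely, to get $y_w < y_0$ I would show $\phi_w(y_0) < y_0$ and apply the equivalence $x > \phi_w(x) \Leftrightarrow x > y_w$ of Lemma~\ref{lemma:incdec} with $x = y_0$; symmetrically, to get $y_1 < y_w$ I would show $\phi_w(y_1) > y_1$ and use $x < \phi_w(x) \Leftrightarrow x < y_w$ with $x = y_1$.

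First I would record two one-letter facts. Applying Lemma~\ref{lemma:incdec} with the single-letter word $1$ and the point $x = y_0$: since $y_0 > y_1$ by A1 and $y_1$ is the fixed point of $\phi_1$, we get $\phi_1(y_0) < y_0$. Symmetrically, with the word $0$ and $x = y_1$ we get $\phi_0(y_1) > y_1$. (These also follow directly from contractivity exactly as in the proof of Lemma~\ref{lemma:monexist}.) Together with $\phi_0(y_0) = y_0$ and $\phi_1(y_1) = y_1$, these say that both $\phi_0$ and $\phi_1$ map $\{z\in\IR : z \le y_0\}$ into itself and $\{z\in\IR : z \ge y_1\}$ into itself.

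Next, the key step: $\phi_w(y_0) < y_0$. Write $w = w_1 w_2 \cdots w_n$ with $n = \abs{w}$, set $x_0 := y_0$ and $x_i := \phi_{w_i}(x_{i-1})$, so $x_n = \phi_w(y_0)$. Since each $\phi_a$ is increasing with $\phi_a(y_0) \le y_0$, the invariant $x_i \le y_0$ is preserved at every step. Because $\abs{w}_1 > 0$, there is a least index $i$ with $w_i = 1$; at that step $x_{i-1} \le y_0$ gives $x_i = \phi_1(x_{i-1}) \le \phi_1(y_0) < y_0$, so the orbit drops strictly below $y_0$. For each subsequent index $j > i$, strict monotonicity of $\phi_{w_j}$ gives $x_j = \phi_{w_j}(x_{j-1}) < \phi_{w_j}(y_0) \le y_0$, so the orbit stays strictly below $y_0$; hence $x_n = \phi_w(y_0) < y_0$, and Lemma~\ref{lemma:incdec} yields $y_w < y_0$. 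Finally I would run the mirror-image argument on $y_1$, with invariant $x_i \ge y_1$, using $\phi_1(y_1) = y_1$, $\phi_0(y_1) > y_1$, and the least index with $w_i = 0$ (which exists as $\abs{w}_0 > 0$), to obtain $\phi_w(y_1) > y_1$ and hence $y_w > y_1$. Combining gives $y_1 < y_w < y_0$.

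There is no serious obstacle here; the only thing to watch is exactly where strictness enters. The point is that once a $\phi_1$ has been applied the orbit is \emph{strictly} below $y_0$, and strict monotonicity of the maps then propagates this strictness through the remaining letters (and dually for $\phi_0$ and $y_1$) — this is precisely why the hypothesis $\abs{w}_0\abs{w}_1 > 0$, guaranteeing at least one letter of each kind, is what is needed.
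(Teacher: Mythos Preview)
Your proposal is correct and follows essentially the same approach as the paper: both show $\phi_w(y_1) > y_1$ and $\phi_w(y_0) < y_0$ by using invariance of $[y_1,y_0]$ under $\phi_0,\phi_1$ together with a single strict step coming from the letter of the opposite type, then invoke Lemma~\ref{lemma:incdec}. The only cosmetic difference is that the paper compresses the orbit-tracking via the factorisation $w = s01^q$ (stripping trailing $1$'s so that the strict $\phi_0$-step occurs last before applying $\phi_s$), whereas you locate the first occurrence of the relevant letter and propagate strictness forward; the underlying idea is identical.
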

\begin{proof}
As $\abs{w}_0 > 0$ we have $w =: s01^q$ for some finite word $s$ and some $q\in\Z_+$.
As $y_0 > y_1$ by A1, Lemma~\ref{lemma:incdec} gives $\phi_0(y_1)>y_1$.
Thus the definition of $y_1$ and the fact that $\phi_s$ is increasing give
\begin{align*}
\phi_w(y_1) = \phi_{s01^q}(y_1) = \phi_{s0}(y_1) = \phi_s(\phi_0(y_1)) > \phi_s(y_1) \ge y_1
\end{align*}
where the last step follows by repeating the same argument. But if $\phi_w(y_1)>y_1$
then Lemma~\ref{lemma:incdec} shows that $y_w > y_1$.

A similar argument leads to the conclusion that $y_w < y_0$. 
\end{proof}


\begin{lemma}
\label{lemma:limit}
Suppose A1 holds, $x\in\IR$ and $w$ is a finite non-empty word.
Then 
\begin{align*}
\lim_{n\rightarrow\infty} \phi_{w^n}(x) = y_w.
\end{align*}
\end{lemma}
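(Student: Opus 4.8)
The plan is to observe that $\phi_{w^n}(x)$ is simply the $n$-fold iterate of the single map $\phi_w$, which by Lemma~\ref{lemma:monexist} is increasing, continuous, and has $y_w$ as its unique fixed point on $\IR$. Since ``contractive'' in Assumption~A1 is only a weak (non-uniform) contraction, I would not attempt to extract a quantitative convergence rate; instead I would combine monotonicity of the iterates with the monotone convergence theorem and continuity of $\phi_w$.

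Write $x_0 := x$ and $x_{k+1} := \phi_w(x_k) = \phi_{w^{k+1}}(x)$, and split into three cases according to the position of $x$ relative to $y_w$. If $x = y_w$ then $x_k = y_w$ for all $k$ and the claim is immediate. If $x < y_w$, then Lemma~\ref{lemma:incdec}, applied with the word $w$, gives both $x < \phi_w(x)$ and $\phi_w(x) < y_w$; applying this repeatedly shows that $(x_k)$ is strictly increasing and bounded above by $y_w$. Since $\IR$ is an interval containing both $x$ and $y_w$, every $x_k$ lies in $[x,y_w]\subseteq\IR$, so $(x_k)$ converges to some $L\in[x,y_w]$. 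Continuity of $\phi_w$ then gives $\phi_w(L)=\phi_w(\lim_k x_k)=\lim_k x_{k+1}=L$, so $L$ is a fixed point of $\phi_w$ on $\IR$; by the uniqueness part of Lemma~\ref{lemma:monexist}, $L=y_w$. The case $x > y_w$ is symmetric: the iterates form a strictly decreasing sequence bounded below by $y_w$, hence converge to a fixed point of $\phi_w$, which must again be $y_w$.

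There is essentially no serious obstacle here. The only points requiring a little care are that ``contractive'' in A1 is not a uniform contraction, so the argument must proceed via monotonicity plus continuity rather than a Banach-style estimate, and that one must check the iterates and their limit remain inside $\IR$, which holds because $\IR$ is an interval containing both $x$ and $y_w$.
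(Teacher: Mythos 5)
Your proof is correct and follows essentially the same route as the paper's: monotonicity and boundedness of the iterates via Lemma~\ref{lemma:incdec}, the monotone convergence theorem, continuity of $\phi_w$ to identify the limit as a fixed point, and uniqueness from Lemma~\ref{lemma:monexist}. The explicit three-way case split and the check that the iterates stay in $\IR$ are harmless elaborations of what the paper leaves implicit.
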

\begin{proof}
The sequence with elements $x_n := \phi_{w^n}(x)$ for $n\in\N$ 
is monotone and bounded, by Lemma~\ref{lemma:incdec}. 
So the monotone convergence theorem for sequences of real numbers
shows that $l := \lim_{n\rightarrow\infty} x_n$ exists.
But as $\phi_{w}$ is continuous, by Lemma~\ref{lemma:monexist},
the limit $l$ satisfies
\begin{align*}
\phi_{w}\big( l\big) = \phi_{w}\big(\lim_{n\rightarrow\infty} x_n\big) = \lim_{n\rightarrow\infty} \phi_{w}\big(x_n\big) = \lim_{n\rightarrow\infty} x_{n+1} = l.
\end{align*}
So $l$ is a fixed point of $\phi_{w}$. 
By Lemma~\ref{lemma:monexist}, $y_w$ is the unique such fixed point. 
\end{proof}


As a contractive function is not necessarily a contraction mapping, some additional work is required 
to prove the following result which is essential in the Sturmian case addressed in Subsection A.3.

\begin{lemma}
\label{lemma:limit-is-0}
Suppose A1 holds, $w$ is an infinite word and $y_0 \ge a > b\ge y_1$. Then
\begin{align*}
\lim_{n\rightarrow\infty} (\phi_{w_{1:n}}(a) - \phi_{w_{1:n}}(b)) = 0.
\end{align*}
\end{lemma}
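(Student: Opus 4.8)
The plan is to show that the width $d_n := \phi_{w_{1:n}}(a) - \phi_{w_{1:n}}(b)$ of the image interval shrinks to zero by a compactness argument. The natural first attempt — bounding the contraction ratio $d_{n+1}/d_n$ by a constant less than one — fails here, because A1 only asks that $\phi_0,\phi_1$ be \emph{contractive} (strict inequality $\phi_a(y)-\phi_a(x)<y-x$) and not \emph{contraction mappings} (a uniform constant), so $d_{n+1}/d_n$ may approach $1$. Passing to a convergent subsequence of states repairs this.

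First I would record two preliminaries. (i) The interval $[y_1,y_0]$ is invariant under both $\phi_0$ and $\phi_1$: since each $\phi_c$ is increasing it suffices to check endpoints, and Lemma~\ref{lemma:incdec} together with monotonicity gives $y_1 \le \phi_c(y_1)$ and $\phi_c(y_0) \le y_0$ for $c\in\{0,1\}$ (for $c=0$ use $\phi_0(y_0)=y_0$ and $\phi_0(y_1)>y_1$ since $y_1<y_0=y_{0}$; for $c=1$ symmetrically). Since $a,b\in[y_1,y_0]$ by hypothesis, writing $a_n := \phi_{w_{1:n}}(a)$ and $b_n := \phi_{w_{1:n}}(b)$ we get $a_n,b_n\in[y_1,y_0]$ for all $n$. (ii) By Lemma~\ref{lemma:monexist}, $\phi_{w_{1:n}}$ is increasing and contractive, so $d_n>0$ and $d_{n+1} = \phi_{w_{n+1}}(a_n)-\phi_{w_{n+1}}(b_n) < a_n-b_n = d_n$; hence $(d_n)$ is a strictly decreasing sequence of positive reals and converges to some $d\ge 0$.

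It remains to rule out $d>0$. Suppose $d>0$. The pair $(a_n,b_n)$ lies in the compact set $K := \{(x,y): y_1 \le y \le x \le y_0\}$. Since $w_{n+1}\in\{0,1\}$, some letter $c$ satisfies $w_{n+1}=c$ for infinitely many $n$; along a subsequence $n_k$ of those indices, $(a_{n_k},b_{n_k}) \to (a^*,b^*)\in K$. Then $a^*-b^* = \lim_k d_{n_k} = d$, and by continuity of $\phi_c$ (Lemma~\ref{lemma:monexist}) we have $d_{n_k+1} = \phi_c(a_{n_k})-\phi_c(b_{n_k}) \to \phi_c(a^*)-\phi_c(b^*)$; since also $d_{n_k+1}\to d$, this forces $\phi_c(a^*)-\phi_c(b^*) = d = a^*-b^*$. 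But $d>0$ gives $a^*>b^*$, so contractiveness of $\phi_c$ yields $\phi_c(a^*)-\phi_c(b^*) < a^*-b^*$, a contradiction. Therefore $d=0$, which is the claim.

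The only real point of care is the one flagged above — that strict contractiveness alone gives no uniform ratio bound — so the subsequence extraction in the compact square $K$ is the crux; the remaining steps are routine bookkeeping with Lemmas~\ref{lemma:monexist} and~\ref{lemma:incdec}.
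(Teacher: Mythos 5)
Your proof is correct, and it reaches the conclusion by a slightly different mechanism than the paper, though both rest on the same underlying idea: compactness is what compensates for the fact that A1 gives only strict contractiveness ($\phi_c(y)-\phi_c(x)<y-x$) rather than a uniform Lipschitz constant below $1$. The paper's proof makes this quantitative: assuming $a_n-b_n\ge\epsilon$ for all $n$, it applies the extreme value theorem to the ratio functions $f_c(h,l)=(\phi_c(h)-\phi_c(l))/(h-l)$ on the compact set $\{(h,l): h,l\in[y_1,y_0],\ h\ge l+\epsilon\}$ (bounded away from the diagonal, so the ratios are continuous there), obtains a uniform $q<1$, and concludes $a_n-b_n\le q^n(a-b)<\epsilon$ for large $n$ — a contradiction that also yields a geometric decay rate off the diagonal. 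You instead combine a pigeonhole on the letters $w_{n+1}$ with Bolzano--Weierstrass on the state pairs $(a_n,b_n)$ in the compact triangle $K$, and derive the contradiction at a subsequential limit point $(a^*,b^*)$ via continuity and strict contractiveness of a single $\phi_c$. Your route avoids introducing the ratio function and checking its continuity, at the cost of being purely qualitative; the paper's route is marginally heavier but gives the uniform contraction factor for free. Your preliminary steps — invariance of $[y_1,y_0]$ under both maps via Lemma~\ref{lemma:incdec} and monotonicity, positivity and strict monotonicity of $d_n$ via Lemma~\ref{lemma:monexist}, and the observation that every subsequence of $(d_n)$ (in particular $(d_{n_k+1})$) shares the limit $d$ — are all sound, so there is no gap.
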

\begin{proof}
Let $a_n := \phi_{w_{1:n}}(a)$ and $b_n := \phi_{w_{1:n}}(b)$ for $n \in \N$.
By Assumption A1, $\phi_{w_{n}}$ is a contractive function, so $(a_n-b_n : n\in\N)$ is a decreasing sequence
and as $a, b \in [y_1,y_0]$ Lemma~\ref{lemma:incdec} shows that this is also a bounded sequence.
Therefore the monotone convergence theorem for real-valued sequences shows that
\begin{align*}\text{$\lim_{n\rightarrow\infty} (a_n-b_n)$ exists.}\end{align*}
Now we argue that the limit is zero, by contradiction.
Assume that $a_n - b_n \ge \epsilon$ for all $n\in\N$, where $\epsilon$ is a positive real number.
Let the domain $\mathcal{D}\subset \R^2$ be
\begin{align*}
\mathcal{D} &:= \{ (h,l) \ : \ h, l \in [y_1,y_0], \ h \ge l + \epsilon \}, 
\intertext{let the functions $f_c : \mathcal{D} \rightarrow \R$ for letters $c\in\{0,1\}$ be}
f_c(h,l) &:= \frac{\phi_c(h)-\phi_c(l)}{h - l}
\intertext{where $(h,l)\in\mathcal{D}$ and define the number $q\in\R$ by}
q &:= \sup_{(h,l) \in \mathcal{D}} \left\{ \max_{c\in\{0,1\}} f_c(h,l) \right\}. 
\intertext{Now the functions $f_0, f_1$ are continuous on their domain $\mathcal{D}$ by Lemma~\ref{lemma:monexist}.
Also, the domain $\mathcal{D}$ is a non-empty, bounded and closed set.
So the extreme value theorem for functions of several variables shows that the maximum equals the supremum. 
Thus}
q &= \max_{(h,l) \in \mathcal{D}} \left\{ \max_{a\in\{0,1\}} f_a(h,l) \right\}.
\end{align*}
As $\phi_c$ is contractive for $c\in\{0,1\}$ it follows that
\begin{align*}
q &< 1
\end{align*}
and as $\phi_c$ is increasing we have $q>0$.
So the definition of $q$ and hypothesis that $a>b$ give
\begin{align*}
a_n-b_n \le q^n (a-b) 
< \epsilon \quad\text{for $n > \log((a-b)/\epsilon)/\log(1/q).$}
\end{align*}
Thus there is an $n\in\N$ with $a_n-b_n < \epsilon$.
This contradicts the assumption that $a_n-b_n \ge \epsilon$ for all $n\in\N$. 
Since $\epsilon>0$ was arbitrary, we conclude that 
\begin{align*}
\lim_{n\rightarrow\infty} (a_n-b_n) = 0.
\end{align*}
This completes the proof.
\end{proof}


\subsection{$x$-Threshold Words as Christoffel Words}
\label{sub:Christoffel-case}
We begin with some definitions that go beyond the main text.

The set $\{0,1\}^*$ consists of all finite words on the alphabet $\{0,1\}$,
including the empty string $\epsilon$.
The \textit{morphism} $\morph : \{0,1\}^* \rightarrow \{0,1\}^*$
generated by a mapping 
$s : \{0,1\} \rightarrow \{0,1\}^*$,
substitutes $s(w_k)$ for each letter $w_k$ of a word $w$,
so that 
\begin{align*}
\text{$\morph(\epsilon) = \epsilon$ and $\morph(w) = s(w_1) s(w_2) \cdots s(w_{\abs{w}})$.}
\end{align*}
We work with the morphisms 
$\uuv : \{0,1\}^* \rightarrow \{0,1\}^*$ and $\uvv : \{0,1\}^* \rightarrow \{0,1\}^*$ given by
\begin{align*}
\uuv : \begin{cases} 0 \mapsto 0 \\ 1 \mapsto 01 \end{cases}
&& \uvv : \begin{cases} 0 \mapsto 01 \\ 1 \mapsto 1 \end{cases} .
\end{align*}
Let $\circ$ denote composition of morphisms, 
for example $\uvv \circ \uuv(1) = \uvv(01) = 011.$

\begin{remark} 
These morphisms generate the Christoffel tree through pre-composition.
Say $(u,v)$ is a Christoffel pair and consider the morphism 
\begin{align*}
\mathscr{M} : \begin{cases} 0 \mapsto u \\ 1 \mapsto v \end{cases}.
\end{align*}
Then pre-composition maps the node $(u,v)$ of the Christoffel tree to its children:
\begin{alignat}{99}
\nonumber
&(\mathscr{M}\circ \uuv(0), \ &\mathscr{M}\circ \uuv(1)) &= (\mathscr{M}(0),&\mathscr{M}(01)) &= (u,uv)& \\
\label{eq:precomp}
&(\mathscr{M}\circ \uvv(0), \ &\mathscr{M}\circ \uvv(1)) &= (\mathscr{M}(01),&\mathscr{M}(1)) &= (uv,v)& .
\end{alignat}
\end{remark}

Now let us give a simple upper and lower bound on the $x$-threshold orbit.

\begin{lemma}
\label{lemma:orbit}
Suppose A1 holds and $(x_k : k\in\N)$ is the $x$-threshold orbit. Then
\begin{align*}
x\in [y_1,y_0] \ \Rightarrow \
\phi_1(x) \le x_k < \phi_0(x) \quad\text{for $k\in\N$.}
\end{align*}
\end{lemma}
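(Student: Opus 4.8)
The plan is to prove $\phi_1(x) \le x_k < \phi_0(x)$ by induction on $k \in \N$, taking this two-sided bound itself as the inductive hypothesis.

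First I would assemble the facts about $\phi_0,\phi_1$ on $[y_1,y_0]$ that the argument needs. Applying Lemma~\ref{lemma:incdec} to the one-letter words $0$ and $1$ (for which $y_0$ and $y_1$ are the respective fixed points) shows that $\phi_1(z) < z$ whenever $z > y_1$, that $\phi_0(z) > z$ whenever $z < y_0$, and of course $\phi_1(y_1) = y_1$ and $\phi_0(y_0) = y_0$. Combined with the strict monotonicity in Assumption~A1, these give the two containments $\phi_0\big([y_1,y_0]\big) \subseteq (y_1,y_0]$ and $\phi_1\big([y_1,y_0]\big) \subseteq [y_1,y_0)$; equivalently, every $\phi_0$-image of a point of $[y_1,y_0]$ lies strictly above $y_1$ (the fixed point of $\phi_1$), and every $\phi_1$-image lies strictly below $y_0$ (the fixed point of $\phi_0$). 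These are the two facts I would lean on.

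For the base case $k=1$, since $x_1 = \phi_1(x)$ the lower bound is trivial, and the upper bound reduces to $\phi_1(x) < \phi_0(x)$, which follows by treating $x = y_1$ and $x \in (y_1,y_0]$ separately using the facts above (in the second case $\phi_1(x) < x \le \phi_0(x)$). For the inductive step I would assume $\phi_1(x) \le x_k < \phi_0(x)$ and split according to whether $x_k \ge x$ or $x_k < x$. When $x_k \ge x$, so $x_{k+1} = \phi_1(x_k)$: monotonicity turns $x_k \ge x$ into $x_{k+1} \ge \phi_1(x)$, while $x_k < \phi_0(x)$ gives $x_{k+1} < \phi_1(\phi_0(x)) < \phi_0(x)$, the last inequality because $\phi_0(x) > y_1$. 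When $x_k < x$, so $x_{k+1} = \phi_0(x_k)$: monotonicity turns $x_k < x$ into $x_{k+1} < \phi_0(x)$, while $x_k \ge \phi_1(x)$ gives $x_{k+1} \ge \phi_0(\phi_1(x)) > \phi_1(x)$, the last inequality because $\phi_1(x) < y_0$. Either way the two-sided bound is restored, completing the induction.

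The one place where a moment's thought is needed is the ``far'' bound in each case of the inductive step — the upper bound when $x_k \ge x$ and the lower bound when $x_k < x$. There, $x_{k+1}$ is not controlled by $\phi_0(x)$ or $\phi_1(x)$ directly but by $\phi_1(\phi_0(x))$ or $\phi_0(\phi_1(x))$, and one must invoke contractivity (via Lemma~\ref{lemma:incdec}) to collapse this back onto the desired bound. I do not anticipate any real obstacle; the content of the lemma lies entirely in correctly matching each iterate to the fixed point that dominates it.
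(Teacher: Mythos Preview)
Your proof is correct and follows essentially the same induction-with-case-split as the paper. The one tactical difference is in the ``far'' bound: the paper first runs a short preliminary induction to show $y_1 \le x_k < y_0$ and then uses $\phi_0(x_k) > x_k$ (resp.\ $\phi_1(x_k) \le x_k$) directly, whereas you bound via the composition $\phi_0(\phi_1(x)) > \phi_1(x)$ (resp.\ $\phi_1(\phi_0(x)) < \phi_0(x)$), which only needs facts about $x$ itself and so dispenses with that preliminary step --- a mild streamlining.
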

\begin{proof}
Say $z\in [y_1,y_0)$. Then Lemma~\ref{lemma:incdec} gives 
\begin{align*}
y_1 \le \phi_1(z) \le z < \phi_0(z) < y_0.
\end{align*}
An induction using this fact, immediately shows that for $k\in\N$,
\begin{align*}
y_1 \le x_k < y_0 .
\end{align*}

Now we prove the claim by induction with hypothesis
\begin{align*}
H_k : \quad \phi_1(x) \le x_k < \phi_0(x)
\end{align*}
for $k\in\N$.
The base case $H_1$ is true as $x_1 = \phi_1(x)$ by definition of the $x$-threshold orbit.
For the inductive step, say $H_k$ is true for some $k\in\N$.
Then there are two cases to consider: $x_k\in [\phi_1(x),x)$ and $x_k\in [x,\phi_0(x))$.
If $x_k\in [\phi_1(x),x)$ then $x_{k+1} = \phi_0(x_k)$ and
\begin{align*}
\phi_1(x) &\le x_k \\
&< \phi_0(x_k) && \text{by Lemma~\ref{lemma:incdec} as $x_k < y_0$}\\
&< \phi_0(x) && \text{as $x_k < x$ and $\phi_0(\cdot)$ is increasing,}
\intertext{so $H_{k+1}$ is true.
If $x_k\in [x,\phi_0(x))$ then $x_{k+1} = \phi_1(x_k)$ and}
\phi_1(x) &\le \phi_1(x_k) &&\text{as $x\le x_k$ and $\phi_1(\cdot)$ is increasing} \\
&\le x_k && \text{by Lemma~\ref{lemma:incdec} as $y_1 \le x_k$}\\
&< \phi_0(x), 
\end{align*}
so $H_{k+1}$ is true.
This completes the proof.
\end{proof}


Now we show that $x$-threshold words are Christoffel words in three important
special cases and then we find the general conditions on $x$ for which $x$-threshold words are Christoffel words
(Lemma~\ref{lemma:Ccase}).


\begin{lemma}
\label{lemma:basecase}
Suppose $\pi$ is the $x$-threshold word for $\phi_0,\phi_1$ satisfying A1. Then 
\begin{alignat*}{99}
\pi &= 1		  & \ \Leftrightarrow \ x&\le y_1 \\
\pi &= 01 	  & \ \Leftrightarrow \ x&\in [y_{01},y_{10}]\\
\pi &= 0 		  & \ \Leftrightarrow \ x&\ge y_0.
\end{alignat*}
\end{lemma}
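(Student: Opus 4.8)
The plan is to prove the three biconditionals one at a time. In each, the ``$\Leftarrow$'' direction is a forward induction along the $x$-threshold orbit $(x_k)$ that pins down the itinerary letter $\I{x_k\ge x}$ at every position, and the ``$\Rightarrow$'' direction uses the fact that when $\pi$ is one of the periodic words $1$, $01$ or $0$ the orbit is obtained by iterating a single composition, so Lemma~\ref{lemma:limit} makes it converge to the corresponding fixed point and the inequalities $x_k\ge x$ (resp.\ $x_k<x$) then pass to the limit.

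For $\pi=1$: here $x_k=\phi_{1^k}(x)$. If $x\le y_1$ then Lemma~\ref{lemma:incdec} with $w=1$ gives, by induction, $x\le x_k\le y_1$ for all $k$, so every letter is $1$ and $\pi=1$; conversely $\pi=1$ forces $x\le x_k$ for all $k$ and $x_k\to y_1$ by Lemma~\ref{lemma:limit}, hence $x\le y_1$. The case $\pi=0$ is the mirror image: $x_1=\phi_1(x)$ and $x_k=\phi_{0^{k-1}}(\phi_1(x))$ for $k\ge1$, so if $x\ge y_0$ an induction (again via Lemma~\ref{lemma:incdec}, using $\phi_0(y_1)>y_1$ and $\phi_0(x)\le x$) gives $y_1<x_k<x$ for all $k$, while if $\pi=0$ then $x_k<x$ and $x_k\to y_0$ (Lemma~\ref{lemma:limit}) force $x\ge y_0$.

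The interesting case is $\pi=01$. I would first record the conjugacy identities $y_{01}=\phi_1(y_{10})$ and $y_{10}=\phi_0(y_{01})$: from $\phi_1\circ\phi_{10}=\phi_1\circ\phi_0\circ\phi_1=\phi_{01}\circ\phi_1$ the point $\phi_1(y_{10})$ is a fixed point of $\phi_{01}$, hence equals $y_{01}$ by uniqueness (Lemma~\ref{lemma:monexist}), and since $y_{10}>y_1$ by Lemma~\ref{lemma:ypi} contractivity of $\phi_1$ gives $y_{01}=\phi_1(y_{10})<y_{10}$; together with Lemma~\ref{lemma:ypi} this yields $y_1<y_{01}<y_{10}<y_0$, so $[y_{01},y_{10}]$ is a genuine subinterval of $(y_1,y_0)$. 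For ``$\Rightarrow$'': if $\pi=01$ then the even and odd sub-orbits are $x_{2k}=\phi_{(10)^k}(x)\ge x$ and $x_{2k+1}=\phi_{(01)^k}(\phi_1(x))<x$, and Lemma~\ref{lemma:limit} sends them to $y_{10}$ and $y_{01}$, so $y_{01}\le x\le y_{10}$. For ``$\Leftarrow$'': for $x\in[y_{01},y_{10}]$ I would induct along the orbit carrying the invariants $x_{2j-1}\le y_{01}$ and $x\le x_{2j}\le y_{10}$; an odd position gets letter $0$ because $x_{2j-1}\le y_{01}\le x$, an even position gets letter $1$ because $x\le y_{10}$ pushes the even sub-orbit into $[x,y_{10}]$ by Lemma~\ref{lemma:incdec}, and the invariants propagate using $\phi_1(y_{10})=y_{01}$, $\phi_0(y_{01})=y_{10}$ and monotonicity. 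Hence the itinerary is $0,1,0,1,\dots$ and $\pi=01$.

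The one delicate point---and the main obstacle---is the endpoint behaviour of this last induction: at $x=y_{01}$ or $x=y_{10}$ the invariants hold with equality, so one must verify that the \emph{strict} inequality needed to fix each itinerary letter still holds. This is handled by noting that strictness along the orbit is destroyed only when $x=y_{10}$, and that for $x\in\{y_{01},y_{10}\}$ the conjugacy identities make the orbit immediately $2$-periodic on $\{y_{01},y_{10}\}$, for which the alternation $0,1,0,1,\dots$ is transparent. All remaining estimates are routine applications of Lemmas~\ref{lemma:incdec} and~\ref{lemma:limit}.
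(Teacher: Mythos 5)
Your overall strategy is the same as the paper's: Lemma~\ref{lemma:incdec} drives the forward induction for each ``$\Leftarrow$'' direction, and Lemma~\ref{lemma:limit} (or, for the bound $x\le y_{10}$, a single application of Lemma~\ref{lemma:incdec} to the second letter, which is how the paper does it) handles each ``$\Rightarrow$''. Your conjugacy identities $y_{01}=\phi_1(y_{10})$ and $y_{10}=\phi_0(y_{01})$ are correct, as is the ordering $y_1<y_{01}<y_{10}<y_0$.

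The gap is in your resolution of the endpoint case of ``$\Leftarrow$'' for $\pi=01$. You claim that for $x\in\{y_{01},y_{10}\}$ the orbit is immediately $2$-periodic on $\{y_{01},y_{10}\}$. That is true at $x=y_{10}$ (where $x_1=\phi_1(y_{10})=y_{01}$ and $x_2=\phi_0(y_{01})=y_{10}$), but false at $x=y_{01}$: the conjugacy identities say nothing about $\phi_1(y_{01})$, and in fact $\phi_1(y_{01})<y_{01}$ (since $y_{01}>y_1$ by Lemma~\ref{lemma:ypi}, then Lemma~\ref{lemma:incdec}), so $x_1$ lies strictly below the $2$-cycle and the orbit only converges to it. Consequently the case $x=y_{01}$ is not actually covered: your invariant yields only $x_{2j-1}\le y_{01}\le x$, i.e.\ $x_{2j-1}\le x$, whereas letter $0$ requires the strict inequality $x_{2j-1}<x$. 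The fix, which is what the paper does, is to obtain strictness from the very first step rather than from the invariant: for any $x\ge y_{01}$ we have $x>y_1$, hence $\phi_1(x)<x$ strictly, and then for every $j$ one has $x_{2j-1}=\phi_{(01)^{j-1}}(\phi_1(x))<\phi_{(01)^{j-1}}(x)\le x$, using that $\phi_{(01)^{j-1}}$ is increasing and that $x\ge y_{01}$ gives $\phi_{(01)^{j-1}}(x)\le x$ by Lemma~\ref{lemma:incdec}. With that substitution no endpoint needs separate treatment, and the rest of your argument goes through.
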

\begin{proof} 
If $\pi = 1$ then the definition of the $x$-threshold word shows that $\phi_1(x) \ge x$. Therefore Lemma~\ref{lemma:incdec} shows that $x \le y_1$.

If $x\le y_1$ then Lemma~\ref{lemma:incdec} shows that $x\le \phi_{1^n}(x) \le y_1$ for all $n\in \N$. Therefore $\pi = 1$.

If $\pi = 01$ then the next letter of $\pi^\omega$ after $(01)^n$ is $0$ for any $n\in\Z_+$.
So $x > \phi_{(01)^n}(\phi_1(x))$.
Therefore Lemma~\ref{lemma:limit} gives
\begin{align*}
x \ge \lim_{n\rightarrow\infty} \phi_{(01)^n} (\phi_1(x)) = y_{01}.
\end{align*}

If $\pi = 01$ then $\pi_2 = 1$. 
So $x \le \phi_{10}(x)$. 
Therefore Lemma~\ref{lemma:incdec} gives 
\begin{align*}
x \le y_{10} .
\end{align*}

If $x \ge y_{01}$ then $x > y_1$ by Lemma~\ref{lemma:ypi}.
So $\phi_1(x) < x$ by Lemma~\ref{lemma:incdec}.
As $\phi_{01}$ is increasing by Lemma~\ref{lemma:monexist}, it follows that for any $n\in\Z_+$, 
\begin{align*}
\phi_{(01)^n}(\phi_1(x)) < \phi_{(01)^n}(x) \le x 
\end{align*}
where the second inequality follows from Lemma~\ref{lemma:incdec} as $x\ge y_{01}$.
Therefore, if $\pi^\omega$ begins with $(01)^n$ then the next letter is $0$.

If $x \le y_{10}$ then Lemma~\ref{lemma:incdec} shows that $\phi_{(10)^n}(x) \ge x$ for all $n\in\Z_+$.
Therefore, if $\pi^\omega$ begins with $(01)^n0$ then the next letter is $1$.

The proof for the case $\pi=0$ is symmetric to that for $\pi=1$. This completes the proof.
\end{proof}


\begin{lemma}
\label{lemma:0011}
Suppose $\phi_0,\phi_1$ satisfy A1, $x\in\IR$ and $\pi$ is the $x$-threshold word. Then
\begin{align*}
\left\{ \begin{aligned}
\abs{\pi}_{11} &>0 & \ \Rightarrow \ && x &< y_{01} \\
\abs{\pi}_{00} &>0 & \ \Rightarrow \ && x &> y_{10}.
\end{aligned}\right.
\end{align*}
\end{lemma}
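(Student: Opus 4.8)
The plan is to extract from a $11$ (resp.\ $00$) factor of $\pi$ a three-letter window $011$ (resp.\ $100$) occurring inside the periodic word $\pi^\omega$, rephrase this window as a relation among three consecutive points of the $x$-threshold orbit, and close the argument with Lemma~\ref{lemma:incdec} applied to the word $01$ (resp.\ $10$).

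First I would record the link between the word and the orbit. Writing $(x_k : k\in\N)$ for the $x$-threshold orbit, so that $x_1 = \phi_1(x)$ and $x_{k+1} = \phi_{(\pi^\omega)_k}(x_k)$ for all $k$, the relation $\sigma(x|x) = 1\pi(x)^\omega$ together with the definition of the itinerary gives $(\pi^\omega)_k = \I{x_k \ge x}$ for every $k$; that is, letter $k$ of $\pi^\omega$ is $1$ exactly when $x_k \ge x$ and $0$ exactly when $x_k < x$. Now assume $\abs{\pi}_{11} > 0$. Then $\abs{\pi}\ge 2$, and $\pi$ must contain the letter $0$: otherwise every $x_k \ge x$, so the length-one word $1$ already satisfies the relation $x_{k+1}=\phi_{(w^\omega)_k}(x_k)$ defining the $x$-threshold word, forcing $\pi = 1$, a contradiction. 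Since $\pi^\omega$ is periodic, contains a $0$, and contains the factor $11$, a short combinatorial argument produces an index $i\in\N$ with $(\pi^\omega)_i = 0$ and $(\pi^\omega)_{i+1} = (\pi^\omega)_{i+2} = 1$: pick an occurrence of $11$ far enough to the right that the maximal run of $1$'s through it (which has length $\ge 2$ and, since every block of $\abs{\pi}$ consecutive letters of $\pi^\omega$ is a rotation of $\pi$ and hence contains a $0$, has length $< \abs{\pi}$) is immediately preceded by a $0$; take $i$ to be the position of that $0$.

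With such an $i$ the rest is immediate. From $(\pi^\omega)_i = 0$ we get $x_i < x$; the orbit rule gives $x_{i+1} = \phi_0(x_i)$ and $x_{i+2} = \phi_1(\phi_0(x_i)) = \phi_{01}(x_i)$; and $(\pi^\omega)_{i+2} = 1$ gives $x_{i+2}\ge x$. Hence $x_i < x \le \phi_{01}(x_i)$, so $x_i < \phi_{01}(x_i)$, and Lemma~\ref{lemma:incdec} applied to $w = 01$ yields $\phi_{01}(x_i) < y_{01}$; therefore $x \le \phi_{01}(x_i) < y_{01}$, which is the first implication. The case $\abs{\pi}_{00} > 0$ is entirely symmetric: now $\pi \ne 0$ so $\pi$ contains a $1$, one extracts an index $i$ with $(\pi^\omega)_i = 1$ and $(\pi^\omega)_{i+1} = (\pi^\omega)_{i+2} = 0$, which gives $x_i \ge x$ and $x_{i+2} = \phi_0(\phi_1(x_i)) = \phi_{10}(x_i) < x \le x_i$, so by Lemma~\ref{lemma:incdec} applied to $w = 10$ we get $\phi_{10}(x_i) > y_{10}$ and hence $x > y_{10}$.

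The only step calling for care is the purely combinatorial extraction of the window $011$ (or $100$) from $\pi^\omega$, including the bookkeeping that guarantees the extracted index $i$ is a genuine positive position of $\pi^\omega$; I expect this to be routine rather than a real obstacle, since it uses only that $\pi^\omega$ is periodic and contains both a $0$ and a $11$ (resp.\ a $1$ and a $00$). Everything after that is a single application of Lemma~\ref{lemma:incdec}.
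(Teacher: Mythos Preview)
Your argument is correct in spirit and takes a genuinely different route from the paper's. The paper locates a two-letter window $11$ at positions $k,k{+}1$, then invokes Lemma~\ref{lemma:orbit} (which needs the case split $x\le y_1$ versus $x\in(y_1,y_0)$) to bound $x_k<\phi_0(x)$, whence $\phi_{01}(x)>\phi_1(x_k)\ge x$ and Lemma~\ref{lemma:incdec} finishes. You instead locate a three-letter window $011$ and apply Lemma~\ref{lemma:incdec} to the orbit point $x_i$ rather than to $x$: from $x_i<x\le\phi_{01}(x_i)$ you get $x_i<\phi_{01}(x_i)$, hence $\phi_{01}(x_i)<y_{01}$, hence $x<y_{01}$. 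This is cleaner in that it sidesteps Lemma~\ref{lemma:orbit} and the case split entirely.

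There is, however, a small but genuine gap in your extraction step. You justify finding $011$ (and symmetrically $100$) by saying ``$\pi^\omega$ is periodic''; but $\pi$ can be infinite (an aperiodic Sturmian \mword), in which case $\pi^\omega=\pi$ is \emph{not} periodic and your bound ``the run has length $<\abs{\pi}$'' is vacuous. The fix is short: first observe that if $\pi_1=1$ then $\phi_1(x)\ge x$, so $x\le y_1$, and induction gives $x_k\ge x$ for all $k$, forcing $\pi=1$; hence $\abs{\pi}\ge 2$ implies $\pi_1=0$. With $\pi_1=0$ in hand, any $11$ in $\pi$ lies in a run of $1$'s that is bounded on the left within $\pi$, so $011$ appears. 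For the symmetric $100$ extraction the patch is slightly different: if $\pi$ begins with $00$ (the only way $100$ could fail to appear inside $\pi$), then $x_2=\phi_{10}(x)<x$ directly gives $x>y_{10}$ via Lemma~\ref{lemma:incdec}, so you are done without the window. Either way the repair is easy, but the blanket appeal to periodicity as written does not cover the infinite case.
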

\begin{proof}
If $\abs{\pi}_{11}>0$ then $x<y_0$ by Lemma~\ref{lemma:basecase}. 
So, either $x\le y_1$, in which case Lemma~\ref{lemma:ypi} shows that $x<y_{01}$,
or $x\in (y_1,y_0)$. 
In the latter case, let $(x_k : k\in\N)$ be the $x$-threshold orbit.
As $\abs{\pi}_{11}>0$ there exists a $k\in\N$ with $\phi_1(x_k) \ge x$
by definition of the $x$-threshold word.
Now $x_k < \phi_0(x)$ by Lemma~\ref{lemma:orbit}, 
so that $\phi_1(\phi_0(x)) > \phi_1(x_k) \ge x$ as $\phi_1$ is increasing.
But $\phi_{01}(x) > x$ implies that $x < y_{01}$ by Lemma~\ref{lemma:incdec}.

The proof for $\abs{\pi}_{00}>0$ is symmetric. This completes the proof.
\end{proof}

\begin{lemma}
\label{lemma:recursion}
Suppose $\phi_0,\phi_1$ satisfy A1. Then for $x\in [y_{10},y_0]$, 
there is a unique $z\in\IR$ with 
\begin{align*}
\phi_0(z) = x.
\end{align*}
Furthermore 
\begin{align*}
\pi(x,\phi_0,\phi_1) = \begin{cases}
\uuv(\pi(\phi_0^{(-1)}(x),\phi_0,\phi_{01})) & \text{if $x \in [y_{10},y_0]$} \\
\uvv(\pi(x,\phi_{01},\phi_{1})) & \text{if $x \in [y_1, y_{01}]$.}
\end{cases} 
\end{align*}
\end{lemma}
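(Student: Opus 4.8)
The plan is to prove the two assertions in turn: first that $\phi_0$ is invertible on the relevant range, then the morphic recursion, which I would get by showing that the $x$-threshold orbit of $(\phi_0,\phi_1)$ is the letter-by-letter expansion — under $\uuv$ in the first case and under $\uvv$ in the second — of a threshold orbit of the reduced pair $(\phi_0,\phi_{01})$ resp.\ $(\phi_{01},\phi_1)$.

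For the invertibility claim: by Lemma~\ref{lemma:monexist}, $\phi_0$ is continuous, increasing (hence injective) and fixes $y_0$. Setting $u:=\phi_0(y_{01})$ and using that $y_{01}$ is the fixed point of $\phi_{01}=\phi_1\circ\phi_0$, one gets $\phi_{10}(u)=\phi_0(\phi_1(u))=\phi_0(y_{01})=u$, so $u=y_{10}$; that is, $\phi_0$ maps $[y_{01},y_0]$ — which lies in $\IR$ since $y_{01}<y_0$ by Lemma~\ref{lemma:ypi} — continuously and monotonically onto $[y_{10},y_0]$. The intermediate value theorem together with injectivity then gives, for each $x\in[y_{10},y_0]$, a unique $z=\phi_0^{(-1)}(x)\in[y_{01},y_0]\subseteq\IR$. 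I will also record the two order facts that drive the whole argument: $z\le y_0$ forces $z\le\phi_0(z)=x$ by Lemma~\ref{lemma:incdec}, and in the second case $x\le y_{01}<y_{10}$ forces $\phi_{10}(x)\ge x$ by the same lemma.

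Next I would observe that the reduced pairs satisfy A1 — $\phi_{01}$ is increasing and contractive by Lemma~\ref{lemma:monexist}, the fixed points obey $y_1<y_{01}<y_0$ by Lemma~\ref{lemma:ypi}, and the relevant base point lies between the two fixed points of the reduced pair — so Lemma~\ref{lemma:orbit} applies. Then I prove the orbit correspondence by induction. In the first case, put $w:=\pi(z,\phi_0,\phi_{01})$ with $z$-threshold orbit $(z_j)$; since $z_1=\phi_{01}(z)=\phi_1(\phi_0(z))=\phi_1(x)=x_1$, the orbits agree at step~$1$. Assuming $z_j=x_{k_j}$: Lemma~\ref{lemma:orbit} applied to the reduced pair gives $z_j<\phi_0(z)=x$, so the original orbit applies $\phi_0$ and lands at $\phi_0(z_j)$; if $w_j=0$ (i.e.\ $z_j<z$) this is $z_{j+1}$ and the itinerary contributes $0=\uuv(0)$, while if $w_j=1$ (i.e.\ $z_j\ge z$) then $\phi_0(z_j)\ge\phi_0(z)=x$ forces the next action to be $\phi_1$, giving $x_{k_j+2}=\phi_1(\phi_0(z_j))=\phi_{01}(z_j)=z_{j+1}$ and block $01=\uuv(1)$. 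The second case is analogous with $w:=\pi(x,\phi_{01},\phi_1)$: here $z_1=\phi_1(x)=x_1$, and from $z_j=x_{k_j}$ one uses $z_j\ge\phi_1(x)$ (Lemma~\ref{lemma:orbit}) together with $\phi_{10}(x)\ge x$ to see that when $w_j=0$ the original orbit does $\phi_0$ and is then forced into $\phi_1$ (block $01=\uvv(0)$), whereas $w_j=1$ means $z_j\ge x$ so a single $\phi_1$ step suffices (block $1=\uvv(1)$). In both cases the induction yields $\pi(x,\phi_0,\phi_1)^\omega=\uuv(w)^\omega$ resp.\ $\uvv(w)^\omega$, since $\uuv,\uvv$ are non-erasing morphisms.

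To upgrade these identities of infinite words to the stated identities of the (shortest) finite words, I would use that $\uuv$ and $\uvv$ send primitive words to primitive words: a word lies in the image of $\uuv$ iff it begins with $0$ and avoids the factor $11$, and then its unique parsing into blocks $0,01$ recovers the preimage, so $\uuv(w)=u^k$ with $k\ge2$ would force $u$ into the image (its first letter being $0$) and hence $w$ to be a proper power, a contradiction; symmetrically for $\uvv$, whose image is the words ending in $1$ that avoid $00$. As $\pi(z,\phi_0,\phi_{01})$, $\pi(x,\phi_{01},\phi_1)$ and $\pi(x,\phi_0,\phi_1)$ are primitive by their definition as shortest periods, this gives exactly $\pi(x,\phi_0,\phi_1)=\uuv(\pi(\phi_0^{(-1)}(x),\phi_0,\phi_{01}))$ and $\pi(x,\phi_0,\phi_1)=\uvv(\pi(x,\phi_{01},\phi_1))$. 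The main obstacle is the inductive orbit correspondence — concretely, checking that the ``$01$'' blocks are genuinely forced — which is precisely where the order facts $z\le x$ and $x\le y_{10}$ and the a priori orbit bounds of Lemma~\ref{lemma:orbit} are all needed; everything else is bookkeeping with the two morphisms.
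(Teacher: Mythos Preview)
Your proof is correct and follows essentially the same approach as the paper: both establish the invertibility of $\phi_0$ on $[y_{01},y_0]$ via the intermediate value theorem (using $\phi_0(y_{01})=y_{10}$), and both prove the morphic recursion by an induction showing that the $x$-threshold orbit of $(\phi_0,\phi_1)$ synchronises with the reduced threshold orbit, the key step being the a priori bound from Lemma~\ref{lemma:orbit} applied to the reduced pair. You are in fact slightly more careful than the paper, which leaves the $\uvv$ case to a ``similar'' remark and does not explicitly address the passage from equality of $\omega$-powers to equality of the shortest periods; your primitivity argument for $\uuv,\uvv$ fills that gap cleanly.
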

\begin{proof}
In the first claim, existence of $z$ follows from the intermediate value theorem, 
as $\phi_0$ is continuous by Lemma~\ref{lemma:monexist}, 
as $y_{01} \in [y_0,y_1] \subseteq \IR$ by Lemma~\ref{lemma:ypi}, 
as $\phi_0(y_{01}) = y_{10} \le x$ by definition of $y_{01}$,
and as $\phi_0(y_0) = y_0 \ge x$ by definition of $y_0$.
Uniqueness follows as $\phi_0$ is increasing.

Now say $x\in [y_{10},y_0]$ consider the claim involving the morphism $\uuv$.
Let 
\begin{align*}
V := \pi(\phi_0^{(-1)}(x),\phi_0,\phi_{01})^\omega \quad\text{and}\quad W := \pi(x,\phi_0,\phi_1)^\omega.
\end{align*}
We show by induction that the hypothesis
\begin{align*}
H_i \ : \ \text{$\uuv(V_{1:i})$ is a prefix of $W$}
\end{align*}
is true for all $i\in\Z_+$, noting that this proves the claim. 

In the base case, $\uuv(\epsilon)=\epsilon$ is a prefix of $W$, so $H_0$ is true.
For the inductive step, say $H_{i-1}$ is true for some $i\in\N$.
Let $(x_n : n\in\N)$ be the $x$-threshold orbit for $\phi_0,\phi_1$,
let $(\tilde x_n : n\in\N)$ be the $\phi_0^{(-1)}(x)$-threshold orbit for $\psi_0 := \phi_0,\psi_1 := \phi_{01}$
and let $k := \abs{\uuv(V_{1:(i-1)})}+1$.
Then 
\begin{align*}
\tilde x_1 = \psi_1(\phi_0^{(-1)}(x)) = \phi_1(x) = x_1
\end{align*} 
and, letting $\psi_w$ denote the composition of $\psi_0,\psi_1$ corresponding to a word $w$,
\begin{align*}
\tilde x_i &= \psi_{V_{1:(i-1)}}(\tilde x_1) & \text{by definition of $V$} \\
&= \phi_{\uuv(V_{1:(i-1)})}(\tilde x_1) & \text{by definition of $\psi_0,\psi_1$} \\
&= \phi_{W_{1:(k-1)}}(\tilde x_1) & \text{as $H_{i-1}$ is true} \\
&= \phi_{W_{1:(k-1)}}(x_1) & \text{as $\tilde x_1 = x_1$} \\
&= x_k & \text{by definition of $W$.}
\end{align*}
As $x\in [y_{10},y_0]$, we have $\tilde x := \phi_0^{(-1)}(x) \in [y_{01},y_0]$.
Letting $\tilde y_{0}, \tilde y_1$ be the fixed points of $\psi_0,\psi_1$, this reads $\tilde x \in [\tilde y_1, \tilde y_0]$.
But $\psi_0,\psi_1$ satisfy A1, as Lemma~\ref{lemma:monexist} shows that these functions are increasing and contractive,
and Lemma~\ref{lemma:ypi} shows that $\tilde y_1 < \tilde y_0$.
Thus Lemma~\ref{lemma:orbit} shows that
\begin{align*}
\tilde x_i < \psi_0(\tilde x) = \phi_0(\phi_0^{(-1)}(x)) = x .
\end{align*}
But we already showed that $x_k = \tilde x_i$ so this gives $x_k < x$. Therefore $W_k = 0$, by definition of the $x$-threshold word.
If $V_i = 0$ then we can conclude that $H_i$ is true.
Otherwise $V_i = 1$ so that $\tilde x_i \ge \phi_0^{(-1)}(x)$.
But we already showed that $W_k = 0$ and $x_k = \tilde x_i$, so $x_{k+1} = \phi_0(x_k) = \phi_0(\tilde x_i) \ge x$.
Therefore $W_{k+1} = 1$ and we conclude that $H_i$ is true.

The proof for the claim involving $\uvv$ is similar. This completes the proof.
\end{proof}


\begin{lemma}
\label{lemma:word-induction}
Suppose $\phi_0,\phi_1$ satisfy A1, $x\in\IR$ and $0v1$ is a finite word. Then 
\begin{equation*}
\left\{ \begin{aligned}
\pi(\phi_0(x), \phi_0,\phi_1) &= \uuv(0v1) 
& \ \Leftrightarrow \ &&
\pi(x, \phi_0,\phi_{01}) &= 0v1 \\
\pi(x, \phi_0,\phi_1) &= \uvv(0v1) 
& \ \Leftrightarrow \ &&
\pi(x, \phi_{01},\phi_1) &= 0v1. 
\end{aligned}\right.
\end{equation*}
\end{lemma}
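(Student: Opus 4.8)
The plan is to obtain both equivalences directly from the recursion of Lemma~\ref{lemma:recursion}, which already asserts that $\pi(x,\phi_0,\phi_1)=\uuv(\pi(\phi_0^{(-1)}(x),\phi_0,\phi_{01}))$ whenever $x\in[y_{10},y_0]$, and $\pi(x,\phi_0,\phi_1)=\uvv(\pi(x,\phi_{01},\phi_1))$ whenever $x\in[y_1,y_{01}]$. Granting the relevant morphism identity, each equivalence reduces to comparing the two sides and cancelling $\uuv$ or $\uvv$, which is legitimate because both morphisms are injective: a word in the image of $\uuv$ (generated by $0\mapsto 0$, $1\mapsto 01$) is uniquely decodable by scanning from the right, a trailing $1$ being the image of $1$ and a trailing $0$ the image of $0$; a word in the image of $\uvv$ ($0\mapsto 01$, $1\mapsto 1$) is decoded symmetrically from the left. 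So the substance is to verify, in each direction, that the recursion is applied at a point inside the required interval. I would use three facts about the composed pairs repeatedly: $(\phi_0,\phi_{01})$ and $(\phi_{01},\phi_1)$ satisfy Assumption~A1 (by Lemmas~\ref{lemma:monexist} and~\ref{lemma:ypi}), so Lemmas~\ref{lemma:incdec} and~\ref{lemma:basecase} apply to them; $\phi_0$ sends $[y_{01},y_0]$ onto $[y_{10},y_0]$ because $\phi_0(y_{01})=y_{10}$ and $\phi_0(y_0)=y_0$; and $y_{01}<y_{10}$, so the two intervals in Lemma~\ref{lemma:recursion} are disjoint.

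For the first equivalence: ($\Leftarrow$) if $\pi(x,\phi_0,\phi_{01})=0v1$, its leading letter $0$ means the first step of the relevant threshold orbit uses $\phi_0$, i.e.\ $\phi_{01}(x)<x$, so $x>y_{01}$ by Lemma~\ref{lemma:incdec}; as this word is not the single letter $0$, Lemma~\ref{lemma:basecase} applied to $(\phi_0,\phi_{01})$ gives $x<y_0$. Hence $x\in(y_{01},y_0)$, so $\phi_0(x)\in(y_{10},y_0)$, and Lemma~\ref{lemma:recursion} gives $\pi(\phi_0(x),\phi_0,\phi_1)=\uuv(\pi(x,\phi_0,\phi_{01}))=\uuv(0v1)$. ($\Rightarrow$) if $\pi(\phi_0(x),\phi_0,\phi_1)=\uuv(0v1)$, note that every $\uuv$-image of a nonempty word begins with $0$, so $\uuv(0v1)$ begins with $00$; thus $\abs{\pi(\phi_0(x),\phi_0,\phi_1)}_{00}>0$ and Lemma~\ref{lemma:0011} gives $\phi_0(x)>y_{10}$, while $\uuv(0v1)\ne 0$ and Lemma~\ref{lemma:basecase} give $\phi_0(x)<y_0$. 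So $\phi_0(x)\in(y_{10},y_0)$, Lemma~\ref{lemma:recursion} gives $\uuv(\pi(x,\phi_0,\phi_{01}))=\uuv(0v1)$, and injectivity of $\uuv$ yields $\pi(x,\phi_0,\phi_{01})=0v1$.

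The second equivalence is handled the same way with $\uvv$ and the interval $[y_1,y_{01}]$: from $\pi(x,\phi_{01},\phi_1)=0v1$ the leading $0$ gives $\phi_1(x)<x$, hence $x>y_1$ by Lemma~\ref{lemma:incdec}, and the word being $\ne 0$ gives $x<y_{01}$ by Lemma~\ref{lemma:basecase} for $(\phi_{01},\phi_1)$, so $x\in(y_1,y_{01})$ and Lemma~\ref{lemma:recursion} gives $\pi(x,\phi_0,\phi_1)=\uvv(0v1)$; conversely, $\uvv(0v1)$ ends in $11$ (every $\uvv$-image of a nonempty word ends in $1$), so Lemma~\ref{lemma:0011} gives $x<y_{01}$, while the leading $0$ of $\uvv(0v1)$ gives $x>y_1$, so $x\in(y_1,y_{01})$, Lemma~\ref{lemma:recursion} applies, and injectivity of $\uvv$ finishes. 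Note that this argument never uses that $0v1$ is a Christoffel word; if $0v1$ is such that neither side can hold, both sides are simply false and the equivalence is trivial.

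The main obstacle is not conceptual but a matter of careful bookkeeping: keeping straight which of the two intervals $[y_{10},y_0]$, $[y_1,y_{01}]$ is in force, passing between a point $x$ and its image $\phi_0(x)$ via $\phi_0(y_{01})=y_{10}$ and $\phi_0(y_0)=y_0$, and ensuring the strict inclusions obtained actually land inside the interval Lemma~\ref{lemma:recursion} requires. The one step that is not purely mechanical is spotting that $\uuv(0v1)$ always contains the factor $00$ and $\uvv(0v1)$ always contains $11$: this is precisely what lets Lemma~\ref{lemma:0011} pin down the lower end of $[y_{10},y_0]$ in the first case and the upper end of $[y_1,y_{01}]$ in the second.
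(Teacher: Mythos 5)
Your proof is correct and follows essentially the same route as the paper's: establish via Lemmas~\ref{lemma:basecase}, \ref{lemma:0011} and \ref{lemma:incdec} that the relevant point lies in the interior of the interval required by Lemma~\ref{lemma:recursion} (using that $\uuv(0v1)$ contains the factor $00$ and $\uvv(0v1)$ the factor $11$), then apply that recursion and cancel the injective morphism. The only differences are cosmetic: you justify the injectivity of $\uuv$ and $\uvv$ and write out the $\uvv$ case explicitly, both of which the paper leaves implicit.
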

\begin{proof}
Consider the claim involving the morphism $\uuv$.

If $\pi(\phi_0(x), \phi_0,\phi_1) = \uuv(0v1)$ then $\abs{\pi(\phi_0(x), \phi_0,\phi_1)}_{00}>0$,
as $\abs{0v1}_{01}>0$ for any finite word $v$ and $\uuv(01)=001$.
So Lemma~\ref{lemma:0011} shows that $\phi_0(x) > y_{10}$.
Thus $x> y_{01}$.
Also, $\abs{\pi(\phi_0(x), \phi_0,\phi_1)}_1 = \abs{\uuv(0v1)}_1>0$, so Lemma~\ref{lemma:basecase}
shows that $\phi_0(x) < y_0$.
Thus $x< y_{0}$.
Hence Lemma~\ref{lemma:recursion} gives
\begin{align*}
\uuv(0v1) = \uuv(\pi(x, \phi_0,\phi_{01})).
\end{align*}
As $\uuv$ is injective, it follows that
\begin{align*}
0v1 = \pi(x, \phi_0,\phi_{01}).
\end{align*}

If $\pi(x, \phi_0,\phi_{01}) = 0v1$ then $\abs{\pi(x,\phi_0,\phi_{01})}_{0}>0$
and $\abs{\pi(x,\phi_0,\phi_{01})}_{1}>0$.
So Lemma~\ref{lemma:basecase} shows that $x\in (y_{01},y_0)$.
Hence Lemma~\ref{lemma:recursion} gives
\begin{align*}
\pi(\phi_0(x), \phi_0, \phi_1) = \uuv(0v1).
\end{align*} 

The argument for the claim involving $\uvv$ is symmetric.
This completes the proof.
\end{proof}


\begin{lemma}
\label{lemma:Ccase}
Suppose $\phi_0,\phi_1$ satisfy A1 and $0w1$ is a Christoffel word. Then 
\begin{align*}
x\in [y_{01w},y_{10w}] \ \Leftrightarrow \ \pi(x,\phi_0,\phi_1) = 0w1.
\end{align*}
\end{lemma}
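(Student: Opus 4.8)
The plan is to prove this by induction on the depth of the Christoffel word $0w1$ in the Christoffel tree, using Lemma~\ref{lemma:word-induction} for the inductive step and Lemma~\ref{lemma:basecase} for the base case. Every Christoffel word other than $0$ and $1$ arises from the root pair $(0,1)$ by a finite sequence of left/right moves in the tree; equivalently, by the remark following equation~(\ref{eq:precomp}), $0w1$ is obtained from a parent Christoffel word $0v1$ either as the left child (so $0w1 = \uuv(0v1)$, obtained by applying the morphism $\uuv$) or as the right child (so $0w1 = \uvv(0v1)$). I would set up the induction on $\abs{0w1}$, or equivalently on tree-depth, with the claim of the lemma as the inductive hypothesis for all Christoffel words shorter than $0w1$.

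For the base case, the shortest word of the form $0w1$ is $01$ itself (with $w = \epsilon$), and Lemma~\ref{lemma:basecase} directly gives $\pi(x,\phi_0,\phi_1) = 01 \Leftrightarrow x\in[y_{01},y_{10}]$, which is exactly the statement since here $01w = 01$ and $10w = 10$. For the inductive step, suppose $0w1$ is a child of the Christoffel word $0v1$. Consider first the left-child case $0w1 = \uuv(0v1)$. By Lemma~\ref{lemma:word-induction} (the $\uuv$ branch, applied with $z$ such that $\phi_0(z) = x$, whose existence for $x$ in the relevant range comes from Lemma~\ref{lemma:recursion}), we have
\begin{align*}
\pi(x,\phi_0,\phi_1) = \uuv(0v1) \quad\Leftrightarrow\quad \pi(\phi_0^{(-1)}(x),\phi_0,\phi_{01}) = 0v1.
\end{align*}
The pair $\phi_0,\phi_{01}$ satisfies A1 by Lemma~\ref{lemma:monexist} and Lemma~\ref{lemma:ypi}, and $0v1$ is a Christoffel word shorter than $0w1$, so the inductive hypothesis applies to it with these maps: $\pi(\phi_0^{(-1)}(x),\phi_0,\phi_{01}) = 0v1 \Leftrightarrow \phi_0^{(-1)}(x)\in[\tilde y_{01v},\tilde y_{10v}]$, where $\tilde y$ denotes fixed points of compositions of $\phi_0,\phi_{01}$. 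The remaining task is to translate this interval of $\phi_0^{(-1)}(x)$-values back into an interval of $x$-values and identify the endpoints as $y_{01w}$ and $y_{10w}$. This is where the morphism bookkeeping must be made precise: one checks that the word-level identity $\uuv(01v) $ (respectively $\uuv(10v)$) corresponds, after composing maps, to $\phi_0$ applied first, so that $\phi_0$ carries $\tilde y_{01v}$ to $y_{01w}$ and $\tilde y_{10v}$ to $y_{10w}$ — using that fixed points of a conjugated composition are related by the conjugating map and that $\phi_0$ is increasing (hence interval-preserving). The right-child case $0w1 = \uvv(0v1)$ is symmetric, using the $\uvv$ branch of Lemma~\ref{lemma:word-induction} with the pair $\phi_{01},\phi_1$ and no need for the preimage $\phi_0^{(-1)}$.

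The main obstacle I anticipate is precisely the endpoint identification: verifying that under the morphisms $\uuv,\uvv$ the fixed points of the ``child'' compositions $\phi_{01w},\phi_{10w}$ match the images (under $\phi_0$, or directly) of the fixed points of the ``parent'' compositions, so that the intervals glue up correctly. This requires knowing how $\uuv$ and $\uvv$ act on the specific words $01v$, $10v$ (and how that interacts with the recursive substitution $\phi_1 \mapsto \phi_{01}$ or $\phi_0 \mapsto \phi_{01}$ inside the composition), plus the elementary fact that if $\phi_w = \psi \circ \phi_{w'} \circ \psi^{(-1)}$ on the relevant domain then $y_w = \psi(y_{w'})$. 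Establishing these word/composition identities cleanly — probably by a short auxiliary computation with $\uuv(1v1)=01\,\uuv(v)\,01$-type expansions, or by appealing to the homeomorphism between composition and the morphism action already used throughout Appendix~A — is the delicate part; once it is in place, the induction closes immediately.
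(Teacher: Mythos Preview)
Your proposal is correct and follows essentially the same route as the paper: induction on tree-depth with Lemma~\ref{lemma:basecase} as the base case, Lemma~\ref{lemma:word-induction} for the reduction, verification that $(\phi_0,\phi_{01})$ (resp.\ $(\phi_{01},\phi_1)$) satisfies A1 via Lemmas~\ref{lemma:monexist} and~\ref{lemma:ypi}, and then the endpoint identification. For that last step the paper does exactly the ``short auxiliary computation'' you anticipate: writing $0w1=\uuv(0v1)=0\,\uuv(v)\,01$ so that $w=\uuv(v)0$, and then using the cyclic-rotation fact $\phi_0(y_{0u})=y_{u0}$ to get $\phi_0(y_{\uuv(01v)})=\phi_0(y_{001\uuv(v)})=y_{01\uuv(v)0}=y_{01w}$ (and similarly for $10w$), after which monotonicity and continuity of $\phi_0$ close the induction.
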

\begin{proof}
We use induction on the depth of $0w1$ in the Christoffel tree, with hypothesis
\begin{align*}
\hspace{-1.5cm} H_n \ : \ \left\{\begin{array}{l}
 	\text{If $0w1$ is at depth $n$ of the tree and $\phi_0,\phi_1$ satisfy A1, then} \\
		\hspace{2cm} x\in [y_{01w},y_{10w}] \ \Leftrightarrow \ \pi(x,\phi_0,\phi_1) = 0w1. \end{array} \right.
\end{align*}
Lemma~\ref{lemma:basecase} shows that the base case $(H_1)$ with $0w1 = 01$ is true. For the inductive step, let $0w1$ be a word at depth $n+1$ of the tree and assume $H_n$ is true. 
Then either $0w1 = \uuv(0v1)$ or $0w1 = \uvv(0v1)$ for some word $0v1$ which is at depth $n$ of the tree. 
If $0w1 = \uuv(0v1)$ then Lemma~\ref{lemma:word-induction} gives
\begin{align*}
\pi(x , \phi_0,\phi_1) = 0w1 
&\quad\Leftrightarrow\quad
\pi(\phi_0^{(-1)}(x) , \phi_0,\phi_{01}) = 0v1 .
\intertext{Now $\phi_{01}$ is increasing and contractive by Lemma~\ref{lemma:monexist} and
$y_{01} < y_{0}$ by Lemma~\ref{lemma:ypi}.
Thus $\phi_0,\phi_{01}$ satisfy A1. So the assumption that $H_n$ is true shows that}
\pi(\phi_0^{(-1)}(x) , \phi_0,\phi_{01}) = 0v1 
&\quad\Leftrightarrow\quad
\phi_0^{(-1)}(x)\in [y_{\uuv(01v)},y_{\uuv(10v)}] .
\intertext{But as $0w1 = \uuv(0v1) = 0\uuv(v)01$, we have
\begin{align*}
\phi_0(y_{\uuv(01v)}) &= \phi_0(y_{001\uuv(v)}) = y_{01\uuv(v)0} = y_{01w} \\
\phi_0(y_{\uuv(10v)}) &= \phi_0(y_{010\uuv(v)}) = y_{10\uuv(v)0} = y_{10w}.
\end{align*}
As $\phi_0$ is increasing and continuous, it follows that}
\phi_0^{(-1)}(x)\in [y_{\uuv(01v)},y_{\uuv(10v)}] 
&\quad\Leftrightarrow\quad
x \in [y_{01w},y_{10w}].
\end{align*}
Therefore $H_{n+1}$ is true. 

If $0w1 = \uvv(0v1)$ then the proof is similar.
This completes the proof.
\end{proof}


\subsection{$x$-Threshold Words as Sturmian $\mathcal{M}$-Words}
\label{sub:Sturmian-case}
It turns out that Lemma~\ref{lemma:Ccase} characterises $x$-threshold words
for nearly all $x\in (y_1,y_0)$. However, its proof cannot be extended to all values of $x$
as it is based on induction on the depth $n\in\N$ in the Christoffel tree, 
and we need to take the limit as $n\rightarrow\infty$ to address the remaining values of $x$. 
Those remaining values correspond to Sturmian $\mathcal{M}$-words, as we show in this subsection.


First we show how Sturmian $\mathcal{M}$-words can be written as the limit of a sequence of words.
\begin{lemma}
\label{lemma:compose-morphisms}
A word $w$ is a Sturmian $\mathcal{M}$-word if and only if it is of the form 
\begin{align*}
w &= \lim_{n\rightarrow\infty} \morph_1 \circ \morph_2 \circ \cdots \circ \morph_n(01)
\end{align*}
for some sequence of morphisms $(\morph_n : n\in\N)$ with $\morph_n \in \{\uuv, \uvv\}$
which is not eventually constant 
(\textit{i.e.} there is no $m\in\N$ such that $\morph_n = \morph_m$ for all $n\in\N$ with $n > m$).
\end{lemma}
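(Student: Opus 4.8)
```latex
\textbf{Proof proposal.}
The plan is to prove both directions of the equivalence, leaning heavily on the combinatorial characterisation of Sturmian words as infinite words whose every prefix is a factor of some Christoffel word, together with the standard fact (see~\citet{Berstel08,Lothaire02}) that the two ``standard'' morphisms $\uuv$ and $\uvv$ generate exactly the Christoffel pairs through iterated pre-composition, as recorded in the Remark around equation~(\ref{eq:precomp}). I would phrase everything in terms of the Christoffel tree: a path from the root labelled by the sequence of choices $(\morph_n)$ visits the node $(u_n,v_n)$ where $(u_1,v_1)=(0,1)$ and $(u_{n+1},v_{n+1})$ is the left or right child of $(u_n,v_n)$ according to whether $\morph_n = \uuv$ or $\morph_n = \uvv$; by the Remark one has $u_n v_n = \morph_1\circ\cdots\circ\morph_{n-1}(01)$, so the word $\lim_n \morph_1\circ\cdots\circ\morph_n(01)$ is precisely the limit of the Christoffel words along this path.

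For the ``if'' direction, I would first check that the limit $w=\lim_n \morph_1\circ\cdots\circ\morph_n(01)$ actually exists as an infinite word: since each $\morph_k$ is a non-erasing morphism sending $0,1$ to words beginning with $0$ except that $\uvv(1)=1$, a short induction shows $\morph_1\circ\cdots\circ\morph_n(0)$ is a strict prefix of $\morph_1\circ\cdots\circ\morph_{n+1}(0)$ whenever some later morphism is $\uuv$, and symmetrically with the roles of $0,1$; the hypothesis that $(\morph_n)$ is not eventually constant guarantees the lengths of these common prefixes tend to infinity, so the limit is a well-defined infinite word $w$. Then every prefix of $w$ is a prefix of some Christoffel word $u_n v_n$, hence a factor of a Christoffel word, so $w$ is Sturmian; and its rate is $\lim_n \rate(u_n v_n)$, which is irrational precisely because $(\morph_n)$ is not eventually constant (an eventually-constant tail would force the path to converge to a rational endpoint of a subtree, i.e.\ a Christoffel word). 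Identifying this $w$ with the $\mathcal{M}$-word of that irrational rate $\alpha$ uses Lemma~\ref{lemma:nprefix}: the length-$k$ prefix of the $\mathcal{M}$-word of rate $s$ depends only on which Farey interval $s$ lies in, and the rates $\rate(u_n v_n)$ are exactly the Stern--Brocot/Farey convergents bracketing $\alpha$, so the prefixes agree with those of the $\mathcal{M}$-word of rate $\alpha$.

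For the ``only if'' direction, let $0s$ be a Sturmian $\mathcal{M}$-word of irrational rate $\alpha$. I would construct the morphism sequence greedily, exactly as the path through the Christoffel tree described in Section~3 for defining $y_s$: start at the root $(0,1)$, and at stage $n$ set $\morph_n = \uuv$ if the slope of $0w^{(n)}1$ (the current node's concatenation) exceeds $\alpha$, and $\morph_n = \uvv$ otherwise. Because $\alpha$ is irrational the slopes of the visited nodes converge monotonically (from alternating sides) to $\alpha$ but never equal it, so $(\morph_n)$ is never eventually constant; and by the same Lemma~\ref{lemma:nprefix} argument the limit of the $u_n v_n$ has all the same finite prefixes as the $\mathcal{M}$-word of rate $\alpha$, hence equals $0s$. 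I would also remark that distinct non-eventually-constant sequences give distinct irrationals, so the correspondence is a bijection, though only existence is needed for the statement.

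The main obstacle I expect is the bookkeeping around \emph{convergence and prefix stabilisation}: one must be careful that $\lim_n \morph_1\circ\cdots\circ\morph_n(01)$ is genuinely an infinite word (this is where ``not eventually constant'' is essential — a constant tail $\uuv^\omega$ or $\uvv^\omega$ would make the limit a \emph{finite} Christoffel word prefixed to a constant tail, not a Sturmian word), and that the finite prefixes of this limit coincide with those of the intended $\mathcal{M}$-word. Matching up the tree-path slopes with the Farey intervals of Lemma~\ref{lemma:nprefix} is routine once one recalls that the Christoffel tree is the Stern--Brocot tree on rates, but stating it cleanly requires a little care about which child corresponds to increasing versus decreasing the slope. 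Everything else is a direct unwinding of the definitions of morphism, limit of words, and $\mathcal{M}$-word already given in the excerpt.
```
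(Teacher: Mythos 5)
Your proposal is correct and follows exactly the route the paper itself indicates: the paper gives no proof of this lemma, citing it as well known via the isomorphism between the Christoffel tree and the Stern--Brocot tree (Chapter~4 of \citet{Graham94}), and your sketch is a faithful unwinding of that correspondence combined with the Farey-interval prefix argument of Lemma~\ref{lemma:nprefix}. One minor imprecision: the limit $\lim_n \morph_1\circ\cdots\circ\morph_n(01)$ exists for \emph{every} infinite path, including eventually-constant ones (the paper's Remark computes $\uuv^n(01)\rightarrow 0^\omega$ and $\uvv^n(01)\rightarrow 01^\omega$), so the non-eventually-constant hypothesis is needed only to rule out a rational (or non-$\mathcal{M}$-word) limit, not to secure convergence.
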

The above result is well known, for instance, noting the isomorphism of the
Christoffel tree and the Stern-Brocot tree, see Chapter~4 of~\citet{Graham94}.

\begin{remark} 
The requirement that the sequence of morphisms generating Sturmian $\mathcal{M}$-words
is not eventually constant rules out words such as 
\begin{align*}
\lim_{n\rightarrow \infty} \uuv^n(01) = 0^\omega.
\end{align*}
Indeed, this corresponds to the Christoffel word $0$.
It also rules out words such as
\begin{align*}
\lim_{n\rightarrow\infty} \uvv^n(01) = 01^\omega.
\end{align*}
Indeed, this is not an \mword{ }because there is no $\alpha\in [0,1]$ with 
\begin{align*}
\text{$0 = \floor{\alpha n}-\floor{\alpha(n-1)}$ for $n=1$ and
$1 = \floor{\alpha n}-\floor{\alpha(n-1)}$ for $n = 2, 3, \dots$.}
\end{align*}
\end{remark}


Now we show that every $x$-threshold word corresponds to an \mword.
\begin{lemma}
\label{lemma:pi-is-mword}
Suppose $\phi_0,\phi_1$ satisfy A1 and $x\in\IR$. 
Then $\pi(x,\phi_0,\phi_1)$ is an \mword.
\end{lemma}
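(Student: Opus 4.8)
The plan is to split on the nature of the $x$-threshold word $\pi := \pi(x,\phi_0,\phi_1)$, using the earlier structural results. First I would dispose of the three ``extreme'' cases via Lemma~\ref{lemma:basecase}: if $x \le y_1$ then $\pi = 1$, if $x \ge y_0$ then $\pi = 0$, and if $x \in [y_{01}, y_{10}]$ then $\pi = 01$; each of $0$, $1$, $01$ is explicitly an \mword{ }(indeed the three shortest ones, of rates $0$, $1$, $\tfrac12$, as noted in the example after the definition of \mword{s}). So I may assume $x \in (y_1, y_0)$ and, since $x \notin [y_{01}, y_{10}]$, either $x \in (y_1, y_{01})$ or $x \in (y_{10}, y_0)$.

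Next I would handle the rational case. If there exists a Christoffel word $0w1$ with $x \in [y_{01w}, y_{10w}]$, then Lemma~\ref{lemma:Ccase} gives $\pi = 0w1$, which is a Christoffel word hence an \mword{ }by definition. So the only remaining case is that $x$ lies in \emph{none} of the closed intervals $[y_{01w}, y_{10w}]$ associated to Christoffel words $0w1$. For such $x$ I would build a sequence of morphisms $(\morph_n)$ with $\morph_n \in \{\uuv, \uvv\}$ by descending the Christoffel tree: starting from the pair $(0,1)$, at each stage Lemma~\ref{lemma:0011} together with Lemma~\ref{lemma:basecase} forces $x$ into exactly one of the two half-open sub-ranges $(y_1, y_{01})$ or $(y_{10}, y_0)$ (relative to the current conjugated maps), and by Lemma~\ref{lemma:word-induction} (applied recursively, using that the conjugated pairs $\phi_0,\phi_{01}$ and $\phi_{01},\phi_1$ again satisfy A1 by Lemmas~\ref{lemma:monexist} and~\ref{lemma:ypi}) this corresponds to pre-composing with $\uuv$ or $\uvv$ respectively. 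Because $x$ avoids every Christoffel interval, this descent never terminates and the resulting sequence of morphisms is not eventually constant — if it were eventually $\uuv$ (resp.\ $\uvv$), the range would shrink to a point equal to a fixed point $y_{0^\ast}$ (resp.\ $y_{1^\ast}$) of a Christoffel word, contradicting $x \in (y_1, y_0)$ and the avoidance hypothesis (here I would invoke Lemma~\ref{lemma:limit-is-0} to see the ranges shrink). Then Lemma~\ref{lemma:compose-morphisms} identifies $\lim_{n\to\infty} \morph_1 \circ \cdots \circ \morph_n(01)$ as a Sturmian \mword, and a prefix-stability argument (each letter of $\pi^\omega$ is fixed once enough morphisms are applied, since the composed morphisms have prefixes of unbounded length and are prefix-compatible) shows this limit equals $\pi^\omega$, whence $\pi$ is the Sturmian \mword{ }$0s$ with $\pi^\omega = 1(0s)$, or rather $\pi$ itself is the \mword.

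The main obstacle I anticipate is the last step: rigorously passing from the infinite sequence of morphisms to the identification of $\pi$ as the limit word. Lemma~\ref{lemma:word-induction} is an ``if and only if'' about \emph{finite} words $0v1$, so to extract an infinite statement I need to show the finite prefixes $\morph_1 \circ \cdots \circ \morph_n(01)$ are consistent prefixes of $\pi^\omega$ of lengths tending to infinity, and that they genuinely determine $\pi$ (not just some word agreeing with $\pi^\omega$ on a prefix). This requires knowing the morphisms $\uuv,\uvv$ are non-erasing and that $|\morph_1\circ\cdots\circ\morph_n(01)| \to \infty$, which is clear since each morphism at least doubles length in the worst case — but I would want to state it carefully, and also confirm that ``not eventually constant'' is exactly the condition that rules out the degenerate limits $0^\omega$ and $01^\omega$ as flagged in the remark after Lemma~\ref{lemma:compose-morphisms}. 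Everything else is bookkeeping with the already-established lemmas.
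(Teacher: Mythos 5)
Your proposal follows essentially the same route as the paper: dispose of the cases $x\le y_1$, $x\ge y_0$ and the Christoffel intervals via Lemmas~\ref{lemma:basecase} and~\ref{lemma:Ccase}, then descend the Christoffel tree generating a morphism sequence in $\{\uuv,\uvv\}$ and invoke Lemma~\ref{lemma:compose-morphisms} in the non-terminating case. The only real divergence is in ruling out an eventually constant morphism sequence, where the paper argues more directly that the conjugated threshold word at stage $m$ would have to be $0$ or $01^\omega$ (each immediately contradicting Lemma~\ref{lemma:incdec}), rather than your interval-shrinking argument via Lemma~\ref{lemma:limit-is-0}, which would work but needs extra monotonicity bookkeeping.
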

\begin{proof}
Lemma~\ref{lemma:basecase} shows that $\pi(x,\phi_0,\phi_1)$ is an \mword{ }for $x\in\IR\backslash (y_1,y_0)$. 
So, for the rest of this proof we assume that $x\in (y_1,y_0)$.
We shall now define procedure which generates a sequence of morphisms $\morph_k$, thresholds $x_k\in\IR$, mappings $\phi_{k,a} : \IR \rightarrow \IR$ for $a\in\{0,1\}$
and we denote the compositions of those mappings by 
$$\phi_{k,w} = \phi_{k,w_{\abs{w}}} \circ \cdots \phi_{k,w_2} \circ \phi_{k,w_1}$$
for any finite word $w$.
The procedure is as follows:
\begin{enumerate}[leftmargin=2cm,itemsep=-1mm]
\item \textbf{let} $k \leftarrow 1$ 
\item \textbf{let} $(x_k,\phi_{k,0},\phi_{k,1}) \leftarrow (x,\phi_0,\phi_1)$
\item \textbf{let} $y_{k,01}, y_{k,10}$ be the fixed points of $\phi_{k,01}, \phi_{k,10}$
\item \textbf{while} $x_k\notin [y_{k,01},y_{k,10}]$
\item \hspace{1cm} \textbf{if} $x_k<y_{k,01}$
\item \hspace{2cm} \textbf{set} $(x_{k+1},\phi_{k+1,0},\phi_{k+1,1},\morph_k) \leftarrow (x_k,\phi_{k,01},\phi_{k,1},\uvv)$
\item \hspace{1cm} \textbf{else}
\item \hspace{2cm} \textbf{set} $(x_{k+1},\phi_{k+1,0},\phi_{k+1,1},\morph_k) \leftarrow (\phi_{k,0}^{(-1)}(x),\phi_{k,0},\phi_{k,01},\uuv)$
\item \hspace{1cm} \textbf{end}
\item \hspace{1cm} \textbf{let} $k\leftarrow k+1$
\item \hspace{1cm} \textbf{let} $y_{k,01}, y_{k,10}$ be the fixed points of $\phi_{k,01}, \phi_{k,10}$
\item \textbf{end}
\end{enumerate}
The sequence of morphisms $\morph_1, \morph_2, \dots$ generated by this procedure is either empty, of finite length $n\in\N$ or of infinite length.
We consider each of these cases in turn.

If the sequence is empty, then $x\in [y_{1,01},y_{1,10}] = [y_{01},y_{10}]$. So Lemma~\ref{lemma:basecase} shows that 
$\pi(x,\phi_0,\phi_1) = 01$, which is an \mword.

If the sequence has finite length, let $n$ be that length.
As the morphisms $\uuv$ and $\uvv$ generate the Christoffel tree by pre-composition,
as remarked at~(\ref{eq:precomp}), the word
\begin{align*}
w &:= \morph_1 \circ \cdots \circ \morph_n(01)
\end{align*}
is a Christoffel word. 
Also, when the procedure sets $\morph_k = \uvv$ for some $k\in\N$,
we have $x_k < y_{k,01} < y_{k,10}$ so Lemma~\ref{lemma:recursion} 
shows that 
\begin{align*}
\pi(x_k,\phi_{k,0},\phi_{k,1}) = \uvv(\pi(x_{k+1},\phi_{k+1,0},\phi_{k+1,1})).
\end{align*} 
Similarly, when the procedure sets $\morph_k = \uuv$ we have
\begin{align*}
\pi(x_k,\phi_{k,0},\phi_{k,1}) = \uuv(\pi(x_{k+1},\phi_{k+1,0},\phi_{k+1,1})).
\end{align*}
Therefore the $x$-threshold word is $\pi(x,\phi_0,\phi_1) = w$ which is an \mword.

Finally, if the sequence does not terminate, 
then Lemma~\ref{lemma:compose-morphisms} shows that the word
\begin{align*}
w &:= \lim_{n\rightarrow\infty} \morph_1 \circ \cdots \circ \morph_n(01)
\end{align*}
is a Sturmian \mword{ }\textit{provided} there is no $m\in\N$ such that $\morph_{n} = \morph_{m}$ for all $n\in\N$ with $n\ge m$.
Also, the argument given for finite sequences above shows that $w = \pi(x,\phi_0,\phi_1)$.
If there were such an $m$ and $\morph_{m} = \uvv$, then 
\begin{align*}
w &= \morph_1 \circ \cdots \circ \morph_{m-1} \circ \lim_{k\rightarrow\infty} \uvv^k (01) \\
&= \morph_1 \circ \cdots \circ \morph_{m-1} (01^\omega).
\end{align*}
Thus the word at stage $m$ is $\pi(x_{m},\phi_{m,0},\phi_{m,1}) = 01^\omega$.
But this is impossible, as the fact that the first letter is $0$ requires $\phi_{m,1}(x_m)<x_m$, so that $x_m>y_{m,1}$, 
whereas the fact that remaining letters are $1$ requires $\phi_{m,101^n}(x_m)\ge x_m$ for all $n\in \Z_+$, so that $y_{m,1} \ge x_m$,
which is a contradiction.
If there were such an $m$ and $\morph_{m} = \uuv$,
then
\begin{align*}
w &= \morph_1 \circ \cdots \circ \morph_{m-1} \circ \lim_{k\rightarrow\infty} \uuv^k (01) \\
&= \morph_1 \circ \cdots \circ \morph_{m-1} (0^\omega)
\end{align*}
Therefore at stage $m$ the word $\pi(x_{m},\phi_{m,0},\phi_{m,1}) = 0$, but this is impossible as $x_{m}<y_{m,0}$.
In conclusion, $\pi(x,\phi_0,\phi_1)$ is an \mword.

This completes the proof.
\end{proof}


\begin{lemma}
\label{lemma:rate-ordered}
Suppose $0\le \alpha < \beta \le 1$ and let $a,b$ be the \mword{s }of rate $\alpha,\beta$ respectively.
Then $a^\omega \prec b^\omega$.
\end{lemma}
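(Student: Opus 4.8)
The plan is to translate everything into a statement about the cumulative counts of $1$'s in prefixes. If $w$ is the $\mathcal{M}$-word of rate $\alpha$, then the defining relation $(w^\omega)_k=\floor{\alpha k}-\floor{\alpha(k-1)}$ telescopes (using $\floor{\alpha\cdot 0}=0$) to
\begin{align*}
\abs{(w^\omega)_{1:n}}_1=\sum_{k=1}^n(w^\omega)_k=\floor{\alpha n}\qquad(n\in\Z_+).
\end{align*}
So, writing $a$ for the $\mathcal{M}$-word of rate $\alpha$ and $b$ for that of rate $\beta$, I would first record that $\abs{(a^\omega)_{1:n}}_1=\floor{\alpha n}$ and $\abs{(b^\omega)_{1:n}}_1=\floor{\beta n}$ for all $n\in\Z_+$.

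Next I would identify the first position where the two words differ. Since $0\le\alpha<\beta\le 1$ we have $\floor{\alpha n}\le\floor{\beta n}$ for every $n$, and
\begin{align*}
\floor{\beta n}-\floor{\alpha n}\ge\beta n-1-\alpha n=(\beta-\alpha)n-1\longrightarrow\infty,
\end{align*}
so $\floor{\alpha n}<\floor{\beta n}$ for all sufficiently large $n$. Let $m$ be the least $n\in\N$ with $\floor{\alpha m}<\floor{\beta m}$; then $m\ge 1$ and $\floor{\alpha k}=\floor{\beta k}$ for all integers $0\le k<m$.

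Then I would read off the first $m$ letters. For $1\le n<m$, both $n$ and $n-1$ are $<m$, so $(a^\omega)_n=\floor{\alpha n}-\floor{\alpha(n-1)}=\floor{\beta n}-\floor{\beta(n-1)}=(b^\omega)_n$; hence $a^\omega$ and $b^\omega$ share the common prefix $u:=(a^\omega)_{1:m-1}=(b^\omega)_{1:m-1}$ (with $u=\epsilon$ when $m=1$). At position $m$, from $\abs{(a^\omega)_{1:m-1}}_1=\abs{(b^\omega)_{1:m-1}}_1$ and $\abs{(a^\omega)_{1:m}}_1=\floor{\alpha m}<\floor{\beta m}=\abs{(b^\omega)_{1:m}}_1$ one gets $(a^\omega)_m=0$ and $(b^\omega)_m=1$. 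Since $a$ is either a (finite) Christoffel word or an infinite Sturmian $\mathcal{M}$-word, $a^\omega$ is an infinite word, and likewise $b^\omega$; thus $a^\omega=u0\cdots$ and $b^\omega=u1\cdots$ match the second clause of the definition of $\prec$, so $a^\omega\prec b^\omega$.

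This argument is essentially bookkeeping, so there is no serious obstacle; the only points that need a moment's care are verifying that $m$ exists (handled by the elementary estimate above, which also covers the extreme cases $\alpha=0$ and $\beta=1$) and checking that $a^\omega$ and $b^\omega$ are genuinely infinite words so that the lexicographic comparison applies in the form used.
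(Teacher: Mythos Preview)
Your proof is correct and follows essentially the same approach as the paper: both arguments use the telescoping identity $\abs{(w^\omega)_{1:n}}_1=\floor{\alpha n}$, locate the first position where the words differ, and observe that at that position the letter of $a^\omega$ must be $0$ and that of $b^\omega$ must be $1$. Your version is slightly more careful than the paper's in that you explicitly verify such a first position exists (via $(\beta-\alpha)n-1\to\infty$) and that $a^\omega,b^\omega$ are infinite, whereas the paper simply writes ``consider the first $n$ with $(a^\omega)_n\ne(b^\omega)_n$'' and proceeds.
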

\begin{proof}
Consider the first $n\in\N$ with $(a^\omega)_n \ne (b^\omega)_n$. 
Then  
\begin{align*}
\lfloor \alpha (n-1) \rfloor  = \abs{(a^\omega)_{1:(n-1)}}_1 = \abs{(b^\omega)_{1:(n-1)}}_1 = \lfloor \beta (n-1) \rfloor ,
\end{align*}
by definition of $\mathcal{M}$-words, so that
\begin{align*}
(a^\omega)_n = \lfloor \alpha n \rfloor - \lfloor \alpha (n-1) \rfloor 
= \lfloor \alpha n \rfloor - \lfloor \beta (n-1) \rfloor 
< \lfloor \beta n \rfloor - \lfloor \beta (n-1) \rfloor = (b^\omega)_n
\end{align*}
where the inequality holds as $(a^\omega)_n \ne (b^\omega)_n$, as $\alpha< \beta$ and as the floor function is non-decreasing.
Therefore $a^\omega \prec b^\omega$.
\end{proof}


Consider a tree and a node $x$ of the tree. Recall that the \textit{subtree rooted at $x$} is the tree
of all descendents of $x$ that has node $x$ as a root. 
If the tree is a binary tree, the \textit{left subtree of $x$} is the subtree rooted at the left child of $x$,
and the \textit{right subtree of $x$} is the subtree rooted at the right child of $x$.
Thus the subtree rooted at $x$ contains node $x$, but the left and right subtrees of $x$ do not contain node $x$.

\begin{lemma}
\label{lemma:subtree-rate}
Suppose $(u,v)$ is a Christoffel pair. 
Considering $uv$ as a node of the Christoffel tree,
let $l$ and $r$ be Christoffel words in the left and right subtrees of $uv$.
Then
\begin{align*}
\rate(u) < \rate(l) < \rate(uv) < \rate(r) < \rate(v). 
\end{align*}
\end{lemma}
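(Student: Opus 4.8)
The plan is to reduce the whole statement to the \emph{mediant inequality} for rates combined with a downward induction along the Christoffel tree. First I would record the elementary fact that if $p/q < p'/q'$ with $q,q'>0$, then $p/q < (p+p')/(q+q') < p'/q'$. Applied to words, this says that for any finite non-empty words $u,v$ the rate $\rate(uv) = (\abs{u}_1 + \abs{v}_1)/(\abs{u}+\abs{v})$ is the mediant of $\rate(u)$ and $\rate(v)$, hence lies \emph{strictly} between them whenever $\rate(u)\ne\rate(v)$. This strictness is the only place the argument genuinely uses anything nontrivial.

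Next I would show, by induction on depth in the Christoffel tree, that every Christoffel pair $(u,v)$ satisfies $\rate(u)<\rate(v)$. The base case is the root $(0,1)$, where $\rate(0)=0<1=\rate(1)$. For the inductive step the children of $(u,v)$ are $(u,uv)$ and $(uv,v)$, and the mediant inequality applied to $\rate(u)<\rate(v)$ gives $\rate(u)<\rate(uv)<\rate(v)$, which in particular delivers $\rate(u)<\rate(uv)$ for the first child and $\rate(uv)<\rate(v)$ for the second. Note that this paragraph already establishes the middle two inequalities $\rate(u)<\rate(uv)<\rate(v)$ of the Lemma.

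The core step is the claim that for any Christoffel pair $(a,b)$ and any node $(u',v')$ lying in the subtree rooted at $(a,b)$ (including $(a,b)$ itself), one has $\rate(a)\le\rate(u')<\rate(v')\le\rate(b)$. I would prove this by induction on the depth of $(u',v')$ below $(a,b)$: the base case $(u',v')=(a,b)$ is handled by the previous paragraph, and for the inductive step one passes from a node $(u',v')$ satisfying the bound to its children $(u',u'v')$ and $(u'v',v')$, using $\rate(u')<\rate(u'v')<\rate(v')$ (the mediant inequality again, valid since $(u',v')$ is a Christoffel pair) to propagate the outer bounds, which are inherited because $\rate(a)\le\rate(u')$ and $\rate(v')\le\rate(b)$.

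Finally I would assemble the chain. By definition $l=u'v'$ for some node $(u',v')$ in the subtree rooted at the left child $(u,uv)$, so the core claim gives $\rate(u)\le\rate(u')<\rate(u'v')=\rate(l)$ and $\rate(l)=\rate(u'v')<\rate(v')\le\rate(uv)$, hence $\rate(u)<\rate(l)<\rate(uv)$; symmetrically, writing $r$ as the Christoffel word of a node in the subtree rooted at $(uv,v)$ yields $\rate(uv)<\rate(r)<\rate(v)$. Stitching these together with $\rate(u)<\rate(uv)<\rate(v)$ from the second paragraph gives the required five-term chain. The main point to watch, rather than a real obstacle, is the $\le$-versus-$<$ bookkeeping: $l$ may be the word $u\cdot uv=uuv$ sitting at the very root of the left subtree (and $r=uvv$ at the root of the right subtree), and in that borderline case the strictness of $\rate(u)<\rate(l)$ comes entirely from the mediant inequality being strict, not from monotonicity down the tree — so I would keep $u'$ allowed to equal $a$ in the core claim while insisting the mediant step is always strict.
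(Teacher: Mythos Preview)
Your proof is correct and follows essentially the same strategy as the paper: both first establish by induction that every Christoffel pair $(u,v)$ satisfies $\rate(u)<\rate(v)$, and then use the mediant inequality to bound the rate of any descendant. The only packaging difference is that the paper observes directly that any word $l$ in the left subtree of $uv$ is a concatenation of $m\ge 1$ copies of $u$ and $n\ge 1$ copies of $uv$ (in some order), so $\rate(l)=(m\abs{u}_1+n\abs{uv}_1)/(m\abs{u}+n\abs{uv})$ lies strictly between $\rate(u)$ and $\rate(uv)$ by a single weighted-mediant step, whereas you carry the full Christoffel pair $(u',v')$ down the tree step by step. Your version is slightly more verbose but has the advantage of making the strict-versus-non-strict bookkeeping fully explicit.
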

\begin{proof}
First we use induction to show that for any Christoffel pair $(u,v)$, we have
\begin{align}
\label{eq:rateuv}
\rate(u) < \rate(v) .
\end{align}
In the base case $(u,v) = (0,1)$ and (\ref{eq:rateuv}) is true.
For the inductive step, 
say $(u,v)$ is the left child of a Christoffel pair $(a,b)$ with $\rate(a) < \rate(b)$.
Then $(u,v) = (a,ab)$ so that
\begin{align*}
\rate(u) = \frac{\abs{a}_1}{\abs{a}} < \frac{\abs{a}_1+\abs{b}_1}{\abs{a}+\abs{b}} = \rate(v) 
\end{align*}
by the mediant inequality. The proof for right children is similar.
Therefore~(\ref{eq:rateuv}) is true.

As $l$ is in the left subtree of $uv$, it follows from the construction of the Christoffel tree that $l$ consists of $m$ copies of $u$ and $n$ copies of $uv$ concatenated in some order, for some $m,n\in\N$. Thus the mediant inequality and (\ref{eq:rateuv}) give
\begin{align*}
\rate(l) = \rate(u^m(uv)^n) = \frac{m \abs{u}_1 + n \abs{uv}_1}{m \abs{u} + n \abs{uv}} 
\in \left( \frac{\abs{u}_1}{\abs{u}}, \frac{\abs{uv}_1}{\abs{uv}}\right) 
\end{align*}
Therefore $\rate(u) < \rate(l) < \rate(uv)$, as claimed.

The proof for a node $r$ in the right subtree of $uv$ is similar. This completes the proof.
\end{proof}


\begin{lemma}
\label{lemma:a10b}
Suppose $(0a1,0b1)$ is a Christoffel pair. 
Then $a10b = b01a$.
\end{lemma}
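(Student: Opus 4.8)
The plan is to reduce the claim to the fact, recalled in Section~3 following Christoffel, that every Christoffel word other than $0$ and $1$ has the form $0p1$ with $p$ a palindrome. So the strategy has two parts: first verify that $0a1$, $0b1$ and their concatenation are Christoffel words, then read off palindromicity of $a$, $b$ and $a10b$ and finish by reversing.

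First I would record that both components of any Christoffel pair are themselves Christoffel words. This follows by an immediate induction on depth in the Christoffel tree: the root $(0,1)$ has components $0$ and $1$, which are Christoffel words by definition; and if a pair $(u,v)$ has Christoffel-word components, then so do its children $(u,uv)$ and $(uv,v)$, because $u$ and $v$ are Christoffel words by hypothesis while $uv$ is a concatenation of the pair $(u,v)$ in the tree, hence a Christoffel word by definition. Applying this to the given pair $(0a1,0b1)$, the words $0a1$ and $0b1$ are Christoffel words, and therefore so is the concatenation $(0a1)(0b1) = 0a10b1$.

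Next I would note that each of $0a1$, $0b1$ and $0a10b1$ is distinct from $0$ and from $1$: each begins with $0$, ends with $1$, and has length at least two. Hence Christoffel's palindrome characterisation applies to all three, so each can be written as $0(\cdot)1$ with the middle word a palindrome; since the $0(\cdot)1$ decomposition is obtained simply by deleting the first and last letters, it is unique, and therefore the middle words $a$, $b$ and $a10b$ are palindromes.

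Finally I would conclude by reversing. Reversal distributes over concatenation in reversed order, so $(a10b)^R = b^R\,0\,1\,a^R$, and since $a$ and $b$ are palindromes this equals $b01a$; since $a10b$ is itself a palindrome, $a10b = (a10b)^R = b01a$, which is the desired identity. There is no substantive obstacle in this argument; the only point needing a little care is confirming that all three words genuinely fall under the ``$0p1$ with $p$ a palindrome'' characterisation — i.e.\ that none of them is the exceptional word $0$ or $1$ — and that the middle word of such a decomposition is uniquely determined.
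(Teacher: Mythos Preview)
Your proof is correct and follows the same approach as the paper: you use that $0a1$, $0b1$, and their concatenation $0a10b1$ are Christoffel words, deduce that $a$, $b$, and $a10b$ are palindromes, and conclude by reversing. The paper's proof is the single line ``As $a,b,a10b$ are palindromes, we have $a10b=(a10b)^R=b^R01a^R=b01a$,'' so your version simply spells out why those three words are palindromes.
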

\begin{proof}
As $a,b,a10b$ are palindromes, we have
$a10b = (a10b)^R = b^R01a^R = b01a.$
\end{proof}


\begin{lemma}
\label{lemma:prefix-suffix}
Suppose the word $0c1$ is in the subtree of the Christoffel tree rooted at $0p1$. Then $p$ is both a prefix and suffix of $c$.
\end{lemma}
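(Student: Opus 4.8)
The plan is to represent the nodes of the subtree rooted at $0p1$ by means of the morphisms $\uuv$ and $\uvv$, and then argue by induction. Let $\mathscr{M}$ be the morphism attached to the node $0p1$ of the Christoffel tree, so that $\mathscr{M}(0),\mathscr{M}(1)$ are the two words of its Christoffel pair and $\mathscr{M}(01)=0p1$. Reading off the path from the root, the remark around~(\ref{eq:precomp}) shows that $\mathscr{M}$ is a composition of morphisms from $\{\uuv,\uvv\}$ (one per edge, starting from the identity morphism at the root) and that pre-composing with $\uuv$ (resp.\ $\uvv$) passes to the left (resp.\ right) child. Hence every node in the subtree rooted at $0p1$ has word $\mathscr{M}(w)$, where $w$ is the word of the node reached from the root along the corresponding relative path; for the empty relative path $w=01$, and otherwise $w$ is a node label of the Christoffel tree, which (as the root label $01$ begins with $0$ and ends with $1$, and the operations $u,v\mapsto u(uv)$ and $u,v\mapsto(uv)v$ preserve the first and last letters) begins with $0$ and ends with $1$. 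So the Lemma reduces to the following claim, applied with this $\mathscr{M}$.

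\smallskip
\noindent\textit{Claim.} Let $\mathscr{M}=\mathscr{N}_1\circ\cdots\circ\mathscr{N}_m$ be a composition of $m\ge0$ morphisms from $\{\uuv,\uvv\}$, let $w$ be a word beginning with $0$ and ending with $1$, and write $\mathscr{M}(01)=0p1$ and $\mathscr{M}(w)=0c1$. Then $p$ is both a prefix and a suffix of $c$.
\smallskip

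I would prove the Claim by induction on $m$, using only these elementary consequences of the definitions $\uuv:0\mapsto0,\,1\mapsto01$ and $\uvv:0\mapsto01,\,1\mapsto1$: for every non-empty word $v$, the word $\uuv(v)$ begins with $0$ and the word $\uvv(v)$ ends with $1$; and each of $\uuv,\uvv$ sends a word beginning with $0$ to a word beginning with $0$, and a word ending with $1$ to a word ending with $1$. The base case $m=0$ is immediate, since then $0p1=01$ and $p=\epsilon$. For $m\ge1$ write $\mathscr{M}=\mathscr{N}_1\circ\mathscr{M}'$ with $\mathscr{M}'=\mathscr{N}_2\circ\cdots\circ\mathscr{N}_m$, and set $0p'1:=\mathscr{M}'(01)$ and $0c'1:=\mathscr{M}'(w)$ (these have the displayed form by the elementary facts); the induction hypothesis gives words $\alpha,\beta$ with $c'=p'\alpha=\beta p'$. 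If $\mathscr{N}_1=\uuv$, then applying $\uuv$ and deleting the outer letters gives $p=\uuv(p')\,0$ and $c=\uuv(c')\,0$: from $c'=p'\alpha$ we get $c=\uuv(p')\uuv(\alpha)\,0$, so since $\uuv(\alpha)$ is empty or begins with $0$, the word $p$ is a prefix of $c$; from $c'=\beta p'$ we get $c=\uuv(\beta)\,\uuv(p')\,0=\uuv(\beta)\,p$, so $p$ is a suffix of $c$. If $\mathscr{N}_1=\uvv$, then likewise $p=1\,\uvv(p')$ and $c=1\,\uvv(c')$: from $c'=p'\alpha$ we get $c=1\,\uvv(p')\uvv(\alpha)=p\,\uvv(\alpha)$, so $p$ is a prefix of $c$; from $c'=\beta p'$ we get $c=1\,\uvv(\beta)\,\uvv(p')$, so since $1\,\uvv(\beta)$ equals $1$ or ends with $1$, the word $p=1\,\uvv(p')$ is a suffix of $c$. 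This proves the Claim, and hence the Lemma.

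The only step that needs care is the bookkeeping of which extreme letter each morphism contributes (after stripping outer letters, $\uuv$ appends the deleted $0$ on the right of both $p$ and $c$, whereas $\uvv$ prepends the deleted $1$ on the left); once that is pinned down, each of the two induction cases is a one-line verification and the base case is trivial. It is worth noting that the naive guess — that $0p1$ is itself a prefix, or a suffix, of $0c1$ — is false in general, which is precisely why the argument is run through the morphism representation rather than directly on the tree.
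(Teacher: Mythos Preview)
Your proof is correct, and it takes a genuinely different route from the paper's.

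The paper fixes $0p1$ and inducts on the length of the path from $0p1$ down to $0c1$. In the inductive step it writes the parent of $0c1$ as a Christoffel pair $(0a1,0b1)$ and uses the palindrome identity of Lemma~\ref{lemma:a10b}, namely $a10b=b01a$, to show that the property propagates to each child. Because that step presupposes the parent pair has the shape $(0a1,0b1)$, the paper first disposes of the boundary cases where one component of the pair is a single letter (the $0^m$, $0^m10^m$, $1^m$, $1^m01^m$ situations) by hand.

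You instead fix the relative position of $0c1$ (encoded by the word $w$) and induct on the number $m$ of morphisms composing $\mathscr{M}$, i.e.\ on the depth of $0p1$ itself. Your argument uses nothing beyond the letter-by-letter behaviour of $\uuv$ and $\uvv$ and the trivial bookkeeping $\uuv(0x1)=0\,\uuv(x)\,01$, $\uvv(0x1)=01\,\uvv(x)\,1$; in particular it avoids Lemma~\ref{lemma:a10b} and hence does not rely on the palindromic structure of Christoffel words. A pleasant consequence is that the edge cases the paper singles out are absorbed automatically, and your Claim even holds for arbitrary $w$ starting with $0$ and ending with $1$, not just Christoffel words. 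The trade-off is that your proof is less transparent about \emph{why} the subtree rooted at $0p1$ looks like ``copies of $p$ glued together,'' which the paper's use of $a10b=b01a$ makes explicit; but as a proof of the stated lemma yours is cleaner.
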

\begin{proof}
There are four cases to consider:
\begin{enumerate}
\item The word $0c1$ has no parent, in which case $c = p = \epsilon$ and the claim holds.
\item The word $c$ is of one of the forms $0^m$ or $0^m10^m$ for some $m\in\N$. 
In that case either $p=c$ or $p = 0^l$ for some $l\in\Z_+$ with $l<m$ and the claim holds.
\item The word $c$ is of one of the forms $1^m$ or $1^m01^m$ for some $m\in\N$. This is similar to the previous case.
\item The Christoffel pair of the parent of $0c1$ is of the form $(0a1,0b1)$.
\end{enumerate}

In the last case, we use induction on the length of the path $n\in\Z_+$ through the Christoffel tree from $0p1$ to $0c1$. 
In the base case, $n=0$, we have $c = p$ and the claim is true.
For the inductive step, say $0c1$ is a child of the node with Christoffel pair $(0a1,0b1)$,
that $0a10b1$ is $n$ steps along the path from $0p1$,
and that $p$ is both a prefix and suffix of $a10b$, so that $a10b = pq = rp$ for some words $q,r$.
If $0c1$ is a left-child, then $c = a10a10b = a10rp$ and Lemma~\ref{lemma:a10b} gives
\begin{align*}
c = a10a10b = a10b01a = pq01a
\end{align*}
so $p$ is a prefix and suffix of $c$. Similarly, if $0c1$ is a right child, then $c = a10b10b$ and 
\begin{align*}
pq10b = a10b10b = b01a10b = b01rp .
\end{align*}
This completes the proof.
\end{proof}


\begin{lemma}
\label{lemma:pi-is-decreasing}
Suppose $\phi_0,\phi_1$ satisfy A1. Then 
\begin{align*}
\text{$\pi(x,\phi_0,\phi_1)^\omega$ is a lexicographically non-increasing function of $x\in\IR$.}
\end{align*}
\end{lemma}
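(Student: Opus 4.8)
The plan is to show that for any $x < y$ in $\IR$ we have $\pi(x,\phi_0,\phi_1)^\omega \succeq \pi(y,\phi_0,\phi_1)^\omega$. First I would dispose of the extreme cases using Lemma~\ref{lemma:basecase}: if $y \le y_1$ both words are $1^\omega$, if $x \ge y_0$ both are $0^\omega$, and when one of $x,y$ lies outside $(y_1,y_0)$ while the other is inside, the word $1^\omega$ (respectively $0^\omega$) is lexicographically maximal (respectively minimal) among all $\mathcal{M}$-words by Lemma~\ref{lemma:rate-ordered}, since every $\mathcal{M}$-word that is neither $0$ nor $1$ begins with $01$. So it suffices to treat $x,y \in (y_1,y_0)$.

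For $x,y \in (y_1,y_0)$, by Lemma~\ref{lemma:pi-is-mword} both $\pi(x)$ and $\pi(y)$ are $\mathcal{M}$-words, hence have well-defined rates. The key reduction is: if $\rate(\pi(x)) > \rate(\pi(y))$ then $\pi(x)^\omega \succ \pi(y)^\omega$ by Lemma~\ref{lemma:rate-ordered}, and if $\rate(\pi(x)) < \rate(\pi(y))$ then $\pi(x)^\omega \prec \pi(y)^\omega$ — but I will rule the latter out. The remaining case is $\rate(\pi(x)) = \rate(\pi(y))$; if this common rate is rational then both words are the \emph{same} Christoffel word (an $\mathcal{M}$-word is uniquely determined by its rate), so $\pi(x)^\omega = \pi(y)^\omega$ and there is nothing to prove; if the common rate is irrational, both are the same Sturmian $\mathcal{M}$-word and again the orbit words agree. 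Thus everything comes down to the monotonicity of the rate, i.e.\ showing $\rate(\pi(\cdot))$ is a non-increasing function of $x$ on $(y_1,y_0)$.

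I expect the rate-monotonicity to be the main obstacle, and the natural tool is the recursion of Lemma~\ref{lemma:recursion} together with the characterisation of Christoffel-word intervals in Lemma~\ref{lemma:Ccase}. Concretely: for any Christoffel word $0w1$, Lemma~\ref{lemma:Ccase} says $\pi(x,\phi_0,\phi_1) = 0w1$ exactly on the closed interval $[y_{01w}, y_{10w}]$, and by Lemma~\ref{lemma:integrated}(5) (or directly from Lemma~\ref{lemma:incdec} applied to $\phi_{01w}$ and $\phi_{10w}$) these intervals have disjoint interiors and $y_{01w} < y_{10w}$. Using the structure of the Christoffel tree, as $x$ decreases through $(y_1,y_0)$ the word $\pi(x)$ moves from the rightmost branch ($0^k$-type words, large rate near $1$ as $x \to y_1^+$) toward the leftmost branch (small rate near $0$ as $x \to y_0^-$) — wait, more carefully: as $x$ increases, $\pi(x)$ must move \emph{down} in rate. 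I would establish this by induction on the depth in the Christoffel tree: the base case $\pi = 01$ splits $(y_1,y_0)$ into the left part $(y_1, y_{01})$ where $\pi$ has more $1$'s (from $\uvv$) and the right part $(y_{10}, y_0)$ where $\pi$ has more $0$'s (from $\uuv$), and Lemma~\ref{lemma:recursion} shows $\pi(x,\phi_0,\phi_1) = \uvv(\pi(x,\phi_{01},\phi_1))$ on the left and $\uuv(\pi(\phi_0^{(-1)}(x),\phi_0,\phi_{01}))$ on the right, where $\phi_0^{(-1)}$ is increasing; then $\rate(\uvv(\cdot)) \le \rate(01) \le \rate(\uuv(\cdot))$ by the subtree-rate bounds of Lemma~\ref{lemma:subtree-rate}, and within each half I apply the inductive hypothesis to the rescaled system (which satisfies A1 by Lemmas~\ref{lemma:monexist} and~\ref{lemma:ypi}). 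Finally, at the countably many "boundary" points $x = y_{10w}$ (or the Sturmian fixed points), continuity/closedness of the intervals forces the rate to be non-increasing across the whole of $(y_1,y_0)$; combining this with the rate-to-lexicographic comparison above completes the proof.
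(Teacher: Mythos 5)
Your plan takes a genuinely different --- and much heavier --- route than the paper's. The paper proves the lemma in three lines by contraposition: if $\pi(a,\phi_0,\phi_1)^\omega \succ \pi(b,\phi_0,\phi_1)^\omega$, the two infinite words share a finite common prefix $u$ after which the first has a $1$ and the second a $0$; by the definition of the threshold orbit this says $\phi_{1u}(a)\ge a$ while $\phi_{1u}(b)<b$, and Lemma~\ref{lemma:incdec} converts these into $a\le y_{1u}<b$, hence $a<b$. No rates, no Christoffel tree, no case analysis. Your route instead converts lexicographic comparison into rate comparison (via Lemma~\ref{lemma:rate-ordered} and the fact that an \mword{ }is determined by its rate) and then tries to prove that $x\mapsto\rate(\pi(x))$ is non-increasing by induction on the Christoffel tree. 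That reduction is sound as far as it goes, and the order-preservation of the recursion (the threshold is unchanged in the $\uvv$ branch and transformed by the increasing map $\phi_0^{(-1)}$ in the $\uuv$ branch) is the right observation for the inductive step.

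The gap is in closing the induction. An induction on depth only ever compares points lying in Christoffel intervals reached at finite depth; the points whose paths through the tree are infinite (the Sturmian thresholds) are not covered by any finite stage, and your appeal to ``continuity/closedness of the intervals'' at those points is not an argument. Worse, the natural way to patch it --- invoking the characterisation of which thresholds yield Sturmian words and how the fixed points $y_{01l^{(n)}}, y_{10u^{(n)}}$ nest around them --- is circular here: in the paper those facts (Lemmas~\ref{lemma:ylyu} and~\ref{lemma:Scase}) are proved \emph{using} the present lemma. A non-circular repair is possible (for $x<y$ whose tree paths diverge at some finite node, order-preservation of the recursion forces $x$ into the higher-rate subtree and $y$ into the lower-rate one; if the paths never diverge the words coincide), but you would need to state and prove that dichotomy explicitly. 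Given that the direct argument via Lemma~\ref{lemma:incdec} avoids all of this, I would abandon the rate route. A small additional slip: under A1 alone the inequality $y_{01w}<y_{10w}$ should be obtained from Lemmas~\ref{lemma:phi0110} and~\ref{lemma:incdec}, not from Lemma~\ref{lemma:integrated}, which is proved only for the M{\"o}bius maps of Condition~D.
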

\begin{proof}
If $a,b\in\IR$ and $\pi(a,\phi_0,\phi_1)^\omega \succ \pi(b,\phi_0,\phi_1)^\omega$, then there is a finite word $u$ such that
\begin{align*}
\pi(a,\phi_0,\phi_1)^\omega &= u1v, & \pi(b,\phi_0,\phi_1)^\omega &= u0w,
\intertext{for some words $v,w$. So the definition of threshold orbits and Lemma~\ref{lemma:incdec} give}
y_{1u}\ge\phi_{1u}(a)&\ge a, & y_{1u}<\phi_{1u}(b)&< b.
\end{align*}
Therefore $a < b$. This completes the proof.
\end{proof}


In the main text, we defined the fixed point $y_s$ of a Sturmian $\mathcal{M}$-word as the limit
of a sequence of fixed points $y_{01w^{(n)}}$ or $y_{10w^{(n)}}$ where the words $(0w^{(n)}1 : n\in\N)$ correspond
to a particular path in the Christoffel tree. 
We now define a subsequence associated with each of these sequences of words and fixed points. 
Consider a Sturmian $\mathcal{M}$-word 
\begin{align*}
0s = \lim_{n\rightarrow\infty} \uuv^{a_1} \circ \uvv^{b_1} \circ \cdots \circ \uuv^{a_n} \circ \uvv^{b_n}(01) 
\end{align*}
where $a_1\in\Z_+$ and $a_{n+1}, b_n \in \N$ for $n\in\N$.
We define the sequences $(u^{(n)} : n\in\N)$ and $(l^{(n)} : n\in\N)$ of the central portions of the Christoffel words as
\begin{align}
\label{def:ys}
0u^{(n)}1 &:= \uuv^{a_1} \circ \uvv^{b_1} \circ \cdots \circ \uuv^{a_n} (01), &
0l^{(n)}1 &:=  \uuv^{a_1} \circ \uvv^{b_1} \circ \cdots \circ \uuv^{a_n} \circ \uvv^{b_n}(01) .
\end{align}
These words form a subsequence of $(0w^{(n)}1 : n\in\N)$, so if $\lim_{n\rightarrow\infty} y_{01w^{(n)}}$
exists, then so does $\lim_{n\rightarrow\infty} y_{01l^{(n)}}$ and these limits are equal.


\begin{lemma}
\label{lemma:ylyu}
Suppose A1 holds, $n\in\N$ and that $0u^{(n)}1$ and $0l^{(n)}1$ are as in (\ref{def:ys}). Then 
\begin{align*} 
y_{01l^{(n)}} < y_{01l^{(n+1)}} < y_{10u^{(n+1)}} < y_{10u^{(n)}}.
\end{align*}
\end{lemma}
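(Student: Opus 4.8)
The plan is to derive all four inequalities from a single principle: each of $y_{01l^{(n)}},y_{01l^{(n+1)}},y_{10u^{(n+1)}},y_{10u^{(n)}}$ is an endpoint of the threshold interval of a Christoffel word lying on the path through the Christoffel tree that defines $0s$, and the relative position of two such intervals is dictated by the rates of the associated words together with the fact that $x\mapsto\pi(x,\phi_0,\phi_1)^\omega$ is lexicographically non-increasing (Lemma~\ref{lemma:pi-is-decreasing}).

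First I would locate the nodes $0u^{(n)}1$ and $0l^{(n)}1$ in the tree. By the precomposition rule recorded at~(\ref{eq:precomp}), precomposing a node's morphism with $\uuv$ yields its left child and with $\uvv$ its right child, so reading $0u^{(n)}1=\uuv^{a_1}\circ\uvv^{b_1}\circ\cdots\circ\uuv^{a_n}(01)$ and $0l^{(n)}1=\uuv^{a_1}\circ\uvv^{b_1}\circ\cdots\circ\uuv^{a_n}\circ\uvv^{b_n}(01)$ as paths from the root -- the first step being the outermost morphism block -- and using $a_{n+1},b_n,b_{n+1}\in\N$, I obtain that $0l^{(n+1)}1$ lies in the left subtree of $0l^{(n)}1$, that $0l^{(n+1)}1$ lies in the right subtree of $0u^{(n+1)}1$, and that $0u^{(n+1)}1$ lies in the right subtree of $0u^{(n)}1$. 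Each of these three pairs of words is distinct, since $b_n,a_{n+1},b_{n+1}\ge1$ makes the second word a proper descendant of the first. Lemma~\ref{lemma:subtree-rate} then gives $\rate(0l^{(n+1)}1)<\rate(0l^{(n)}1)$, $\rate(0l^{(n+1)}1)>\rate(0u^{(n+1)}1)$ and $\rate(0u^{(n+1)}1)>\rate(0u^{(n)}1)$, and since a Christoffel word is the $\mathcal{M}$-word of its rate, Lemma~\ref{lemma:rate-ordered} upgrades these to $(0l^{(n)}1)^\omega\succ(0l^{(n+1)}1)^\omega$, $(0l^{(n+1)}1)^\omega\succ(0u^{(n+1)}1)^\omega$ and $(0u^{(n+1)}1)^\omega\succ(0u^{(n)}1)^\omega$.

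To finish, I would pass back to the thresholds. By Lemma~\ref{lemma:Ccase} the threshold interval of a Christoffel word $0w1$ is exactly $[y_{01w},y_{10w}]$, so both of its endpoints are thresholds with $\pi(\cdot,\phi_0,\phi_1)$ equal to $0w1$; in particular $\pi(y_{01l^{(n)}})=0l^{(n)}1$, $\pi(y_{01l^{(n+1)}})=0l^{(n+1)}1$, $\pi(y_{10u^{(n+1)}})=0u^{(n+1)}1$ and $\pi(y_{10u^{(n)}})=0u^{(n)}1$. Since $x\mapsto\pi(x,\phi_0,\phi_1)^\omega$ is lexicographically non-increasing (Lemma~\ref{lemma:pi-is-decreasing}), its contrapositive asserts that whenever the $\pi$-word of one point has a strictly lexicographically greater $\omega$-power than that of another, the first point is strictly smaller. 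Applying this to the three comparisons above yields $y_{01l^{(n)}}<y_{01l^{(n+1)}}$, $y_{01l^{(n+1)}}<y_{10u^{(n+1)}}$ and $y_{10u^{(n+1)}}<y_{10u^{(n)}}$, which is the claim.

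I expect the only genuine obstacle to be the bookkeeping of tree positions in the second step: one must be careful that the composition of morphisms reads the path from the root outward, so that the outermost block $\uvv^{b_n}$ sends $0u^{(n)}1$ to a node of its right subtree (namely $0l^{(n)}1$) rather than the reverse, and one should check that the admissible edge case $a_1=0$, in which $0u^{(1)}1$ is the root itself, disturbs none of these relations. All the analytic content needed -- existence and continuity of the compositions $\phi_w$ and their fixed points, and the identification of threshold intervals -- is already supplied by Lemmas~\ref{lemma:monexist} and~\ref{lemma:Ccase}.
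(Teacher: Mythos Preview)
Your proposal is correct and follows essentially the same route as the paper: locate the three relevant subtree containments ($0l^{(n+1)}1$ in the left subtree of $0l^{(n)}1$, $0l^{(n+1)}1$ in the right subtree of $0u^{(n+1)}1$, $0u^{(n+1)}1$ in the right subtree of $0u^{(n)}1$), compare rates via Lemma~\ref{lemma:subtree-rate}, pass to lexicographic order via Lemma~\ref{lemma:rate-ordered}, identify the $\pi$-words at the interval endpoints via Lemma~\ref{lemma:Ccase}, and conclude using the contrapositive of Lemma~\ref{lemma:pi-is-decreasing}. The only cosmetic difference is that the paper writes $\pi(y_{10l^{(n)}})$ where you write $\pi(y_{01l^{(n)}})$, which is immaterial since Lemma~\ref{lemma:Ccase} makes both endpoints yield the same Christoffel word.
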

\begin{proof}
By definition, $0l^{(n+1)}1$ is in the left subtree of $0l^{(n)}1$,
so Lemma~\ref{lemma:subtree-rate} gives 
\begin{align*}
\rate(0l^{(n+1)}1) < \rate(0l^{(n)}1).
\end{align*}
Hence Lemma~\ref{lemma:Ccase} and Lemma~\ref{lemma:rate-ordered} give 
\begin{align*}
\pi(y_{01l^{(n+1)}})^\omega = (0l^{(n+1)}1)^\omega \prec (0l^{(n)}1)^\omega = \pi(y_{10l^{(n)}})^\omega .
\end{align*}
Therefore Lemma~\ref{lemma:pi-is-decreasing} shows that 
\begin{align*}
y_{01l^{(n)}} < y_{01l^{(n+1)}}.
\end{align*}

But $0l^{(n+1)}1$ is in the right subtree of $0u^{(n+1)}1$. So the same argument gives 
\begin{align*}
y_{01l^{(n+1)}} < y_{10u^{(n+1)}}.
\end{align*}
Similarly, $0u^{(n+1)}1$ is in the right subtree of $0u^{(n)}1$, so 
\begin{align*}
y_{10u^{(n+1)}} < y_{10u^{(n)}}.
\end{align*}
This completes the proof.
\end{proof}

Note that the argument in the above proof also shows that any word $0w^{(m)}1$ lying strictly between $0u^{(n)}1$ and $0l^{(n)}1$ on the path through the Christoffel tree from $0u^{(n)}1$ to $0l^{(n)}1$
has 
\begin{align*}
y_{10l^{(n)}} < y_{01w^{(m)}} < y_{10w^{(m)}} < y_{01u^{(n)}}.
\end{align*}
Similarly, any word $0w^{(m)}1$ lying strictly between $0l^{(n)}1$ and $0u^{(n+1)}1$ 
on the path from $0l^{(n)}1$ to $0u^{(n+1)}1$ 
has 
\begin{align*}
y_{10l^{(n)}} < y_{01w^{(m)}} < y_{10w^{(m)}} < y_{01u^{(n+1)}}.
\end{align*}
Thus if the subsequences $(y_{01l^{(n)}} : n\in\N)$ and $(y_{10u^{(n)}} : n\in\N)$ have limits, 
then so do the full sequences $(y_{01w^{(m)}} : m\in\N)$ and $(y_{10w^{(m)}} : m\in\N)$.


\begin{lemma}
\label{lemma:suffixword}
Suppose $(0w^{(n)}1 : n \in \N)$ is the sequence of words traversed along an infinite path down the Christoffel tree.
Let $(n_i : i \in\N)$ be an increasing sequence on $\N$.
Then there exists an infinite word $s$ and an increasing sequence $(k_i : i\in\N)$ on $\N$ such that
$s_{1:k_i}$ is a suffix of both $w^{(n_i)}$ and $w^{(n_{i+1})}$, for all $i\in\N$.
\end{lemma}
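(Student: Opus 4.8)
The plan is to read essentially everything off Lemma~\ref{lemma:prefix-suffix}. Since $0w^{(1)}1, 0w^{(2)}1, \dots$ all lie on one infinite downward path, for every $m \le n$ the word $0w^{(n)}1$ lies in the subtree of the Christoffel tree rooted at $0w^{(m)}1$; hence Lemma~\ref{lemma:prefix-suffix} gives that $w^{(m)}$ is simultaneously a prefix \emph{and} a suffix of $w^{(n)}$ whenever $m \le n$. Applying this along the subsequence $(n_i)$, the word $w^{(n_i)}$ is both a prefix and a suffix of $w^{(n_{i+1})}$ for every $i$.

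Next I would build $s$ and $(k_i)$ explicitly. Because the children of a node $(u,v)$ of the Christoffel tree are $(u,uv)$ and $(uv,v)$, the length of the Christoffel word strictly increases at each step down the path, so the lengths $|w^{(n_1)}| < |w^{(n_2)}| < \cdots$ strictly increase without bound. Since $w^{(n_i)}$ is a prefix of $w^{(n_{i+1})}$ for all $i$, the sequence $( w^{(n_i)} : i\in\N )$ therefore converges letter-by-letter to a well-defined infinite word, which I take as $s := \lim_{i\to\infty} w^{(n_i)}$. Put $k_i := |w^{(n_i)}|$, which is a strictly increasing sequence on $\N$. As $w^{(n_i)}$ is a prefix of $s$ of length $k_i$, we have $s_{1:k_i} = w^{(n_i)}$.

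It then only remains to check the suffix property: $s_{1:k_i} = w^{(n_i)}$ is trivially a suffix of $w^{(n_i)}$, and it is a suffix of $w^{(n_{i+1})}$ by the first paragraph (the case $m = n_i \le n = n_{i+1}$ of Lemma~\ref{lemma:prefix-suffix}). This gives the lemma.

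There is no genuinely hard step here; the whole content is the prefix–suffix structure of Lemma~\ref{lemma:prefix-suffix}. The only points needing a moment's attention are that $(k_i)$ really is increasing and that $s$ really is a well-defined infinite word — both immediate from the strict growth of Christoffel-word lengths along a path — together with the observation that the two requirements ``$s_{1:k_i}$ is a suffix of $w^{(n_i)}$'' and ``$s_{1:k_i}$ is a suffix of $w^{(n_{i+1})}$'' are not independent: once $s_{1:k_i}$ is a suffix of $w^{(n_i)}$ and $w^{(n_i)}$ is itself a suffix of $w^{(n_{i+1})}$, the second follows automatically from the nesting of suffixes.
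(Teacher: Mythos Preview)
Your proof is correct and is essentially identical to the paper's own argument: both set $s_{1:k_i} := w^{(n_i)}$ (so $k_i = |w^{(n_i)}|$), use the prefix part of Lemma~\ref{lemma:prefix-suffix} to see that $s$ is well-defined, and use the suffix part to obtain the required suffix property. The only difference is that you spell out a bit more explicitly why the lengths strictly increase and why the limit word $s$ exists.
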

\begin{proof}
We show that $s_{1:k_i} := w^{(n_i)}$ for $i\in\N$, is well-defined and satisfies this claim.
As $w^{(n_i)}$ is a prefix of all of its descendents by Lemma~\ref{lemma:prefix-suffix}, it follows that $s_{1:k_i}$ is a prefix of $s_{1:k_{i+1}}$.
Also, as $w^{(n_i)}$ is a suffix of all of its descendents, it follows that $s_{1:k_i}$ is a suffix of both $w^{(n_i)}$ and $w^{(n_{i+1})}$.
\end{proof}


\begin{lemma}
\label{lemma:ys-equal}
Suppose $\phi_0,\phi_1$ satisfy A1 and $0s$ is a Sturmian $\mathcal{M}$-word. 
Consider the sequence of Christoffel words $(0w^{(n)}1 : n\in\N)$ traversed on the infinite path through the Christoffel tree towards $0s$ (as defined in the main text just before Theorem~\ref{theorem:characterisation}).
Then the fixed points 
\begin{align*}
y_{01s} := \lim_{n\rightarrow\infty} y_{01w^{(n)}} \qquad\text{and} \qquad 
y_{10s} := \lim_{n\rightarrow\infty} y_{10w^{(n)}} 
\end{align*}
exist and are equal.
\end{lemma}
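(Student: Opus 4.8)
Here is how I would prove Lemma~\ref{lemma:ys-equal}.

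The plan is to argue in three steps. Step one: show that the subsequences $(y_{01l^{(n)}})$ and $(y_{10u^{(n)}})$ converge, say to $L$ and $U$ with $L\le U$, so that the nested closed intervals $I_n:=[y_{01l^{(n)}},y_{10u^{(n)}}]$ satisfy $\bigcap_n I_n=[L,U]$. Step two: show $L=U$, that is, that the lengths $y_{10u^{(n)}}-y_{01l^{(n)}}$ tend to $0$. Step three: deduce that the full sequences $(y_{01w^{(n)}})$ and $(y_{10w^{(n)}})$ converge to this common value. Step one is immediate from Lemma~\ref{lemma:ylyu}: $(y_{01l^{(n)}})$ is strictly increasing and bounded above by $y_{10u^{(1)}}$, while $(y_{10u^{(n)}})$ is strictly decreasing and bounded below by $y_{01l^{(1)}}$, so both converge; chaining the inequalities of that lemma also gives $y_{01l^{(n)}}<y_{10u^{(m)}}$ for all $n,m$, whence $L\le U$ and $\bigcap_n I_n=[L,U]$.

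The heart of the matter is step two. By the construction~(\ref{def:ys}), the node $0l^{(n)}1$ is a descendant of $0u^{(n)}1$ in the Christoffel tree, and $0u^{(n+1)}1$ is a descendant of $0l^{(n)}1$. Hence, by Lemma~\ref{lemma:prefix-suffix}, $u^{(n)}$ is both a prefix and a suffix of $l^{(n)}$, and $l^{(n)}$ is both a prefix and a suffix of $u^{(n+1)}$. In particular $u^{(n)}$ is a suffix of both $01l^{(n)}$ and $10u^{(n)}$, so writing $01l^{(n)}=t_nu^{(n)}$ we get $\phi_{01l^{(n)}}=\phi_{u^{(n)}}\circ\phi_{t_n}$ and $\phi_{10u^{(n)}}=\phi_{u^{(n)}}\circ\phi_{10}$, and therefore $y_{01l^{(n)}}=\phi_{u^{(n)}}(\phi_{t_n}(y_{01l^{(n)}}))$ and $y_{10u^{(n)}}=\phi_{u^{(n)}}(\phi_{10}(y_{10u^{(n)}}))$. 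Since $01l^{(n)}$ and $10u^{(n)}$ each contain both letters, Lemma~\ref{lemma:ypi} gives $y_{01l^{(n)}},y_{10u^{(n)}}\in(y_1,y_0)$; and $[y_1,y_0]$ is mapped into itself by $\phi_0$ and $\phi_1$ (as shown in the proof of Lemma~\ref{lemma:monexist}), hence by $\phi_w$ for every finite word $w$. Thus $\phi_{t_n}(y_{01l^{(n)}})$ and $\phi_{10}(y_{10u^{(n)}})$ lie in $[y_1,y_0]$, so both $y_{01l^{(n)}}$ and $y_{10u^{(n)}}$ lie in the interval $\phi_{u^{(n)}}([y_1,y_0])=[\phi_{u^{(n)}}(y_1),\phi_{u^{(n)}}(y_0)]$, giving $0\le y_{10u^{(n)}}-y_{01l^{(n)}}\le \phi_{u^{(n)}}(y_0)-\phi_{u^{(n)}}(y_1)$. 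It remains to see the right-hand side tends to $0$. From the prefix relations above, $u^{(n)}$ is a prefix of $u^{(n+1)}$; and since $b_n,a_{n+1}\ge1$ in the morphism decomposition preceding~(\ref{def:ys}), the lengths $\abs{u^{(n)}}$ are strictly increasing, so the $u^{(n)}$ are the prefixes of a single infinite word $\omega$, with $u^{(n)}=\omega_{1:\abs{u^{(n)}}}$ and $\abs{u^{(n)}}\to\infty$. As $y_0>y_1$ by A1, Lemma~\ref{lemma:limit-is-0} applied to $\omega$ gives $\phi_{\omega_{1:k}}(y_0)-\phi_{\omega_{1:k}}(y_1)\to0$ as $k\to\infty$, and restricting to the subsequence $k=\abs{u^{(n)}}$ yields $\phi_{u^{(n)}}(y_0)-\phi_{u^{(n)}}(y_1)\to0$. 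Hence the lengths of the $I_n$ tend to $0$ and $L=U$.

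For step three, the remark following Lemma~\ref{lemma:ylyu} shows that, because the subsequences $(y_{01l^{(n)}})$ and $(y_{10u^{(n)}})$ converge (step one), the full sequences $(y_{01w^{(n)}})$ and $(y_{10w^{(n)}})$ converge as well; as $(y_{01l^{(n)}})$ is a subsequence of $(y_{01w^{(n)}})$ and $(y_{10u^{(n)}})$ is a subsequence of $(y_{10w^{(n)}})$, their limits are $L$ and $U$ respectively, which coincide by step two. Therefore $y_{01s}:=\lim_{n\to\infty}y_{01w^{(n)}}$ and $y_{10s}:=\lim_{n\to\infty}y_{10w^{(n)}}$ exist and are equal, as claimed. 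I expect step two to be the only real obstacle: the crucial point is that $u^{(n)}$ is simultaneously a suffix of $01l^{(n)}$ and of $10u^{(n)}$, which lets one factor both return maps through the common increasing map $\phi_{u^{(n)}}$ and so reduce the diameter estimate to the decay statement of Lemma~\ref{lemma:limit-is-0} for the single infinite word $\omega=\lim_nu^{(n)}$ obtained by nesting the prefixes $u^{(n)}$; checking that the relevant iterates stay in $[y_1,y_0]$ and that the $u^{(n)}$ genuinely nest into one infinite word is routine bookkeeping.
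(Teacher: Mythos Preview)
Your proof is correct and follows essentially the same route as the paper's: both arguments use Lemma~\ref{lemma:ylyu} for monotone convergence of the subsequences, factor the fixed-point identities through a common suffix of $01l^{(n)}$ and $10u^{(n)}$ to trap both fixed points in $\phi_{\text{suffix}}([y_1,y_0])$, and then apply Lemma~\ref{lemma:limit-is-0} to the infinite word obtained by nesting these suffixes. The only cosmetic difference is that the paper packages the common-suffix step as Lemma~\ref{lemma:suffixword}, whereas you invoke Lemma~\ref{lemma:prefix-suffix} directly and take $u^{(n)}$ itself as the suffix; unwinding the paper's use of Lemma~\ref{lemma:suffixword} yields exactly your choice.
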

\begin{proof}
Let $a_n, b_n, 0u^{(n)}1, 0l^{(n)}1$ for $n\in\N$ be as in (\ref{def:ys}).

Existence of the fixed points follows from the monotone convergence theorem for real-valued sequences.
Indeed, Lemma~\ref{lemma:ylyu} shows that $(y_{01l^{(n)}} : n\in\N)$ is an increasing sequence,
that $(y_{10u^{(n)}} : n\in\N)$ is a decreasing sequence, and that these sequences are bounded.

By Lemma~\ref{lemma:suffixword} we have $u^{(n)} = c^{(n)}w_{1:k_n}$ and $l^{(n)} = d^{(n)}w_{1:k_n}$
for all $n\in\N$ for some sequences of finite words $(c^{(n)} : n\in\N)$ and $(d^{(n)} : n\in \N)$,
for some infinite word $w$ and for some increasing sequence $(k_n:n\in\N)$ on $\N$.

As $\phi_a(x)$ is an increasing function of $x\in\IR$ for any finite word $a$, by Lemma~\ref{lemma:monexist},
\begin{align*}
y_{10u^{(n)}}
&= \phi_{10u^{(n)}}(y_{10u^{(n)}})  \\
&< \phi_{10u^{(n)}}(y_{0})  \\
&= \phi_{10c^{(n)}w_{1:k_n}}(y_0)  \\
&= \phi_{w_{1:k_n}}(\phi_{10c^{(n)}}(y_0))  \\
&< \phi_{w_{1:k_n}}(y_0) .
\end{align*}
Similarly, we have
\begin{align*}
y_{01l^{(n)}} &> \phi_{w_{1:k_n}}(y_1) .
\end{align*}
Thus
\begin{align*}
\lim_{n\rightarrow \infty} (y_{10u^{(n)}}-y_{01l^{(n)}}) &\le \lim_{n\rightarrow \infty} (\phi_{w_{1:k_n}}(y_0)  - \phi_{w_{1:k_n}}(y_1)) = 0 
\end{align*}
where the last step is Lemma~\ref{lemma:limit-is-0}. But $y_{10u^{(n)}}>y_{01l^{(n)}}$ for $n\in\N$ by Lemma~\ref{lemma:ylyu}. Therefore
\begin{align*}
y_{10s} = \lim_{n\rightarrow\infty} y_{10u^{(n)}} = \lim_{n\rightarrow\infty} y_{01l^{(n)}} = y_{01s}.
\end{align*}
This completes the proof.
\end{proof}

In view of the above Lemma, from now on we shall write $y_s = y_{10s} = y_{01s}$.


\begin{lemma}
\label{lemma:Scase}
Suppose $\phi_0,\phi_1$ satisfy A1 and $0s$ is a Sturmian $\mathcal{M}$-word.
Then 
\begin{align*}
\pi(x,\phi_0,\phi_1) = 0s \ \Leftrightarrow \ x = y_{s}.
\end{align*}
\end{lemma}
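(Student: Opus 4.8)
The plan is to prove both implications by ``squeezing'' $x$ (respectively $\pi(x)^\omega$) between the two subsequences of Christoffel words $0u^{(n)}1$ and $0l^{(n)}1$ introduced in~(\ref{def:ys}), which approach $0s$ from the two sides along the path through the Christoffel tree. First I would record the elementary rate bookkeeping: writing $\alpha$ for the (irrational) rate of $0s$, a left-move along the path is taken precisely while the current slope exceeds $\alpha$, so a maximal run of left-moves ends at a Christoffel word of rate strictly below $\alpha$, and symmetrically a maximal run of right-moves ends at one of rate strictly above $\alpha$; hence $\rate(0u^{(n)}1)<\alpha<\rate(0l^{(n)}1)$ for all large $n$. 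Combined with Lemma~\ref{lemma:rate-ordered} this yields $(0u^{(n)}1)^\omega\prec 0s\prec(0l^{(n)}1)^\omega$ for all large $n$.

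Next I would assemble the geometric sandwich. The fixed points $y_{01l^{(n)}}$ and $y_{10u^{(n)}}$ exist on $\mathcal{I}$ by Lemma~\ref{lemma:monexist}, and Lemma~\ref{lemma:Ccase} gives $\pi(y_{01l^{(n)}})=0l^{(n)}1$ and $\pi(y_{10u^{(n)}})=0u^{(n)}1$. By Lemma~\ref{lemma:ylyu} together with Lemma~\ref{lemma:ys-equal}, the sequence $y_{01l^{(n)}}$ strictly increases to $y_s$ while $y_{10u^{(n)}}$ strictly decreases to $y_s$, so $y_{01l^{(n)}}<y_s<y_{10u^{(n)}}$ for every $n$.

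For the implication $x=y_s\Rightarrow\pi(x)=0s$: applying the monotonicity of $x\mapsto\pi(x)^\omega$ (Lemma~\ref{lemma:pi-is-decreasing}) at the two sandwich inequalities gives $(0u^{(n)}1)^\omega\preceq\pi(y_s)^\omega\preceq(0l^{(n)}1)^\omega$ for all $n$. Since $0u^{(n)}1$ and $0l^{(n)}1$ are finite words on the path converging to $0s$, their lengths tend to infinity and they share prefixes of growing length with $0s$, so $(0u^{(n)}1)^\omega\to 0s$ and $(0l^{(n)}1)^\omega\to 0s$ in the word-limit sense; a word caught between two sequences that both converge to $0s$ must equal $0s$ (if it first disagreed with $0s$ at position $i$, a bounding word with $n$ large enough to agree with $0s$ up to position $i$ would violate one of the lexicographic inequalities). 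Thus $\pi(y_s)^\omega=0s$, and since Sturmian $\mathcal{M}$-words are aperiodic the shortest generating word is $0s$ itself, so $\pi(y_s)=0s$. For the converse $\pi(x)=0s\Rightarrow x=y_s$: from $(0u^{(n)}1)^\omega\prec 0s=\pi(x)^\omega$ and $\pi(x)^\omega=0s\prec(0l^{(n)}1)^\omega$, the contrapositive of monotonicity (Lemma~\ref{lemma:pi-is-decreasing}) forces $y_{01l^{(n)}}<x<y_{10u^{(n)}}$ for all large $n$; letting $n\to\infty$ and invoking Lemma~\ref{lemma:ys-equal} gives $x=y_s$.

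I expect the main obstacle to be the careful handling of the word-limit squeeze together with the low-index edge cases (e.g.\ $a_1=0$, where $0u^{(1)}1=01$ need not have rate below $\alpha$), which is exactly why the rate inequalities are only claimed ``for all large $n$''; once those points are pinned down, the rest is a direct assembly of Lemmas~\ref{lemma:rate-ordered}, \ref{lemma:Ccase}, \ref{lemma:ylyu}, \ref{lemma:ys-equal} and~\ref{lemma:pi-is-decreasing}.
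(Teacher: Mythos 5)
Your proposal is correct and follows essentially the same route as the paper: sandwich $\pi(y_s)^\omega$ between $(0u^{(n)}1)^\omega$ and $(0l^{(n)}1)^\omega$ for the forward direction, and sandwich $x$ between $y_{01l^{(n)}}$ and $y_{10u^{(n)}}$ for the converse, using Lemmas~\ref{lemma:rate-ordered}, \ref{lemma:Ccase}, \ref{lemma:ylyu}, \ref{lemma:ys-equal} and~\ref{lemma:pi-is-decreasing} exactly as the paper does. The extra rate bookkeeping and the explicit word-limit squeeze you supply are just more detailed renderings of steps the paper states tersely.
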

\begin{proof}
Let $(u^{(n)} : n\in\N)$ and $(l^{(n)} : n\in\N)$ be the sequences (\ref{def:ys}) appearing in the definition of the fixed point of $0s$. For fixed $\phi_0,\phi_1$ let us write $\pi(x)$ in place of $\pi(x,\phi_0,\phi_1)$.

Say $x = y_{s}$. Recall that $x = y_{10s} = y_{01s}$ by Lemma~\ref{lemma:ys-equal}. 
As $y_{10u^{(n)}} > y_{10s}$ for all $n\in\N$ by Lemma~\ref{lemma:ylyu}, and $\pi(z)^\omega$ is a lexicographically non-increasing function of $z$, by Lemma~\ref{lemma:pi-is-decreasing}, it follows that
\begin{align*}
\pi(x)^\omega \succ \pi(y_{10u^{(n)}})^\omega = (0u^{(n)}1)^\omega
\end{align*}
where the equality follows from Lemma~\ref{lemma:Ccase}.
A similar argument gives 
$\pi(x)^\omega \prec (0l^{(n)}1)^\omega$ for $n\in\N$. Therefore
\begin{align*}
0s = \lim_{n\rightarrow\infty} (0u^{(n)}1)^\omega \preceq \pi(x)^\omega \preceq \lim_{n\rightarrow\infty} (0l^{(n)}1)^\omega = 0s .
\end{align*}

Now say $\pi(x) = 0s$ for some $x\in\IR$.
Then $y_{01l^{(n)}} < x < y_{10u^{(n)}}$ as $\pi(z)^\omega$ is a lexicographically non-increasing function of $z$ and $(0u^{(n)}1)^\omega \prec 0s \prec (0l^{(n)}1)^\omega$ for all $n\in\N$.
Therefore 
\begin{align*}
y_{s} = \lim_{n\rightarrow\infty} y_{01l^{(n)}} \le x \le \lim_{n\rightarrow\infty} y_{10u^{(n)}} = y_{s}.
\end{align*}
This completes the proof.
\end{proof}


\subsection{Proof of Theorem~\ref{theorem:characterisation}}
\label{sub:characterisation-proof}
\begin{proof}
The existence of fixed points $y_{01p}, y_{10p}$ follows from Lemma~\ref{lemma:monexist}
and the existence of $y_s$ follows from Lemma~\ref{lemma:ys-equal}.

The fact that $\sigma(z|z)=1\pi(z,\phi_0,\phi_1)^\omega$ is a lexicographically 
non-decreasing function of $z\in\IR$ follows from Lemma~\ref{lemma:pi-is-decreasing}.

Lemma~\ref{lemma:basecase} addresses the value of $\sigma(z|z)$ for $z\le y_1$ and $z\ge y_0$,
Lemma~\ref{lemma:Ccase} addresses the case $z\in [y_{01p},y_{10p}]$
and Lemma~\ref{lemma:Scase} address the case $z = y_s$. 
Thus the image of $\pi(z,\phi_0,\phi_1)$ as $z$ ranges over $\IR$ contains all $\mathcal{M}$-words.
Using Lemma~\ref{lemma:pi-is-mword}, it follows that this image is exactly the set of $\mathcal{M}$-words.

This completes the proof.
\end{proof}



\subsection{Proof of Theorem~\ref{theorem:xs-characterisation}}
\label{sub:xs-characterisation-proof}
Based on the work of~\citet{Kozyakin03}, we begin by showing that maps-with-gaps formed from functions satisfying Assumption~A1 are \textit{locally-growing relaxation functions}  (Lemmas~\ref{lemma:phi0110} and~\ref{lemma:A1-proves-kozy}) and that the itineraries of such functions are \textit{$1$-balanced words} (Lemmas~\ref{lemma:kozy} and~\ref{lemma:F}). 
As $1$-balanced words correspond to factors of lower mechanical words (Lemma~\ref{lemma:Dulucq}), 
this enables us to describe the itineraries of maps-with-gaps, for arbitrary initial states and thresholds (Lemma~\ref{lemma:mechanical-factor}). 
We couple this description with an easy result about lexicographic ordering (Lemma~\ref{lemma:lexs}) and with a result about the number of factors of mechanical words (Lemma~\ref{lemma:mignosi}) to bound the number of discontinuities of
the itinerary of a map-with-a-gap as a function of its threshold (Lemma~\ref{lemma:Ot3}).
Finally, we prove Theorem~\ref{theorem:xs-characterisation}.


\begin{definition}
Let $\IR$ be an interval of $\R$. 
A function $f : \IR \rightarrow \IR$ is a \textbf{locally-growing relaxation function} with threshold $z\in\IR$ if 
\begin{enumerate}
\item $f(z)<z<f(z^-)<\infty$
\item $f(f(z^-)) \le f(f(z))$
\item $f(x)$ is  increasing for $x \in [f(z),z)$
\item $f(x)$ is  increasing for $x \in [z,f(z^-))$.
\end{enumerate}
\end{definition}


In~\citet{Kozyakin03}, this terminology was used for a smaller class of functions $f$. 
In particular, the domain and range were restricted to the interval $[0,1)$ and there
was a requirement that $f$ is continuous on each of the intervals $[f(z),z)$ and $[f(z^-),f(z))$.

The following two Lemmas show that maps-with-gaps whose parts satisfy Assumption~A1 lead to locally-growing relaxation functions.


\begin{lemma}
\label{lemma:phi0110}
Suppose $\phi_0,\phi_1$ satisfy A1 and $x\in [y_1,y_0]$.
Then
$\phi_{01}(x) < \phi_{10}(x).$
\end{lemma}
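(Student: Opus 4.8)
The plan is to deduce $\phi_{01}(x)=\phi_1(\phi_0(x))<\phi_0(\phi_1(x))=\phi_{10}(x)$ directly from the contractivity built into Assumption~A1, exploiting the fact that on $[y_1,y_0]$ the point $x$ is sandwiched between $\phi_1(x)$ and $\phi_0(x)$.

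First I would set $p:=\phi_0(x)$ and $q:=\phi_1(x)$, both of which lie in $\IR$. Lemma~\ref{lemma:incdec}, applied to the one-letter words $0$ and $1$, shows that $x\le y_0$ implies $x\le\phi_0(x)=p$ while $x\ge y_1$ implies $x\ge\phi_1(x)=q$, so that $q\le x\le p$. Now comes the key step: since $\phi_1$ is contractive and $x\le p$, one has $\phi_1(p)-\phi_1(x)\le p-x$, i.e.\ $\phi_1(p)\le q+(p-x)$; and since $\phi_0$ is contractive and $q\le x$, one has $\phi_0(x)-\phi_0(q)\le x-q$, i.e.\ $\phi_0(q)\ge p-(x-q)=q+(p-x)$. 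Chaining these two bounds gives $\phi_{01}(x)=\phi_1(p)\le q+(p-x)\le\phi_0(q)=\phi_{10}(x)$.

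It then remains to promote the $\le$ to a strict inequality. Here I would note that $p=x$ forces $x$ to be a fixed point of $\phi_0$, hence $x=y_0$ by uniqueness of the fixed point (Assumption~A1, equivalently Lemma~\ref{lemma:monexist}), and likewise $q=x$ forces $x=y_1$; since $y_1<y_0$ these cannot both occur, so at least one of the two contractivity bounds above is applied to a pair of \emph{distinct} points, and A1 makes that inequality strict. Therefore $\phi_{01}(x)<\phi_{10}(x)$ for all $x\in[y_1,y_0]$.

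I do not expect any real obstacle. The only point requiring a little care is the two endpoints $x=y_1$ and $x=y_0$, where one of $p-x$, $x-q$ vanishes and the corresponding inequality degenerates to an equality; but in each of those cases the other inequality is strict, so the chain still yields the strict conclusion.
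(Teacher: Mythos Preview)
Your proof is correct and follows essentially the same approach as the paper's: both apply the contractivity of $\phi_1$ to the pair $(x,\phi_0(x))$ and of $\phi_0$ to the pair $(\phi_1(x),x)$, then combine the two resulting inequalities. The only cosmetic difference is in handling strictness at the endpoints---the paper splits into the half-open intervals $[y_1,y_0)$ and $(y_1,y_0]$, whereas you observe directly that $x$ cannot equal both $y_0$ and $y_1$; the substance is identical.
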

\begin{proof}
Suppose that $x\in [y_1,y_0)$. Then using Lemma~\ref{lemma:incdec} gives
\begin{align*}
\phi_1(\phi_0(x)) - \phi_1(x) &< \phi_0(x) - x & \text{as $\phi_1$ is contractive and $x<y_0$} \\
\phi_0(x) - \phi_0(\phi_1(x)) &\le x - \phi_1(x) & \text{as $\phi_0$ is contractive and $x\ge y_1$} \\
\phi_{01}(x) &< \phi_{10}(x) & \text{by adding these inequalities.}
\end{align*}
A symmetric argument holds if $x\in (y_1,y_0]$. This completes the proof.
\end{proof}


\begin{lemma}
\label{lemma:A1-proves-kozy}
Suppose $\phi_0,\phi_1$ satisfy A1, that $z\in (y_1,y_0)$ and let $f(x) := \phi_{\I{x\ge z}}(x)$ for $x\in\IR$. Then $f$ is a locally-growing relaxation function with threshold $z$.
\end{lemma}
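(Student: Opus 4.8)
The plan is to verify the four defining properties of a locally-growing relaxation function directly, using the inequalities already established for functions satisfying Assumption~A1. Recall that $f(x) = \phi_1(x)$ for $x \ge z$ and $f(x) = \phi_0(x)$ for $x < z$, so that $f(z) = \phi_1(z)$ and $f(z^-) = \phi_0(z)$ (using continuity of $\phi_0$, which holds by Lemma~\ref{lemma:monexist}).

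First I would establish property~1, namely $f(z) < z < f(z^-) < \infty$. Since $z \in (y_1, y_0)$, we have $z > y_1$, so Lemma~\ref{lemma:incdec} gives $\phi_1(z) < z$, i.e. $f(z) < z$; and $z < y_0$ gives $\phi_0(z) > z$, i.e. $z < f(z^-)$; finiteness is immediate as $\phi_0 : \IR \to \IR$. Next, for property~2, I would write $f(f(z)) = \phi_1(\phi_1(z)) = \phi_{11}(z)$ and $f(f(z^-)) = \phi_1(\phi_0(z)) = \phi_{01}(z)$ --- here one must be careful that $f(z^-) = \phi_0(z)$ lands in the region $\{x \ge z\}$, which it does since $\phi_0(z) > z$, so the outer map applied is indeed $\phi_1$. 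Thus property~2 asks for $\phi_{01}(z) \le \phi_{11}(z)$. Since $\phi_1$ is contractive and $z < y_0$ so $\phi_0(z) > z$ strictly, and $\phi_1$ is increasing, we get $\phi_1(\phi_0(z)) > \phi_1(z) \ge \phi_1(\phi_1(z))$ only if... actually the correct comparison: $\phi_0(z) \ge z \ge \phi_1(z)$ and $\phi_1$ increasing gives $\phi_{01}(z) = \phi_1(\phi_0(z)) \ge \phi_1(z) \ge \phi_1(\phi_1(z)) = \phi_{11}(z)$, which is the reverse of what is wanted. Let me instead use Lemma~\ref{lemma:phi0110} more carefully, or reconsider: property~2 should read in terms of the actual orbit, and the natural inequality to invoke is $\phi_{01}(z) < \phi_{10}(z)$ from Lemma~\ref{lemma:phi0110}; the point is that $f(f(z))$ may in fact equal $\phi_{10}(z)$ if $\phi_1(z) < z$ puts us back in the $\phi_0$ branch. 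So I would first determine which branch $f(z) = \phi_1(z)$ lies in: since $\phi_1(z) < z$, the next application of $f$ is $\phi_0$, giving $f(f(z)) = \phi_{10}(z)$; and since $f(z^-) = \phi_0(z) > z$, the next application is $\phi_1$, giving $f(f(z^-)) = \phi_{01}(z)$. Then property~2 is exactly $\phi_{01}(z) \le \phi_{10}(z)$, which is Lemma~\ref{lemma:phi0110} (with $z \in (y_1,y_0)$).

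Finally, properties~3 and~4 are monotonicity statements: $f$ is increasing on $[f(z), z)$ and on $[z, f(z^-))$. On $[f(z),z) = [\phi_1(z), z)$ we have $f = \phi_0$, which is increasing by A1; on $[z, f(z^-)) = [z, \phi_0(z))$ we have $f = \phi_1$, which is increasing by A1. This completes all four properties. The main obstacle --- really the only subtlety --- is the bookkeeping in property~2: one must correctly identify that the second iterate of $f$ from $z$ uses the $\phi_0$ branch (because $\phi_1(z) < z$) while the second iterate from $z^-$ uses the $\phi_1$ branch (because $\phi_0(z) > z$), so that the required inequality becomes $\phi_{01}(z) \le \phi_{10}(z)$ and matches Lemma~\ref{lemma:phi0110} rather than some comparison involving $\phi_{11}$. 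Everything else is a direct appeal to Lemma~\ref{lemma:incdec}, Lemma~\ref{lemma:monexist}, and Assumption~A1.
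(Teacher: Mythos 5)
Your proposal is correct and follows essentially the same route as the paper: identify $f(z)=\phi_1(z)$, $f(z^-)=\phi_0(z)$, $f(f(z))=\phi_{10}(z)$, $f(f(z^-))=\phi_{01}(z)$, then check the four properties via Lemma~\ref{lemma:incdec}, Lemma~\ref{lemma:phi0110} and Assumption~A1. The self-correction about which branch the second iterate uses resolves the only subtlety correctly, so nothing further is needed.
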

\begin{proof}
By definition $f(z^-) = \phi_0(z^-) = \phi_0(z)$, as $\phi_0$ is continuous by Lemma~\ref{lemma:monexist}.
Also $f(z) = \phi_1(z)$, $f(f(z^-)) = \phi_{01}(z)$ and $f(f(z)) = \phi_{10}(z)$. 
So the conditions defining a locally-growing relaxation function read as follows:
\begin{enumerate}
\item $\phi_1(z) < z < \phi_0(z) < \infty$, which is true for $z\in (y_1,y_0)$ by Lemma~\ref{lemma:incdec}
\item $\phi_{01}(z) < \phi_{10}(z)$, which is the result of Lemma~\ref{lemma:phi0110}
\item $\phi_0(x)$ is increasing for $x\in [\phi_1(z),z)$, which holds by A1
\item $\phi_1(x)$ is increasing for $x\in [z,\phi_0(z))$, which holds by A1.
\end{enumerate}
This completes the proof.
\end{proof}


The following definition is due to~\citet{Morse40}.

\begin{definition} 
An word $w$ is \textbf{$1$-balanced} if 
\begin{align*}
| \ \abs{u}_1 - \abs{v}_1 | \le 1
\end{align*}
for all factors $u,v$ of $w$ with $\abs{u}=\abs{v}$.
\end{definition}


The next two Lemmas use an argument from~\citet{Kozyakin03}, to show that the itineraries of a locally-growing relaxation function are $1$-balanced.
We denote the fractional part of a real number $x$ by $\{x\} := x-\lfloor x \rfloor .$

\begin{lemma}
\label{lemma:kozy}
Suppose $f$ is a locally-growing relaxation function with threshold $z$. 
Let 
\begin{align*}
F(x) := \displaystyle\frac{f(t\{x\}-t\I{\{x\}\in [\alpha,1)}+z)-z}{t} + \floor{x} + 1 \qquad\text{for $x\in\R$}
\end{align*}
where $t:=f(z^-)-f(z)$ and $\alpha := (f(z^-)-z)/t$. 
Then 
\begin{enumerate}
\item[F1.] $F(0) \in [0,1)$
\item[F2.] $F(x+1)=F(x)+1$ for all $x\in\R$
\item[F3.] $F$ is increasing
\item[F4.] The itineraries of $f$ and $F$ agree in the sense that
\begin{align*}
f^{(n-1)}(x) \ge z \ \Leftrightarrow \ \left\{F^{(n-1)}\left(\frac{x-z}{t}\right)\right\} \in [0,F(0))
\end{align*}
for all $n\in\N$ and all $x \in [f(z),f(z^-))$.
\end{enumerate}
\end{lemma}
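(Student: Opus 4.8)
The plan is to read off from the formula for $F$ that it is, up to an affine rescaling, a ``rotated copy'' of $f$ restricted to the window $[f(z),z)\cup[z,f(z^-))=[f(z),f(z^-))$, and then to obtain F1--F4 by unwinding that change of variables. Write $t:=f(z^-)-f(z)$ and $\alpha:=(f(z^-)-z)/t$; condition~1 of a locally-growing relaxation function gives $t>0$, $\alpha\in(0,1)$ and $(f(z)-z)/t\in(-1,0)$. For $\xi\in[0,1)$ set $\rho(\xi):=t\xi-t\I{\xi\in[\alpha,1)}+z$, so that $F(x)=(f(\rho(\{x\}))-z)/t+\floor{x}+1$. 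The identities $t\alpha+z=f(z^-)$ and $t(\alpha-1)+z=f(z)$ show that $\rho$ maps $[0,\alpha)$ increasingly onto $[z,f(z^-))$ and $[\alpha,1)$ increasingly onto $[f(z),z)$, so $\rho$ is a bijection of $[0,1)$ onto $[f(z),f(z^-))$ whose inverse is $y\mapsto\{(y-z)/t\}$ (it is $(f(z)-z)/t\in(-1,0)$ that makes the fractional part come out right). Then F2 is immediate, since $\{x+1\}=\{x\}$ and $\floor{x+1}=\floor{x}+1$; and F1 follows by putting $x=0$, which gives $F(0)=(f(z)-z)/t+1=(f(z^-)-z)/t=\alpha\in(0,1)$ --- an identity $F(0)=\alpha$ I would record for later.

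For F3 I would show $F$ is increasing by checking three points. First, on each of $[0,\alpha)$ and $[\alpha,1)$ the map $\xi\mapsto(f(\rho(\xi))-z)/t$ is increasing, which is exactly conditions~3 and~4 (monotonicity of $f$ on $[f(z),z)$ and on $[z,f(z^-))$) composed with the increasing affine pieces of $\rho$. Second, there is no downward jump at the branch point $\xi=\alpha$: as $\xi\uparrow\alpha$ the argument $\rho(\xi)\uparrow f(z^-)$ from below so $f(\rho(\xi))\to f(f(z^-)^-)$, and we need $f(f(z^-)^-)\le f(f(z))=f(\rho(\alpha))$, which is condition~2 (read with $f(f(z^-))$ as the one-sided limit --- automatic for the maps-with-gaps of Lemma~\ref{lemma:A1-proves-kozy}, which are continuous away from $z$ while $f(z^-)>z$). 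Third, there is no downward jump across an integer: as $\xi\uparrow1$ one has $\rho(\xi)\uparrow z$ so $f(\rho(\xi))\to f(z^-)$, whence $\lim_{\xi\uparrow1}(f(\rho(\xi))-z)/t=(f(z^-)-z)/t=(f(z)-z)/t+1=F(0)$, so the two sides even match.

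Finally F4 is a single induction on $n$ proving simultaneously that $f^{(n-1)}(x)\in[f(z),f(z^-))$ and $\{F^{(n-1)}((x-z)/t)\}=\{(f^{(n-1)}(x)-z)/t\}$; the base case $n=1$ is just the formula for $\rho^{-1}$. For the step, F2 reduces $F^{(n-1)}((x-z)/t)$ to $F$ applied to $\xi_n:=\{(f^{(n-1)}(x)-z)/t\}\in[0,1)$, and the key computation $F(\xi_n)=(f(\rho(\xi_n))-z)/t+1=(f(f^{(n-1)}(x))-z)/t+1$ --- valid because $\rho(\xi_n)=f^{(n-1)}(x)$ by the invariance hypothesis --- gives $\{F^{(n)}((x-z)/t)\}=\{(f^{(n)}(x)-z)/t\}$, while $f^{(n)}(x)\in[f(z),f(z^-))$ follows from the forward-invariance $f([f(z),f(z^-)))\subseteq[f(z),f(z^-))$. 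Granting the induction, F4 drops out: for $y=f^{(n-1)}(x)\in[f(z),f(z^-))$ one has $\{(y-z)/t\}\in[0,\alpha)=[0,F(0))$ iff $(y-z)/t\in[0,\alpha)$ iff $y\ge z$. I expect the genuine obstacle to be not these manipulations but the two ``relaxation'' facts they rest on --- the left-continuity of $f$ at $f(z^-)$ needed for the gluing inequality in F3, and the forward-invariance $f([f(z),f(z^-)))\subseteq[f(z),f(z^-))$ needed to iterate in F4 --- which I would derive from conditions~1--3 of a locally-growing relaxation function and, in the intended application, from Lemmas~\ref{lemma:incdec} and~\ref{lemma:orbit} for maps satisfying Assumption~A1.
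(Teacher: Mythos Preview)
Your proposal is correct and follows essentially the same route as the paper's proof: compute $F(0)=\alpha$ for F1, read F2 off the fractional/floor decomposition, verify F3 by checking monotonicity on $[0,\alpha)$ and $[\alpha,1)$ separately together with the gluing inequalities at $\alpha$ and at integers, and prove F4 by an induction showing that $F^{(n)}$ and $(f^{(n)}(\cdot)-z)/t$ differ by an integer while the latter stays in $[\alpha-1,\alpha)$. Your packaging via the explicit bijection $\rho:[0,1)\to[f(z),f(z^-))$ with inverse $y\mapsto\{(y-z)/t\}$ is a tidy way to phrase the change of variables that the paper carries out coordinate-by-coordinate, and your flagging of the two delicate points---the one-sided limit interpretation of $f(f(z^-))$ in the gluing step and the forward-invariance $f([f(z),f(z^-)))\subseteq[f(z),f(z^-))$---matches exactly where the paper's proof is terse.
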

\begin{proof}
First, note that the function $F$ is well-defined as $f(z)\in\R$ and $f(z)<z<f(z^-)<\infty$ so that $t\in\R_{++}$ and
\begin{align}
\label{alpha01}
\alpha = \frac{f(z^-)-z}{f(z^-)-f(z)} \in (0,1) .
\end{align}
It is easy to see that F1 and F2 hold. Indeed 
\begin{align*}
F(0) &= \frac{f(z)-z}{t} + 1 = \frac{f(z)-z+f(z^-)-f(z)}{f(z^-)-f(z)} = \alpha \in (0,1)
\end{align*}
and as the ratio in the definition of $F(x)$ only depends on $\{x\}$, we have
\begin{align}
\label{FF1}
F(x+1) - F(x) = \floor{x+1}-\floor{x} = 1 \qquad \text{for $x\in\R$.}
\end{align}
Now we show that F3 holds. 
\begin{itemize}
\item If $x \in [0,\alpha)$ then $tx+z\in [z,f(z))$.
But $f(\cdot)$ is increasing on $[z,f(z))$ and $t\in\R_{++}$.
It follows that $F(x) = (f(tx+z)-z)/t$ is increasing for $x\in [0,\alpha)$. 
\item As $f(f(z^-)) \le f(f(z))$ and $f$ is increasing on $[z,f(z^-))$, it follows that
\begin{align*}
F(\alpha^-) &= \frac{\lim_{x\uparrow f(x^-)} f(u) - z}{t}+1 
\le \frac{f(f(z^-)) - z}{t}+1 \le \frac{f(f(z))-z}{t}+1 = F(\alpha) .
\end{align*}
\item If $x\in [\alpha,1)$ then $tx-t+z \in [f(z^-),z)$. 
But $f(\cdot)$ is increasing on $[f(z^-),z)$ and $t\in\R_{++}$. 
It follows that $F(x) = (f(tx-t+z)-z)/t$ is increasing for $x\in [\alpha,1)$.
\item The definition of $F$ gives
\begin{align*}
F(1^-) &= \frac{f(z^-)-z}{t} +1  = \alpha+1 = F(1).
\end{align*}
\end{itemize}
In summary, $F(x)$ is increasing for $x \in [0,1]$.
But as $F(x+1)=F(x)+1$ for $x\in\R$ we conclude that $F$ is increasing for $x\in\R$.
Therefore F3 holds. 

Now, we note that for any $m\in\Z_+$ and any $x\in [\alpha-1,\alpha)$ we have
\begin{align}
\label{qinrange}
\frac{f^{(m)}(tx+z)-z}{t} \in [\alpha-1,\alpha).
\end{align}
Indeed, for $m=0$ we have $(f^{(m)}(tx+z)-z)/t = x \in [\alpha-1,\alpha)$.
Also if $y\in [f(z),f(z^-))$ then the assumptions about $f$ show that $f(y) \in [f(z),f(z^-))$,
while if $x\in [\alpha-1,\alpha)$ then the definitions of $t$ and $\alpha$ show that $tx+z \in [f(z),f(z^-))$.
Thus $(f^{(m)}(tx+z)-z)/t \in [\alpha-1,\alpha)$.

Now, we show by induction that for any for any $x\in [\alpha-1,\alpha)$ and $n\in \Z_+$ we have
\begin{align}
\label{Finduction}
F^{(n)}(\{x\}) - \frac{f^{(n)}(tx+z)-z}{t} \in \Z.
\end{align}
In the base case $n=0$ we have 
$F^{(0)}(\{x\}) - (f^{(0)}(tx+z)-z)/t=\{x\} - ((tx+z)-z)/t \in \Z.$
For the inductive step, let $p:=F^{(n)}(\{x\})$ and $q:=(f^{(n)}(tx+z)-z)/t$ and suppose that $p-q\in \Z$.
Then we have 
\begin{align*}
F^{(n+1)}(\{x\}) &= F(p) = F(q)+p-q 
\intertext{as $F$ satisfies F2 and by the assumption that $p-q\in\Z$. 
Furthermore, $q\in [\alpha-1,\alpha)$, by~(\ref{qinrange}) so that for some $k\in\Z$,} 
F^{(n+1)}(\{x\})&=p-q + \begin{cases}
\displaystyle\frac{f(tq+z)-z}{t} + 1 & \text{if $q \in [0,\alpha)$} \\
\displaystyle\frac{f(t(q+1)-t+z)-z}{t} & \text{if $q \in [\alpha-1,0)$} 
\end{cases}\\
&= \frac{f(tq+z)-z}{t} + k \\
&= \frac{f(t \frac{f^{(n)}(tx+z)-z}{t} - z)}{t} + k \\
&= \frac{f^{(n+1)}(tx+z)-z}{t} + k .
\end{align*}
Therefore~(\ref{Finduction}) is true. 

From~(\ref{qinrange}) and~(\ref{Finduction}) it follows that for any $x\in [\alpha-1,\alpha)$ and $n\in\Z_+$ we have
\begin{align}
\{F^{(n-1)}(\{x\})\} \in [0,\alpha) 
&\Leftrightarrow \left\{\frac{f^{(n-1)}(tx+z)-z}{t}\right\} \in [0,\alpha) \nonumber \\
&\Leftrightarrow \frac{f^{(n-1)}(tx+z)-z}{t} \in [0,\alpha) \nonumber \\
&\Leftrightarrow f^{(n-1)}(tx+z) \ge z . \label{sequal}
\end{align}
Therefore F4 holds. This completes the proof.
\end{proof}


\begin{lemma}
\label{lemma:F}
Suppose $F : \R\rightarrow\R$ satisfies Claims F1-F3 of Lemma~\ref{lemma:kozy}. 
Let $\alpha := F(0)$ and for $x\in\R$ let $s(x)$ denote the infinite word with letters
\begin{align*}
s_n(x) := \begin{cases} 
	1 & \text{if $\{F^{(n-1)}(x)\} \in [0,\alpha)$}\\
	0 & \text{if $\{F^{(n-1)}(x)\} \in [\alpha,1)$}
\end{cases}
\end{align*} 
for $n\in\N$.
Then $s(x)$ is $1$-balanced. 
\end{lemma}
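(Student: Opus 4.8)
The plan is to recognise $s(x)$ as the symbolic itinerary of the orbit $(F^{(j)}(x):j\ge 0)$ with respect to the partition of each unit interval into the two pieces of length $\alpha:=F(0)$ and $1-\alpha$ cut out by the point $F(0)$, and to prove balance through a short telescoping identity essentially due to Kozyakin. First I would reduce to a single inequality: any factor of $s(x)$ of length $m$ is $s_{k+1}(x)\cdots s_{k+m}(x)$ for some $k\ge 0$, and since $F^{(i)}(w+n)=F^{(i)}(w)+n$ for $n\in\mathbb{Z}$ by F2, its number of $1$'s equals
\[
N_m(z):=\#\{\,i\in\{0,\dots,m-1\}:\{F^{(i)}(z)\}\in[0,\alpha)\,\},\qquad z:=\{F^{(k)}(x)\}\in[0,1).
\]
Hence it suffices to show $\abs{N_m(z)-N_m(z')}\le 1$ for all $z,z'\in[0,1)$ and $m\ge 1$; after possibly swapping $z,z'$ we may assume $z\le z'$.

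Next I would record the elementary identity, valid for any real $w$ and any $\alpha\in[0,1)$,
\[
\I{\{w\}\in[0,\alpha)}=\floor{w}-\floor{w-\alpha},
\]
whose right-hand side is $0$ or $1$ and equals $1$ precisely when the only candidate integer, $\floor{w}$, lies in $(w-\alpha,w]$, i.e. when $\{w\}<\alpha$. Writing $a_j:=F^{(j)}(z)$ and $b_j:=F^{(j)}(z')$ and summing this identity gives
\[
N_m(z)-N_m(z')=\sum_{j=0}^{m-1}\bigl[(\floor{a_j}-\floor{b_j})-(\floor{a_j-\alpha}-\floor{b_j-\alpha})\bigr]=\sum_{j=0}^{m-1}(D'_j-D_j),
\]
where $D_j:=\floor{b_j}-\floor{a_j}$ and $D'_j:=\floor{b_j-\alpha}-\floor{a_j-\alpha}$. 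From F2 and F3 one checks by induction that $a_j\le b_j\le a_j+1$ for all $j$ (the step being $b_{j-1}\le a_{j-1}+1\Rightarrow F(b_{j-1})\le F(a_{j-1}+1)=F(a_{j-1})+1$), so $D_j,D'_j\in\{0,1\}$.

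The crux of the argument is then the observation that $D'_j$ is merely a one-step shift of $D_j$: using $\floor{v}-\floor{u}=\#\{n\in\mathbb{Z}:u<n\le v\}$ for $u\le v$, the fact that $F(n)=F(0)+n=\alpha+n$ for integer $n$ (by F2), strict monotonicity of $F$ (F3), and $a_j=F(a_{j-1})$, $b_j=F(b_{j-1})$, one gets for $j\ge 1$
\[
D'_j=\#\{n\in\mathbb{Z}:a_j<\alpha+n\le b_j\}=\#\{n\in\mathbb{Z}:a_j<F(n)\le b_j\}=\#\{n\in\mathbb{Z}:a_{j-1}<n\le b_{j-1}\}=D_{j-1}.
\]
Therefore the sum telescopes,
\[
N_m(z)-N_m(z')=(D'_0-D_0)+\sum_{j=1}^{m-1}(D_{j-1}-D_j)=D'_0-D_{m-1}\in\{-1,0,1\},
\]
which yields $\abs{N_m(z)-N_m(z')}\le 1$, and hence that $s(x)$ is $1$-balanced. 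I expect the only point genuinely needing care to be checking that these identities and the shift $D'_j=D_{j-1}$ really hold for \emph{every} $x$ and $m$, including boundary configurations (where some $\{F^{(i)}(x)\}$ equals $\alpha$), which is where one follows Kozyakin's bookkeeping precisely; note that no continuity of $F$ is required beyond strict monotonicity and properties F1–F2.
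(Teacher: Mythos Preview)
Your proof is correct and follows essentially the same approach as the paper: both express the symbol count through a floor-function telescoping identity and then use that $F$ is increasing with $F(x+1)=F(x)+1$ to keep iterates of two starting points within unit distance, so the relevant floor differences lie in $\{0,1\}$. The only cosmetic difference is the organization: the paper telescopes directly via $s_n(x)=\lfloor F^{(n)}(x)-\alpha\rfloor-\lfloor F^{(n-1)}(x)-\alpha\rfloor$ to obtain $\sum_{k=1}^m s_k(x)=\lfloor F^{(m)}(x)-\alpha\rfloor-\lfloor x-\alpha\rfloor$, whereas you use the alternative identity $s_n(x)=\lfloor F^{(n-1)}(x)\rfloor-\lfloor F^{(n-1)}(x)-\alpha\rfloor$ and then recover the same telescoping through your shift relation $D'_j=D_{j-1}$.
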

\begin{proof}
For any $n \in \N$ and any $x\in [0,1)$ we have
\begin{align*}
&\hspace{-0.5cm} \lfloor F^{(n)}(x)-\alpha \rfloor - \lfloor F^{(n-1)}(x)-\alpha \rfloor \\
&= \lfloor F(r+z) - \alpha \rfloor - \lfloor r+z-\alpha \rfloor && 
	\text{for $r:=\{F^{(n-1)}(x)\}, z:= \lfloor F^{(n-1)}(x) \rfloor$} \\
&=\lfloor F(r) +z - \alpha \rfloor - \lfloor r+z-\alpha \rfloor && 
	\text{as $F(u)+1=F(u)$ for $u\in\R$}\nonumber \\
&=\lfloor F(r) - \alpha \rfloor - \lfloor r-\alpha \rfloor && 
	\text{as $\lfloor r + z \rfloor = \lfloor r \rfloor +z$ since $z\in\Z$}\nonumber \\
&= - \lfloor r-\alpha \rfloor &&
	\text{as $\alpha = F(0) \le F(r) < F(1)=\alpha+1$}\nonumber \\
&	&&
	\text{since $F(\cdot)$ is increasing and $r\in [0,1)$}\nonumber \\
&= \lceil \alpha-r \rceil \nonumber \\
&= \begin{cases} 1 & \text{if $r\in[0,\alpha)$} \\ 0 & \text{if $r\in [\alpha,1)$}\end{cases} \nonumber \\
&= s_n(x) .
\end{align*}
Therefore, for any $m\in\Z_+$,
\begin{align}
\sum_{k=1}^m s_k(x) 
= \sum_{k=1}^m \left( \lfloor F^{(k)}(x)-\alpha \rfloor - \lfloor F^{(k-1)}(x)-\alpha \rfloor \right) 
= \lfloor F^{(m)}(x)-\alpha \rfloor - \lfloor x-\alpha \rfloor .
\label{snform}
\end{align}

Now we show by induction that for any $m\in\Z_+$,
\begin{align}
F^{(m)}(z)-F^{(m)}(y)\in [0,1) \qquad \text{for any $y,z\in\R$ with $z-y\in [0,1)$.}
\label{Fmform}
\end{align}
The base case with $m=0$ reads $z-y \in [0,1)$ which is true.
For the inductive step, let $z':=F^{(m-1)}(z)$ and $y':=F^{(m-1)}(y)$ for $m-1\in\Z_+$ 
and assume that $z'-y'\in [0,1)$. 
Then $F(y') \le F(z')$ as $y'\le z'$ and $F$ is increasing.
Also $F(z') < F(y'+1)=F(y')+1$ as $z'-y' < 1,$ as $F$ is  increasing
and as $F(u+1)=F(u)+1$ for any $u \in \R$.
Therefore $F^{(m)}(z)-F^{(m)}(y) = F(z')-F(y') \in [0,1)$.

For any choice of $m \in \Z_+$ and $x,y \in [0,1)$, we wish to prove that $\abs{\Delta} \le 1$ for
\begin{align*}
\Delta &:= \abs{s_{1}(x) s_{2}(x) \dots s_{m}(x)}_1 - \abs{s_{1}(y) s_{2}(y) \dots s_{m}(y)}_1 \\
&= \sum_{k=1}^m (s_k(x)-s_k(y)) 
= \underbrace{\left( \lfloor F^{(m)}(x)-\alpha \rfloor - \lfloor F^{(m)}(y)-\alpha \rfloor \right)}_{=:A} - \underbrace{\left(\lfloor x-\alpha \rfloor - \lfloor y-\alpha \rfloor \right)}_{=:B} 
\end{align*}
using equation~(\ref{snform}).
But if $x\ge y$ then $x-y\in [0,1)$ so~(\ref{Fmform}) shows that $A$ and $B$
are both of the form $\lfloor a \rfloor - \lfloor b \rfloor$ for some $a-b \in [0,1)$ 
and it follows that both $A$ and $B$ are in $\{0,1\}$.
Whereas if $x\le y$ then $y-x\in [0,1)$ so both $A$ and $B$ are in $\{-1,0\}$.
We conclude that
$\abs{\Delta} = \abs{A-B} \le \abs{1-0} = 1.$
\end{proof}


The following is Theorem~3.1 of~\citet{Dulucq90} and provides the missing link between $1$-balanced words and mechanical words. 
\begin{lemma}
\label{lemma:Dulucq}
Suppose $w$ is a word of length $n\in\N$ with $\abs{w}_0 \abs{w}_1 > 0$.
Then $w$ is $1$-balanced if and only if 
$w_k = \left\lfloor \frac{pk+r}{q} \right\rfloor - \left\lfloor \frac{p(k-1)+r}{q} \right\rfloor$ 
for $k = 1, 2, \dots, n$, where $p,q,r\in\Z$ satisfy
$0<p<q\le n$, $0\le r < q \le n$ and $\gcd(p,q) = 1.$
\end{lemma}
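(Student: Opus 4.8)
The plan is to prove both implications, the reverse (``mechanical $\Rightarrow$ $1$-balanced'') being routine and the forward implication carrying all the weight.

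For the reverse direction, suppose $w_k = \lfloor(pk+r)/q\rfloor - \lfloor(p(k-1)+r)/q\rfloor$ and put $\alpha := p/q \in (0,1)$, $\rho := r/q$. For any factor $w_{i+1:i+m}$ one has $\abs{w_{i+1:i+m}}_1 = \lfloor \alpha(i+m)+\rho\rfloor - \lfloor\alpha i + \rho\rfloor = \lfloor \{\alpha i + \rho\} + \alpha m\rfloor$, and since $\{\alpha i + \rho\} \in [0,1)$ this value lies in $\{\lfloor \alpha m\rfloor, \lceil\alpha m\rceil\}$, two integers differing by at most $1$. Hence any two factors of a common length $m$ have $1$-counts differing by at most $1$, so $w$ is $1$-balanced. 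I would write this out; it is a few lines.

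For the forward direction I would reduce the statement to a line-separation problem. Write $S_k := \abs{w_{1:k}}_1$ for $k = 0, 1, \dots, n$, so $S_0 = 0$ and $w_k = S_k - S_{k-1}$. The desired conclusion is equivalent to the existence of $\alpha \in (0,1)$ and $\rho\in\R$ with $S_k \le \alpha k + \rho < S_k + 1$ for all $k \in \{0, \dots, n\}$, subject to $\alpha = p/q$ with $q \le n$ (the intercept is then automatically of the form $r/q$ with $0\le r < q$ after subtracting an integer, which cancels in the difference $S_k - S_{k-1}$). For a fixed $\alpha$ such a $\rho$ exists iff the defect $D(\alpha) := \max_k(S_k - \alpha k) - \min_k(S_k - \alpha k)$ is $< 1$, and a short computation over windows gives $D(\alpha) = \max_{1\le m\le n}\max(M_m - \alpha m,\ \alpha m - m_m)$, where $M_m$ and $m_m$ are the largest and smallest values of $\abs{u}_1$ over factors $u$ of $w$ with $\abs{u} = m$. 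Since $w$ is $1$-balanced, $M_m - m_m \le 1$ for every $m$, so $D(\alpha) < 1$ whenever $\alpha$ lies in the open interval $(\alpha_-, \alpha_+)$ with $\alpha_- := \max_m (M_m - 1)/m$ and $\alpha_+ := \min_m (m_m + 1)/m$, provided this interval is nonempty. Nonemptiness, i.e.\ $\alpha_- < \alpha_+$, is the combinatorial core, and I would prove it by a window-covering argument: if a length-$m$ factor had too many $1$'s relative to the minimum $1$-count among length-$m'$ factors, then partitioning and sliding windows would exhibit two factors of equal length whose $1$-counts differ by at least $2$, contradicting balance. Finally, both $\alpha_-$ and $\alpha_+$ are rationals of denominator at most $n$, and one must extract from $(\alpha_-, \alpha_+)$ a rational $p/q$ with $q \le n$; this is the genuinely delicate point, since an interval with such endpoints (e.g.\ two neighbours in the Farey sequence $F_n$) need not contain any fraction of denominator $\le n$. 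To handle it I would appeal to the quantitative structure forced by balance, equivalently the three-distance (Steinhaus) theorem applied to the orbit $\{k\alpha + \rho \bmod 1 : 0 \le k \le n\}$, which shows that the coding producing $w$ depends on $\alpha$ only through the convergents of its continued fraction with denominator $\le n$, so $\alpha$ may be replaced by such a convergent without altering $w$.

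The step I expect to be the main obstacle is exactly this last one, the bound $q \le n$: exhibiting some real slope is elementary convex geometry and exhibiting some rational slope is easy, but matching the period of the mechanical word to the length of $w$ needs the quantitative theory of Sturmian and Christoffel words (three-distance theorem, Stern--Brocot / continued-fraction structure). An alternative organisation that isolates the same difficulty is to first extend $w$ to a one-sided infinite $1$-balanced word --- using the standard fact that a finite balanced word always admits a balanced one-letter extension --- then invoke the classical characterisation (aperiodic $1$-balanced infinite words are Sturmian, periodic ones are powers of Christoffel words) to see that this infinite word is mechanical, and finally truncate the slope down to denominator $\le n$; the truncation is again the three-distance argument.
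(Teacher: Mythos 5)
First, a point of reference: the paper does not prove this lemma at all --- it is imported verbatim as Theorem~3.1 of Dulucq and Gouyou-Beauchamps (1990) --- so there is no in-paper argument to compare yours against. Judged on its own terms, your reverse implication is complete and correct, and your reduction of the forward implication to exhibiting $\alpha=p/q$ with $q\le n$ and an intercept $\rho$ satisfying $S_k\le\alpha k+\rho<S_k+1$ for all $k\in\{0,\dots,n\}$ is the standard and correct framing.

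The forward direction as written, however, has two genuine gaps, both of which you partly acknowledge. First, the nonemptiness $\alpha_-<\alpha_+$ of the admissible slope interval is asserted to follow from ``a window-covering argument'' but never carried out; in your notation this is the inequality $m'(M_m-1)<m(m_{m'}+1)$ for all $m,m'\le n$, which is exactly the combinatorial content of balancedness and cannot be waved through. Second, and more seriously, the bound $q\le n$ is not established. You correctly observe that an open interval whose endpoints are Farey neighbours of order $n$ need contain no fraction of denominator $\le n$; note also that the endpoints themselves cannot be used, since at $\alpha=\alpha_{\pm}$ the defect $D(\alpha)$ equals $1$ and the half-open constraint on $\rho$ becomes infeasible. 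Your proposed remedy --- that the length-$n$ coding depends on $(\alpha,\rho)$ only through a Farey-type cell admitting a representative slope of denominator $\le n$ (a two-parameter analogue of Lemma~\ref{lemma:nprefix}, via the three-distance theorem) --- is the right idea, but it is stated as a plan rather than proved, and it is precisely the nontrivial content of the cited theorem; the alternative route through infinite balanced extensions defers to the same point. So what you have is a correct strategy with the two load-bearing steps outstanding, rather than a complete proof of a result the paper itself takes from the literature.
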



Now we are ready to describe the itineraries of maps-with-gaps, for arbitrary initial states and thresholds.

\begin{lemma}
\label{lemma:mechanical-factor}
Suppose $\phi_0,\phi_1$ satisfy A1 and that $n\in\N$, $x\in\IR$, $s\in\bR$ and $a\in\{0,1\}$. 
Then 
\begin{align*}
\sigma(x|s)_{1:n} = l^m w
\end{align*}
for some $l\in\{0,1\}$, some $m \in \{0, 1, \dots, n\}$, 
and some factor $w$ of a lower mechanical word.
\end{lemma}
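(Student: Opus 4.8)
The plan is to reduce the statement about the itinerary $\sigma(x|s)_{1:n}$ to the results already established in this subsection: Lemmas~\ref{lemma:A1-proves-kozy}, \ref{lemma:kozy}, \ref{lemma:F} and~\ref{lemma:Dulucq}. First I would dispose of the degenerate cases. If $s\notin(y_1,y_0)$, or more precisely if the orbit from $x$ never crosses the gap (which happens, for instance, when $s\le y_1$ so the map-with-a-gap acts as $\phi_1$ everywhere on the relevant interval, or when $s\ge y_0$ so it acts as $\phi_0$, or when $s\notin\IR$), then $\sigma(x|s)_{1:n}$ is eventually constant or can be written as $l^n$ directly; more generally a short transient $l^m$ brings the orbit into the invariant interval $[\phi_1(s),\phi_0(s))$ after which the interesting dynamics occur. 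So I would first argue that there is an $m\in\{0,1,\dots,n\}$ and a letter $l\in\{0,1\}$ such that the first $m$ iterates from $x$ all lie on the same side of $s$ (giving the prefix $l^m$) and $X_m(x;s)\in[\phi_1(s),\phi_0(s))$; this uses Lemma~\ref{lemma:orbit} together with the contractivity from A1 to show the orbit is eventually trapped in that interval, and at most $n$ steps of the length-$n$ window can be outside it before we simply run out of window.

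Next, for the remaining window of length $n-m$ starting from a point $y:=X_m(x;s)\in[\phi_1(s),\phi_0(s))$, I would invoke Lemma~\ref{lemma:A1-proves-kozy} to identify $f(x):=\phi_{\I{x\ge s}}(x)$ as a locally-growing relaxation function with threshold $s$ (valid when $s\in(y_1,y_0)$, which is exactly the case where the gap is genuinely active). Then Lemma~\ref{lemma:kozy} produces a circle-map-like function $F$ satisfying F1--F4, and claim F4 says precisely that the itinerary of $f$ from $y$ coincides letter-by-letter with the symbolic sequence $s(\cdot)$ of $F$ from the point $(y-s)/t$. Lemma~\ref{lemma:F} then tells us $s((y-s)/t)$ is a $1$-balanced word, so its length-$(n-m)$ prefix $w$ is a factor of a $1$-balanced word, hence itself $1$-balanced. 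Finally Lemma~\ref{lemma:Dulucq} (Dulucq--Gouyou-Beauchamps) characterises finite $1$-balanced words containing both letters as exactly the factors of lower mechanical words of the form $w_k=\floor{(pk+r)/q}-\floor{(p(k-1)+r)/q}$; in the sub-case where $w$ contains only one letter it is $0^{n-m}$ or $1^{n-m}$ which is trivially a factor of a constant lower mechanical word. Absorbing any leading run of $w$ that matches $l$ into the $l^m$ prefix if desired, this yields $\sigma(x|s)_{1:n}=l^m w$ with $w$ a factor of a lower mechanical word.

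I expect the main obstacle to be the careful bookkeeping in the first step: making the claim ``$m\le n$'' genuinely tight and handling all the boundary configurations (the threshold $s$ lying outside $\IR$, equal to an endpoint, or in $\{y_1,y_0\}$) uniformly, and in particular verifying that once the orbit enters $[\phi_1(s),\phi_0(s))$ it stays there so that Lemma~\ref{lemma:kozy}'s hypothesis ``$x\in[f(z),f(z^-))$'' in claim F4 is met for every subsequent iterate. The passage through $F$ and $1$-balancedness is then essentially a citation of the lemmas just proved, and Dulucq's theorem closes the argument; none of that requires new computation. A secondary subtlety is that $\sigma(x|s)$ as defined in Definition~\ref{def:mapwithgap} indexes from $t=1$ via the orbit $x_1=x$, so I would make sure the indexing conventions of $X_t(x;s)$, $\sigma(x|s)$, and the $F$-itinerary in F4 are aligned before quoting F4; this is routine but worth stating explicitly.
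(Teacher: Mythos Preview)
Your proposal is correct and follows essentially the same approach as the paper: a case split on the location of $s$ relative to $(y_1,y_0)$, a transient prefix $l^m$ that brings the orbit into the invariant interval $[\phi_1(s),\phi_0(s))$, and then the chain of Lemmas~\ref{lemma:A1-proves-kozy}, \ref{lemma:kozy}, \ref{lemma:F}, \ref{lemma:Dulucq} to identify the remaining word as a factor of a lower mechanical word. The paper organises the argument into seven explicit cases rather than your ``transient then core'' narrative, but the content is the same; note that Lemma~\ref{lemma:orbit} as stated is about the $x$-threshold orbit, so the invariance of $[\phi_1(s),\phi_0(s))$ under the $s$-threshold map is something you should prove directly (two lines using A1 and Lemma~\ref{lemma:incdec}), as the paper does implicitly.
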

\begin{proof}
First, note that if $a,b\in\IR$, then Lemma~\ref{lemma:limit} shows that 
\begin{align}
\label{eq:1m-works}
b> y_1 &\Rightarrow \text{$\phi_{1^m}(a) < b$ for some $m\in\N$}\\
\label{eq:0m-works}
b\le y_0 &\Rightarrow \text{$\phi_{0^m}(a) \ge b$ for some $m\in\N$.}
\end{align}
We consider seven cases.
\begin{enumerate}
\item Say $s\le y_1$ and $x\ge s$. 
Then $\sigma(x|s) = 1 \sigma(\phi_1(x)|s)$.
But if $x > y_1$ then Lemma~\ref{lemma:incdec} gives $\phi_1(x) > y_1 \ge s$,
whereas if $x\le y_1$ then $\phi_1(x)\ge x\ge s$.
In both cases, $\sigma(\phi_1(x)|s) = 1\sigma(\phi_{11}(x)|s)$.
Repeating this argument gives $\sigma(x|s) = 1^\omega$, so the claim is true.
\item Say $s\le y_1$ and $x<s$.
Then $\sigma(x|s)$ begins with $0$, and 
as $s < y_0$ it follows from (\ref{eq:0m-works}) 
that there is a least $m\in\N$ with $\phi_{0^m}(x) \ge s$.
Thus $\sigma(x|s) = 0^m\sigma(\phi_{0^m}(x)|s) = 0^m1^\omega$ by Case~1, 
so the claim is true.
\item Say $s\in (y_0,y_1)$ and $x\in [\phi_1(s), \phi_0(x))$.
As $\phi_0, \phi_1$ satisfy A1, Lemmas~\ref{lemma:A1-proves-kozy},~\ref{lemma:kozy} and~\ref{lemma:F} together show that $\sigma(x|s)$ is a $1$-balanced word.
Thus Lemma~\ref{lemma:Dulucq} shows that $\sigma(x|s)_{1:n} = w$ for some
factor $w$ of a lower mechanical word, so the claim is true.
\item Say $s\in (y_0,y_1)$ and $x < \phi_1(s)$.
Then Lemma~\ref{lemma:incdec} shows that $\phi_1(s) < s$. 
Thus $x<s$, so $\sigma(x|s)$ begins with $0$,
and $\phi_1(s)<y_0$, so (\ref{eq:0m-works}) shows that 
there is a least $m\in\N$ with $\phi_{0^m}(x) \ge \phi_1(s)$.
Thus $\sigma(x|s) = 0^m\sigma(\phi_{0^m}(x)|s)$
where $\sigma(\phi_{0^m}(x)|s)_{1:n} = w$
for some factor $w$ of a lower mechanical word, by Case~3, so the claim is true. 
\item Say $s\in (y_0,y_1)$ and $x \ge \phi_0(s)$.
Then arguing as in Case~4, but using (\ref{eq:1m-works}),
shows that $\sigma(x|s)_{1:n} = 1^mw$ for some $m\in\N$ and some factor $w$ of a lower mechanical word.
\item Say $s\ge y_0$ and $x< s$.
Then arguing as in Case~1 shows that $\sigma(x|s) = 0^\omega$.
\item Say $s\ge y_0$ and $x\ge s$.
Then arguing as in Case~2 shows that $\sigma(x|s) = 1^m0^\omega$
for some $m\in\N$.
\end{enumerate}
This completes the proof.
\end{proof}


The following is an analogue of Lemma~\ref{lemma:pi-is-decreasing} in which only the threshold varies.

\begin{lemma}
\label{lemma:lexs}
Suppose $\phi_0,\phi_1$ satisfy A1 and $x\in\IR$. Then $\sigma(x|s)$ is a lexicographically non-increasing function of $s\in\bR$.
\end{lemma}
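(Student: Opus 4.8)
The plan is to argue by contradiction, exploiting the fact that the letters of an itinerary are exactly the actions that drive the corresponding orbit. Suppose there exist thresholds $s,s'\in\bR$ with $s<s'$ but $\sigma(x|s)\prec\sigma(x|s')$. Since both itineraries are infinite words, the definition of lexicographic order lets me write $\sigma(x|s)=a0b$ and $\sigma(x|s')=a1c$ for some finite word $a$ and some infinite words $b,c$; set $k:=\abs{a}+1$, the first index at which the two itineraries disagree.

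Next I would compare the two orbits. Let $(x_t:t\in\N)$ and $(x'_t:t\in\N)$ be the orbits of the maps-with-gaps with thresholds $s$ and $s'$ respectively, both started at $x_1=x'_1=x$. The key observation is that $x_{t+1}=\phi_{\sigma(x|s)_t}(x_t)$, and likewise for the primed orbit, because $\sigma(x|s)_t=\I{x_t\ge s}$ is by definition the action applied at $x_t$. Since the first $k-1$ letters of the two itineraries coincide, a straightforward induction on $t$ gives $x_t=x'_t$ for $t=1,\dots,k$; in particular $x_k=x'_k$.

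Finally I would derive the contradiction: from $\sigma(x|s)_k=0$ we get $x_k<s$, and from $\sigma(x|s')_k=1$ we get $x'_k\ge s'$, so $s'\le x'_k=x_k<s$, i.e.\ $s'<s$, contradicting $s<s'$. Hence $\sigma(x|s)\succeq\sigma(x|s')$ whenever $s<s'$, which is exactly the claim.

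I do not expect a serious obstacle: Assumption~A1 is needed only to guarantee $\phi_0,\phi_1:\IR\to\IR$ so the orbits remain real-valued and the comparisons make sense (neither monotonicity nor contractivity is actually invoked). The only mild care required is in the degenerate case $a=\epsilon$ (then $k=1$ and the induction is vacuous) and in reading all inequalities in $[-\infty,\infty]$ when $s$ or $s'$ is infinite.
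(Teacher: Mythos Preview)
Your proof is correct and is essentially the same argument as the paper's: both locate the first index of disagreement, use that the orbits coincide up to that point (the paper writes this as $\phi_u(x)$, you unfold it by induction), and derive the contradictory threshold inequalities $s'\le x_k<s$. Your observation that Assumption~A1 is not really used beyond well-definedness of the orbits is also correct.
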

\begin{proof}
If $s,t\in\bR$ and $\sigma(x|s) \succ \sigma(x|t)$,
then for some finite word $u$ and words $v,w,$ 
\begin{align*}
\sigma(x|s) &= u1v, & \sigma(x|t) &= u0w.
\end{align*}
So the definition of the itinerary gives $t > \phi_u(x) \ge s$.
This completes the proof.
\end{proof}


The following is Theorem~17 and part of Corollary~18 of~\citet{Mignosi91}.

\begin{lemma}
\label{lemma:mignosi}
Let $\mathcal{A}_n$ be the set of factors of length $n\in\N$ of lower mechanical words.
Then
\begin{align*}
\card(\mathcal{A}_n) = 1 + \sum_{i=1}^n (n-i+1) \totient(i) = \frac{2n^3}{\pi^2} + O(n^2 \log n).
\end{align*}
\end{lemma}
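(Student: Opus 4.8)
The statement is verbatim Theorem~17 together with part of Corollary~18 of~\citet{Mignosi91}, so strictly speaking it suffices to cite that reference; nonetheless, here is the route I would take to recover it from the material already in this paper. The first step is to reinterpret $\mathcal{A}_n$ combinatorially. A word $w$ of length $n$ is a factor of some lower mechanical word if and only if $w$ is $1$-balanced: this is the classical Morse--Hedlund correspondence, and in the non-constant case it is exactly the content of Lemma~\ref{lemma:Dulucq}, whose floor formula $w_k = \lfloor (pk+r)/q \rfloor - \lfloor (p(k-1)+r)/q \rfloor$ describes precisely the length-$n$ windows of the lower mechanical word of slope $p/q$ and intercept $r/q$. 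Hence $\mathcal{A}_n$ is the disjoint union of the two constant words $0^n, 1^n$ and the set $\mathcal{B}_n$ of $1$-balanced words of length $n$ that contain both letters, and everything reduces to computing $\card(\mathcal{B}_n)$.

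The heart of the argument is a counting over the Dulucq parametrisation. Lemma~\ref{lemma:Dulucq} gives a surjection from the set $T_n$ of triples $(p,q,r) \in \Z^3$ with $0 < p < q \le n$, $0 \le r < q$ and $\gcd(p,q) = 1$ onto $\mathcal{B}_n$, and the task is to describe its fibres. I would first show that, for a fixed reduced fraction $p/q$, the $q$ words obtained as $r$ ranges over $\{0, 1, \dots, q-1\}$ are pairwise distinct, so that the intercept $r$ is recovered from $w$ once $p/q$ is known; then introduce the \emph{minimal} denominator $i$ that realises a given $w \in \mathcal{B}_n$ as a well-defined invariant, and show that for that $i$ the word $w$ arises from exactly a window of $n - i + 1$ admissible intercepts (this is where the $(n-i+1)$ multiplicity is born). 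Summing over $i$ from $1$ to $n$, and counting the reduced numerators for each denominator by the Euler totient, yields $\card(\mathcal{B}_n) = \sum_{i=1}^{n} (n-i+1)\,\totient(i) - 1$; adding back the two constant words gives $\card(\mathcal{A}_n) = 1 + \sum_{i=1}^{n} (n-i+1)\,\totient(i)$. This fibre analysis — matching equal words coming from different slopes, isolating the minimal denominator, and handling the boundary cases $r = 0$ and the two constant words correctly — is the step I expect to be the genuine obstacle; the small cases $n = 2, 3, 4$ (where $\mathcal{A}_n$ has $4, 8, 14$ elements) are useful sanity checks along the way.

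For the asymptotic part, I would rewrite the closed form as $\sum_{i=1}^{n}(n-i+1)\,\totient(i) = (n+1)\sum_{i=1}^{n}\totient(i) - \sum_{i=1}^{n} i\,\totient(i)$ and then feed in the classical average-order estimate $\sum_{i \le n} \totient(i) = \tfrac{3}{\pi^2} n^2 + O(n\log n)$ (Mertens), together with its partial-summation consequence for the weighted sum $\sum_{i \le n} i\,\totient(i)$. Collecting the leading and error terms then produces the stated estimate $\card(\mathcal{A}_n) = \tfrac{2n^3}{\pi^2} + O(n^2\log n)$. This final step is routine analytic number theory and contributes no difficulty beyond invoking the standard estimates for $\totient$.
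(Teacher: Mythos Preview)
Your proposal is correct and matches the paper exactly: the paper does not prove this lemma at all but simply records it as Theorem~17 and part of Corollary~18 of \citet{Mignosi91}, which is precisely what your opening sentence says. The remainder of your write-up (the sketch via the Dulucq parametrisation and the Mertens asymptotics) goes beyond anything the paper provides and is not required here.
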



Now we are ready to bound the number of discontinuities of
the itinerary of a map-with-a-gap as a function of its threshold.
 
\begin{lemma}
\label{lemma:Ot3}
Suppose $\phi_0,\phi_1$ satisfy A1, that $t\in\Z_+$, $x\in\IR$ and $a\in \{0,1\}$. Then
the mapping $s\mapsto A_{1:t}(x,a;s)$ for $s\in\bR$ has at 
most a polynomial number $p(t)$ of discontinuities.
\end{lemma}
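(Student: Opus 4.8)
The plan is to reduce counting the discontinuities of $s\mapsto A_{1:t}(x,a;s)$ to counting the distinct values this finite word can take, and then to bound that number using the structural description of itineraries already established. First I would observe that $A_{1:t}(x,a;s)$ is determined by $\sigma(x|s)_{1:t}$ up to the prescribed initial action: indeed $A_0(x,a;s)=a$ is fixed, and for $k\ge 1$ the action $A_k(x,a;s)=\I{X_k(x,a;s)\ge s}$ coincides with $\sigma(X_1(x,a;s)\,|\,s)_k$ where $X_1(x,a;s)=\phi_a(x)$. So $A_{1:t}(x,a;s)$ is a function of $\sigma(\phi_a(x)|s)_{1:(t-1)}$, and it suffices to bound the number of distinct values of $s\mapsto \sigma(y|s)_{1:t}$ for a fixed $y\in\IR$, since the number of discontinuities of a piecewise-constant function is at most the number of distinct values it attains (the function is piecewise constant in $s$ by Lemma~\ref{lemma:caglad}).

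Next I would invoke Lemma~\ref{lemma:lexs}: for fixed $x$ (or $y$), the map $s\mapsto\sigma(x|s)$ is lexicographically non-increasing. Consequently $s\mapsto\sigma(x|s)_{1:t}$ is also lexicographically non-increasing (truncation preserves non-strict lexicographic order), so the distinct values it attains form a chain $w^{(1)}\succ w^{(2)}\succ\cdots\succ w^{(K)}$ in $\{0,1\}^t$; in particular $K$ equals the number of distinct truncated itineraries and, since the function is monotone, $K-1$ bounds the number of discontinuities. The remaining task is to bound $K$, i.e.\ the cardinality of $\{\sigma(x|s)_{1:t} : s\in\bR\}$.

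To bound $K$ I would use Lemma~\ref{lemma:mechanical-factor}, which says every $\sigma(x|s)_{1:n}$ has the form $l^m w$ with $l\in\{0,1\}$, $m\in\{0,1,\dots,n\}$, and $w$ a factor of a lower mechanical word. Applying this with $n=t$: the set of possible values of $\sigma(x|s)_{1:t}$ is contained in $\{\,l^m w : l\in\{0,1\},\ 0\le m\le t,\ w\ \text{a factor of a lower mechanical word of length}\ t-m\,\}$. The number of such strings is at most $\sum_{m=0}^t 2\cdot\card(\mathcal{A}_{t-m})$ where $\mathcal{A}_j$ is the set of factors of length $j$ of lower mechanical words (with $\card(\mathcal{A}_0)=1$), and by Lemma~\ref{lemma:mignosi}, $\card(\mathcal{A}_j)=\Theta(j^3)$, so $\sum_{m=0}^t 2\cdot\card(\mathcal{A}_{t-m}) = O(t^4)$. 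This gives $K=O(t^4)$, hence at most $p(t):=O(t^4)$ discontinuities; tracking constants, one can take $p(t)$ to be an explicit polynomial. Finally I would note the factor-of-$2$ bookkeeping for the two choices of $l$ is generous (for $m=0$ there is no prefix to choose), but precision here is irrelevant since we only need a polynomial bound.

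The main obstacle is the bookkeeping in the last step: one must be careful that Lemma~\ref{lemma:mechanical-factor} applies to $\sigma(\phi_a(x)|s)$ rather than directly to $A_{1:t}(x,a;s)$, so the reduction in the first paragraph (peeling off the fixed initial action and shifting the index by one) has to be done cleanly, and one must confirm that the hypotheses "$\phi_0,\phi_1$ satisfy A1" transfer to this shifted setting — but they do, since A1 is a hypothesis on the maps themselves, not on the initial state. Everything else is a routine assembly of Lemmas~\ref{lemma:caglad}, \ref{lemma:lexs}, \ref{lemma:mechanical-factor} and \ref{lemma:mignosi}.
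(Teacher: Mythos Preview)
Your proposal is correct and follows essentially the same argument as the paper: use lexicographic monotonicity in $s$ (Lemma~\ref{lemma:lexs}) to bound the number of discontinuities by the number of attained values, then use Lemma~\ref{lemma:mechanical-factor} together with Mignosi's count (Lemma~\ref{lemma:mignosi}) to get the $O(t^4)$ bound. The only difference is cosmetic: you spell out the reduction $A_{1:t}(x,a;s)=\sigma(\phi_a(x)\,|\,s)_{1:t}$ explicitly, whereas the paper applies Lemma~\ref{lemma:mechanical-factor} directly to $A_{1:t}(x,a;s)$ without comment; your invocation of Lemma~\ref{lemma:caglad} is harmless but unnecessary, since lexicographic monotonicity already bounds the discontinuities by the number of distinct values.
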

\begin{proof}
Let $\mathcal{A}_n$ be the set of all factors of length $n$ of lower mechanical words.
Let $\mathcal{F}_t$ be the set of all words of the form $l^m w$ 
for some $l\in\{0,1\}$, some $m\in\{0, 1, \dots, n\}$ and some factor $w$ 
of a lower mechanical word. 
By Lemma~\ref{lemma:mechanical-factor}, the word $A_{1:t}(x,a;s)$ is in $\mathcal{F}_t$.
Also, Lemma~\ref{lemma:lexs}, shows that the mapping $s \mapsto A_{1:t}(x,a;s)$ is lexicographically non-increasing.
Thus, the number of discontinuities of this mapping is at most 
\begin{align*}
\card(\mathcal{F}_t) &= \card\left( \{ l^m w : l\in \{0,1\}, m \in\Z_+, m \le t, w \in \mathcal{A}_{t-m} \}\right) = 2 \sum_{m=0}^t \card(\mathcal{A}_{t-m}) . 
\end{align*}
Finally, Lemma~\ref{lemma:mignosi} shows the right-hand side is $O(t^4)$. This completes the proof.
\end{proof}


\begin{proof}[Proof of Theorem~\ref{theorem:xs-characterisation}.]
The theorem simply couples together Lemmas~\ref{lemma:mechanical-factor},~\ref{lemma:lexs}
and~\ref{lemma:Ot3}.
\end{proof}

%
%

\section{Proof of Lemma~\ref{lemma:major}}
\label{appendix:major}
Demonstrating Lemma~\ref{lemma:major} 
by combining results in~\cite{Marshall10} requires as much text as a direct proof.
\\

\begin{proof}
Let $\mathcal{X}$ be the set of sequences $X$ with components
$X_k = \sum_{i=1}^k x_i$  for $k=1, 2, \dots, n$ where 
$x_1,x_2, \dots, x_n$ is a non-decreasing sequence on $\R_{++}$. 
Let $g:\mathcal{X}\rightarrow \R$ be the function
\begin{align*}
g(X) := f_1(X_1) + \sum_{i=2}^n f_i(X_i-X_{i-1}) .
\end{align*}
Let $f_i'(\cdot)$ denote the (sub)gradient of $f_i(\cdot)$.
For $i=1, 2, \dots, n-1$, the (sub)gradients of $g(\cdot)$ are
\begin{align*}
\frac{\partial g(X)}{\partial X_i} 
=  f_i'(x_i) - f_{i+1}'(x_{i+1}) 
\le  f_{i+1}'(x_i) - f_{i+1}'(x_{i+1})
\le 0
\end{align*}
where the first inequality holds as $f_{i}'(x) \le f_{i+1}'(x)$ for $x\in\R_{++}$ (by Hypothesis~4)
and the second holds as 
$x_i \le x_{i+1}$ and $f_{i+1}(\cdot)$ is convex (by Hypothesis~3).
Also,
\begin{align*}
\frac{\partial g(X)}{\partial X_n} &=  f_n'(x_n) \le 0
\end{align*}
as $f_n(\cdot)$ is non-increasing (by Hypothesis~3).
Therefore $g(\cdot)$ is non-increasing in all of its arguments.
As the sequences $A,B$ with components $A_k := \sum_{i=1}^k a_i, B_k :=\sum_{i=1}^k b_k$ 
are in $\mathcal{X}$ (by Hypothesis~1) and $A_k \le B_k$ for $k=1, 2, \dots, n$ (by Hypothesis~2), 
it follows that $g(A) \ge g(B)$.
So the definition of $g(\cdot)$ gives
\begin{align*}
\sum_{i=1}^n f(a_i) = g(A) \ge g(B) = \sum_{i=1}^n f(b_i) 
\end{align*}
as claimed.
\end{proof}

%
%

\section{Proof of Lemma~\ref{lemma:integrated}}
\label{appendix:integrated}
We start by recalling the definition of the matrix $M(w)$ corresponding to a given
finite word $w$, which corresponds to the composition of Kalman-Filter variance updates,
and introducing some related matrices $K, S(w)$ and $X$. 
We then prove Claims~1 to~5 of Lemma~\ref{lemma:integrated} in turn.

\begin{definition}
Let $I$ be the 2-by-2 identity matrix.
For $r\in (0,1]$ and $0\le a\le b$, let
\begin{align*}
F&:= \begin{pmatrix} r & 1/r \\ ar & (a+1)/r \end{pmatrix}, &
G&:= \begin{pmatrix} r & 1/r \\ br & (b+1)/r \end{pmatrix}, &
K:= \begin{pmatrix} r & 1/r \\ r-r^3 & -r \end{pmatrix} .
\end{align*}
Let $M(\epsilon) = I, M(0)=F, M(1)=G$ and for any finite word $w$ let
\begin{align*}
M(w) &= M(w_\abs{w}) \cdots M(w_2) M(w_1), & S(w) &= \sum_{i=0}^{\abs{w}} M(w_{1:i}) .
\end{align*}
Let 
\begin{align*}
X:= \begin{pmatrix} -r/(1-r^2) & 0 \\ 0 & 1/r \end{pmatrix}.
\end{align*}
\end{definition}

\begin{remark}
We use the the following facts repeatedly without mention.
Clearly $\det(F)=\det(G)=1$, so that $\det(M(w))=1$ for any word $w$.
Also, $KF=F^{-1}K, KG=G^{-1}K$ and $K^2=I$.
Thus for $A\in \{KF,KG,K\}$ we have $A^2=I$, so $A$ is an \textit{involutory matrix}.
Thus $KM(w)^{-1}K = M(w^R)$, where $w^R$ denotes the reverse
$w_n \dots w_2 w_1$ of a word $w=w_1w_2\dots w_n$.
\end{remark}

\paragraph{Notation.} For a vector $v\in\R^m$ where $m\in\N$, we write $v>0$ if $v_i>0$ for $i=1, 2, \dots, m$ and $v\ge 0$ if $v_i\ge 0$ for $i=1, 2, \dots, m$.
Similarly, for a matrix $P\in\R^{m\times n}$ where $m,n\in\N$, we write $P> 0$ ($P\ge 0$) if $P_{ij}>0$ ($P_{ij}\ge 0$) for $i=1,2, \dots, m$ and $j=1, 2, \dots, n$. 
\\


\subsection{Claim~1}
We require one simple Lemma.
\begin{lemma}
\label{lemma:y0ub}
Suppose $a\in\R_+$ and $r\in(0,1]$. Then the fixed point $y_0$ satisfies 
\begin{align*}
y_0 \le \frac{1}{1-r^2}. 
\end{align*}
\end{lemma}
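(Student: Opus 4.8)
The plan is to compare $\phi_0$ with the affine map $x\mapsto r^2x+1$, whose unique fixed point is exactly $1/(1-r^2)$. First I would dispose of the boundary case $r=1$, where $1/(1-r^2)$ is read as $+\infty$ and the inequality is vacuous, so from now on I may assume $r\in(0,1)$ and set $z:=1/(1-r^2)\in(0,\infty)$. Writing $a$ for the parameter playing the role of $a_0$ in Condition~D, we have $\phi_0(x)=\frac{r^2x+1}{ar^2x+a+1}$, and for $x\ge 0$ the denominator satisfies $ar^2x+a+1\ge 1$, hence the pointwise bound $\phi_0(x)\le r^2x+1$ on $\IR\subseteq[0,\infty)$. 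Specialising to $x=z$ gives $\phi_0(z)\le r^2z+1=z$ by the choice of $z$.

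Next I would deduce $y_0\le z$ from $\phi_0(z)\le z$. For $r\in(0,1)$ the derivative $\phi_0'(x)=r^2/(ar^2x+a+1)^2$ is at most $r^2/(a+1)^2\le r^2<1$, so $\phi_0$ is increasing and contractive on $[0,\infty)$ with the unique fixed point $y_0$; then the argument of Lemma~\ref{lemma:incdec} (which uses only that the map is increasing, contractive and has a fixed point) gives $z\ge\phi_0(z)\Rightarrow z\ge y_0$. Alternatively, and without invoking A1, one can set $g(x):=x-\phi_0(x)$ and note $g'(x)=1-r^2/(ar^2x+a+1)^2\ge 1-r^2>0$, so $g$ is strictly increasing on $[0,\infty)$; since $g(y_0)=0$ and $g(z)\ge 0$, strict monotonicity forces $z\ge y_0$, which is the claim.

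I do not expect a genuine obstacle here; the only points needing a moment's care are the $r=1$ edge case and checking that the estimates above are valid whether $\IR$ equals $[0,\infty)$ or $(0,\infty)$. As a purely algebraic alternative, one may instead substitute $t=z$ into the equation $ar^2t^2+(a+1-r^2)t-1=0$ satisfied by $y_0$; the left-hand side simplifies to $a/(1-r^2)^2\ge 0$, and since this expression is affine in $t$ (when $a=0$) or opens upward (when $a>0$) and equals $-1<0$ at $t=0$, its unique nonnegative root $y_0$ must lie at or below $z$.
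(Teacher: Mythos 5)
Your proposal is correct, and at its core it is the same comparison the paper makes: the affine map $x\mapsto r^2x+1$ is exactly $\phi_0$ with $a=0$, whose fixed point is $1/(1-r^2)$, and the paper gets the bound by asserting that the positive root of the fixed-point equation is decreasing in $a$. Your version is somewhat more self-contained in that it replaces that unproved monotonicity claim with an explicit argument (the pointwise bound $\phi_0(x)\le r^2x+1$ together with strict monotonicity of $x-\phi_0(x)$, or equivalently the direct evaluation of the quadratic $ar^2t^2+(a+1-r^2)t-1$ at $t=1/(1-r^2)$, which indeed gives $a/(1-r^2)^2\ge 0$), and it also handles the $r=1$ edge case that the paper's closing formula glosses over.
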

\begin{proof}
As the positive root of 
\begin{align*}
y_0 = \frac{r^2 y_0 + 1}{ar^2 y_0 + 1+a}
\end{align*}
is a decreasing function of $a$ for $a\in\R_+$, setting $a=0$ gives an upper bound. 
This upper bound $u$ satisfies $u = r^2 u + 1$, so that $u = 1/(1-r^2).$
\end{proof}

\begin{proof}[Proof of Claim~1 of Lemma~\ref{lemma:integrated}.]
We prove the claim for $x$ satisfying
\begin{align*}
\phi_p(0) \le x \le \frac{1}{r-r^2},
\end{align*}
noting that 
\begin{align*}
\phi_p\left( \frac{1}{1-r^2} \right) \le \frac{1}{1-r^2} \le \frac{1}{r-r^2} 
\end{align*}
where the first inequality follows from Lemma~\ref{lemma:incdec} (in Appendix~A)
as Lemma~\ref{lemma:y0ub} gives $1/(1-r^2) \ge y_0 \ge y_p$,
and the second inequality holds as $r\in (0,1]$.

For any word $w$ and for $k=1, 2, \dots, \abs{w}$, let
\begin{align*}
\begin{pmatrix}
u_k \\ v_k
\end{pmatrix} := M(w_{1:k}) \begin{pmatrix} x \\ 1 \end{pmatrix} .
\end{align*}
Clearly $u_k,v_k$ are positive as $x\ge \phi_p(0) \ge 0$ and 
\begin{align*}
M(w_{1:k}) \begin{pmatrix} x \\ 1 \end{pmatrix} \ge 
\begin{pmatrix} r & \frac{1}{r} \\ 0 & \frac{1}{r} \end{pmatrix}^k \begin{pmatrix} 0 \\ 1 \end{pmatrix} > 0.
\end{align*} 
Now, for any $0\le z\le 1/(r-r^2)$ and $H = M(w_k)$, noting that $a,b,r \ge 0$  and $r\le 1$ gives
\begin{align*}
\frac{H_{11} z+H_{12}}{H_{21} z+H_{22}} \le r^2 z + 1 \le \frac{r^2}{r-r^2} + 1 \le \frac{1}{r-r^2}.
\end{align*}
For $k=1, 2, \dots, \abs{w}$, induction using this inequality proves that
\begin{align*}
\frac{u_k}{v_k} \le \frac{1}{r-r^2}.
\end{align*}
(For $k=1$, put $z=x\le 1/(r-r^2)$. For $k>1$, assume that $z=u_{k-1}/v_{k-1} \le 1/(r-r^2).$)
Thus 
\begin{align*}
\begin{pmatrix} u_{k+1}-u_k \\ v_{k+1}-v_k \end{pmatrix}
= (M(w_{k+1})-I) \begin{pmatrix} \frac{u_k}{v_k} \\ 1 \end{pmatrix} v_k 
\ge \begin{pmatrix} r-1 & \frac{1}{r} \\ 0 & \frac{1}{r}-1 \end{pmatrix} \begin{pmatrix} \frac{1}{r-r^2} \\ 1 \end{pmatrix} v_k \ge 0.
\end{align*}
But both $a_k(x), b_k(x)$ are of the form $u_k$ and $c_k(x), d_k(x)$ are of the form $v_k$ for appropriate $w$.
Thus $a_{1:m}(x),b_{1:m}(x),c_{1:m}(x)$ and $d_{1:m}(x)$ are non-decreasing and positive.
\end{proof}

\subsection{Claim~2}
To prove Claim~2 we need two simple Lemmas.
\begin{lemma}
\label{lemma:wtw}
If $w$ is a word, then 
$M(w)-M(w^R) = \text{tr}(KM(w))K.$
\end{lemma}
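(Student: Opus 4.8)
The statement to prove is the identity $M(w) - M(w^R) = \operatorname{tr}(KM(w))\,K$ for any finite word $w$. The plan is to exploit the involutory structure already noted in the remark preceding this subsection: $K^2 = I$, $KF = F^{-1}K$, $KG = G^{-1}K$, and consequently $KM(w)^{-1}K = M(w^R)$. Substituting this last identity, the claim becomes equivalent to
\begin{align*}
M(w) - KM(w)^{-1}K = \operatorname{tr}(KM(w))\,K,
\end{align*}
so it suffices to establish, for an arbitrary matrix $A \in \R^{2\times 2}$ with $\det A = 1$, the identity $A - KA^{-1}K = \operatorname{tr}(KA)\,K$. This reduces the problem from a statement about words and products of the specific matrices $F,G$ to a purely $2\times 2$ linear-algebra fact, using only that $\det M(w) = 1$ (noted in the remark) and that $K^2 = I$.

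First I would record the Cayley--Hamilton identity for a $2\times 2$ matrix $B$: since $B$ satisfies $B^2 - \operatorname{tr}(B)B + \det(B)I = 0$, one gets $B + \det(B)B^{-1} = \operatorname{tr}(B)I$ whenever $B$ is invertible, i.e. $B^{-1} = \operatorname{tr}(B)\det(B)^{-1}I - \det(B)^{-1}B$. Apply this with $B = KA$: then $\det(KA) = \det(K)\det(A)$. A quick direct computation from the definition of $K$ gives $\det K = r\cdot(-r) - (1/r)(r - r^3) = -r^2 - 1 + r^2 = -1$, so $\det(KA) = -\det A = -1$. Hence $(KA)^{-1} = -\operatorname{tr}(KA)I + KA$, which rearranges to $A^{-1}K = -(KA)^{-1} = \operatorname{tr}(KA)I - KA$, using $K^2 = I$ to write $(KA)^{-1} = A^{-1}K^{-1} = A^{-1}K$.

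Multiplying this last equation on the left by $K$ and using $K^2 = I$ yields $KA^{-1}K = \operatorname{tr}(KA)K - A$, which is exactly $A - KA^{-1}K = \operatorname{tr}(KA)K$. Taking $A = M(w)$ and invoking $KM(w)^{-1}K = M(w^R)$ and $\det M(w) = 1$ completes the argument. There is no real obstacle here: the only points requiring care are the explicit verification that $\det K = -1$ (a three-term arithmetic check) and the bookkeeping with $K^2 = I$ when passing between $(KA)^{-1}$, $A^{-1}K$, and $KA^{-1}K$; everything else is Cayley--Hamilton for $2\times 2$ matrices. So the main content is simply recognising that the word-level identity collapses to this one-line determinant-one fact.
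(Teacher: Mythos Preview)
Your approach is essentially the paper's: both reduce via $M(w^R)=KM(w)^{-1}K$ and $\det M(w)=1$ to a pure $2\times 2$ identity. The paper states it as $C-K\operatorname{adj}(C)K=\operatorname{tr}(KC)K$ for arbitrary $C\in\R^{2\times 2}$ (verified by direct calculation), then specialises to $C=M(w)$ using $\operatorname{adj}(M(w))=M(w)^{-1}$; you obtain the same identity via Cayley--Hamilton applied to $KA$. These are the same argument in slightly different dress.

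There is, however, a sign slip in your execution that breaks the written chain. From $K^2=I$ one has $(KA)^{-1}=A^{-1}K^{-1}=A^{-1}K$, so
\[
A^{-1}K=(KA)^{-1}=-\operatorname{tr}(KA)I+KA,
\]
not $A^{-1}K=-(KA)^{-1}$ as you wrote. Left-multiplying by $K$ then gives $KA^{-1}K=-\operatorname{tr}(KA)K+A$, whence $A-KA^{-1}K=\operatorname{tr}(KA)K$ as required. With your extra minus sign you would instead get $KA^{-1}K=\operatorname{tr}(KA)K-A$, which rearranges to $A+KA^{-1}K=\operatorname{tr}(KA)K$, not $A-KA^{-1}K$. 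The correction is trivial, but as stated the final step does not follow from the line before it.
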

\begin{proof}
For any $C\in \R^{2\times 2}$, direct calculation gives
$C-K\text{adj}(C)K = \text{tr}(KC) K.$ But $\det(M(w))=1$, so
$M(w)-M(w^R) = M(w)-KM(w)^{-1}K = M(w)-K\text{adj}(M(w))K = \text{tr}(KM(w))K.$
\end{proof}

\begin{lemma}
\label{lemma:xpositive}
Suppose $p$ is a palindrome, $r\in (0,1]$ and $n\in \Z_+$. Then for some $x\ge 0$,
\begin{align*}
M((10p)^n10) - M((01p)^n01) = xK.
\end{align*}
\end{lemma}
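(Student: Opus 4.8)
The plan is to prove Lemma~\ref{lemma:xpositive} by induction on $n$, using the involutory matrix $K$ and the reversal identity $KM(w)^{-1}K = M(w^R)$ to reduce everything to statements about words and their reverses. The key observation is that since $p$ is a palindrome, the words $(10p)^n10$ and $(01p)^n01$ are reverses of each other: indeed $((10p)^n10)^R = (01)(p01)^{n-1}\cdots = (01p)^n01$ using $p^R = p$ (one should check the index bookkeeping carefully, but this is the heart of it). Granting this, Lemma~\ref{lemma:wtw} immediately gives $M((10p)^n10) - M((01p)^n01) = \operatorname{tr}(KM((10p)^n10))\,K$, so the matrix difference is automatically a scalar multiple of $K$, and the only thing left to prove is that the scalar $x := \operatorname{tr}(KM((10p)^n10))$ is non-negative.

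So the real content is showing $\operatorname{tr}(KM(w)) \ge 0$ for $w = (10p)^n10$. First I would compute $\operatorname{tr}(KM(w))$ in terms of the entries of $M(w)$: writing $M(w) = \begin{pmatrix} a & b \\ c & d\end{pmatrix}$ and $K = \begin{pmatrix} r & 1/r \\ r - r^3 & -r\end{pmatrix}$, we get $KM(w) = \begin{pmatrix} ra + c/r & rb + d/r \\ (r-r^3)a - rc & (r-r^3)b - rd \end{pmatrix}$, so $\operatorname{tr}(KM(w)) = ra + c/r + (r-r^3)b - rd = r(a - r^2 b - d) + c/r + rb$. Alternatively, and more usefully, I would relate $\operatorname{tr}(KM(w))$ to the quantity $M(w)\begin{pmatrix} x_0 \\ 1\end{pmatrix}$ evaluated near the fixed point $x_0 = 1/(1-r^2)$, since $X = \operatorname{diag}(-r/(1-r^2), 1/r)$ appears in the surrounding setup and $K$ is designed precisely so that $\operatorname{tr}(KM(w))$ measures the ``gap'' $M(w)$ opens at that point. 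The cleanest route is probably: $\operatorname{tr}(KM(w)) = [M(w)v]_1 \cdot (\text{row of } K) + \dots$ — more precisely, one can verify $\operatorname{tr}(KC) = e_1^\top K C e_1 + e_2^\top K C e_2$ and massage this into a statement that $\operatorname{tr}(KM(w)) \ge 0$ follows from $M(w)$ mapping the relevant ray into itself, which is exactly what Claim~1 (via Lemma~\ref{lemma:incdec} and Lemma~\ref{lemma:y0ub}) gives us: all iterates stay in $[\phi_p(0), 1/(1-r^2)]$.

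Concretely, the steps in order would be: (i) verify the reversal identity $((10p)^n10)^R = (01p)^n01$ for palindromic $p$ by a direct computation with word concatenation and the palindrome property $p^R=p$; (ii) invoke Lemma~\ref{lemma:wtw} to conclude the difference is $\operatorname{tr}(KM((10p)^n10))K$; (iii) express $t_n := \operatorname{tr}(KM((10p)^n10))$ and set up a recursion — writing $w_n = (10p)^n10$ and noting $w_{n+1} = 10p\,w_n$ or $w_n\,10p$ (whichever is convenient), use multiplicativity of $M$ and linearity of trace to relate $t_{n+1}$ to $t_n$ plus correction terms involving entries of $M((10p)^n10)$; (iv) show the base case $t_0 = \operatorname{tr}(KM(10)) \ge 0$ by direct calculation of $M(10) = GF$; and (v) show the inductive step stays non-negative using that all the matrix entries of $M(w_n)$ applied to $\begin{pmatrix} x\\1\end{pmatrix}$ are positive and ordered (Claim~1 of Lemma~\ref{lemma:integrated}) and that $r\le 1$, $a\le b$. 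The main obstacle I anticipate is step (iii)--(v): making the recursion or direct bound for $\operatorname{tr}(KM(w))$ actually close, because $K$ has a negative entry ($r - r^3 \ge 0$ but the $(2,2)$ entry is $-r < 0$), so positivity of $\operatorname{tr}(KM(w))$ is not a triviality like positivity of entries — it genuinely requires the geometric fact that $M(w)$ contracts toward a fixed point below $1/(1-r^2)$, i.e.\ it requires Claim~1 and the fixed-point ordering, not just sign-chasing. An alternative that might sidestep the recursion entirely: observe $\operatorname{tr}(KM(w)) = [M(w)]_{11} + r^2[M(w)]_{12} + \dots$ can be rewritten, using $\det M(w) = 1$, as $v^\top M(w) v'$ for suitable fixed vectors $v, v'$ with non-negative entries (depending on $r$), whence positivity is immediate from $M(w) \ge 0$ entrywise — I would try this factorization first, as it would make the whole proof a half-page.
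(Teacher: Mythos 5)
Your first step is exactly the paper's: $p=p^R$ makes $(01p)^n01$ the reverse of $(10p)^n10$, so Lemma~\ref{lemma:wtw} immediately yields $M((10p)^n10)-M((01p)^n01)=\operatorname{tr}\bigl(KM((10p)^n10)\bigr)K$, and the entire content of the lemma is the sign of that scalar. But that is where your proposal has a genuine gap: neither of your two routes to $\operatorname{tr}(KM(w))\ge 0$ works as described. The ``half-page'' factorization $\operatorname{tr}(KM(w))=v^\top M(w)v'$ with fixed non-negative $v,v'$ is impossible, because it would require $K^\top$ to be rank one, whereas $\det K=-1$. More fundamentally, applying Lemma~\ref{lemma:wtw} to $w$ and to $w^R$ and adding shows $\operatorname{tr}(KM(w^R))=-\operatorname{tr}(KM(w))$; since $M(w^R)$ is just as entrywise non-negative as $M(w)$, no argument based only on entrywise positivity (or on the iterates staying in $[\phi_p(0),1/(1-r^2)]$, which is symmetric under reversal) can determine the sign. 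The argument must exploit the asymmetry between starting with $10$ versus $01$. Your recursion route (iii)--(v) is where that asymmetry would have to enter, but you never show how $t_{n+1}=\operatorname{tr}(KM(w_n)M(p)FG)$ relates to $t_n$ with controllable correction terms, and there is no obvious way to make that close.

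The paper resolves this with a swap inequality rather than a recursion: a direct computation shows that for any words $w,w'$, replacing a factor $10$ by $01$ weakly increases the ratio $M(\cdot)_{22}/M(\cdot)_{21}$, i.e.
\begin{align*}
\frac{M(w01w')_{22}}{M(w01w')_{21}} \ \ge\ \frac{M(w10w')_{22}}{M(w10w')_{21}},
\end{align*}
the key identity being that $[QGFP]_{22}[QFGP]_{21}-[QFGP]_{22}[QGFP]_{21}$ equals $(b-a)\det(P)$ times a manifestly non-negative quadratic in $Q_{21},Q_{22}$. Swapping all the $10$'s in $(10p)^n10$ into $01$'s turns it into $(01p)^n01$, giving $B_{22}/B_{21}\ge A_{22}/A_{21}$ with $A=B+xK$; cross-multiplying and using $K_{21}=r-r^3\ge 0$, $K_{22}=-r<0$ and $B\ge 0$ forces $x\ge 0$. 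If you want to salvage your outline, this swap step is the missing ingredient you would need to supply in place of steps (iii)--(v).
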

\begin{proof}
For any matrices $P,Q\in\R^{2\times 2}$ with $\det(P)=1$ and $Q\ge 0$, direct calculation gives
\begin{align*}
&\hspace{-1cm}[QGFP]_{22} [QFGP]_{21} - [QFGP]_{22} [QGFP]_{21} \\
&=(b-a) \det(P) 
( (1-r^2+a+b+a b) Q_{22}^2+(2+a+b) Q_{22} Q_{21}+Q_{21}^2 ) \ge 0
\end{align*}
as $1\ge r^2, b\ge a\ge 0$.
Therefore, for any words $w,w'$,
\begin{align}
\label{eq:w01w}
\frac{M(w01w')_{22}}{M(w01w')_{21}} \ge \frac{M(w10w')_{22}}{M(w10w')_{21}}.
\end{align}

Let $A=M((10p)^n10),B=M((01p)^n01)$. Repeated application of~(\ref{eq:w01w}) gives
\begin{align*}
\frac{B_{22}}{B_{21}} = \frac{M(01p01p\cdots 01)_{22}}{M(01p01p\cdots 01)_{21}}
\ge 
\frac{M(10p01p\cdots 01)_{22}}{M(10p01p\cdots 01)_{21}}
\ge
\frac{M(10p10p\cdots 10)_{22}}{M(10p10p\cdots 10)_{21}}
=\frac{A_{22}}{A_{21}} .
\end{align*}
As $A, B \ge 0$ and Lemma~\ref{lemma:wtw} gives $A=B+xK$ for some $x\in\R$, it follows that
\begin{align*}
(B+xK)_{21} B_{22} &\ge (B+xK)_{22} B_{21} \quad \Rightarrow \quad K_{21} B_{22} x \ge K_{22} B_{21} x .
\end{align*}
Finally, the fact that $K_{22} \le 0 < K_{21}$ and $B\ge 0$ gives $x\ge 0$.
\end{proof}

\begin{proof}[Proof of Claim~2 of Lemma~\ref{lemma:integrated}.]
For some $t\ge 0$, Lemma~\ref{lemma:xpositive} gives
\begin{align*}
(d/dx)(b_1(x)-a_1(x)) 
&= [M((10p)^n1)-M((01p)^n0)]_{11} \\
&= [(F^{-1}-G^{-1}) M((01p)^n01) + t F^{-1} K]_{11} \\
&= [(0,0) M((01p)^n01)]_1 + t[KF]_{11} \\
&= t \left(r F_{11} + (1/r) F_{21} \right)\\
&\ge 0 
\intertext{and}
b_1(\phi_p(0))-a_1(\phi_p(0)) 
&= [M(p(10p)^n1)-M(p(01p)^n0)]_{12} \\
&= [(F^{-1}-G^{-1}) M(p(01p)^n01) + t F^{-1} K M(p)]_{12} \\
&= t[KFM(p)]_{12} \\
&= t( r M(p0)_{12} + (1/r) M(p0)_{22}) \\
&\ge 0 .
\intertext{Also, if $k>1$ and $w=p_{1:(k-2)}$, then}
(d/dx)(b_k(x)-a_k(x)) &=
[M((10p)^n10w)-M((01p)^n01w)]_{11} \\
&=  t [M(w) K]_{11} \\
&= t (M(w)_{11} r + M(w)_{12} r(1-r^2)) \\
&\ge 0
\intertext{and as $p$ is a palindrome, $p=s w^R$ for some word $s$, so}
b_k(\phi_p(0))-a_k(\phi_p(0)) 
&= [M(p(10p)^n10w)-M(p(01p)^n01w)]_{12} \\
&=  t [M(w) K M(p)]_{12} \\
&=  t [K M(w^R)^{-1} M(w^R) M(s)]_{12} \\
&= t [K M(s)]_{12} \\
&= t ( r M(s)_{12} + (1/r) M(s)_{22})\\
&\ge 0 .
\end{align*}
This completes the proof.
\end{proof}

\subsection{Claim~3}
Claim~3 of Lemma~\ref{lemma:integrated} is more challenging than Claims~1 and~2.
We begin with six Lemmas.


\begin{lemma}
\label{lemma:21bigger}
Suppose $p$ is any palindrome and $r\in (0,1]$. Then for any $k\in\Z_+$,
\begin{align*}
\Delta_k := [M(((10p)^\omega)_{1:k}) - M(((01p)^\omega)_{1:k})]_{21} \ge 0.
\end{align*}
\end{lemma}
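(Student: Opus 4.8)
The plan is to reduce the whole statement to two facts: that every matrix $M(w)$ has nonnegative entries (because $F,G\ge 0$ entrywise), and Lemma~\ref{lemma:xpositive}, which says that every ``full period'' difference $M((10p)^q10)-M((01p)^q01)$ is a \emph{nonnegative} multiple of the fixed matrix $K$. First I would record the elementary facts: for $r\in(0,1]$ one has $K_{11}=r>0$ and $K_{21}=r-r^3=r(1-r^2)\ge 0$, and $b\ge a$. The key observation is then that for any $2\times 2$ matrix $L\ge 0$ and any scalar $\xi\ge 0$, $[\xi LK]_{21}=\xi\big(rL_{21}+r(1-r^2)L_{22}\big)\ge 0$. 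Almost every case of the lemma will collapse to this single inequality.

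Next I would introduce $m:=\abs{10p}=\abs{p}+2$, $N:=M(10p)=M(p)FG$ and $N':=M(01p)=M(p)GF$; since $F,G\ge 0$ entrywise this gives $M(p),N,N',N^q,(N')^q\ge 0$ for all $q\in\Z_+$. Using $M(uv)=M(v)M(u)$, one has $M((10p)^q10)=FG\,N^q$ and $M((01p)^q01)=GF\,(N')^q$, so Lemma~\ref{lemma:xpositive} furnishes, for each $q\in\Z_+$, a scalar $x_q\ge 0$ with $FG\,N^q-GF\,(N')^q=x_qK$ (the case $q=0$ is just $FG-GF=x_0K$).

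Then I would fix $k\in\Z_+$, write $k=qm+j$ with $q\in\Z_+$ and $j\in\{0,1,\dots,m-1\}$, and compute $A_k:=M(((10p)^\omega)_{1:k})-M(((01p)^\omega)_{1:k})$ from $((10p)^\omega)_{1:k}=(10p)^q(10p)_{1:j}$ (and likewise for $01p$), so that $A_k=M((10p)_{1:j})N^q-M((01p)_{1:j})(N')^q$. For $j=0$: $A_k=N^q-(N')^q$, which is $0$ if $q=0$ and $M(p)\,x_{q-1}K$ if $q\ge 1$. For $2\le j\le m-1$: $M((10p)_{1:j})=M(p_{1:(j-2)})FG$ and $M((01p)_{1:j})=M(p_{1:(j-2)})GF$, so $A_k=M(p_{1:(j-2)})\big(FG\,N^q-GF\,(N')^q\big)=M(p_{1:(j-2)})\,x_qK$. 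In each of these cases $A_k$ has the form $L\,\xi K$ with $L\ge 0$ and $\xi\ge 0$, so $[A_k]_{21}\ge 0$ by the observation above. The remaining case $j=1$ needs one extra term: $A_k=G\,N^q-F\,(N')^q=G\big(N^q-(N')^q\big)+(G-F)(N')^q$, where for $q\ge 1$ the first summand equals $G\,M(p)\,x_{q-1}K$ (again of the form $L\xi K$ with $L=GM(p)\ge 0$), and, since $G-F=\begin{pmatrix}0&0\\(b-a)r&(b-a)/r\end{pmatrix}$, the $(2,1)$ entry of $(G-F)(N')^q$ is $(b-a)\big(r\,((N')^q)_{11}+r^{-1}\,((N')^q)_{21}\big)\ge 0$; for $q=0$ only this second summand is present. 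Assembling the cases yields $\Delta_k=[A_k]_{21}\ge 0$ for every $k\in\Z_+$.

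The bookkeeping identifying $M(((10p)^\omega)_{1:qm+j})$ with the products $M(p_{1:(j-2)})FG\,N^q$ and the like, and the extraction of the scalars $x_q$ from Lemma~\ref{lemma:xpositive}, is routine. The only real content is the sign structure: both $K$ and $G-F$ have nonnegative bottom rows, and the $(2,1)$-to-$(1,1)$ coefficient of $K$ is $r>0$, which is exactly what keeps the $(2,1)$ entry nonnegative after left multiplication by an arbitrary nonnegative matrix. I expect the main (minor) obstacle to be getting the period-boundary indexing right, in particular handling $k\equiv 1\pmod m$ separately, since there the two periodic words differ by one surplus letter rather than by a clean $10\leftrightarrow 01$ swap, which is precisely why that case alone produces the additional $(G-F)(N')^q$ term.
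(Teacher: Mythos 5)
Your proposal is correct and follows essentially the same route as the paper: both reduce every case to Lemma~\ref{lemma:xpositive} together with the facts that $M(w)\ge 0$ entrywise and that $K$ and $G-F$ have nonnegative bottom rows, and both isolate the case $k\equiv 1\pmod{\abs{10p}}$ (with $k>1$) as the one requiring the extra $(G-F)M((01p)^{q})$ term. The only difference is cosmetic bookkeeping (your explicit $k=qm+j$ decomposition versus the paper's three-way case split), so nothing further is needed.
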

\begin{proof}
If $k=1$ then
\begin{align*}
\Delta_k 
&= [M(1)-M(0)]_{21} \\
&= [G-F]_{21} \\
&= (b-a)r \\
&\ge 0.
\intertext{If $k = (n+1) \abs{01p} + 1$ for some $n\in\Z_+$ then Lemma~\ref{lemma:xpositive} shows there is an $x\ge 0$ such that}
\Delta_k
&= [M((10p)^n10p1)-M((01p)^n01p0)]_{21} \\
&= [M(p1)(xK+M((01p)^n01)) - M(p0) M((01p)^n01)]_{21} \\
&= [x M(p1) K + (G-F) M((01p)^{n+1})]_{21} \\
&= x( M(p1)_{21} r + M(p1)_{22} (r-r^3)) \\
&\qquad + (b-a) (r  M((01p)^{n+1})_{12} + (1/r)  M((01p)^{n+1})_{22})\\
&\ge 0.
\intertext{Otherwise, there is a prefix $w$ of $p$ and an $n\in\Z_+$ such that}
\Delta_k
&= [M((10p)^n10w)-M((01p)^n01w)]_{21} \\
&= [M(w) ( M((10p)^n10)-M((01p)^n01) )]_{21} \\
&= [xM(w) K]_{21} \qquad\text{for some $x\ge 0$ by Lemma~\ref{lemma:xpositive}}\\
&= x( M(w)_{21} r + M(w)_{22} (r-r^3)) \\
&\ge 0.
\end{align*}
This completes the proof.
\end{proof}

\begin{lemma}
\label{lemma:22negative}
Suppose $p=ws$ is a palindrome, $n \in \Z_+$ and $r\in (0,1]$. Then
\begin{align*}
[M(p(10p)^n10w)-M(p(01p)^n01w)]_{22} \le 0 .
\end{align*}
\end{lemma}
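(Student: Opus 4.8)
Looking at Lemma~\ref{lemma:22negative}, we need to show that the $(2,2)$ entry of a certain matrix difference is non-positive. Here $p = ws$ is a palindrome, and we are comparing $M(p(10p)^n10w)$ with $M(p(01p)^n01w)$.

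The plan is to factor the matrix difference using the palindromic structure, exactly as was done in the proofs of Claims~2 and~3 so far. First I would write $M(p(10p)^n10w) = M(w) M((10p)^n10) M(p)$ and similarly $M(p(01p)^n01w) = M(w) M((01p)^n01) M(p)$, since matrix multiplication for words reverses the order of letters: $M(uv) = M(v)M(u)$. Then, by Lemma~\ref{lemma:xpositive}, we have $M((10p)^n10) - M((01p)^n01) = xK$ for some $x \ge 0$. Therefore
\begin{align*}
[M(p(10p)^n10w)-M(p(01p)^n01w)]_{22} = x [M(w) K M(p)]_{22}.
\end{align*}
So the task reduces to showing $[M(w)KM(p)]_{22} \le 0$, given that $x \ge 0$.

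Next I would exploit the fact that $p = ws$ is a palindrome. Using the involutory identities from the Remark ($KM(v)^{-1}K = M(v^R)$ and $K^2 = I$), write $K M(p) = K M(p)$ and use $p^R = p$ so that $M(p) = K M(p)^{-1} K$. Alternatively, and more in the spirit of the Claim~2 proof, since $p$ is a palindrome and $w$ is a prefix of $p$, there is a word (namely $s$, or rather its reverse) with $p = s' w^R$ for an appropriate $s'$; writing $M(w) K M(p) = K M(w^R)^{-1} M(w^R) M(\text{rest})$ collapses $M(w^R)^{-1} M(w^R) = I$, leaving $K M(s')$ or similar. This should reduce $[M(w)KM(p)]_{22}$ to $[KM(s')]_{22} = r(s') M(s')_{12} - r M(s')_{22}$ type expression — wait, one must be careful with signs. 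Concretely $K = \begin{pmatrix} r & 1/r \\ r-r^3 & -r\end{pmatrix}$, so $[KM(v)]_{22} = r M(v)_{12} - r M(v)_{22}$. Hmm, that is $r(M(v)_{12} - M(v)_{22})$, whose sign is not obviously negative. So I would instead keep the $M(w)$ on the left: $[M(w)KM(p)]_{22} = \sum_j M(w)_{2j}[KM(p)]_{j2}$, or better, note $[AK]_{22}$ for a matrix $A$ equals $A_{21}(1/r) + A_{22}(-r) $; and $M(w)KM(p) = (M(w)K)M(p)$ where $M(w)K$ has a known sign structure. The cleanest route: $[M(w)KM(p)]_{22} = [M(w)K \cdot M(p)]_{22}$, and since $w$ is a prefix of the palindrome $p$, we have $p = w u$ with $u^R$ having $w^R = w$ as... actually the identity $K M(p) = M(p^R) K = M(p) K$ (using $p = p^R$) would give $[M(w)KM(p)]_{22} = [M(w)M(p)K]_{22} = [M(wp)K]_{22}$, wait $M(w)M(p) = M(pw)$. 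So $[M(pw)K]_{22} = (1/r)M(pw)_{21} - r M(pw)_{22}$. Since all entries of $M(pw)$ are non-negative and $M(pw)_{22}$ dominates (it grows like $(1/r)^{|pw|}$ from the $F,G$ structure while $M(pw)_{21} \le M(pw)_{22}$ roughly), this should be... still not obviously signed. The main obstacle is getting the sign right in this reduction.

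So the crux — and the step I expect to be the main obstacle — is verifying the scalar inequality $[M(w)KM(p)]_{22} \le 0$ for $w$ a prefix of the palindrome $p$. I would handle this by an induction on $|w|$ analogous to Claim~1 and Claim~2: establish base cases $w = \epsilon$ (where $[KM(p)]_{22}$ must be shown $\le 0$, using that $p$ is a palindrome so $KM(p) = M(p)K$ gives $[M(p)K]_{22} = -r\,M(p)_{22} + (1/r) M(p)_{21}$, and one shows $r^2 M(p)_{22} \ge M(p)_{21}$ by induction on the structure of $p$, since this holds for $F$ and $G$ and is preserved under left-multiplication by $F,G$) and then the inductive step peeling one letter off $w$. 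Throughout I would use the standing facts that $\det F = \det G = 1$, $KF = F^{-1}K$, $KG = G^{-1}K$, $K^2 = I$, and that all of $F, G, M(v)$ have non-negative entries, together with $0 \le a \le b$ and $r \in (0,1]$. Once $[M(w)KM(p)]_{22} \le 0$ is in hand, combining with $x \ge 0$ from Lemma~\ref{lemma:xpositive} finishes the proof.
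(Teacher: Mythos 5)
Your reduction is the same as the paper's and is correct as far as it goes: writing $M(p(10p)^n10w)=M(w)\,M((10p)^n10)\,M(p)$, invoking Lemma~\ref{lemma:xpositive} to get $M((10p)^n10)-M((01p)^n01)=xK$ with $x\ge 0$, and thereby reducing the Lemma to the scalar claim $[M(w)KM(p)]_{22}\le 0$. The problem is that you never actually establish that scalar claim, and you say so yourself (``the main obstacle''). Two concrete errors derail you. First, the commutation $KM(p)=M(p)K$ that you use in several branches is false: from $KM(u)^{-1}K=M(u^R)$ and $p^R=p$ one gets $KM(p)=M(p)^{-1}K$ (equivalently $M(p)K=KM(p)^{-1}$), not commutation with $M(p)$. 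This invalidates the ``$[M(pw)K]_{22}$'' route and also the base case of the induction you sketch at the end, which is therefore not a proof. Second, in the route that \emph{does} work --- the one you briefly consider and then abandon --- you miscompute the entry: since $K_{21}=r-r^3$, one has $[KM(v)]_{22}=K_{21}M(v)_{12}+K_{22}M(v)_{22}=r\bigl((1-r^2)M(v)_{12}-M(v)_{22}\bigr)$, not $r\bigl(M(v)_{12}-M(v)_{22}\bigr)$. Dropping the factor $1-r^2$ is exactly what makes the sign look unprovable (indeed $M(v)_{12}\le M(v)_{22}$ is false in general), so you discard the correct approach.

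The missing steps are these. Use $M(w)K=KM(w^R)^{-1}$ together with $M(p)=M(p^R)=M(s^Rw^R)=M(w^R)M(s^R)$ to collapse
\begin{align*}
M(w)KM(p)=KM(w^R)^{-1}M(w^R)M(s^R)=KM(s^R),
\end{align*}
so that everything reduces to $[KM(v)]_{22}\le 0$ for a single word $v$. That scalar inequality is not automatic from non-negativity of $M(v)$; it requires an entrywise comparison of the entries of $M(v)$, which is precisely what the paper supplies: its proof opens by showing $M(u)_{22}\ge M(u)_{21}$ for every finite word $u$ by a one-line induction (writing $M(u)_{22}-M(u)_{21}$ as $M(u_{2:\abs{u}})$ applied to a non-negative vector), and then combines this with $r\in(0,1]$ to conclude that the $K$-weighted combination is non-positive. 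Neither the palindromic collapse to $KM(s^R)$ nor this auxiliary entrywise inequality appears in usable form in your proposal, so the proof is incomplete at its decisive step.
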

\begin{proof}
First note that for any finite word $u$,
\begin{align}
\label{eq:M22M21}
M(u)_{22}\ge M(u)_{21}.
\end{align}
Indeed, if $u=\epsilon$ then $M(\epsilon) = I$ so the inequality holds.
Otherwise, for some $c\in \{a,b\}$
\begin{align*}
M(u)_{22}-M(u)_{21} 
= \left[ M(u_{2:\abs{u}})  M(u_1) \begin{pmatrix} -1 \\ 1 \end{pmatrix}  \right]_2 
= \left[M(u_{2:\abs{u}}) \frac{1}{r} \begin{pmatrix} 1-r^2 \\ 1+(1-r^2)c \end{pmatrix} \right]_2 
\ge 0
\end{align*}
as the definition of $M(\cdot)$ assumes that $0\le a < b$ so that $c\ge 0$, 
and as $r\in (0,1]$ and $M(u_{2:\abs{u}}) \ge 0$.

Also, by Lemma~\ref{lemma:xpositive}, for some $x\ge 0$
\begin{align*}
[M(p(10p)^n10w)-M(p(01p)^n01w)]_{22} 
&= [M(s)^{-1} (M((10p)^{n+1})-M((01p)^{n+1})) M(p)]_{22} \\
&= x [M(s)^{-1} K M(p)]_{22} \\
&= x [K M(p s^R)]_{22} \\
&= x r \left( (1-r^2) M(p s^R)_{21} -  M(p s^R)_{22} \right) \\
&\le x r \left( (1-r^2) M(p s^R)_{22} -  M(p s^R)_{22} \right) \\
&\le 0
\end{align*}
where the penultimate line is~(\ref{eq:M22M21}).
\end{proof}

\begin{lemma}
\label{lemma:bigX}
Suppose $p$ is a palindrome, $n \in \Z_+$ and $r\in (0,1]$. Then
\begin{align*}
\left[
	\Bigl( M((10p)^n10)X - M((01p)^n01)X+M((10p)^n1)-M((01p)^n0) \Bigr) M(p)
\right]_{22} &= 0 .
\end{align*}
\end{lemma}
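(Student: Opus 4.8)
The plan is to collapse the four composite matrices in the statement down to a single matrix $W$, use the reversal identity of Lemma~\ref{lemma:wtw} to eliminate the inverses, and then finish with Cayley--Hamilton; the one genuinely structural input will be that $\text{tr}(KM(p))=0$ whenever $p$ is a palindrome.

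\textbf{Step 1 (word-to-matrix bookkeeping).} Put $Q:=M(10p)=M(p)FG$ and $P:=M(p)$, so $FG=P^{-1}Q$. Reading the concatenations carefully (recall $M(uv)=M(v)M(u)$, and $M(w^R)=KM(w)^{-1}K$), one gets $M((10p)^n10)=(FG)Q^n=P^{-1}Q^{n+1}=:W$ with $\det W=1$; conjugating by $P$ gives $WP=(FGP)^{n+1}=R^{n+1}$ where $R:=FGP=P^{-1}QP$, so $\text{tr}\,R=\text{tr}\,Q=:t$ and $\det R=1$. Appending single letters, $M((10p)^n1)=F^{-1}W$; and using $((10p)^n10)^R=(01p)^n01$ (which needs only $p=p^R$) together with the reversal identity, $M((01p)^n01)=KW^{-1}K$ and hence $M((01p)^n0)=G^{-1}KW^{-1}K$.

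\textbf{Step 2 (eliminate the inverse; reduce to a scalar).} Since $\det W=1$ we have $W^{-1}=\adj W$, and the identity $C-K\adj(C)K=\text{tr}(KC)K$ (from the proof of Lemma~\ref{lemma:wtw}) gives $KW^{-1}K=W-xK$ with $x:=\text{tr}(KW)$. Substituting, the bracket in the statement becomes $xKX+(F^{-1}-G^{-1})W+xG^{-1}K$. Computing the small products $KX$, $G^{-1}K$ and $F^{-1}-G^{-1}$ explicitly, its bottom row is $(-xr^2(b+r^2)+(b-a)rW_{11},\ -x(1+b+r^2)+(b-a)rW_{12})$, so that
\[
[(\cdots)M(p)]_{22}=(b-a)r[WP]_{12}-x\bigl(r^2(b+r^2)P_{12}+(1+b+r^2)P_{22}\bigr),
\]
and the statement is exactly the vanishing of this scalar.

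\textbf{Step 3 (evaluate $[WP]_{12}$ and $x$).} By Cayley--Hamilton, $R^{n+1}=S_nR-S_{n-1}I$ with $S_{-1}=0$, $S_0=1$, $S_m=tS_{m-1}-S_{m-2}$; since $[I]_{12}=0$ this gives $[WP]_{12}=S_n[FGP]_{12}$. Likewise $W=P^{-1}(S_nQ-S_{n-1}I)=S_nFG-S_{n-1}P^{-1}$, so $x=S_n\,\text{tr}(KFG)-S_{n-1}\,\text{tr}(KP^{-1})$. Here $\text{tr}(KFG)=(b-a)/r$ by the $w=10$ instance of Lemma~\ref{lemma:wtw} together with the direct computation $FG-GF=\tfrac{b-a}{r}K$, while $\text{tr}(KP^{-1})=\text{tr}(KP)=0$ since $p=p^R$ makes $M(p)-M(p^R)=\text{tr}(KM(p))K$ vanish. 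Thus $x=(b-a)S_n/r$, and the required identity collapses to $r^2[FGP]_{12}=r^2(b+r^2)P_{12}+(1+b+r^2)P_{22}$, which is immediate from $(FG)_{11}=r^2+b$ and $(FG)_{12}=(r^2+b+1)/r^2$. This gives $(b-a)r[WP]_{12}=(b-a)rS_n[FGP]_{12}=x\bigl(r^2(b+r^2)P_{12}+(1+b+r^2)P_{22}\bigr)$, as needed.

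The main obstacle is Step~1: pinning down the exact order of the matrix products and the precise placement of the trailing letters $1$ and $0$, since a single transposition there changes which scalar one must verify in Step~3. Everything after Step~1 is a finite mechanical computation whose one structural ingredient is $\text{tr}(KM(p))=0$ for palindromic $p$, which is precisely what makes the $S_{n-1}$-term drop out of $x$.
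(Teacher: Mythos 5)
Your proposal is correct, and I verified the computations: the bookkeeping in Step~1 ($W=P^{-1}Q^{n+1}$, $WP=(FGP)^{n+1}$, $M((01p)^n01)=KW^{-1}K$ via $((10p)^n10)^R=(01p)^n01$), the bottom-row entries in Step~2, and the final reduction $r^2[FGP]_{12}=r^2(b+r^2)P_{12}+(1+b+r^2)P_{22}$ all check out. However, your route is genuinely different from the paper's. The paper parametrises $M(p)$ explicitly by solving $KP=P^{-1}K$, verifies the $n=0$ and $n=1$ cases directly (the latter by computer algebra), and then propagates to all $n$ by diagonalising $PFG$ and $PGF$, which share a trace, so that $Q_n=\alpha\lambda^n+\beta\lambda^{-n}$ must vanish identically once two values do. You instead use the reversal identity $KM(w)^{-1}K=M(w^R)$ to collapse all four composite matrices into a single $W$ plus the scalar $x=\mathrm{tr}(KW)$, and then make the $n$-dependence explicit through the common Cayley--Hamilton recursion $R^{n+1}=S_nR-S_{n-1}I$; the palindrome hypothesis enters only through $\mathrm{tr}(KM(p))=0$, which is exactly what kills the $S_{n-1}$ contribution to $x$ and leaves a one-line identity in the entries of $FG$ and $P$. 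This buys two things: it eliminates the computer-algebra verification of $Q_1=0$ entirely --- note the paper explicitly asks for such a short demonstration --- and it isolates the structural role of the palindrome condition, whereas the paper's parametrisation obscures it inside a rational expression in $f$ and $h$. The only care needed in your argument is the word-to-matrix order convention $M(uv)=M(v)M(u)$, which you handle correctly.
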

\begin{proof}
Let $P=M(p)$. Solving $KP=P^{-1}K$ shows that
there exist $f,h\in\R$ such that 
\begin{align*}
P = \begin{pmatrix}
\displaystyle\frac{1-f^2 r^2+f h r^2+f^2 r^4}{f r^2+h} & f \\
\displaystyle\frac{-1-f h+h^2+f h r^2}{f r^2+h} r^2 & h \end{pmatrix} .
\end{align*}
Directly substituting this expression into
\begin{align*}
Q_n:=\left[\Bigl(FG(PFG)^nX-GF(PGF)^nX+G(PFG)^n-F(PGF)^n\Bigr)P\right]_{22}
\end{align*}
shows that $Q_0 = Q_1 = 0$. 
(Showing that $Q_1=0$ directly is algebraically tedious and we had to 
check this with computer algebra. 
The authors would be interested in a short demonstration that $Q_1=0$
as this may give insight into related problems.)

Lemma~\ref{lemma:wtw}, then the fact that $\text{tr}(K) = 0,$ 
then the cyclic property of the trace give
\begin{align*}
\text{tr}(PFG) 
= \text{tr}(GFP+\text{tr}(KPFG)K) 
= \text{tr}(GFP)
= \text{tr}(PGF).
\end{align*}
So $PFG$ and $PGF$ are 2-by-2 matrices with unit determinant whose traces are equal.
For some matrices of eigenvectors $U,V$ and some eigenvalue $\lambda \ge 1$,
such matrices may be written in the form
\begin{align*}
PFG &= U\Lambda U^{-1}, & PGF &= V\Lambda V^{-1}, & \Lambda := \begin{pmatrix} \lambda & 0 \\ 0 & 1/\lambda \end{pmatrix} .
\end{align*}
Thus, for some $\alpha,\beta\in\R$,
\begin{align*}
Q_n &= \alpha \lambda^n + \beta \lambda^{-n}
\end{align*}
But $Q_0 = 0$ implies that $\beta = -\alpha$.
Thus $Q_1 = \alpha (\lambda - 1/\lambda) = 0$ implies that either $\alpha = 0$
or $\lambda = 1$. In either case, $Q_n = \alpha (\lambda^n - \lambda^{-n}) = 0$ for all $n\ge 0$.
\end{proof}

\begin{lemma}
\label{lemma:SKM}
Suppose $p$ is a palindrome. Then $S(p) K M(p) = K S(p)$.
\end{lemma}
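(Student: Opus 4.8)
The plan is to reduce the identity $S(p)KM(p)=KS(p)$ to a statement about individual terms and then exploit the palindrome structure of $p$ together with the two involutory identities $KF=F^{-1}K$ and $KG=G^{-1}K$. Recall $S(p)=\sum_{i=0}^{\abs{p}} M(p_{1:i})$, so writing $n:=\abs{p}$ and expanding both sides, the claim is equivalent to
\begin{align*}
\sum_{i=0}^{n} M(p_{1:i})\,K\,M(p) = \sum_{i=0}^{n} K\,M(p_{1:i}).
\end{align*}
First I would rewrite $M(p)=M(p_{(i+1):n})M(p_{1:i})$ for each $i$, so that the $i$th term on the left is $M(p_{1:i})K M(p_{(i+1):n})M(p_{1:i})$. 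Using $K M(w)^{-1} K = M(w^R)$ — equivalently $K M(w) = M(w^R)^{-1} K$ — valid because $\det M(w)=1$ and $K$ is involutory, I would push the middle $K$ to the left past $M(p_{1:i})$: this gives $M(p_{1:i})K M(p_{(i+1):n})M(p_{1:i}) = M((p_{1:i})^R)^{-1} K M(p_{(i+1):n}) M(p_{1:i})$. Now invoke that $p$ is a palindrome: $(p_{1:i})^R = p_{(n-i+1):n}$ and $p_{(i+1):n}$ reversed is $p_{1:(n-i)}$, etc. The cleanest route is probably to apply $K M(w)=M(w^R)^{-1}K$ once more, moving the remaining $K$ all the way to the left, and then collapse the product of $M$'s using $M(uv)=M(v)M(u)$ composition, arriving at $K M(p_{1:j})$ for an appropriate index $j$; as $i$ runs over $\{0,\dots,n\}$, $j$ should run over the same set, which yields the right-hand side.

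Concretely, I would establish the term-by-term identity
\begin{align*}
M(p_{1:i})\,K\,M(p) = K\,M(p_{1:(n-i)})
\end{align*}
for each $i\in\{0,1,\dots,n\}$, and then sum over $i$, reindexing $j=n-i$ on the right. To prove this term identity, start from the left side, substitute $M(p)=M(p_{(i+1):n})M(p_{1:i})$, and use $M(p_{1:i})K = K M((p_{1:i})^R)^{-1}$ (a rearrangement of $KM(w)=M(w^R)^{-1}K$). Since $p$ is a palindrome, $(p_{1:i})^R = p_{(n-i+1):n}$, so $M((p_{1:i})^R)^{-1} M(p_{(i+1):n}) M(p_{1:i}) = M(p_{(n-i+1):n})^{-1} M(p_{(i+1):n}) M(p_{1:i})$. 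Again using the palindrome property, $p_{(i+1):n}$ as a factor equals the reverse of $p_{1:(n-i)}$, and $M(p_{(i+1):n})=M(p_{1:(n-i)}^{\,R})$; combining with composition $M(uv)=M(v)M(u)$ and cancelling the inverse should telescope the product down to $M(p_{1:(n-i)})$, giving exactly $K M(p_{1:(n-i)})$ as wanted.

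The main obstacle is purely bookkeeping: correctly tracking how reversal interacts with the subscript ranges $1{:}i$, $(i{+}1){:}n$, $(n{-}i{+}1){:}n$ under the palindrome hypothesis, and making sure the transpositions of $K$ through $M(\cdot)$ land the indices where claimed. There is no analytic difficulty — only the care needed to verify, for a palindrome $p$ of length $n$, that $M(p_{(i+1):n}) = K\,M(p_{1:(n-i)})\,M(p_{1:i})^{-1}K$ or the equivalent relation that makes the telescoping work. Once the single-term identity is verified for all $i$ (checking the boundary cases $i=0$, where it reads $KM(p)=KM(p)$, and $i=n$, where it reads $M(p)KM(p)=K$, i.e. $M(p)KM(p)=K$, which follows from $KM(p)=M(p^R)^{-1}K=M(p)^{-1}K$ since $p=p^R$), summation and reindexing finish the proof immediately.
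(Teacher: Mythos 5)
Your proof is correct, but it takes a genuinely different route from the paper. The paper proves the lemma by structural induction on the palindrome, peeling off the outer letters ($p=aqa$) and using the inductive hypothesis $S(q)KM(q)=KS(q)$ together with $KM(a)^{-1}=M(a)K$. You instead prove the stronger term-by-term identity $M(p_{1:i})\,K\,M(p)=K\,M(p_{1:(n-i)})$ and sum with the reindexing $j=n-i$. Your key step checks out: from $KM(w)^{-1}K=M(w^R)$ one gets $M(p_{1:i})K=KM((p_{1:i})^R)^{-1}=KM(p_{(n-i+1):n})^{-1}$, and then the required cancellation
\begin{align*}
M(p_{(n-i+1):n})^{-1}\,M(p_{(i+1):n})\,M(p_{1:i}) = M(p_{1:(n-i)})
\end{align*}
is immediate once you observe that $M(p_{(i+1):n})M(p_{1:i})$ and $M(p_{(n-i+1):n})M(p_{1:(n-i)})$ are just two factorisations of the same matrix $M(p)$ at different split points --- a cleaner justification than the ``telescoping'' you gesture at. Your boundary checks at $i=0$ and $i=n$ are also right. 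What your approach buys is transparency: it exhibits an explicit bijection $i\mapsto n-i$ between the summands of $S(p)KM(p)$ and those of $KS(p)$, which the paper's induction conceals; the paper's version is marginally shorter to write down but gives less insight into why the identity holds summand by summand.
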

\begin{proof}
We proceed by induction on the length of $p$.
In the base case, $S(\epsilon)KM(\epsilon) = IKI = KS(\epsilon)$ and
if $a$ is a letter then $S(a)KM(a) = (I+M(a))KM(a)=K(I+M(a)^{-1})M(a)=KS(a)$.
Otherwise, say $p=aqa$ where $a$ is a letter and $S(q) K M(q) = K S(q)$.
Then
\begin{align*}
S(p)KM(p)&=(I+M(a)+S(q)M(a)+M(aqa))K M(aqa) \\
&=(I+M(a)+M(aqa))K M(aqa) + S(q) K M(a)^{-1} M(a) M(q) M(a) \\
&=K(I+M(a)^{-1}+M(aqa)^{-1}) M(aqa) + S(q) K M(q) M(a) \\
&=K(M(aqa) + M(aq) + I) + K S(q) M(a) \\
&=KS(p) .
\end{align*}
\end{proof}

\begin{lemma}
\label{lemma:KSX}
Suppose $w$ is a finite word and $r\in (0,1]$. Then $[K(S(w)-XM(w))]_{22} \ge 0$.
\end{lemma}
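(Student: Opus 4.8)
The plan is to prove $[K(S(w)-XM(w))]_{22}\ge 0$ by induction on $|w|$, building the word one letter at a time from the right. For the base case $w=\epsilon$ we have $S(\epsilon)=M(\epsilon)=I$, so the quantity in question equals $[K(I-X)]_{22}=K_{22}(1-X_{22})=(-r)(1-1/r)=1-r\ge 0$ because $r\in(0,1]$.

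For the inductive step, write the extended word as $w\ell$ with $\ell\in\{0,1\}$ the appended last letter, and let $c$ be the parameter appearing in $M(\ell)$ (so $c=a$ if $\ell=0$ and $c=b$ if $\ell=1$; in particular $c\ge 0$). From $M(w\ell)=M(\ell)M(w)$ and $S(w\ell)=S(w)+M(\ell)M(w)$ one gets the telescoping identity
\[
K\bigl(S(w\ell)-XM(w\ell)\bigr)=K\bigl(S(w)-XM(w)\bigr)+N\,M(w),\qquad N:=K\bigl(X+(I-X)M(\ell)\bigr).
\]
Taking the $(2,2)$ entry, it suffices to show $[N\,M(w)]_{22}\ge 0$. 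Since $M(w)$ is a product of the matrices $F,G$, whose entries are all nonnegative when $r>0$ and $a,b\ge 0$, we have $M(w)\ge 0$ entrywise; hence $[N\,M(w)]_{22}=N_{21}M(w)_{12}+N_{22}M(w)_{22}$ is nonnegative provided the bottom row of $N$ is nonnegative.

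The crux is therefore the computation of $N_{21}$ and $N_{22}$. Because $X$ is diagonal, $X+(I-X)M(\ell)$ is formed column by column from $M(\ell)$, and in the first column the apparent factor $1/(1-r^2)$ cancels against the factor $r-r^2$, leaving finite entries (so the argument remains valid up to and including $r=1$). Left-multiplying by $K$ and using $r-r^3=r(1-r)(1+r)$ and $-r^3+r^2-r+1=(1-r)(r^2+1)$, one finds after simplification
\[
N_{21}=r(1-r)(r^2+c),\qquad N_{22}=\frac{(1-r)(1+r^2+c)}{r},
\]
both $\ge 0$ since $0<r\le 1$ and $c\ge 0$. This closes the induction. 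The only real work is this $2\times 2$ algebra; there is no conceptual obstacle, the one point requiring a little care being the removable cancellation in the first column of $X+(I-X)M(\ell)$.
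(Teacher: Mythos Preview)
Your proof is correct and follows essentially the same approach as the paper's: induction on $|w|$ with the identical base case and a telescoping identity for the inductive step. The only cosmetic difference is that you factor the increment as $N\,M(w)$ with $M(w)$ the \emph{old} word, whereas the paper factors through the \emph{new} word, obtaining the $c$-independent row $(1-r)(r^2,\,1)$ for $K(I-X(I-M(\ell)^{-1}))$; the two are related by right-multiplication by $M(\ell)$ and yield the same nonnegative increment.
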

\begin{proof}
We proceed by induction on the length of $w$.
In the base case, 
\begin{align*}
[K(S(\epsilon)-XM(\epsilon))]_{22} = [K(I-X)]_{22} = 1-r \ge 0 .
\end{align*}
Otherwise, say $w=ul$ where $\abs{l}=1$, $\abs{u}<\infty$ and $[K(S(u)-XM(u))]_{22}\ge 0$. Then
\begin{align*}
[K(S(w)-XM(w))]_{22} &= [K(S(u)+M(w)-X(M(w)-M(u))-XM(u))]_{22}\\
&= [K(S(u)-XM(u))]_{22} + [K(I-X(I-M(l)^{-1}))M(w)]_{22} \\
&\ge [K(I-X(I-M(l)^{-1})) M(w)]_{22} \\
&= (1-r) (r^2 M(w)_{12}+M(w)_{22})\\
&\ge 0
\end{align*}
where in the penultimate line we substituted the definitions of $K$ and $X$,
noting that $M(l) \in \{F,G\}$,
and where the final inequality follows as $r \le 1$ and $M(w) \ge 0$.
\end{proof}

\begin{lemma}
\label{lemma:Ssum}
Suppose $p$ is a palindrome and $r\in (0,1]$. Then for any $n\in\Z_+$
\begin{align*}
[(S(10p)-I)M(p(10p)^n)-(S(01p)-I)M(p(01p)^n)]_{22} \ge 0 .
\end{align*}
\end{lemma}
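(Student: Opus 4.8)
The plan is to proceed exactly as in the proofs of Lemmas~\ref{lemma:21bigger} and~\ref{lemma:22negative}: reduce everything to the single algebraic fact about $M(w01w')$ versus $M(w10w')$ contained in Lemma~\ref{lemma:xpositive}, combined with the non-negativity facts $M(u)\ge 0$ and $M(u)_{22}\ge M(u)_{21}$ (equation~(\ref{eq:M22M21}) from the proof of Lemma~\ref{lemma:22negative}), plus the palindrome identity $KM(w)^{-1}K=M(w^R)$. First I would expand $S(10p)-I=\sum_{i=1}^{\abs{10p}}M((10p)_{1:i})$ and similarly for $01p$, so that the quantity to be bounded becomes a sum over prefixes:
\begin{align*}
\sum_{i=1}^{\abs{10p}} [M((10p)_{1:i})M(p(10p)^n)]_{22} - \sum_{i=1}^{\abs{01p}} [M((01p)_{1:i})M(p(01p)^n)]_{22}.
\end{align*}
Since $\abs{10p}=\abs{01p}$, I would pair the $i$-th term of each sum. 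For each $i$, the words $(10p)_{1:i}p(10p)^n$ and $(01p)_{1:i}p(01p)^n$ differ by finitely many $10\leftrightarrow 01$ exchanges — indeed they are both conjugates/insertions of $(10p)$-blocks versus $(01p)$-blocks — so repeated application of~(\ref{eq:w01w}) from the proof of Lemma~\ref{lemma:xpositive} (which says replacing a $10$ factor by a $01$ factor does not decrease $M(\cdot)_{22}/M(\cdot)_{21}$) together with $M(\cdot)\ge 0$ gives a sign-definite comparison of the $(2,2)$ entries.

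A cleaner route, which I expect to be the actual argument, is to use Lemma~\ref{lemma:xpositive} directly: for the full blocks one has $M((10p)^{n+1})-M((01p)^{n+1})=xK$ with $x\ge 0$, and for each proper prefix $w$ of $(10p)^{n+1}$ the difference $M(((10p)^\omega)_{1:k})-M(((01p)^\omega)_{1:k})$ is a non-negative multiple of $K$ times $M(\text{prefix of }p)$, exactly as extracted inside the proof of Lemma~\ref{lemma:21bigger}. Writing $[(S(10p)-I)M(p(10p)^n)-(S(01p)-I)M(p(01p)^n)]_{22}$ as a sum of terms each of the form $c\,[M(w)KM(s)]_{22}$ with $c\ge 0$ and $w,s$ words (using that $p=s w^R$ for the relevant split since $p$ is a palindrome, as in Lemma~\ref{lemma:22negative}), each such term evaluates via $KM(ws^R)$; its $(2,2)$ entry is $r\big((1-r^2)M(ws^R)_{21}-M(ws^R)_{22}\big)$, which has a definite sign by~(\ref{eq:M22M21}) and $r\le1$. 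Summing preserves the sign and yields the claimed inequality $\ge 0$.

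The main obstacle will be bookkeeping: unlike Lemmas~\ref{lemma:21bigger} and~\ref{lemma:22negative}, which compare a single matrix entry, here one must control a \emph{sum} of prefix contributions and verify that every term in the sum carries the same sign after the $K$-conjugation trick. In particular one must check the boundary prefixes (the very first letter, the prefixes that end exactly at a block boundary $(n+1)\abs{01p}$, and the prefixes lying strictly inside a copy of $p$) separately, just as in Lemma~\ref{lemma:21bigger}; in each case the relevant coefficient $x$ comes from Lemma~\ref{lemma:xpositive} and is $\ge 0$, and the residual matrix-entry factor is non-negative by~(\ref{eq:M22M21}), $M(\cdot)\ge0$ and $r\in(0,1]$. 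I do not anticipate needing computer algebra here (unlike Lemma~\ref{lemma:bigX}), since no exact cancellation is required — only sign-definiteness — but the case analysis over prefix positions is the part that needs care.
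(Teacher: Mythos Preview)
Your plan has a genuine gap: the prefix terms do \emph{not} all carry the same sign. Expanding as you suggest,
\[
(S(10p)-I)M(p(10p)^n)-(S(01p)-I)M(p(01p)^n)
=\sum_{i=1}^{|10p|}\Bigl[M\bigl(p(10p)^n(10p)_{1:i}\bigr)-M\bigl(p(01p)^n(01p)_{1:i}\bigr)\Bigr],
\]
the summands for $i\ge 2$ are exactly the quantities of Lemma~\ref{lemma:22negative} (with $w=p_{1:(i-2)}$), and that lemma says each is $\le 0$. But the $i=1$ summand is
\[
\bigl[G(PFG)^nP-F(PGF)^nP\bigr]_{22},
\]
which is \emph{not} of the form $c\,[M(w)KM(s)]_{22}$ with $c\ge 0$; for $n=0$ it equals $(b-a)(rP_{12}+r^{-1}P_{22})>0$, and in general one can check (via Lemma~\ref{lemma:bigX}) that it equals $x(r^2P_{12}+P_{22})>0$. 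So the sum is a single positive term minus a string of non-positive ones, and your ``only sign-definiteness, no exact cancellation'' premise fails.

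The paper's proof resolves exactly this delicate balance and, contrary to your expectation, \emph{does} rely on Lemma~\ref{lemma:bigX}. It writes $S(10p)-I=G+S(p)FG$ (and similarly for $01p$), so the expression splits into the $S(p)$-part $[S(p)(FG(PFG)^n-GF(PGF)^n)P]_{22}=x[S(p)KP]_{22}$ (Lemma~\ref{lemma:xpositive}) and the lone-letter part $[(G(PFG)^n-F(PGF)^n)P]_{22}$. Lemma~\ref{lemma:bigX} is precisely what converts the lone-letter part into $-x[KXP]_{22}$; then Lemma~\ref{lemma:SKM} turns $S(p)KP$ into $KS(p)$, and the two pieces combine to $x[K(S(p)-XM(p))]_{22}\ge 0$ by Lemma~\ref{lemma:KSX}. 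The matrix $X$ and the identity in Lemma~\ref{lemma:bigX} are there specifically to absorb the anomalous positive $i=1$ term into the $S(p)$ sum, so the argument cannot proceed by termwise sign bounds alone.
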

\begin{proof}
Let $P:=M(p)$. Then
\begin{align*}
&[(S(10p)-I)M(p(10p)^n)-(S(01p)-I)M(p(01p)^n)]_{22} \\
&=[S(p)(FG(PFG)^n-GF(PGF)^n)P+(G(PFG)^n-F(PGF)^n)P]_{22} \\
&=[S(p)(FG(PFG)^n-GF(PGF)^n)P-(FG(PFG)^n-GF(PGF)^n)XP]_{22} \\
&=[S(p) x K P - xKXP]_{22} \\
&=[x K (S(p) - X)P]_{22} \\
&\ge 0 
\end{align*}
where the second equality uses Lemma~\ref{lemma:bigX},
the third holds for some $x\ge 0$ by Lemma~\ref{lemma:xpositive},
the fourth follows from Lemma~\ref{lemma:SKM}
and the final inequality is Lemma~\ref{lemma:KSX}.
\end{proof}

\begin{proof}[Proof of Claim~3 of Lemma~\ref{lemma:integrated}.]
Say $1\le k\le m$. Using Lemmas~\ref{lemma:21bigger},~\ref{lemma:22negative} and~\ref{lemma:Ssum} successively gives
\begin{align*}
\sum_{i=1}^k (d_i(x)-c_i(x)) 
&= \sum_{i=1}^k [M((10p)^n (10p)_{1:i})-M((01p)^n (01p)_{1:i})]_{21} x \\
&\quad	+ \sum_{i=1}^k 	[M((10p)^n (10p)_{1:i})-M((01p)^n (01p)_{1:i})]_{22} \\
&\ge \sum_{i=1}^k [M((10p)^n (10p)_{1:i})-M((01p)^n (01p)_{1:i})]_{21} \frac{M(p)_{12}}{M(p)_{22}} \\
&\quad	+ \sum_{i=1}^k 	[M((10p)^n (10p)_{1:i})-M((01p)^n (01p)_{1:i})]_{22} \\
&= \frac{1}{M(p)_{22}} \sum_{i=1}^k 
		[M(p(10p)^n (10p)_{1:i})-M(p(01p)^n (01p)_{1:i})]_{22} \\
&\ge \frac{1}{M(p)_{22}} \sum_{i=1}^m
		[M(p(10p)^n (10p)_{1:i})-M(p(01p)^n (01p)_{1:i})]_{22} \\
&= \frac{1}{M(p)_{22}} [(S(10p)-I)M(p(10p)^n)-(S(01p)-I)M(p(01p)^n)]_{22} \\
&\ge 0 .
\end{align*}
This completes the proof.
\end{proof}

\subsection{Claim~4}
The proof of Claim~4 requires only one preparatory Lemma.

\begin{lemma}
\label{lemma:inverses}
Suppose $w$ is any finite word. Then
\begin{align*}
[M(w)^{-1}]_{21} \le 0 \le [M(w)^{-1}]_{22} .
\end{align*} 
\end{lemma}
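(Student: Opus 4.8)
\textbf{Proof plan for Lemma~\ref{lemma:inverses}.}
The plan is to induct on the length of the word $w$. Recall that for $A\in\{F,G\}$ we have $\det A = 1$, so $A^{-1} = \adj(A)$; explicitly
\begin{align*}
F^{-1} &= \begin{pmatrix} (a+1)/r & -1/r \\ -ar & r \end{pmatrix}, &
G^{-1} &= \begin{pmatrix} (b+1)/r & -1/r \\ -br & r \end{pmatrix}.
\end{align*}
In the base case $w=\epsilon$ we have $M(\epsilon)^{-1}=I$, so $[I]_{21}=0\le 0$ and $[I]_{22}=1\ge 0$, as required. For the base case $|w|=1$, the displayed forms of $F^{-1}$ and $G^{-1}$ show directly that the $(2,1)$ entry is $-ar$ or $-br$, which is $\le 0$ since $a,b,r\ge 0$, and the $(2,2)$ entry is $r\ge 0$.

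For the inductive step, write $w = lu$ where $l\in\{0,1\}$ is the first letter and $u$ is a finite word with $[M(u)^{-1}]_{21}\le 0\le [M(u)^{-1}]_{22}$. Since $M(w) = M(u)M(l)$ (because $M(\cdot)$ of a word is the product of the letter-matrices in reverse order), we have $M(w)^{-1} = M(l)^{-1}M(u)^{-1}$. The plan is then to read off the bottom row of this product: the bottom row of $M(l)^{-1}M(u)^{-1}$ is (bottom row of $M(l)^{-1}$) times $M(u)^{-1}$. From the explicit forms above, the bottom row of $M(l)^{-1}$ is $(-cr,\ r)$ where $c\in\{a,b\}$, so
\begin{align*}
[M(w)^{-1}]_{21} &= -cr\,[M(u)^{-1}]_{11} + r\,[M(u)^{-1}]_{21}, &
[M(w)^{-1}]_{22} &= -cr\,[M(u)^{-1}]_{12} + r\,[M(u)^{-1}]_{22}.
\end{align*}
So I additionally need sign information on the top row of $M(u)^{-1}$, namely $[M(u)^{-1}]_{11}\ge 0$ and $[M(u)^{-1}]_{12}\le 0$. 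The clean way is to strengthen the induction hypothesis: prove simultaneously that for every finite word $u$ the matrix $M(u)^{-1}$ has the sign pattern $\begin{pmatrix} + & - \\ - & + \end{pmatrix}$ (entrywise, with $\ge 0$ or $\le 0$). This strengthened statement closes under the step because $-cr\cdot(\ge 0) + r\cdot(\le 0)\le 0$ gives the $(2,1)$ entry, $-cr\cdot(\le 0)+r\cdot(\ge 0)\ge 0$ gives the $(2,2)$ entry, and the analogous computations with the top row $((c+1)/r,\ -1/r)$ of $M(l)^{-1}$ give $[M(w)^{-1}]_{11}\ge 0$ and $[M(w)^{-1}]_{12}\le 0$; the base cases are checked as above. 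The lemma is then the bottom-row half of this statement.

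I do not expect a genuine obstacle here: the only subtlety is recognising that one should strengthen the hypothesis to carry sign information about the full matrix (or equivalently the top row of $M(u)^{-1}$) through the induction, rather than attempting to prove the two bottom-row inequalities in isolation. An alternative, essentially equivalent, route is to observe that $M(u)$ is a product of matrices each of which is entrywise nonnegative with nonnegative determinant $1$, hence $M(u)$ itself is entrywise $\ge 0$ with $\det M(u)=1$; then $M(u)^{-1}=\adj(M(u))$ has $(2,1)$ entry $-M(u)_{21}\le 0$ and $(2,2)$ entry $M(u)_{11}\ge 0$, which is immediate. I would present whichever of these is shorter; the adjugate argument is the cleanest and avoids the induction altogether.
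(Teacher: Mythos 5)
Your proposal is correct, and in fact contains two workable arguments. Your inductive route differs from the paper's in one structural choice: the paper peels the letter off the end of $w$ that sits on the \emph{right} of the product defining $M(w)$ (recall $M(w)=M(w_{\abs{w}})\cdots M(w_1)$), so that the inductive step has the form $M(w)^{-1}=U\,F^{-1}$ with $U:=M(u)^{-1}$; the bottom row of $U F^{-1}$ is a combination of the bottom row of $U$ alone, namely $[UF^{-1}]_{21}=\frac{1}{r}((1+a)U_{21}-ar^2U_{22})$ and $[UF^{-1}]_{22}=\frac{1}{r}(r^2U_{22}-U_{21})$, so the unstrengthened hypothesis $U_{21}\le 0\le U_{22}$ already closes the induction. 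You peel the letter off the other end, obtaining $M(l)^{-1}M(u)^{-1}$, whose bottom row mixes both rows of $M(u)^{-1}$, and you correctly diagnose that this forces you to carry the full sign pattern $\begin{pmatrix} + & - \\ - & + \end{pmatrix}$ through the induction; that strengthened induction does close, so your argument is sound, just slightly heavier than necessary. Your second route -- $M(w)$ is a product of entrywise-nonnegative unit-determinant matrices, hence entrywise nonnegative with $\det M(w)=1$, so $M(w)^{-1}=\adj(M(w))$ and $[M(w)^{-1}]_{21}=-M(w)_{21}\le 0$, $[M(w)^{-1}]_{22}=M(w)_{11}\ge 0$ -- is genuinely different from the paper's proof, avoids induction on the inverse entirely (the only induction left is the trivial observation that products of nonnegative matrices are nonnegative), and is the cleanest of the three; the paper uses the adjugate identity elsewhere (in the proof of Lemma~\ref{lemma:wtw}) but not here.
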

\begin{proof}
We use induction on the length of $w$. 
In the base case, $M(\epsilon)=I$, for which
\begin{align*}
[I^{-1}]_{21} = 0 \le 1 = [I^{-1}]_{22} .
\end{align*} 
Otherwise, suppose $w=0u$ (the case $w=1u$ is similar), let $U := M(u)^{-1}$ and assume that 
\begin{align*}
U_{21} \le 0 \le U_{22} .
\end{align*} 
Then the induction assumption and and fact that $a, r \ge 0$ give
\begin{align*}
[M(w)^{-1}]_{21} &= [U F^{-1}]_{21} = \frac{1}{r} ((1+a) U_{21} - ar^2U_{22}) \le 0 \\ 
[M(w)^{-1}]_{22} &= [U F^{-1}]_{22} = \frac{1}{r} (U_{22} r^2 - U_{21}) \ge 0.
\end{align*} 
This completes the proof.
\end{proof}

\begin{proof}[Proof of Claim~4 of Lemma~\ref{lemma:integrated}.]
Claim~3 of Lemma~\ref{lemma:integrated} applied for $k=1$ shows that 
\begin{align*}
c_1(x)\le d_1(x).
\end{align*}

For $k=2, 3, \dots, m$, 
as Lemma~\ref{lemma:21bigger} shows that $c_k(x)-d_k(x)$ is a decreasing function of $x$,
it suffices to prove that 
\begin{align*}
\left[ M((01p)^n (01p)_{1:k}) \begin{pmatrix} \phi_p\left(\frac{1}{1-r^2}\right) \\ 1 \end{pmatrix} \right]_2
\ge 
\left[ M((10p)^n (10p)_{1:k}) \begin{pmatrix} \phi_p\left( \frac{1}{1-r^2}\right) \\ 1 \end{pmatrix} \right]_2.
\end{align*}
The left-hand side minus the right-hand side, up to a positive factor, is
\begin{align*}
& \hspace{-1cm}\left[ \left( M((01p)^n (01p)_{1:k}) - M((10p)^n (10p)_{1:k}) \right) M(p) \begin{pmatrix} 1 \\ 1-r^2 \end{pmatrix} \right]_2 \\
&= -z \left[ M(p_{1:(k-2)}) K M(p) \begin{pmatrix} 1 \\ 1-r^2 \end{pmatrix} \right]_2 
\quad\text{for some $z \ge 0$ by Lemma~\ref{lemma:xpositive}}\\ 
&= -z \left[ M(p_{1:(\abs{p}-k+2)})^{-1} K \begin{pmatrix} 1 \\ 1-r^2 \end{pmatrix} \right]_2\\
&= -z \left[ M(p_{1:(\abs{p}-k+2)})^{-1} \begin{pmatrix} \frac{1}{r} \\ 0 \end{pmatrix} \right]_2\\
&= -\frac{z}{r} [M(p_{1:(\abs{p}-k+2)})^{-1}]_{21}\\
&\ge 0
\end{align*}
where the last line is Lemma~\ref{lemma:inverses}. 
Therefore 
\begin{align*}
c_k(x) \ge d_k(x) \quad \text{for $k=2, 3, \dots, m$ and $x\le \phi_p\left(\frac{1}{1-r^2}\right)$.}
\end{align*}
This completes the proof.
\end{proof}

\subsection{Claim~5}
\begin{proof}[Proof of Claim~5 of Lemma~\ref{lemma:integrated}.]
We show that 
\begin{align*}
\phi_w(0) \le y_{01w} < y_{10w} \le \phi_w\left( \frac{1}{1-r^2} \right)
\end{align*}
for any finite word $w$ (not just for palindromes $p$).

The first inequality follows as 
\begin{align*}
\phi_w(0) \le \phi_w(\phi_{01}(0)) = \phi_{01w}(0) \le y_{01w} 
\end{align*}
as $0 \le \phi_{01}(0)$, as $\phi_w$ is increasing, as $0 \le y_{01w}$ and by Lemma~\ref{lemma:incdec} (in Appendix~A).

The second inequality holds as $\phi_{01}(x) < \phi_{10}(x)$ for $x\in\R_+$. Thus
\begin{align*}
y_{01w} = \phi_w(\phi_{01}(y_{01w})) < \phi_w(\phi_{10}(y_{01w})) = \phi_{10w}(y_{01w})
\end{align*}
so applying Lemma~\ref{lemma:incdec} gives
\begin{align*}
y_{01w} < y_{10w}.
\end{align*}

The third inequality holds as  
\begin{align*}
y_{10w} = \phi_w(y_{w10}) < \phi_w(y_0) \le \phi_w\left( \frac{1}{1-r^2} \right)
\end{align*}
by definition of $y_{10w}$, as $y_{w10} < y_0$, as $\phi_w$ is increasing and by Lemma~\ref{lemma:y0ub}.

This completes the proof.
\end{proof}

%
%
\section{LQG Control with Costly Observations}
\label{appendix:LQG}
\paragraph{Problem.} The analysis of this paper gives optimal policies for a version of the 
classic linear-quadratic-Gaussian (LQG) control problem 
in which observations are costly and the nature of each
observation is controlled through a query action.
In this problem, the state is partially observed through measurements 
as described by the system equations
\begin{align*}
X_0 &\sim \mathcal{N}(x_0,v_0), &
X_{t+1}|X_t,u_t &\overset{\text{i.i.d.}}{\sim} \mathcal{N}(A X_t + B u_t, \Sigma_X), &
Y_{t+1}|X_{t+1},a_t &\overset{\text{i.i.d.}}{\sim} \mathcal{N}(X_{t+1},\Sigma_Y(a_t))
\end{align*}
for $t\in\Z_+$, where 
$X_t \in \R$ is the state with initial mean $x_0$ and variance $v_0$,
$u_t\in\R$ is the control,
$Y_{t+1}\in\R$ is a measurement which depends on a query action $a_t\in\{0,1\}$
and where $\Sigma_X, \Sigma_Y(a_t) > 0$ are variances.
For measurement cost $c(a_t) \in \R$, the objective is to find a non-anticipative policy $\pi$ that selects actions $u_t, a_t$ 
so as to minimise the $\beta$-discounted performance functional
\begin{align*}
\E\left( \sum_{t=0}^\infty \beta^{t} (DX_t^2 + Fu_t^2 + c(a_t))\ \bigg| \ \pi, x_0, v_0 \right) .
\end{align*}
Thus the policy can select $u_t, a_t$ based only on the observed history 
$H_t$ at time $t$, which consists of 
$x_0, v_0, a_0, a_1, \dots, a_{t-1}, u_0, u_1, \dots, u_{t-1}, Y_1, Y_2, \dots, Y_t$.
Under the Bayesian filter, the information state is given by the posterior mean
$x_t := \E[X_t | H_t]$ and variance $v_t := \E[(X_t-x_t)^2|H_t]$.
As $\E[X_t^2 | H_t] = x_t^2 + v_t$, it is not hard to see that the problem reduces to the dynamic program
\begin{align}
\label{eq:DP}
V(x_t,v_t) = \min_{u_t\in\R,a_t\in\{0,1\}} \bigg\{
	Dx_t^2 + Dv_t + Fu_t^2 + c(a_t) + \beta \E[V(x_{t+1},v_{t+1})|x_t,v_t,a_t,u_t] 
\bigg\} 
\end{align}
where the expectation is over the following Markovian transitions of the information state:
\begin{align*}
x_{t+1} | x_t, v_t, a_t, u_t &\sim \mathcal{N}(Ax_t+Bu_t,A^2v_t + \Sigma_X - \phi_{a_t}(v_t)) \\
v_{t+1} | x_t, v_t, a_t, u_t &= \phi_{a_t}(v_t) .
\end{align*}

\begin{corollary}
Suppose 
$A \in [-1,1]$, 
$B \in \R$ with $B\ne 0$,
$D \in\R_{++}$, 
$F\in\R_{+}$, 
$\beta\in (0,1)$, 
$\Sigma_Y(a) \in [0,\infty]$ for $a\in \{0,1\}$ with $\Sigma_Y(0)\ge\Sigma_Y(1)$,
and that $c(a) \in \R$ for $a\in \{0,1\}$ with $c(0)\le c(1)$.
Then an optimal policy for the problem of linear-quadratic-Gaussian control with costly observations is to set 
\begin{align*}
a_t &= \begin{cases} 1 & \text{if $v_t \ge z$} \\ 0& \text{if $v_t < z$} \end{cases} 
\quad \text{and} \quad u_t = - L x_t
\end{align*}
for some $L \in \R$ and $z \in \bR$.
In particular 
\begin{align*}
L &= \frac{A}{B+\frac{F}{\beta BR}}
\end{align*}
where $R$ is the unique positive root of the quadratic equation
\begin{align*}
-\beta B^2 R^2 + (\beta B^2 D + \beta A^2 F - F) R + DF = 0.
\end{align*}
\end{corollary}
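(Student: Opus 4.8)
The plan is to reduce the LQG control problem with costly observations to the pure observation-scheduling problem already solved by Theorem~\ref{theorem:ThresholdOptimality}, by showing that the control action $u_t$ and the query action $a_t$ decouple. First I would exploit the classical ``certainty-equivalence'' structure of LQG: because the measurement-noise variances $\Sigma_Y(a_t)$ depend only on the query action $a_t$ and not on the control $u_t$, and because the posterior variance transition $v_{t+1}=\phi_{a_t}(v_t)$ is autonomous, the value function of~(\ref{eq:DP}) should admit the separable form $V(x,v)=Rx^2 + W(v)$ for some constant $R\ge 0$ and some function $W:\IR\to\R$. I would verify this ansatz by substituting it into the dynamic program, using $\E[X_t^2\mid H_t]=x_t^2+v_t$ and the transition law for $x_{t+1}$, and carrying out the minimisation over $u_t\in\R$ explicitly. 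The inner minimisation over $u_t$ is an unconstrained quadratic, so the optimal $u_t=-Lx_t$ is obtained in closed form, and the residual problem in $v_t$ no longer involves $x_t$ or $u_t$.

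Concretely, after the substitution $V(x,v)=Rx^2+W(v)$, the term to be minimised over $u_t$ is $Fu_t^2 + \beta R(Ax_t+Bu_t)^2$, whose minimiser is $u_t = -\frac{\beta ABR}{F+\beta B^2 R}x_t$; rewriting this gives $L=A/(B+F/(\beta BR))$ as claimed. Plugging the minimiser back in yields a fixed-point equation for $R$, namely $R = D + \frac{\beta A^2 F R}{F+\beta B^2 R}$, which rearranges to the stated quadratic $-\beta B^2 R^2 + (\beta B^2 D+\beta A^2 F - F)R + DF = 0$. I would then argue that this quadratic has a unique positive root: its constant term $DF\ge 0$ and the coefficient of $R^2$ is $-\beta B^2<0$ (using $B\ne 0$), so by Descartes' rule / the intermediate value theorem there is exactly one root in $(0,\infty)$ (when $F=0$ the equation degenerates to $R=D$, consistent with $L=A/B$). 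The leftover recursion for $W(v)$ reads
\begin{align*}
W(v) = \min_{a\in\{0,1\}}\Bigl\{ Dv + c(a) + \beta R(A^2 v + \Sigma_X - \phi_a(v)) + \beta W(\phi_a(v)) \Bigr\},
\end{align*}
which, after absorbing the $v$-linear and constant terms, is exactly a dynamic program of the form~(\ref{eq:DP1}) with modified uncertainty cost $\tilde C(v) := Dv + \beta R(A^2 v + \Sigma_X) - \beta R\,\phi_a(v)$-type terms; I need to massage it so the cost is action-independent, which works because $\phi_a(v)$ appears both as the argument of $W$ and linearly, and the difference $\beta R(\phi_0(v)-\phi_1(v))$ can be folded into the observation cost $c(a)$.

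The main obstacle I anticipate is checking that the transformed single-arm problem genuinely satisfies Condition~C, i.e.\ that the effective uncertainty cost $\tilde C$ and effective observation costs still meet the hypotheses of Theorem~\ref{theorem:ThresholdOptimality}. The linear part $Dv$ with $D\ge 0$ is covered by Condition~C2, and the constant $\beta R\Sigma_X$ is harmless; the term $-\beta R\,\phi_a(v)$ is the delicate one, since it mixes the transition map into the cost. I would handle this by using the identity $\phi_1(v)-\phi_0(v)\ge 0$ for $\Sigma_Y(1)\le\Sigma_Y(0)$ (which follows from the monotonicity of $\phi_a$ in $\Sigma_Y(a)$) to rewrite the per-step cost as $Dv + [c(a) + \beta R(\phi_1(v)-\phi_0(v))\I{a=0}] + (\text{action-independent terms})$, so that the extra state-dependent piece can be moved into the recursion on $W$ by a telescoping/change-of-variable argument, or alternatively absorbed into redefining $W$. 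The constraint $A\in[-1,1]$ is needed precisely so that $\phi_a$ maps $\R_+$ into $\R_+$ and has the contractive structure required for Condition~D to apply after the coordinate change described in Section~3; I would check $\Sigma_Y(0)\ge\Sigma_Y(1)$ gives $c(0)\le c(1)$ compatibility with the ordering in Condition~C. Once the reduced problem is seen to satisfy Condition~C, Theorem~\ref{theorem:ThresholdOptimality} delivers the threshold structure for $a_t$, and certainty equivalence delivers $u_t=-Lx_t$, completing the proof. A clean write-up would present the separation ansatz, verify it by direct substitution, solve for $L$ and $R$, and then invoke Theorem~\ref{theorem:ThresholdOptimality} on the residual $v$-dynamics, with the threshold $z$ being the unique value (possibly $0$ or $\infty$) at which the marginal productivity index of the reduced problem equals zero.
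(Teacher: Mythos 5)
Your overall route is the same as the paper's: a separable ansatz for the value function, explicit minimisation over $u_t$ giving $L=A/(B+F/(\beta BR))$ and the stated quadratic for $R$, a sign argument for the unique positive root, and then reduction of the residual variance recursion to the single-series dynamic program so that Theorem~\ref{theorem:ThresholdOptimality} yields the threshold structure. The algebra you sketch for $L$ and $R$ is correct, and your proposed fix for the action-dependent term $-\beta R\,\phi_a(v)$ --- redefining $W(v)=Rv+g(v)$ so that the $\beta R\,\phi_a(v)$ inside $\beta W(\phi_a(v))$ cancels it --- is exactly what the paper does by taking the trial solution $V(x,v)=Rx^2+Rv+g(v)$ from the outset. (Your other suggestion, folding $\beta R(\phi_1(v)-\phi_0(v))$ into the observation cost, cannot work as stated: that quantity is state-dependent, whereas $c(a)$ in~(\ref{eq:DP1}) must not depend on $v$.)

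There is, however, one genuine gap. After the cancellation, the reduced dynamic program is $g(v)=\min_{a}\{c(a)+\beta R\Sigma_X+\alpha v+\beta g(\phi_a(v))\}$ with effective cost coefficient $\alpha=D-(1-\beta A^2)R$, \emph{not} $D$. You assert that ``the linear part $Dv$ with $D\ge 0$ is covered by Condition~C2,'' but the relevant linear part is $\alpha v$, and $\alpha\ge 0$ is not automatic: a negative $\alpha$ would make the cost decreasing, violating both C1 and C2, so Theorem~\ref{theorem:ThresholdOptimality} could not be invoked. The paper closes this with a dedicated argument: writing $m:=1-\beta A^2>0$ and substituting $R=y+D/m$ into the quadratic shows that the resulting quadratic in $y$ has all coefficients non-positive (given $A\in[-1,1]$, $B\ne 0$, $D>0$, $F\ge 0$, $\beta\in(0,1)$), hence no positive root, hence $R\le D/m$ and $\alpha\ge 0$. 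This is the one step beyond routine LQG algebra, and it is also where the hypothesis $A\in[-1,1]$ is actually used in the cost verification (not only, as you suggest, for the contractivity of $\phi_a$). Your write-up needs to supply this bound on $R$ before the reduction to Theorem~\ref{theorem:ThresholdOptimality} is legitimate.
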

\begin{proof}
For trial solutions of the form $V(x,v) = R x^2 + R v + g(v)$, where $R\in\R$
and $g:\R_+\rightarrow \R$ are to be determined, the expectation in (\ref{eq:DP}) is
\begin{align*}
\hspace{2cm}&\hspace{-2cm} 
\E[V(x_{t+1},v_{t+1})|x_t,v_t,a_t,u_t] \\
&= R ((Ax_{t+1}+Bu_t)^2+A^2v_t+\Sigma_X-\phi_{a_t}(v_t)+\phi_{a_t}(v_t))+ g(\phi_{a_t}(v_t)) .
\end{align*}
Thus (\ref{eq:DP}) is solved if
\begin{align}
Rx_t^2 + Rv_t + g(v_t) &= \min_{u_t\in\R} \bigg\{ 	Dx_t^2 + Fu_t^2 + \beta R (Ax_{t+1}+Bu_t)^2 \bigg\} \nonumber \\
\label{eq:separation}
&\qquad + \min_{a_t\in\{0,1\}} \bigg\{ c(a_t) +\beta R\Sigma_X + (D+\beta RA^2)v_t+ \beta g(\phi_{a_t}(v_t)) \bigg\} .
\end{align}
Now the minimum with respect to $u_t$ is achieved if the coefficient ($F+\beta B^2 R$) of $u_t^2$ is positive, in which case the minimiser is
\begin{align*}
u_t = - \frac{\beta A B R}{F+\beta B^2 R} x_t.
\end{align*}
So (\ref{eq:separation}) is solved if $R$ satisfies
\begin{align*}
R &= D + F\left( \frac{\beta A B R}{F+\beta B^2 R} \right)^2 + \beta R \left(A-B\frac{\beta A B R}{F+\beta B^2 R}\right)^2 
\end{align*}
and if $g(\cdot)$ satisfies the dynamic program 
\begin{align}
\label{eq:gDP}
g(v) = \min_{a\in\{0,1\}} \bigg\{ c(a) + \beta R \Sigma_X + \alpha v + \beta g(\phi_a(v)) \bigg\}  
\end{align}
where $\alpha := D - (1-\beta A^2) R$. 

After simple algebra, the condition on $R$ is equivalent to the quadratic equation
\begin{align*}
-\beta B^2 R^2 + (\beta B^2 D + \beta A^2 F - F) R + DF = 0.
\end{align*}
Using Descartes' rule of signs and considering the cases $F=0$ and $F>0$ separately, 
we see that this equation has a unique positive root
for $\beta B^2 > 0$ and $D>0$.

To apply Theorem~1 to the dynamic program for $g(\cdot)$ we must ensure that $\alpha \ge 0$.
Noting that $m:=1-\beta A^2 > 0$ by the hypotheses about $A,\beta$, we see that
$\alpha \ge 0$ if $R \le D/m$.
But, substituting $R=y+(D/m)$ in the equation for $R$ gives
\begin{align*}
0 &= [-\beta B^2 R^2 + (\beta B^2 D + \beta A^2 F - F) R + DF]_{R=y+(D/m)} \\
&= -\ \beta B^2 y^2 \ -\ ((\beta^2A^2+\beta)B^2 (D/m) + mF) y \ -\ \beta^2 A^2 B^2 (D/m)^2
\end{align*}
in which the coefficients of $y^0,y^1,y^2$ are all negative by the hypotheses about $A,B,D,F$ and $\beta$.
So this quadratic equation for $y$ has no positive roots
and it follows that $\alpha \ge 0$.
Therefore Theorem~1 
shows that there is an optimal policy for (\ref{eq:gDP}) that sets $a_t = 1$ if and only if
$v_t \ge z$ for some $z\in\bR$. 
This completes the proof.
\end{proof}

\vskip 0.2in
\bibliography{jmlr}
\end{document}